\definecolor{RoyalBlue}{RGB}{0,100,170}
\definecolor{peach}{rgb}{1, 0.56, 0.56}
\definecolor{midgray}{RGB}{150,150,150}
\definecolor{EasternBlue}{RGB}{37,150,190}
\definecolor{sand}{RGB}{250,150,120}
\definecolor{grass}{RGB}{120, 190, 50}
\definecolor{sky}{RGB}{50,150,250}
\definecolor{Orange}{RGB}{250,150,50}
\definecolor{BurntOrange}{RGB}{240,140,50}
\definecolor{ForestGreen}{RGB}{80, 170, 30}
\definecolor{Cerulean}{RGB}{80,150,220}
\definecolor{Emerald}{RGB}{62,156,94}
\definecolor{Rouge}{RGB}{250,95,95}
\definecolor{bluegray}{RGB}{124,171,181}
\definecolor{ColorDef}{RGB}{80, 180, 150}
\definecolor{RevisionRed}{RGB}{240,35,35}
\definecolor{TODOcyan}{RGB}{50, 150, 250}
\newif\ifsolution
\newcounter{solnequation}
\newcounter{solneqtmp}
    \renewcommand\theequation{S\arabic{equation}}
    \ifsolution\color{red}\expandafter\textbf{Solution} \BODY\fi%
    \renewcommand\theequation{\arabic{equation}}}
\def \one {\mathbf{1}}
\def \Aut {\mathrm{Aut}}
\def \jmax {j_{\max}}
\renewcommand{\Pr}{\mathop{\bf Pr\/}}
\newcommand{\N}{\mathbb N}
\newtheorem{theorem}{Theorem}
\newtheorem*{namedtheorem}{\theoremname}
\newcommand{\theoremname}{testing}
\newtheorem*{theorem*}{Theorem}
\newtheorem{lemma}{Lemma}
\newtheorem{proposition}[theorem]{Proposition}
\newtheorem*{question*}{Question}
\theoremstyle{definition}
\newtheorem{definition}{Definition}
\newtheorem*{definition*}{Definition}
\newtheorem*{remark*}{Remark}
\theoremstyle{plain}
\newtheorem{example}{Example}
\newcommand{\poly}{\mathrm{poly}}
\newcommand{\norm}[1]{\left\|#1\right\|}
\def\eqref#1{equation~\ref{#1}}
\def\1{\bm{1}}
\def\vz{{\bm{z}}}
\DeclareMathAlphabet{\mathsfit}{\encodingdefault}{\sfdefault}{m}{sl}
\SetMathAlphabet{\mathsfit}{bold}{\encodingdefault}{\sfdefault}{bx}{n}
\def\gA{{\mathcal{A}}}
\def\gC{{\mathcal{C}}}
\def\gD{{\mathcal{D}}}
\def\gF{{\mathcal{F}}}
\def\gG{{\mathcal{G}}}
\def\gT{{\mathcal{T}}}
\def\gW{{\mathcal{W}}}
\def\gX{{\mathcal{X}}}
\newcommand{\R}{\mathbb{R}}
\newcommand{\softmax}{\mathrm{softmax}}
\DeclareMathOperator*{\argmax}{arg\,max}
\def \poly {\text{poly}}
\title{Transformers Learn Shortcuts to Automata}
\date{}
\author{Bingbin Liu$^1$\footnote{\noindent The majority of this work was completed while B. Liu was an intern at Microsoft Research NYC.}
\quad Jordan T. Ash$^2$ \;
Surbhi Goel$^{3}$\footnote{\noindent This work was completed while S. Goel was at Microsoft Research NYC.} \;
Akshay Krishnamurthy$^{2}$ \;
Cyril Zhang$^2$\\
\vspace{-2mm} \\
  \normalsize{$^1$Carnegie Mellon University\qquad $^2$Microsoft Research NYC \qquad $^3$University of Pennsylvania}\\
\normalsize{ \texttt{bingbinl@cs.cmu.edu, \{ash.jordan, goel.surbhi, akshaykr, cyrilzhang\}@microsoft.com}}}
\begin{document}
\doparttoc %
\faketableofcontents %

\maketitle

\def \R {\mathbb{R}}
\def \N {\mathcal{N}}

\def\VarXiv{1}

\pagenumbering{arabic}

\begin{abstract}
    Algorithmic reasoning requires capabilities which are most naturally understood through recurrent models of computation, like the Turing machine. However, Transformer models, while lacking recurrence, are able to perform such reasoning using far fewer layers than the number of reasoning steps. This raises the question: \textit{what solutions are learned by these shallow and non-recurrent models?}
We show that a low-depth Transformer can represent the computations of \emph{any} finite-state automaton (thus, any bounded-memory algorithm), by hierarchically reparameterizing its recurrent dynamics.
Our theoretical results characterize \emph{shortcut solutions}, whereby a Transformer with $o(T)$ layers can exactly replicate the computation of an automaton on an input sequence of length $T$. We find that polynomial-sized $O(\log T)$-depth solutions always exist; furthermore, $O(1)$-depth simulators are surprisingly common, and can be understood using tools from Krohn-Rhodes theory and circuit complexity.
Empirically, we find that Transformers converge to shortcut solutions with standard training, across a wide variety of automata. We further investigate the brittleness of these solutions and propose potential mitigations.

\newcommand\blfootnote[1]{%
  \begingroup
  \renewcommand\thefootnote{}\footnote{#1}%
  \addtocounter{footnote}{-1}%
  \endgroup
}

\blfootnote{Please see our \href{https://clarabing.github.io/shortcut_automata/}{project page} for our code release and other resources.}

\end{abstract}
\section{Introduction}
\label{sec:intro}

\indent Modern deep learning systems demonstrate increasing capabilities of algorithmic reasoning. Particularly in modalities such as natural language, math, and code, neural networks can successfully parse and synthesize sequences containing symbolic information and compositional structure.
To exhibit these functionalities, these networks are required to learn and execute the relevant discrete algorithms within their internal representations. A core open question in this domain is that of mechanistic understanding: \emph{how do neural networks encode the primitives of algorithmic reasoning?}

When considering this question, there is an apparent mismatch between classical sequential models of computation (e.g. Turing machines) and the Transformer, the state-of-the-art architecture for neural algorithmic reasoning. If we are to think of algorithms as sequentially-executed computational rules, why should we use a shallow\footnote{In practice, a Transformer's context length (which can be thousands of tokens) is typically far greater than its depth (which can be as small as $6$).} and non-recurrent architecture to represent them?

We study this question through the lens of \emph{semiautomata},
which compute state sequences $q_1,\ldots,q_T$ from inputs $\sigma_1,\ldots,\sigma_T$ by application of a recurrent transition function $\delta$ (and initial state $q_0$):
\[q_t = \delta(q_{t-1}, \sigma_t).\]

Semiautomata describe the underlying dynamics of \emph{automata}, which are simply semiautomata equipped with mappings from states to outputs. With unbounded state spaces, automata can represent \emph{all} algorithms; however, even bounded automata form a rich class of sequence processing algorithms, containing regular expression parsers and finite-state transducers. In reinforcement learning and control, semiautomata are better known as deterministic Markov models (where $\sigma_t$ are actions); thus, in addition to algorithmic reasoning, the results in this work also pertain to Transformer dynamics models.

We perform a theoretical and empirical investigation of whether (and how) non-recurrent Transformers perform the computations of semiautomata. We find that Transformers learn \emph{shortcut solutions}, which correctly and efficiently simulate the sequential transitions of semiautomata using a shallow parallel circuit, rather than naively iterating the single-step recurrence. Shortcuts arise from hierarchical reparameterizations of a semiautomaton's global transition dynamics.

\noindent\textbf{Our contributions.}
Our theoretical results provide structural guarantees for the representability of semiautomata (thus, iterative algorithms) by one pass through a shallow, non-recurrent Transformer. In particular, we show that:

\begin{itemize}[leftmargin=1.5em]
    \item Shortcut solutions, with depth logarithmic in the sequence length, always exist (Theorem~\ref{thm:shortcut-log-depth}).
    \item Constant-depth shortcuts exist for \emph{solvable} semiautomata (Theorem~\ref{thm:shortcut-const-depth}). These are understood via the Krohn-Rhodes theorem, a landmark result in semigroup theory. Conversely, there do not exist constant-depth shortcuts for non-solvable semiautomata, unless $\mathsf{TC}^0 = \mathsf{NC}^1$ (Theorem~\ref{thm:barrington-lower-bound}).
    \item For a natural class of semiautomata corresponding to path integration in a ``gridworld'' with boundaries, we show that there are even shorter shortcuts (Theorem~\ref{thm:shortcut-gridworld}), beyond those guaranteed by the general structure theorems above.
\end{itemize}

We accompany these with an extensive set of experimental findings:

\begin{itemize}[leftmargin=1.5em]
    \item \emph{Transformers learn shortcuts with standard training} (Section~\ref{sec:experiments-shortcut}). Across a wide variety of semiautomaton simulation problems, we find that shallow non-autoregressive Transformers successfully learn shortcut solutions: despite the non-convex optimization problem, gradient-based training works. This suggests that shortcuts are plausible mechanisms for algorithmic reasoning in non-synthetic sequence models, and lies beyond our current theoretical understanding.
    \item \emph{Shortcuts are statistically brittle} (Section~\ref{sec:experiments-scratchpad}). We identify empirical weaknesses of the shortcuts found by Transformers: poor out-of-distribution generalization (including to unseen sequence lengths) and worse performance than RNNs under limited supervision. Toward mitigating these drawbacks and obtaining the best of both worlds, we show that with \emph{recency-biased scratchpad training}, autoregressive Transformers can easily be guided to learn the iterative RNN-like solutions (\emph{chain-of-thought} generation).
\end{itemize}

\begin{figure}
    \centering
    \centering
    \includegraphics[width=\linewidth]{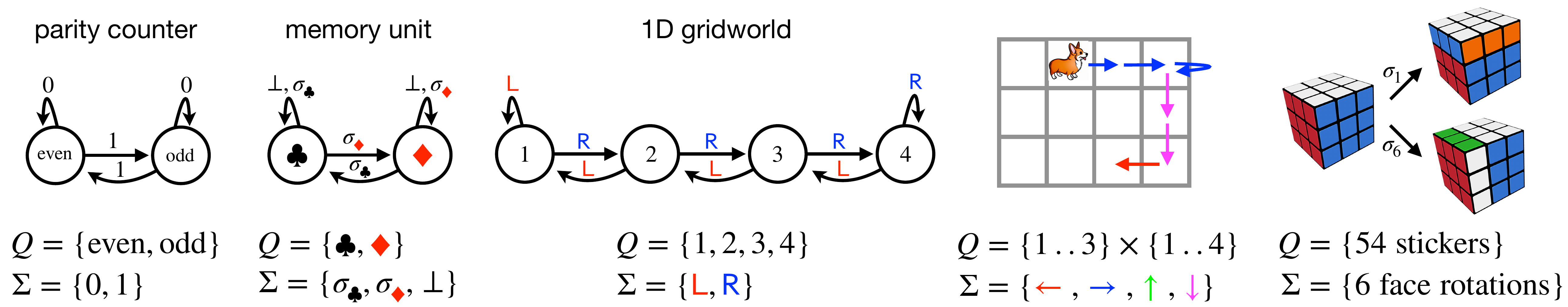}
    \vspace{-0.6cm}
    \caption{Various examples of semiautomata. From left to right: a mod-$2$ counter, a $2$-state memory unit, $\mathrm{Grid}_4$, a 2-dimensional gridworld constructible via a direct product $\mathrm{Grid}_3\times \mathrm{Grid}_4$, and a Rubik's Cube, whose transformation semigroup is a very large non-abelian group.
    \vspace{-0.2cm}
    }
    \label{fig:semiautomaton-examples}
\end{figure}

\vspace{-0.5em}
\subsection{Related work}

\noindent\textbf{Emergent reasoning in neural sequence models.} Neural sequence models, both recurrent \citep{wu2016google,neumann2018deep,howard2018universal} and non-recurrent \citep{vaswani2017attention,devlin2018bert}, have become an era-defining tool for parsing and transducing data with combinatorial structure, such as natural language and code.
A nascent frontier is to leverage neural dynamics models, again both recurrent~\citep{hafner2019dream} and non-recurrent~\citep{Chen21decision,JannerLL21trajectory}, for decision making. At the highest level, the present work seeks to understand the mechanisms by which these models perform combinatorial and algorithmic reasoning.

\noindent\textbf{Computational models within neural networks.}
Despite the preponderance of empirical successes, many mysteries remain, towards understanding the internal mechanisms of neural networks capable of algorithmic reasoning. It is known that self-attention realizes low-complexity circuits \citep{Hahn20,elhage2021mathematical,Merrill21,edelman2022inductive}, declarative programs \citep{weiss2021thinking}, and Turing machines \citep{Dehghani19UniversalTransformer,perez2021attention,giannou2023looped}.
Interpretable symbolic computations have been extracted from trained models \citep{Clark19,vig2019visualizing,tenney2019bert,wang2022interpretability}.
Our conclusions are closest to the literature on the universal representation on Turing machines (which are automata with unbounded states); 
however, our work is unique in characterizing the recurrent machines whose execution loops can be efficiently unrolled into a \emph{single pass} of a shallow Transformer.

At a technical level, the most relevant theoretical work to ours is \citep{barrington1988finite}, which acts as a ``Rosetta Stone'' between circuit complexity and semigroup theory.
The core technical ideas for Theorems~\ref{thm:shortcut-log-depth} ($\mathsf{NC}^1$ prefix sum), \ref{thm:shortcut-const-depth} (Krohn-Rhodes), and \ref{thm:barrington-lower-bound} (Barrington) are inspired by the results and discussions therein. For readers familiar with circuit complexity: our theoretical results establish that Transformers (a certain family of arithmetic circuits) efficiently embed the constructions involved in the $\mathsf{NC}^1$ and $\mathsf{ACC}^0$ solutions to semigroup word problems. The notions of efficiency (depth, parameter count, and weight norms) are standard in deep learning but not circuit complexity; our embedding avoids suboptimal $\mathrm{poly}(T)$ factors in these complexity measures.
Theorem~\ref{thm:shortcut-gridworld} comes from an improved parallel algorithm for the special case of ``gridworld'' semigroups; to our knowledge, this construction is novel, and may be of independent interest.

\noindent\textbf{Learning elementary algorithms with Transformers.}
Our work provides a unifying lens on many recent investigations on whether (and how) Transformers represent certain classes of fundamental algorithmic computations. These include bounded-depth Dyck languages \citep{YaoPPN20}, modular prefix sums \citep{anil2022exploring}, adders \citep{nogueira2021investigating,nanda2022mechanistic}, regular languages \citep{Bhattamishra20}, and sparse logical predicates \citep{edelman2022inductive, barak2022hidden}, which are all special cases of simulating finite-state automata. Thus, our work provides guarantees of shallow Transformer solutions in all of these settings. \cite{zhang2022unveiling} empirically analyze the behavior and inner workings of Transformers on random-access group operations and note ``shortcuts'' (which skip over explicit program execution) similar to those we study.

We provide an expanded discussion of related work in Appendix~\ref{subsec:appendix-related-work}.

\section{Preliminaries}
\label{sec:prelims}

\vspace{-0.5em}
\subsection{Semiautomata and their algebraic structure}
A \emph{semiautomaton} $\gA := (Q, \Sigma, \delta)$ consists of a set of states $Q$, an input alphabet $\Sigma$, and a transition function $\delta : Q \times \Sigma \rightarrow Q$. In this work, $Q$ and $\Sigma$ will always be finite sets. For all positive integers $T$ and a starting state $q_0 \in Q$, $\gA$ defines a map from input sequences $(\sigma_1, \ldots, \sigma_T) \in \Sigma^T$ to state sequences $(q_1, \ldots, q_T) \in Q^T$: $q_t := \delta(q_{t-1}, \sigma_t)$ for $t = 1, \ldots, T$.
This is a deterministic Markov model, in the sense that at time $t$, the future states $q_{t+1}, \ldots, q_T$ only depend on the current state $q_t$ and the future inputs $\sigma_{t+1}, \ldots, \sigma_T$.

We define the task of \emph{simulation}: given a semiautomaton $\mathcal{A}$, starting state $q_0$, and input sequence $(\sigma_1, \ldots, \sigma_T)$, output the state trajectory $\gA_{T, q_0}(\sigma_1, \ldots, \sigma_T) := (q_1, \ldots, q_T)$. Let $f : \Sigma^T \rightarrow Q^T$ be a function (which in general can depend on $\mathcal{A}, T, q_0$). We will say that $f$ \emph{simulates} $\mathcal{A}_{T, q_0}$ if $f(\sigma_{1:T}) = \mathcal{A}_{T, q_0}(\sigma_{1:T})$ for all input sequences $\sigma_{1:T}$. Finally, for a positive integer $T$, we say that a function class $\mathcal{F}$ of functions from $\Sigma^T \rightarrow Q^T$ simulates $\mathcal{A}$ at length $T$ if, for each $q_0 \in Q$, there is a function in $\mathcal{F}$ which simulates $\mathcal{A}_{T, q_0}$

Every semiautomaton induces a \emph{transformation semigroup} $\gT(\mathcal{A})$ of functions $ \rho : Q \rightarrow Q$ under composition, generated by the per-input-symbol state mappings $\delta(\cdot,\sigma): Q \rightarrow Q$. When $\gT(\mathcal{A})$ contains the identity function, it is called a \emph{transformation monoid}. When all of the functions are invertible, $\gT(\mathcal{A})$ is a \emph{permutation group}. See Figure~\ref{fig:semiautomaton-examples} for some examples which appear both in our theory and experiments; additional background (including a self-contained tutorial on the relevant concepts in finite group and semigroup theory) is provided in Appendix~\ref{subsec:appendix-algebra-background}. An elementary example is a parity counter (leftmost in Figure~\ref{fig:semiautomaton-examples}): the state is a bit, and the inputs are \{``toggle the bit'', ``do nothing''\}; the transformation semigroup is $C_2$, the cyclic group of order 2.

\vspace{-0.5em}
\subsection{Recurrent and non-recurrent neural sequence models}

A \emph{sequence-to-sequence neural network} of length $T$ and embedding dimension $d$ is a function $f_\mathrm{nn} : \mathbb{R}^{T \times d} \times \Theta \rightarrow \mathbb{R}^{T \times d}$, with \emph{trainable parameters} $\theta \in \Theta$.
Equipped with an \emph{encoding layer} $E : \Sigma \rightarrow \R^d$ and \emph{decoding layer} $W : \R^d \rightarrow Q$ (both applied position-wise), and fixing some parameters $\theta$, the function $(W \circ f_\mathrm{nn} \circ E) : \Sigma^T \rightarrow Q^T$ has the same input and output types as $\gA_{T, q_0}$.

A \emph{recurrent neural network} (RNN) is a sequence-to-sequence neural network defined by iterated composition of a \emph{recurrent unit} $g : \mathbb{R}^d \times \mathbb{R}^d \times \Theta \rightarrow \mathbb{R}^d$. For a given initial hidden state $h_0 \in \mathbb{R}^d$, and input sequence $u_1, \ldots, u_T \in \mathbb{R}^d$, it produces an output hidden state sequence
\[ h_t := g(h_{t-1}; u_t; \theta), \qquad t = 1, \ldots, T.\]

Thus, fixing the parameters $\theta$, an RNN is an infinite semiautomaton, with $Q=\Sigma=\mathbb{R}^d$. Thus, RNNs can trivially simulate any semiautomaton by embedding the looped transition function $\delta$, as long as $g$ can represent $\delta$.

An $L$-layer (or depth-$L$) \emph{Transformer} is a different sequence-to-sequence network, consisting of alternating self-attention blocks and feedforward MLP blocks
\[ f_\mathrm{tf} := (\mathrm{id} + f_\mathrm{mlp}^{(L)}) \circ (\mathrm{id} + f_\mathrm{attn}^{(L)}) \circ (\mathrm{id} + f_\mathrm{mlp}^{(L-1)}) \circ ... \circ (\mathrm{id} + f_\mathrm{attn}^{(1)}) \circ (\mathrm{id} + P).\]
Briefly, an attention layer performs $\ell_1$-normalized mixing operations across positions $t$, while a constant-layer MLP block performs position-wise function approximation (with no mixing between positions); $\mathrm{id}$ denotes the identity function (residual connections), and $P$ encodes the positions $t$.\footnote{We omit layer normalization. This discrepancy is superficial; see the discussion in Appendix~\ref{subsec:appendix-nn-background}.}
We use fairly standard positional encodings in both theory and experiments.
Importantly, the standard Transformer is convolutional (in that the weights in $f_\mathrm{attn}$ and $f_\mathrm{mlp}$ are shared across positions $t$), but \emph{not} recurrent: parameters are not shared across blocks of different layers.

Typical Transformers are shallow, in the sense that $L \ll T$. In practice, this makes their inference and gradient computations highly parallelizable,
with the number of sequential computation steps scaling linearly in $L$,
while RNNs require a scaling linear in $T$.
While there is always a canonical way for RNNs to simulate semiautomata, the answer to the analogous question for shallow Transformers is far less obvious, and forms the main topic of this paper.

Our results will quantify efficiency with several complexity measures of Transformers: the computational depth $D$ (which is $\Theta(L)$ for Transformers and $\Theta(T)$ for RNNs), embedding dimension $d$, \emph{attention width} (the largest number of parallel attention head outputs), \emph{MLP width} (the largest number of parallel hidden activations in MLP), $\ell_\infty$ weight norms, and floating-point precision. These are fully defined and discussed in Appendix~\ref{subsec:appendix-nn-background}.
\section{Theory: shortcuts abound}
\label{sec:theory}

To simulate a semiautomaton at length $T$, a $T$-layer Transformer can implement the same sequential solution as an RNN: let the $t$-th layer embed the state transition $q_{t-1} \mapsto q_t$. We define \emph{shortcuts} as solutions which implement the same functionality with a significantly smaller depth.

\begin{definition}[Shortcut solution]
Let $\gA$ be a semiautomaton.
For every $T \geq 1$, let $f_T$ be a sequence-to-sequence neural network which simulates $\gA$ at length $T$. Then, we call this sequence $\{f_T\}_{T \geq 1}$ a \emph{shortcut} to $\gA$
if the sequence of network depths $D := \{D(f_T)\}_{T \geq 1}$ satisfies 
$D \leq o(T)$.
\end{definition}

By this definition, shortcuts are quite general, and some are less interesting than others. For example, it is always possible to construct a constant-depth neural network which memorizes all $|\Sigma|^T$ values of $\gA_{T,q_0}$, but these networks must be exceptionally wide. There are also solutions which emulate transitions in ``chunks'', letting each of (say) $\sqrt{T}$ layers perform $\sqrt{T}$ consecutive state transitions; however, without exploiting the structure of the semiautomaton, this would require width $\Omega(|\Sigma|^{\sqrt{T}})$. To rule out these cases and focus on interesting shortcuts for Transformers, we want the other size parameters (attention and MLP width) to be small: say, scaling at most polynomially in $T$, $|Q|$, and $ |\Sigma|$. To construct such shortcuts, we need ideas beyond explicit iteration of state transitions.

\begin{figure}
    \centering
    \includegraphics[width=0.95\linewidth]{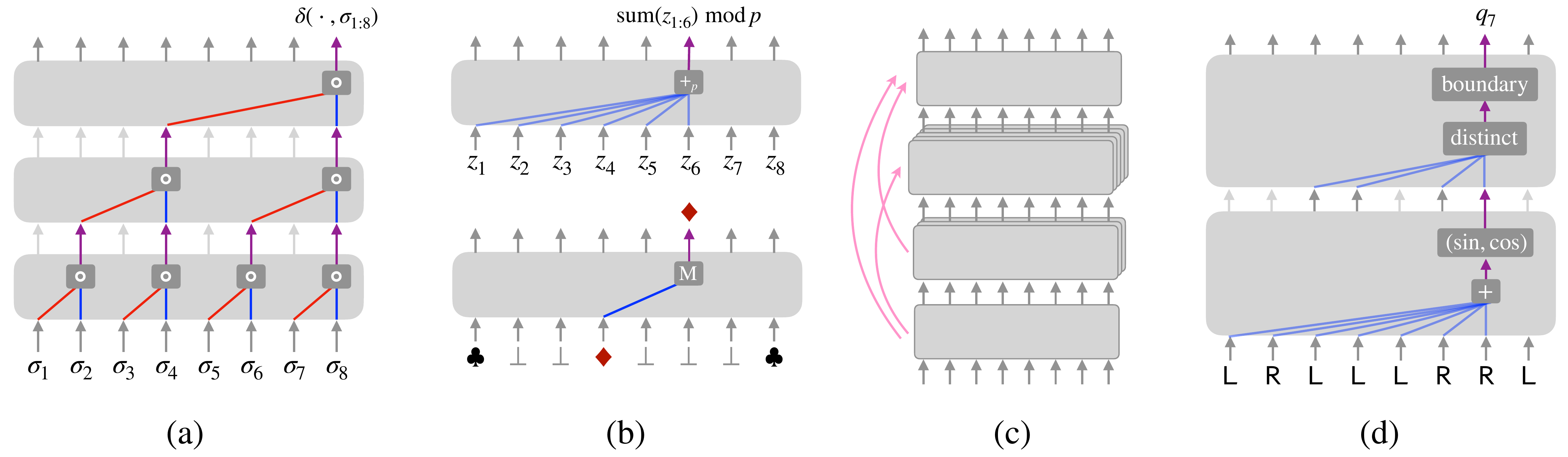}
    \vspace{-0.2cm}
    \caption{Intuitions for the theoretical constructions. \emph{(a)} Divide-and-conquer function composition yields logarithmic-depth shortcuts (Theorem~\ref{thm:shortcut-log-depth}). \emph{(b)} The two ``atoms'' of the constant-depth Krohn-Rhodes decomposition (Theorem~\ref{thm:shortcut-const-depth}) of a solvable semiautomaton: \emph{modular addition} and sequentially resettable \emph{memory}. \emph{(c)} Information flow of the \emph{cascade product}, which is used to glue these atoms together, and easily implemented with residual connections.
    \emph{(d)} An even shorter shortcut solution for gridworld simulation (Theorem~\ref{thm:shortcut-gridworld}; see Appendix~\ref{subsec:appendix-gridworld-proof}).
    \vspace{-0.2cm}
    }
    \label{fig:constructions}
\end{figure}
\vspace{-0.5em}

\subsection{Semiautomata admit shallow parallel shortcuts}

We begin by noting that polynomial-width shortcuts \emph{always} exist. This may seem counterintuitive if we restrict ourselves to viewing a Transformer's intermediate activations as representations of states $q_t$, like the RNN solution. Instead, a Transformer can encode and hierarchically compose transformations $\delta(\cdot, \sigma): Q \rightarrow Q$ (see Figure~\ref{fig:constructions}a), leading to far shallower solutions:

\begin{theorem}[Simulation is generically parallelizable; informal]
\label{thm:shortcut-log-depth}
Transformers can simulate all semiautomata $\mathcal{A} = (Q, \Sigma, \delta)$ at length $T$, with depth $O(\log T)$, embedding dimension $O(|Q|)$, attention width $O(|Q|)$, and MLP width $O(|Q|^2)$.
\end{theorem}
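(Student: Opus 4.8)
The plan is to exhibit an explicit divide-and-conquer construction that composes the per-symbol transformations $\delta(\cdot,\sigma):Q\to Q$ in a binary tree of depth $O(\log T)$. First I would fix the representation: encode an arbitrary transformation $\rho:Q\to Q$ as the list of its values $(\rho(1),\dots,\rho(|Q|))$, i.e.\ as a point in the one-hot-encoded space $(\R^{|Q|})^{|Q|}$, which requires embedding dimension $O(|Q|^2)$ — or, more economically, encode at position $t$ the pair (state $q_t$ reached so far, plus bookkeeping), but the cleanest argument uses the transformation-valued representation. The encoding layer $E$ maps $\sigma_t$ to (a one-hot of) $\delta(\cdot,\sigma_t)$, and we also inject $q_0$ at a virtual position $0$. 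The key primitive is: given two adjacent ``blocks'' holding composite transformations $\rho_{[a,b]}$ and $\rho_{[b+1,c]}$, produce $\rho_{[a,c]} = \rho_{[b+1,c]}\circ\rho_{[a,b]}$. Composition of transformations given as value-tables is a fixed finite lookup: $(\rho'\circ\rho)(i) = \rho'(\rho(i))$, which an MLP of width $O(|Q|^2)$ and constant depth can compute exactly (it is a bounded-size Boolean/arithmetic function of the one-hot inputs), and a single attention head can route the needed neighbor's activation to the current position using the positional encoding to select ``the block-partner at this level.''

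The second step is to assemble these primitives into the standard parallel-prefix (scan) circuit: over $\lceil\log_2 T\rceil$ rounds, maintain at each position $t$ the composite transformation of a dyadic interval ending at $t$, and at the final round read off the full prefix $\rho_{[1,t]} = \delta(\cdot,\sigma_t)\circ\cdots\circ\delta(\cdot,\sigma_1)$; then one more MLP layer applies this transformation to $q_0$ to output $q_t$. Equivalently one can use the Ladner–Fischer / Brent–Kung prefix-sum layout, which is exactly the ``$\mathsf{NC}^1$ prefix sum'' alluded to in the related-work discussion of \citep{barrington1988finite}. Each round is implemented by one attention layer (to fetch the partner interval's transformation — an $\ell_1$-normalized, in fact hard, attention with a single head suffices since the target position is a deterministic function of $t$ and the round index) followed by one constant-depth MLP block (to compose). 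Thus total depth is $O(\log T)$, attention width $O(1)$ heads each of size $O(|Q|)$... — here I would be careful to match the claimed bounds: the theorem states embedding dimension $O(|Q|)$, attention width $O(|Q|)$, MLP width $O(|Q|^2)$, which suggests the intended encoding stores a single state per position rather than a full $|Q|^2$-entry table, so the actual construction likely uses the ``pointer-chasing'' variant where each position holds one $Q$-valued entry and composition across a level is done by a single attention lookup $q \mapsto$ (partner's stored value indexed appropriately). I would reconcile this by using the representation where position $t$ at round $r$ stores, for the dyadic block $B$ ending at $t$, the single value $\rho_B(q_0^{(B)})$ is not enough — so more precisely one stores $\rho_B$ as a function but notes $|Q|$ one-hot slots of width $|Q|$ give MLP width $|Q|^2$ while attention only moves $O(|Q|)$-dimensional messages per head; I'd state the bounds in whichever consistent form the formal appendix uses and flag that the informal statement's constants refer to that accounting.

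The remaining steps are routine but need care: (i) verify that attention can implement ``select exactly the partner position'' — this uses query/key inner products that are $\Theta(1)$-separated via the positional encoding, so after $\ell_1$-normalization (softmax or the paper's normalization) the attention weight concentrates on one position up to error controllable by the weight norm, giving exact behavior with $\poly(\log T)$ or even $O(\log T)$ weight norm and finite precision; (ii) check the residual-stream bookkeeping, so that the transformation computed at round $r$ is preserved and made available at round $r+1$ (the $(\mathrm{id}+\cdot)$ structure handles this, writing new blocks into fresh coordinates or overwriting stale ones); (iii) the decoding layer $W$ reads the final composite applied to $q_0$. I expect the main obstacle to be the attention-routing/precision bookkeeping: showing that a non-recurrent Transformer with realistic (small, $\poly\log T$) weight norms and finite precision can, at every one of the $\log T$ levels, exactly fetch the correct dyadic neighbor despite the positional encoding having to disambiguate $T$ positions — this is where one must either use a clever positional encoding (e.g.\ binary-bit encodings so that ``partner at level $r$'' is computed by flipping bit $r$, implementable by an MLP) or accept polynomially larger norms. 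Everything else — finiteness of $\gT(\gA)$, exactness of composition, the depth count — follows immediately once the routing primitive is in hand.
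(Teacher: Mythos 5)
Your approach is the same as the paper's: encode each per-symbol transformation $\delta(\cdot,\sigma)$, use attention to route a partner block's transformation to each position, use a constant-depth MLP to compose two transformations, and iterate a parallel-scan layout for $\lceil\log_2 T\rceil$ rounds (the paper uses the Hillis--Steele variant, ``compose position $t$ with position $t-2^{l-1}$ at layer $l$,'' rather than Ladner--Fischer).

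The one point you left hanging --- how to reconcile a transformation-valued representation with embedding dimension $O(|Q|)$ rather than $O(|Q|^2)$ --- is exactly where the paper's accounting differs from your default one-hot table: it encodes a map $f:Q\to Q$ as the \emph{integer} vector $\sum_{q} f(q)\,e_q \in \R^{|Q|}$ (values in $\{1,\dots,|Q|\}$, not one-hot), so each attention head moves only a $|Q|$-dimensional message into a ``left'' or ``right'' channel (embedding dimension $2|Q|+2$, two heads of head-dimension $|Q|$), while a fixed 3-layer ReLU gadget of width $O(|Q|^2)$ composes two such integer-coded maps exactly; this is why the theorem's widths split as $O(|Q|)$ for attention and $O(|Q|^2)$ for the MLP. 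Two further details where your expectations differ from the construction: the routing at level $l$ is a \emph{fixed relative offset} $2^{l-1}$, implemented by circular positional embeddings and a per-layer rotation matrix, and the resulting softmax concentration is bought with weight norms of order $T\sqrt{\log|Q|+\log T}$ --- i.e.\ $\mathrm{poly}(T)$, not the $\mathrm{polylog}(T)$ norms you hoped for (the theorem statement permits this); and the initial state is handled not by a final ``apply to $q_0$'' MLP but by prepending $T$ padding positions whose embeddings encode the constant map $q\mapsto q_0$, so the prefix composition already yields $q_t$ and the decoder just reads one coordinate of the transition map. None of this changes the substance of your argument; it fills in the bookkeeping you flagged.
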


This is proven in Appendix~\ref{subsec:appendix-log-depth-proof}, and leverages the ability of a self-attention head to approximate \emph{hard} attention (i.e. concentrate its mixing weights on a single position). However, self-attention heads can also perform \emph{soft} attention (i.e. depend on a large number of previous positions), enabling even shallower implementations of certain sequential computations. For example, the parity automaton can be simulated by a single Transformer layer (see Lemma~\ref{lem:cyclic}): soft attention computes prefix sums in parallel, then the MLP computes ``mod 2''.
This leads to a significantly more nuanced question: \emph{when are there even shallower shortcuts?} At first glance, such solutions may seem rare, and specialized to simple cases such as parity.

Our resolution to this question comes from the Krohn-Rhodes decomposition theorem \citep{krohn1965algebraic}, a landmark result which vastly generalizes the uniqueness of prime integer factorizations, and created the mathematical field of algebraic automata theory \citep{rhodes2010applications}. The conclusion is quite unintuitive: allowing for both hard and soft modes of attention, constant-depth shortcuts are surprisingly common!

\begin{theorem}[Transformer Krohn-Rhodes; informal]
\label{thm:shortcut-const-depth}
Transformers can simulate all solvable\footnote{See Definition~\ref{def:solvable-semiautomaton}. Intuitively, the only obstructions are when the semiautomata contain non-solvable groups such as $S_5$, the group of all permutations of $5$ elements.} semiautomata $\mathcal{A} = (Q, \Sigma, \delta)$, with depth $O(|Q|^2 \log |Q|)$, embedding dimension $2^{O(|Q| \log |Q|)}$, attention width $2^{O(|Q| \log |Q|)}$, and MLP width $|Q|^{O(2^{|Q|})} + 2^{O(|Q| \log |Q|)} \!\,\cdot\!\, T$.
\end{theorem}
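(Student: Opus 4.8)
The plan is to establish Theorem~\ref{thm:shortcut-const-depth} by threading the Krohn-Rhodes decomposition through the Transformer architecture, using Theorem~\ref{thm:shortcut-log-depth} and the parity-style prefix-sum trick (Lemma~\ref{lem:cyclic}) as the two base cases. The first step is to invoke the Krohn-Rhodes theorem in its \emph{cascade-product} (wreath-product) form: the transformation semigroup $\gT(\gA)$ of a solvable semiautomaton divides a cascade product $\gA_1 \wr \gA_2 \wr \cdots \wr \gA_k$ where each factor $\gA_i$ is either (i) a simple cyclic group $C_p$ (necessarily $p$ prime, and by solvability these are the only groups that appear — no non-solvable simple groups like $A_5$), or (ii) a ``reset'' semiautomaton (a two-state memory / flip-flop). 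I would cite the standard bound $k = 2^{O(|Q|\log|Q|)}$ on the length of this decomposition and note that each factor has state space of size at most $|Q|^{O(2^{|Q|})}$; these are exactly where the tower-of-exponentials size parameters in the statement come from, so I would be careful to track them but not optimize them.

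Next I would build the Transformer layer-by-layer as a \emph{cascade}: process the factors $\gA_1, \ldots, \gA_k$ in order, where the block simulating $\gA_i$ reads (via the residual stream) the already-computed state trajectories of $\gA_1, \ldots, \gA_{i-1}$ — this is precisely the information-flow pattern of the cascade product depicted in Figure~\ref{fig:constructions}(c), and residual connections make it free. For each factor I need a constant-depth sub-Transformer: for a reset/flip-flop factor, the current state at time $t$ is determined by the most recent ``write'' input, which a single hard-attention head can locate (this reuses the hard-attention approximation from Theorem~\ref{thm:shortcut-log-depth}); for a cyclic factor $C_p$, the state at time $t$ is a prefix sum mod $p$ of the (input-dependent) increments, which one soft-attention layer computes as a running average followed by an MLP implementing ``mod $p$'' — exactly the mechanism of Lemma~\ref{lem:cyclic} generalized from $p=2$ to general prime $p$. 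The subtlety is that the increments feeding factor $\gA_i$ depend on the lower factors' states, so the input to the prefix sum is itself a position-wise function of the residual stream computed by the previous blocks; I would handle this with a position-wise MLP reading off the relevant coordinates before the attention layer. Summing constant depth over $k = O(|Q|^2\log|Q|)$ factors (after the tighter Krohn-Rhodes length bound the paper presumably uses for the \emph{depth} measure) gives the stated $O(|Q|^2\log|Q|)$ depth.

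The main obstacle I expect is \textbf{the width accounting, especially the $2^{O(|Q|\log|Q|)} \cdot T$ term in the MLP width}. The issue is that simulating a cascade factor requires the MLP to compute the factor's transition function as a lookup table over the (possibly exponentially large) state space of the \emph{accumulated} product of lower factors, and — more delicately — the ``mod $p$'' step on a prefix sum of length up to $T$ must distinguish $\Theta(T)$ possible sum values before reducing, which with a ReLU MLP and bounded weight norms forces width linear in $T$ (a bump function at each integer value). I would need to argue this carefully: the soft-attention head returns the average of $t$ increments, i.e. a value with denominator $t \le T$, and resolving which residue class it lies in requires a piecewise-linear function with $\Theta(T)$ pieces unless one has access to higher precision or periodic activations. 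The other bookkeeping obstacles — verifying that the cascade-product gluing really is implementable with residuals and doesn't blow up the embedding dimension beyond $2^{O(|Q|\log|Q|)}$, and that the hard-attention approximation error across $O(|Q|^2\log|Q|)$ composed layers stays controlled under the claimed (polynomially bounded) precision — are routine given the analogous analysis already carried out for Theorem~\ref{thm:shortcut-log-depth}, so I would defer them to the appendix and focus the main argument on the Krohn-Rhodes reduction and the two base-case gadgets.
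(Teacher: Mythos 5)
Your overall strategy is the same as the paper's: decompose the solvable semiautomaton via Krohn-Rhodes into a cascade whose atoms are modular counters (soft-attention prefix sum plus a mod-$p$ MLP, exactly Lemma~\ref{lem:cyclic}) and memory/reset units (hard attention to the most recent write, Lemma~\ref{lem:flipflop}), glue the stages with position-wise MLPs implementing the dependency functions plus residual connections, and accept the $\Theta(T)$ MLP width needed to resolve a length-$T$ prefix sum modulo $p$ (the paper makes the same concession, removable with periodic activations). Your later diagnosis of the large width terms is also essentially correct: they come from memorizing the dependency functions over the accumulated product of lower-level states (the individual factors themselves have at most $|Q|$ states, so your earlier attribution of $|Q|^{O(2^{|Q|})}$ to a single factor's state space is a misstatement, not a real error).

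The genuine gap is the depth accounting, which is the quantitative heart of the theorem. You first cite a cascade of length $2^{O(|Q|\log|Q|)}$ into prime factors, and then assert that only $O(|Q|^2\log|Q|)$ sequential stages are needed ``after the tighter Krohn-Rhodes length bound the paper presumably uses.'' No off-the-shelf statement of Krohn-Rhodes gives you that: the number of prime factors can be exponential in $|Q|$, and a purely sequential cascade over them, one constant-depth block per factor as you propose, yields depth exponential in $|Q|$. The paper reaches $O(|Q|^2\log|Q|)$ by a two-tier argument that your sketch omits: (i) the holonomy form of Krohn-Rhodes (Theorem~\ref{thm:krohn-rhodes}) produces at most $2^{|Q|}$ permutation-reset components, but organized into at most $|Q|$ \emph{levels}, and components within a level have no mutual dependencies, so they are simulated in parallel in disjoint channels (as in Lemma~\ref{lem:direct-product-simulation}) at no depth cost; and (ii) within one level, a solvable permutation group $G\leq S_{|Q|}$ is simulated recursively along its composition series with depth $O(\log|G|)=O(|Q|\log|Q|)$, each step being a group extension handled by the semidirect-product basis-change construction and, because extensions need not split (e.g.\ $Q_8$), by embedding into a wreath product (Lemmas~\ref{lem:semidirect-product-simulation} and \ref{lem:wreath-product-simulation}); the reset behaviour is then recombined with the group via the most-recent-reset identity $q_t=\Phi\bigl(g_{\leq t}\, g_{\leq r(t)}^{-1}\bigr)(q_{\sigma_{r(t)}})$ rather than by interleaving flip-flop factors between cyclic factors. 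Multiplying $|Q|$ levels by $O(|Q|\log|Q|)$ depth per level gives the stated bound. Without supplying this level structure and the per-level logarithmic-depth group simulator (or an equivalent parallelization argument for your prime cascade), your construction does not attain the claimed depth, so this step must be made explicit rather than deferred.
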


Much of the appendix is dedicated to providing a user-friendly exposition of the relevant algebraic concepts, culminating in the proof of Theorem~\ref{thm:shortcut-const-depth}. We provide a few high-level notes below:
\begin{itemize}[leftmargin=1.5em]
    \item Intuitively (illustrated in Figures~\ref{fig:constructions}b and \ref{fig:constructions}c), the Krohn-Rhodes decomposition ``factorizes'' every solvable semiautomaton into modular counters and memory units, glued together via a feedforward cascade product (Definition \ref{def:cascade-product}) whose depth only depends on $|Q|$, not $T$. These two types of ``prime'' semiautomata can be efficiently simulated by depth-$1$ Transformers.
    \item The decomposition depends on the transformation semigroup $\mathcal{T}(\mathcal{A})$.
    It is non-constructive (much like how the existence of prime factorizations doesn't entail a procedure to \emph{find} them). Computationally, these solutions still have to be found by a search procedure. Remarkably, we find that gradient-based training succeeds empirically, despite the worst-case computational hardness of related problems.
\end{itemize}

\textbf{What makes Transformers special (vs. other universal function approximators)?} The same underlying semiautomaton-to-circuit constructions could be applied to any universal function approximator (like a vanilla MLP with the same depth). Transformers embed all of the constructions in Theorems~\ref{thm:shortcut-log-depth} and \ref{thm:shortcut-const-depth} with exceptional efficiency, in terms of the network complexity measures discussed in Section~\ref{sec:prelims}. Most importantly, the constructions leverage Transformers' positional weight sharing, which removes all\footnote{The only part of Theorem~\ref{thm:shortcut-const-depth} requiring $T$ non-tied neurons is the implementation of mod-$p$ gates. It disappears entirely if we can add auxiliary MLP neurons with periodic activation functions such as $x \mapsto \sin(x)$.} suboptimal $T$ factors from the parameter count. We discuss this further in Appendix~\ref{subsec:appendix-related-work}.

\noindent\textbf{Even shallower shortcuts, beyond Krohn-Rhodes.} Finally, we show that on a natural class of problems, the computational model of self-attention leads to further fine-grained improvements over the guarantees of Krohn-Rhodes theory.
Motivated by the application of Transformers in modeling environment dynamics, we consider the semiautomaton $\mathrm{Grid}_n$ corresponding to a ``gridworld'': $n$ states on a line, with inputs ``move left if possible'' and ``move right if possible'' (see Figure~\ref{fig:semiautomaton-examples}, middle). We show that self-attention enables an extremely concise solution, with depth independent of both $T$ \emph{and} $|Q| = n$:

\begin{theorem}[Depth-2 shortcut for gridworld; informal]
\label{thm:shortcut-gridworld}
For all positive integers $n,T$,
Transformers can simulate $\mathrm{Grid}_n$ at length $T$, with depth 2,\footnote{This requires max-pooling. If we do not use max-pooling, we can instead use an MLP with width $2^{O(n)}$ and depth $O(1)$, or width $O(n)$ and depth $O(\log n)$.} embedding dimension $O(1)$, attention width $O(n)$, and MLP width $O(T)$.\footnote{As with Theorem~\ref{thm:shortcut-const-depth}, the width can be reduced to $O(n)$ if we employ periodic activation functions.}
\end{theorem}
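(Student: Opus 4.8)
The plan is to reduce $\mathrm{Grid}_n$ simulation to a shallow arithmetic circuit on the prefix sums of the input and then embed that circuit in two Transformer layers. Encode ``move right'' as $+1$ and ``move left'' as $-1$, let $s_1,\dots,s_T$ be the resulting step sequence, and set $x_t=\sum_{i\le t}s_i$ with $x_0=0$; writing $\mathrm{clip}(y,a,b):=\min(b,\max(a,y))$, the state recurrence is the ``finite-buffer Lindley'' map $q_t=\mathrm{clip}(q_{t-1}+s_t,\,1,\,n)$. The first step is to prove a Skorokhod-on-an-interval closed form
\[
q_t \;=\; q_0+x_t-\alpha_t,\qquad \alpha_t=\mathrm{clip}\!\big(\alpha_{t-1},\;q_0+x_t-n,\;q_0+x_t-1\big),\quad \alpha_0=0,
\]
together with its two consequences: (i) unrolling the nested $\mathrm{clip}$'s and distributing $\wedge$ over $\vee$ gives the explicit two-level form $\alpha_t=\big(0\vee\bigvee_{1\le l\le t}(q_0+x_l-n)\big)\wedge\bigwedge_{1\le k\le t}\big((q_0+x_k-1)\vee\bigvee_{k<l\le t}(q_0+x_l-n)\big)$; and (ii) since every clipping window has the fixed width $n-1$, $\alpha_t$ is piecewise constant, jumping to the top of its window exactly when the walk is pinned to state $1$ and to the bottom of its window exactly when pinned to state $n$, so that $\alpha_t=q_0+x_s-1$ for the most recent time $s$ at which $q_s=1$, or $\alpha_t=q_0+x_{s'}-n$ for the most recent time $s'$ at which $q_{s'}=n$ (whichever is later), and $\alpha_t=0$ if the free path $q_0+x_\cdot$ never leaves $[1,n]$. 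This reduces the entire trajectory to: compute all prefix sums, then combine them with $O(n)$ of their extrema through a constant-depth $\max/\min/+$ formula.

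The second step is the two-layer construction. Layer $1$ computes $x_t$ at every position: one uniform (``soft'') attention head averages $s_1,\dots,s_t$, and multiplying by $t$ (read from the positional encoding) recovers $x_t$, after which a position-wise MLP rounds it to the nearest integer. This is the prefix-sum-plus-MLP primitive already used for the parity automaton in Lemma~\ref{lem:cyclic}; rounding an exact integer out of a soft average is precisely what needs a width-$O(T)$ threshold MLP --- or a single periodic-activation neuron --- which is the source of the stated $O(T)$ MLP width (reducible to $O(n)$). Layer $2$ extracts the $O(n)$ relevant extrema: because $q_t\in[1,n]$, only the portion of the trajectory whose prefix sum lies within an additive band of width $O(n)$ around $x_t$ can be binding, so $O(n)$ attention heads (one per ``level'' $\ell=0,\dots,n-1$) use max-pooling to locate, for each level, the most recent position whose prefix sum sits at that level; this identifies the last contact with state $1$ or $n$ and the values $x_s$, $x_{s'}$ appearing in the formula from step~1. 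A final position-wise MLP then assembles $q_t=q_0+x_t-\alpha_t$. Without max-pooling the record-finding moves into the MLP, costing $2^{O(n)}$ width (brute force over the $2^{O(n)}$ local band-patterns) or $O(\log n)$ extra depth (a tournament max), exactly as in the footnote. Tallying resources gives depth $2$, residual/embedding dimension $O(1)$ (a constant number of carried scalars: $x_t$, $t$, $q_0$, and the head outputs), attention width $O(n)$, MLP width $O(T)$, with bounded weights and $O(\log T)$-bit precision.

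I expect the main obstacle to be the first step: establishing the correct two-wall closed form and, above all, showing that its a priori unbounded alternation of $\max$'s and $\min$'s collapses to the depth-$2$ circuit above once the confinement $q_t\in[1,n]$ is used to truncate the look-back to an $O(n)$-wide band --- this truncation is also what forces ``attention width $O(n)$'' rather than $O(1)$, and it is what removes the apparent circularity in recognizing the relevant wall-contacts from prefix sums alone. The one-sided problem is the textbook Lindley/Skorokhod identity with a single $\min$; here the top and bottom corrections are coupled, so getting the truncation argument right --- and checking that the $O(n)$ max-pooling probes faithfully recover the quantities in the formula and that every intermediate arithmetic step is exact at finite width and $O(\log T)$-bit precision --- is the technical heart. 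The remaining ingredients (uniform-attention prefix sums, max-pooling or its MLP emulation, and the assembling MLP) are routine adaptations of primitives already established in the proof of Theorem~\ref{thm:shortcut-log-depth} and the parity lemma.
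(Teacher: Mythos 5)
Your overall plan coincides with the paper's construction at the architectural level (layer 1: uniform attention computes prefix sums, MLP cleans them up; layer 2: $O(n)$ heads that, for each prefix-sum level near $x_t$, retrieve the most recent position attaining that level, followed by an MLP with exactly the width/depth trade-offs in the footnotes), and your two-sided Skorokhod/Lindley identity $q_t=q_0+x_t-\alpha_t$ with $\alpha_t=\mathrm{clip}(\alpha_{t-1},\,q_0+x_t-n,\,q_0+x_t-1)$ is a correct and rather clean repackaging of the fact the paper also starts from: the state equals the last wall visited plus the sum of increments since that visit. However, there is a genuine gap, and it is exactly the point you yourself defer as ``the technical heart'': you never show how the last wall contact, and which wall it is, can be recognized \emph{from prefix sums alone} using only the $O(n)$ per-level ``most recent occurrence'' quantities. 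Your formula needs the most recent time $s$ with $q_s=1$ (or $q_{s'}=n$), but those are conditions on the \emph{state}, which is the unknown; the attention heads can only report, for each value $v$ in a band around $x_t$, the last time the \emph{free} prefix sum equalled $v$. Asserting that the clipped/unrolled $\max/\min$ formula ``collapses once the confinement truncates the look-back to an $O(n)$-wide band'' is precisely the nontrivial combinatorial claim that must be proved; it is the content of the paper's Lemma~\ref{lem:gridworld_boundary}: take the shortest suffix over which the prefix sums attain $n$ distinct values, and then the later of the last argmax and last argmin of the prefix sum in that suffix is provably a wall visit, with ``min later'' forcing the lower wall and ``max later'' forcing the upper wall. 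The proof there requires a case analysis (all $n$ levels correspond to distinct states vs.\ a wall being ``hit'' repeatedly so that equal states carry different prefix sums), and nothing in your write-up substitutes for it; without it, the step ``this identifies the last contact with state $1$ or $n$'' is unjustified and the apparent circularity you flag is not removed.

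A second, smaller omission concerns the resource claims for the last MLP. Even granting the boundary characterization, the second-layer MLP must select, among the $O(n)$ candidate windows of consecutive levels, the one realizing the shortest valid suffix, and then decide which endpoint is later. The paper can do this within the stated width/depth only because of a structural fact about the head outputs (its Lemma~\ref{lem:decreasing}): the per-level last-occurrence times are unimodal around the current level, so comparisons with adjacent windows suffice; this is what yields the $2^{O(n)}$-width $O(1)$-depth option, the $O(n)$-width $O(\log n)$-depth option, and the depth-$2$ claim with pooling. Your proposal invokes ``max-pooling in the attention heads'' for the recency retrieval (the paper instead realizes recency with a position-dependent score inside soft attention, approximating hard attention via Lemma~\ref{lem:softmax-selection}; pooling is only needed inside the MLP for the min/max over $O(n)$ candidates) and gives no analogue of the unimodality argument, so the stated widths and the exactness at $O(\log T)$ precision are asserted rather than established. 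In short: correct reduction and essentially the paper's architecture, but the two lemmas that make the two-layer simulation actually work are missing.
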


The proof builds a parallel \emph{nearest boundary detector} for the two boundary (i.e. leftmost and rightmost) states, and can be found in Appendix~\ref{subsec:appendix-gridworld-proof}.
We note that gridworlds are known to be extremal cases for the holonomy decomposition in Krohn-Rhodes theory (\cite{maler2010krohn} discusses this, calling it the \emph{elevator automaton}). It would be interesting to generalize our improvement and characterize the class of problems for which self-attention affords $O(1)$ instead of $\poly(|Q|)$-depth solutions.

\subsection{Lower bounds}
Can Theorem~\ref{thm:shortcut-const-depth} be improved to handle non-solvable semiautomata? (Equivalently: can Theorem~\ref{thm:shortcut-log-depth} be improved to constant depth?)
It turns out that as a consequence of a classic result in circuit complexity \citep{barrington1986bounded}, this question is equivalent to the major open question of $\mathsf{TC}^0 \stackrel{?}{=} \mathsf{NC}^1$ (thus: conjecturally, no). %
Unless these complexity classes collapse, Theorems~\ref{thm:shortcut-log-depth} and \ref{thm:shortcut-const-depth} are optimal. In summary, simulating non-solvable semiautomata with constant depth is provably hard:

\begin{theorem}[Transformer Barrington]
\label{thm:barrington-lower-bound}
Let $\mathcal{A}$ be a non-solvable semiautomaton.
Then, for sufficiently large $T$, no $O(\log T)$-precision Transformer with depth independent of $T$ and width polynomial in $T$ can continuously simulate $\mathcal{A}$ at length $T$, unless $\mathsf{TC}^0 = \mathsf{NC}^1$.
\end{theorem}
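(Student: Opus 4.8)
The plan is to argue by contrapositive: assuming a constant-depth, $\mathrm{poly}(T)$-width, $O(\log T)$-precision Transformer family simulates a non-solvable semiautomaton $\gA$, I will construct a $\mathsf{TC}^0$ circuit family for the word problem of a non-solvable group, which by Barrington's theorem~\citep{barrington1986bounded} is $\mathsf{NC}^1$-complete, forcing $\mathsf{TC}^0 = \mathsf{NC}^1$. The first step is to isolate a non-solvable group $G$ inside $\gT(\gA)$: since $\gA$ is non-solvable, its transformation semigroup contains a non-solvable subgroup, and by the theory behind Barrington's theorem it suffices to take $G$ to be a non-abelian simple group (e.g.\ $A_5$) arising as a subquotient. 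I would then recall that evaluating a word in the generators of $G$ — deciding whether a length-$T$ product of generators equals the identity — is $\mathsf{NC}^1$-hard under $\mathsf{AC}^0$ (indeed projection) reductions. The key reduction embeds such a word $g_{i_1}\cdots g_{i_T}$ as an input sequence $\sigma_{1:T} \in \Sigma^T$ for $\gA$, choosing input symbols whose state maps $\delta(\cdot,\sigma)$ realize the generators of $G$ (restricted to an orbit on which $G$ acts faithfully); then $q_T$, the last output state, determines the value of the word, and a simulator for $\gA_{T,q_0}$ thus solves the $G$-word problem.

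Next I would translate ``a constant-depth $\mathrm{poly}(T)$-width $O(\log T)$-precision Transformer'' into ``a $\mathsf{TC}^0$ circuit.'' This is the standard simulation of bounded-precision Transformers by threshold circuits: each attention layer's $\ell_1$-normalized softmax mixing reduces to iterated addition and division of $O(\log T)$-bit numbers, which lies in $\mathsf{TC}^0$; each position-wise MLP block with $\mathrm{poly}(T)$ width and $O(\log T)$-bit weights is a constant-depth threshold (in fact $\mathsf{TC}^0$) computation; composing a constant number of such layers stays in $\mathsf{TC}^0$, and the final decoding layer $W$ is position-wise and cheap. Stacking this on top of the projection reduction from the previous paragraph gives a $\mathsf{TC}^0$ circuit family computing the non-solvable word problem, whence $\mathsf{NC}^1 \subseteq \mathsf{TC}^0$, and since $\mathsf{TC}^0 \subseteq \mathsf{NC}^1$ always holds, the two classes collapse.

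Two points need care. First, the hypothesis says the Transformer \emph{continuously} simulates $\gA$, i.e.\ the real-valued network with a fixed finite-precision regime outputs the exact state trajectory on all of $\Sigma^T$; I must make sure the encoding of inputs (one-hot symbols via $E$) and the $O(\log T)$-precision arithmetic are faithfully captured by the threshold-circuit simulation, so that exactness is preserved — this is routine but is where the ``$O(\log T)$-precision'' hypothesis is essential (without a precision bound, division/softmax need not be in $\mathsf{TC}^0$). Second, and this is the step I expect to be the main obstacle to state cleanly: verifying that $\gA$ non-solvable genuinely yields an $\mathsf{NC}^1$-hard simulation target. One must check that the non-solvable subgroup of $\gT(\gA)$ can be exhibited as generated by \emph{single-symbol} transition maps on a common invariant subset of $Q$ (possibly after passing to a power of $\gA$ or a sub-semiautomaton), so that an adversarial input word directly encodes a $G$-word; this is exactly the ``semigroup side'' of Barrington's correspondence, and I would invoke it rather than reprove it, citing \citep{barrington1988finite}. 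Everything else is bookkeeping about circuit classes.
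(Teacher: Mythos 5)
Your proposal is correct and follows essentially the same route as the paper: $\mathsf{NC}^1$-hardness of simulating a non-solvable semiautomaton via Barrington's theorem and its semigroup-theoretic extension \citep{barrington1988finite}, combined with a $\mathsf{TC}^0$ simulation of any constant-depth, $\poly(T)$-width, $O(\log T)$-precision Transformer (the paper handles the $T$-way summations in attention/softmax via the iterated-addition construction of \citealp{reif1992threshold}, matching your step). The only cosmetic difference is that you phrase it as a contrapositive through the group word problem, while the paper states the hardness directly for semiautomaton simulation under $\mathsf{NC}^0$ reductions; these are the same argument.
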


This is proven in Appendix~\ref{subsec:appendix-barrington-proof}.
The smallest example of a non-solvable semiautomaton has $|Q| = 60$ states, whose transitions generate $A_5$ (all of the even permutations).

Finally, we note that although our width bounds might be improvable, an exponential-in-$|Q|$ number of hypotheses (and hence a network with $\poly(|Q|)$ parameters) is unavoidable if one wishes to learn an arbitrary $|Q|$-state semiautomaton from data: there are $|Q|^{|Q| \cdot |\Sigma|}$ of them, which generate $|Q|^{\Omega(|Q|^2)}$ distinct semigroups \citep{kleitman1976number}. If we wish to study how machine learning models can efficiently identify large algebraic structures, we will need finer-grained inductive biases to specify which semiautomata to prefer, a direction for future work.
\section{Experiments: can SGD find the shortcuts?}
\label{sec:experiments-shortcut}

\begin{figure}
    \centering
    \begin{subfigure}[b]{0.52\textwidth}
        \centering
        \includegraphics[width=0.95\textwidth]
        {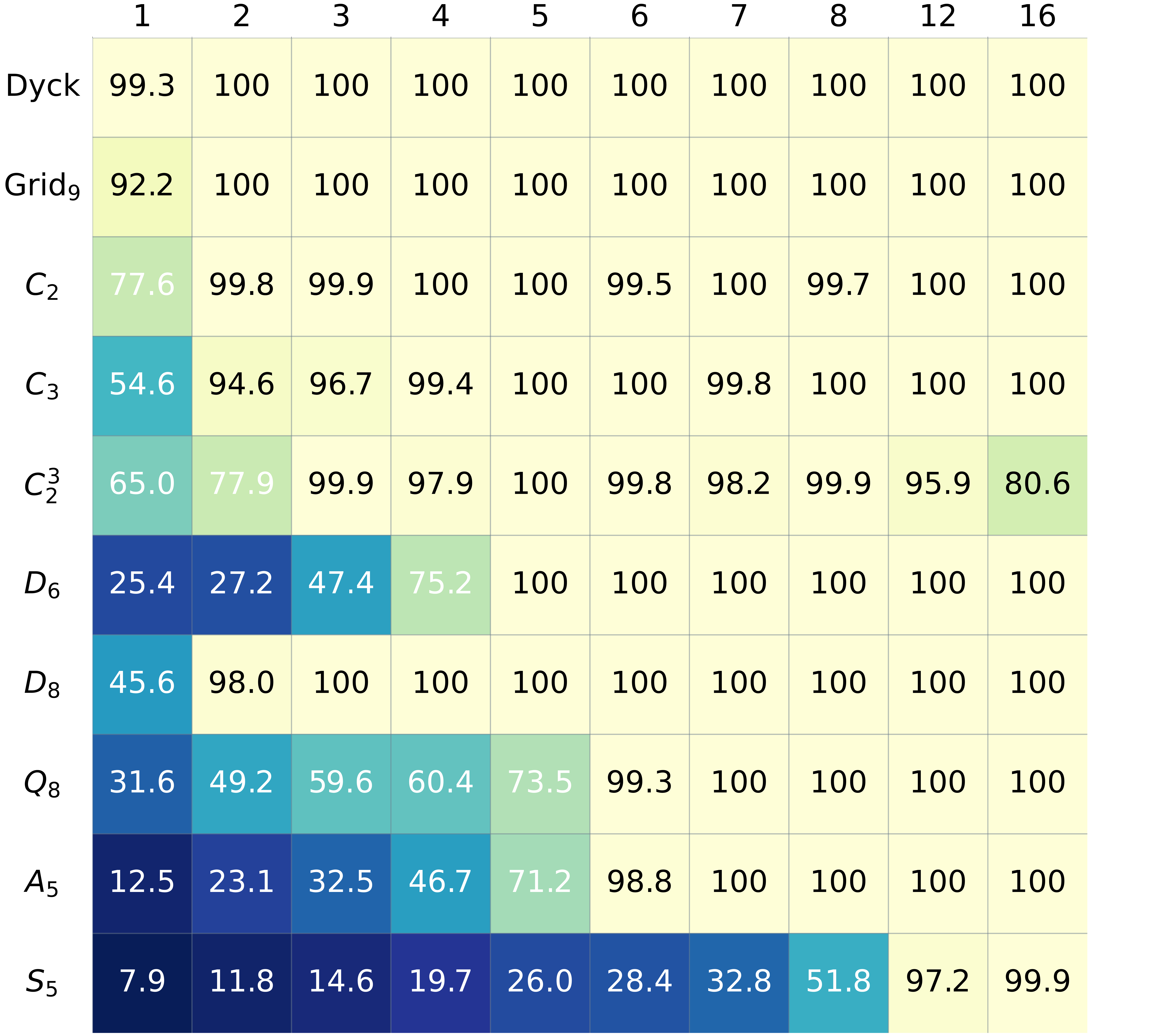}
        \caption{Accuracy across tasks (rows) and network depths (columns); details in Appendix \ref{subsubsec:appendix-learn-shortcuts}.
        }
    \end{subfigure}
    \hspace{0.2em}
    \begin{subfigure}[b]{0.45\textwidth}
        \centering
        \includegraphics[width=0.75\textwidth]{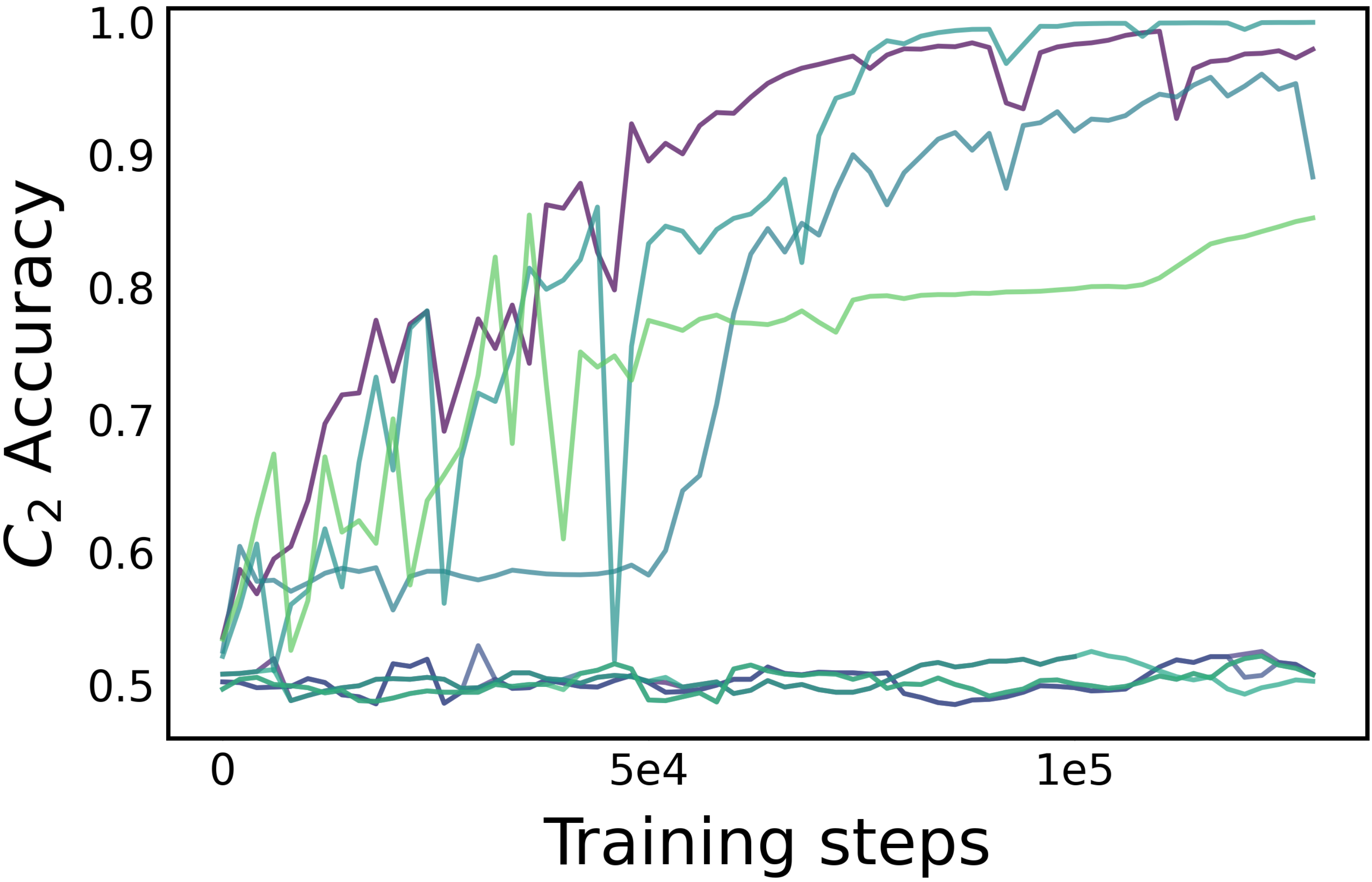}
        \caption{Training curves for $C_2$ (i.e. parity; 10 replicates).}
        \includegraphics[width=0.75\textwidth]{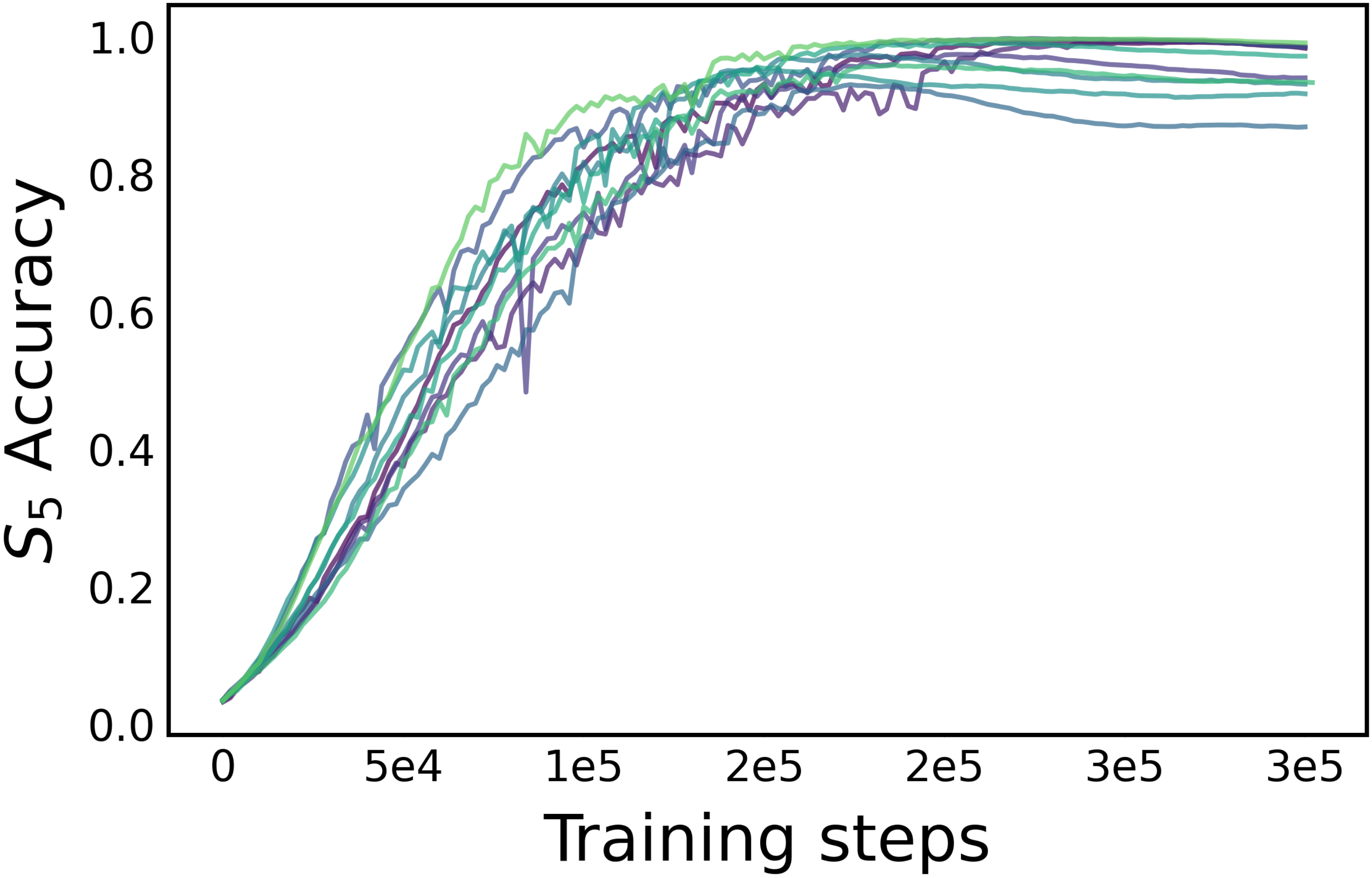}
        \caption{Training curves for $S_5$ (10 replicates).}
    \end{subfigure}
    \vspace{-0.2cm}
    \caption{Overview of the empirical results in Section~\ref{sec:experiments-shortcut}, on in-distribution learnability of shortcuts by standard Transformer training.
    \emph{(a)} Truncated table of results (in-distribution accuracy); rows specify semiautomaton simulation problems, and columns specify network depth.
    \emph{(b),(c)} Training is highly unstable.
    \label{fig:experiments-shortcut}
    }
\end{figure}

\begin{figure}
    \centering
    \begin{subfigure}[t]{0.32\textwidth}
        \centering
        \includegraphics[width=0.9\textwidth]{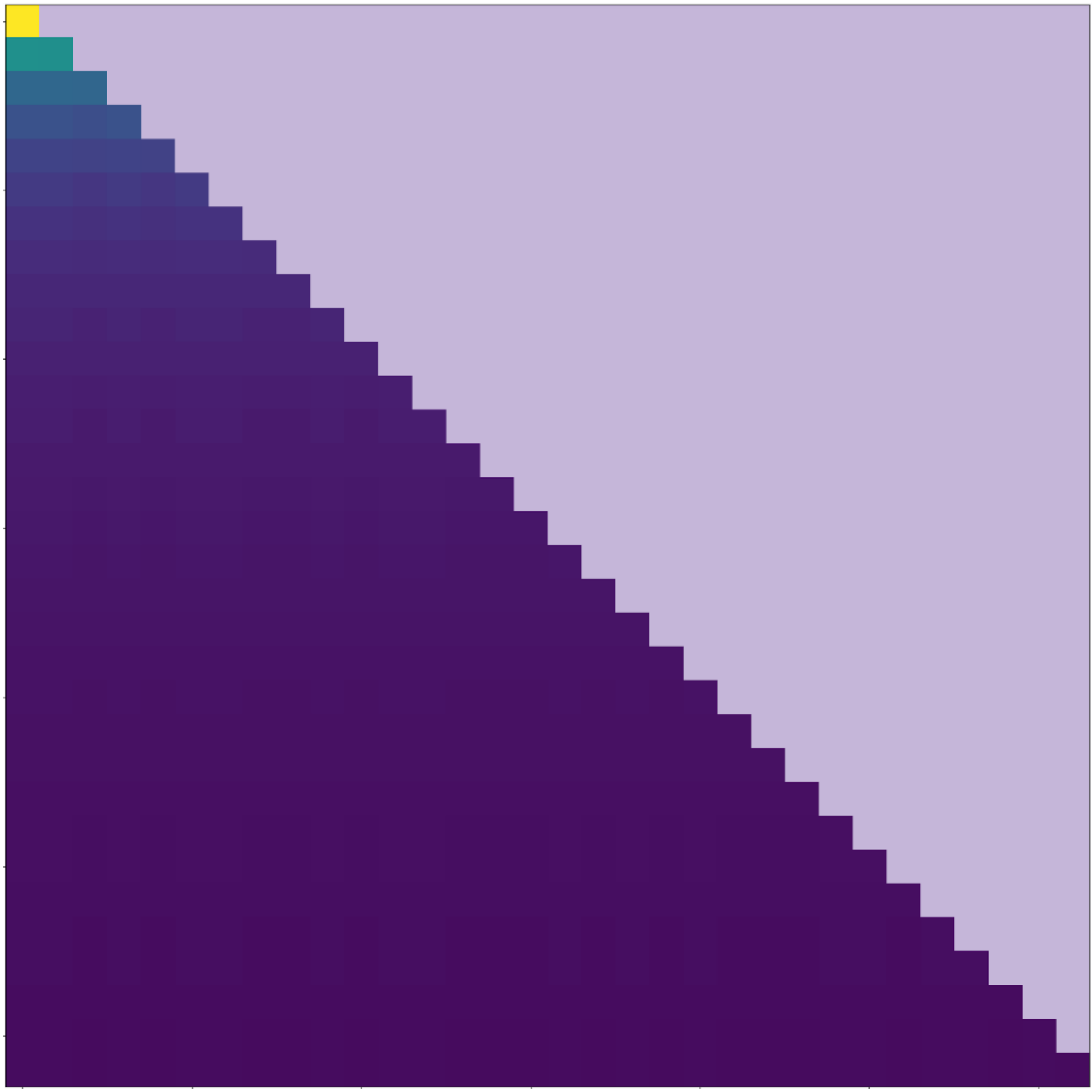}
        \caption{1st layer, uniform attention}
    \end{subfigure}
    \hspace{0.2em}
    \begin{subfigure}[t]{0.32\textwidth}
        \centering
        \includegraphics[width=0.9\textwidth]
         {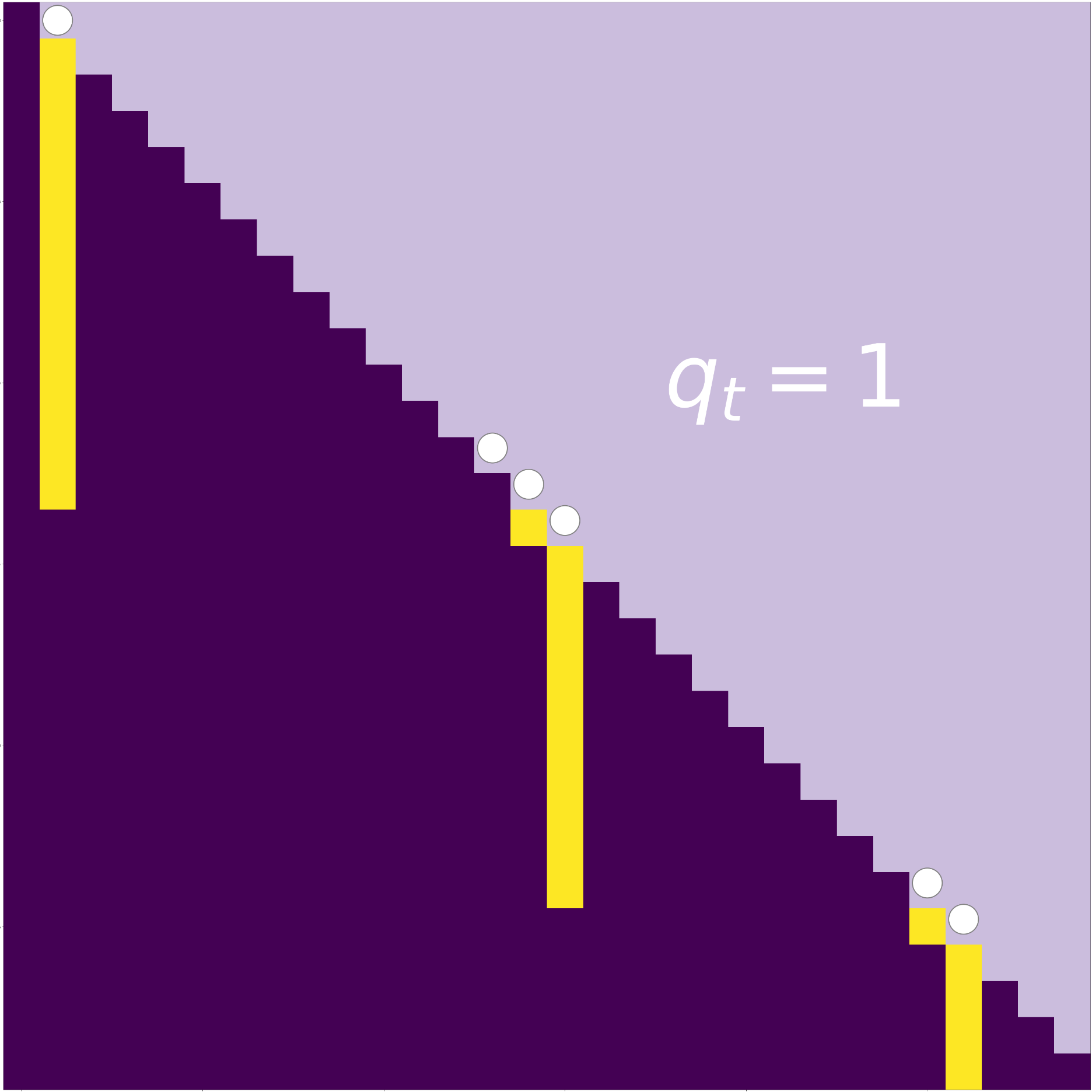}
        \caption{4th layer, left boundary detector}
    \end{subfigure}
    \hspace{0.2em}
    \begin{subfigure}[t]{0.32\textwidth}
        \centering
        \includegraphics[width=0.9\textwidth]
        {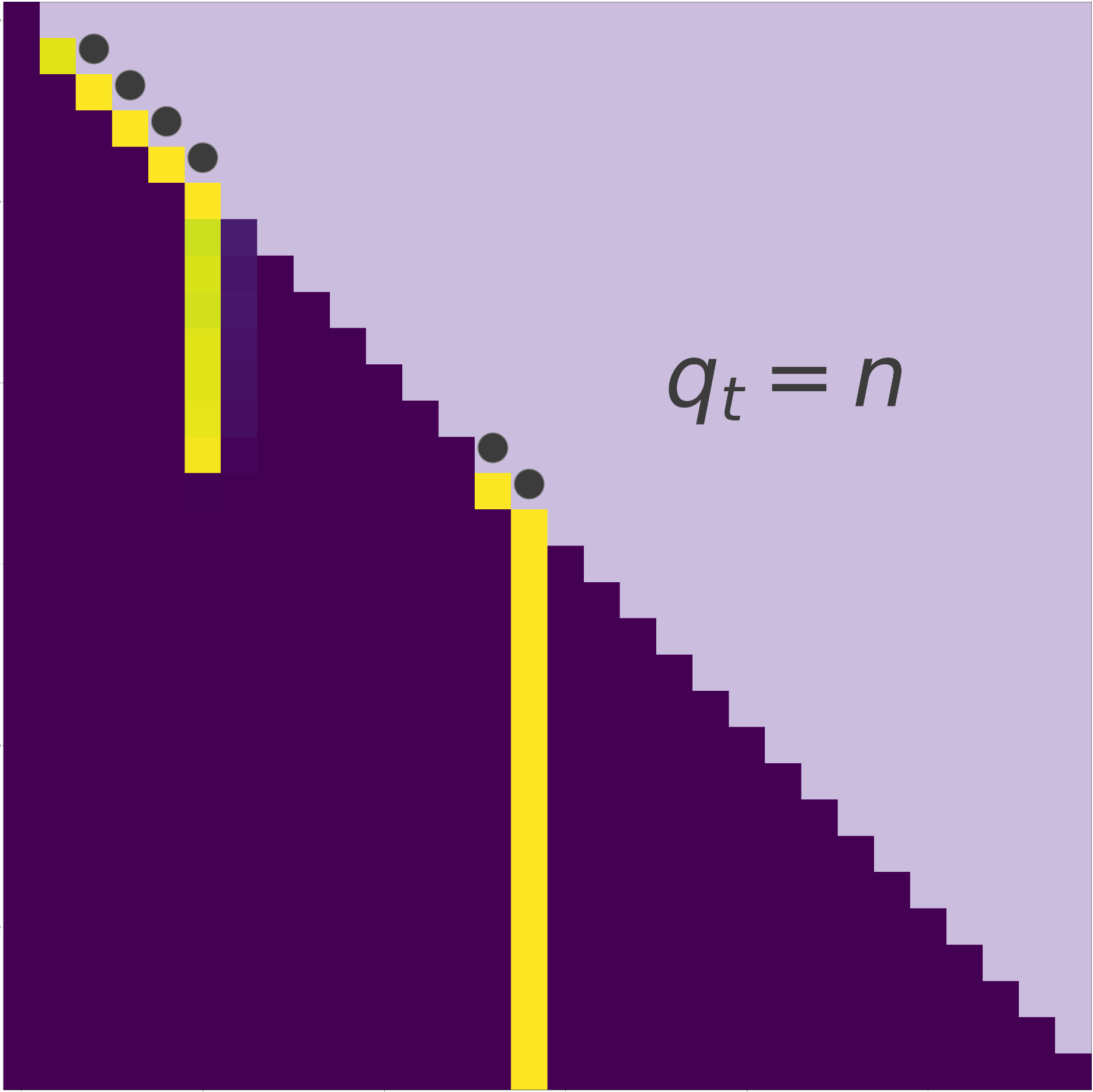}
        \caption{4th layer, right boundary detector}
    \end{subfigure}
    \caption{Attention heads implement a \textit{nearest boundary detector} for the 1-dimensional gridworld task (see Figure \ref{fig:semiautomaton-examples} and Theorem \ref{thm:shortcut-gridworld}): the lower triangle shows the causal attention patterns (the upper triangle is masked out, hence all 0), where a brighter color corresponds to a higher attention score.
    Positions for the actual boundaries are marked by white (for the left boundary i.e. state 1) or gray (for the right boundary, i.e. state $n$) dots.
    This shows that our theoretical construction in Theorem \ref{thm:shortcut-gridworld} agrees with the solutions found in practice.\looseness=-1
    }
    \label{fig:attention_main}
\end{figure}

The results in Section~\ref{sec:theory} only provide a precise understanding of \emph{representability}: they show that shortcut solutions exist within the parameter space of a shallow Transformer. To understand whether Transformers can actually learn these shortcuts from data, we must introduce the additional considerations of \emph{generalization} and \emph{optimization}. It is notoriously difficult to derive meaningful analyses which account for all of these factors in deep learning; thus, we do not attempt to do so in this work.\footnote{Our bounds on the parameter count and weight norms do imply classical generalization bounds for appropriately norm-constrained Transformers \citep{edelman2022inductive}, but these are too coarse-grained to provide non-vacuous predictions of generalization behavior.} Instead, we approach the end-to-end question with an empirical lens: trained on sequences arising from a variety of automata, does a shallow (depth-$L \ll T$) Transformer converge to correct simulators of these automata?

For a selection of 19 semiautomata corresponding to various groups and semigroups (detailed descriptions in Appendix~\ref{subsubsec:appendix-learn-shortcuts}), we train shallow Transformer (GPT-2-like~\citep{GPT2}) models to map randomly sampled sequences $(\sigma_1, \ldots, \sigma_T)$ to their corresponding state sequences $(q_1, \ldots, q_T)$, and evaluate their accuracy on held-out sequences.
We vary the depth $L$ from $1$ to $16$, and use freshly-sampled sequences of length $T=100$. In this setup, the number of sequences encountered during training ($\leq 10^6$) is far smaller than the number of distinct input sequences ($|\Sigma|^{100}$). Thus, brute-force memorization cannot solve this task, and generalization is necessary to achieve nontrivial performance.

Strikingly, we obtain positive results ($>\!99\%$ in-distribution accuracy\footnote{Our primary goal is to understand if gradient-based training can find shortcut solutions at all, rather than whether such training is stable. Accordingly, unless otherwise noted, we report the performance of the \emph{best} model among 20 replicates. See Appendix~\ref{sec:appendix-experiments} for details and sensitivity analyses.}) for \emph{every} finite-state semiautomaton we considered, including ones which generate the non-solvable groups $A_5$ and $S_5$. Figure~\ref{fig:experiments-shortcut}a gives a selection of our full results (in Appendix~\ref{subsec:appendix-experiments-shortcut}). We find that more complex semiautomata (corresponding to non-abelian groups) require deeper networks to learn, in agreement with our theoretical constructions.

\paragraph{What about the small fraction of mistakes?} Our theoretical results show that there are logarithmic-depth ($S_5, A_5$) and constant-depth (all the others) solutions which simulate these semiautomata with exactly 100\% accuracy. Of course, with such long sequences, black-box evaluation of whether this accuracy is reached in the population distribution is computationally infeasible. However, we note that without periodic activations (or some other mechanism for extrapolating to unseen count values), our theoretical constructions require MLPs to memorize the mod-$n$ function.
This will be revisited in the out-of-distribution evaluation experiments in Section~\ref{subsec:experiments-ood}, but there is even a corresponding implication for in-distribution mistakes: if a model never sees outlier counts during training, it is expected to make mistakes on those outliers when they appear in evaluation.

\paragraph{Which shallow solutions are learned?} Our theoretical results identify shortcut solutions which follow multiple, mutually incompatible paradigms. In general, we do not attempt a full investigation of \emph{mechanistic interpretability} of the trained models. In particular, we do not claim that the networks discover implementations are isomorphic to those described in the proofs of Theorem~\ref{thm:shortcut-log-depth}, \ref{thm:shortcut-const-depth}, and \ref{thm:shortcut-gridworld}. However, as a preliminary exploration, we visualize some of the attention patterns in Figure~\ref{fig:experiments-shortcut}b within successfully-trained models, finding attention heads which perform \emph{flat summations} (with uniform attention) and \emph{conditional resets}, agreeing with the construction in Theorem \ref{thm:shortcut-gridworld}.

\paragraph{Optimization instability.} Although sufficiently deep networks find the solutions with non-negligible probability, the training dynamics are unstable; Figure~\ref{fig:experiments-shortcut}b,c show example training curves, which exhibit high variance, negative progress, or accuracy that decays with continued training.
In the same vein as the ``synthetic reasoning tasks'' introduced by \cite{zhang2022unveiling}, we hope that semiautomaton simulation will be useful as a clean, nontrivial testbed (with multiple difficulty knobs) for debugging and improving training algorithms, and perhaps the neural architectures themselves.
More details are deferred to Appendix~\ref{subsubsec:appendix-learn-shortcuts}.

\section{Further experiments: more challenging settings}
\label{sec:experiments-scratchpad}

\begin{figure}
    \centering
    \begin{subfigure}[b]{0.6\textwidth}
        \centering
        \begin{footnotesize}
        {\renewcommand{\arraystretch}{1.5}
         \begin{tabular}{c|ccccc}
            \toprule
            \textbf{Task} & $\mathrm{Dyck}_{4,8}$ & $\mathrm{Grid}_9$ & $S_5$ & $C_4$ & $D_{8}$ %
            \\ \midrule
            \textbf{Observation} & stack top & $\mathbbm{1}_{\text{boundary}}$ & $\pi_{1:t}(1)$ & $\mathbbm{1}_{0 \text{ mod } 4}$ & location %
            \\  \midrule
            \textbf{Accuracy} & 100.0 & 100.0 & 99.6 & 99.9 & 100.0 %
            \\
            \bottomrule
          \end{tabular}
        }
        \end{footnotesize}
        \vspace{0.3\baselineskip}
        \caption{Accuracies with indirect supervision (details in Appendix~\ref{subsubsec:appendix-indirect}). LSTM gets 100\% on all tasks.
        }
    \end{subfigure}
    \hspace{0.02\textwidth}
    \begin{subfigure}[b]{0.32\textwidth}
        \centering
        \includegraphics[width=\textwidth]{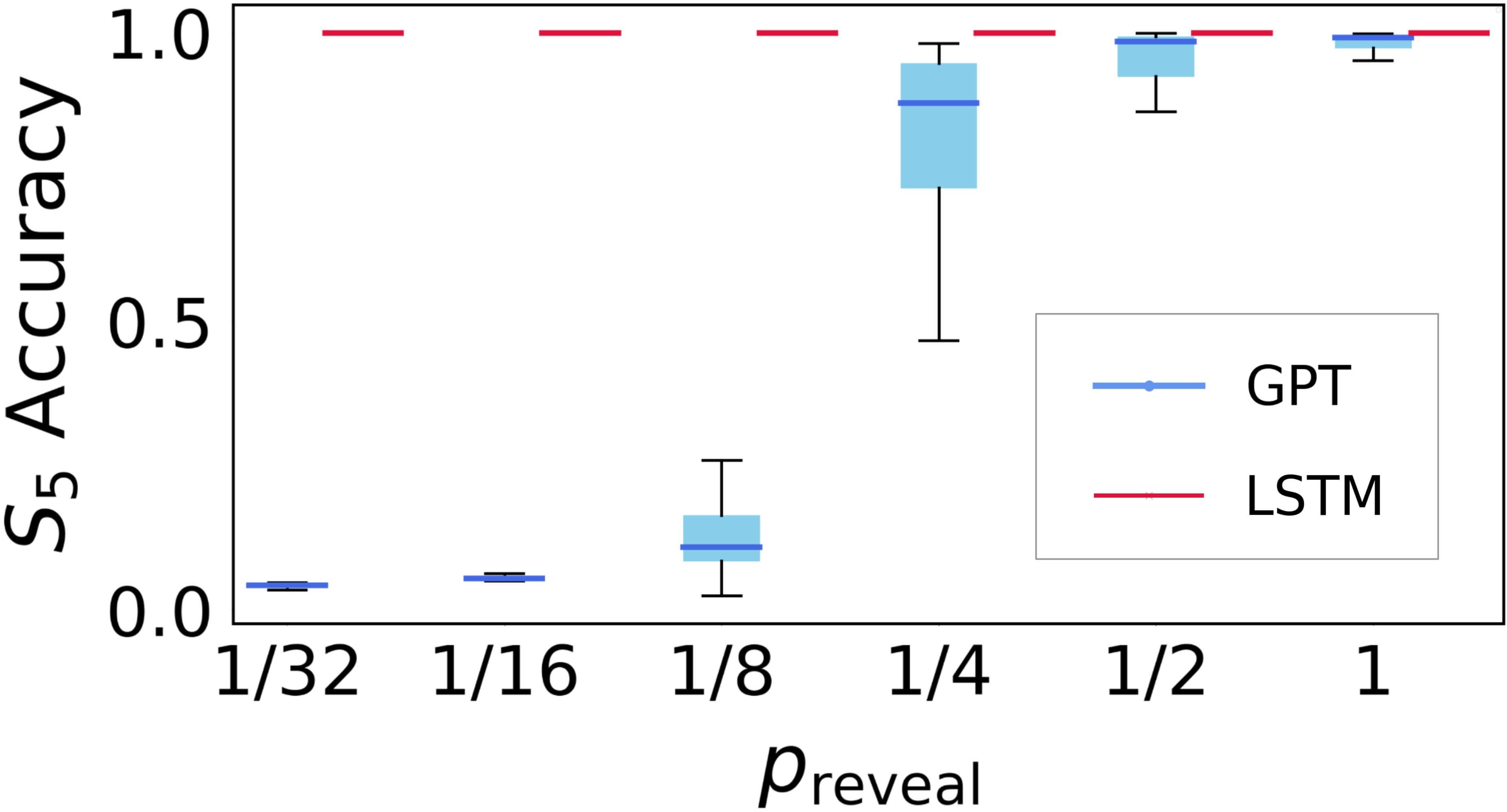}
        \caption{Varying $p_{\mathrm{reveal}}$ (log spacing).}
    \end{subfigure}
    \caption{Overview of the empirical results in Section~\ref{subsec:experiments-supervision}.
    \emph{(a)} Learning in the latent-state setting, with various observation maps $\varphi(q_t)$.
    \emph{(b)} Learning from incomplete state sequences: final accuracy vs. position-wise probability of a hidden token, for GPT and LSTM;
    the mean and standard deviations are taken over 25 runs.
    }
    \label{fig:experiments-further}
\end{figure}

\begin{figure}
    \centering
    \includegraphics[width=0.75\textwidth]{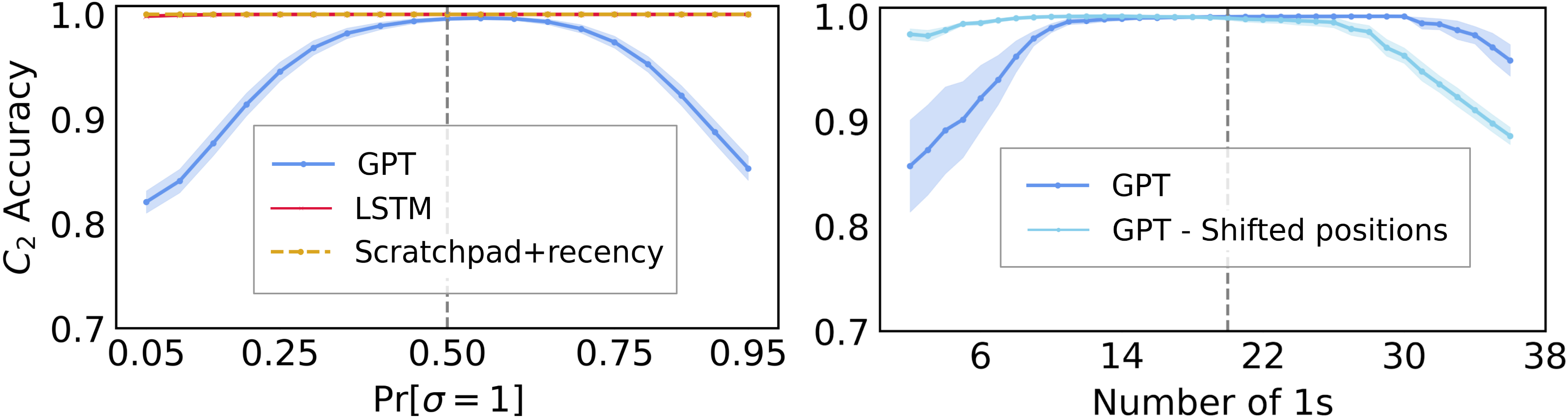}
    \caption{OOD generalization on $C_2$ (parity):
    Transformers fail to generalize to different distributions (\textit{left}) because shortcuts fail to generalize to unseen counts (\textit{right}; the 1s are uniformly distributed in the sequence).
    In contrast, recurrent solutions (LSTM, and Transformer with recency-biased scratchpad training) maintain perfect accuracy.
    }
    \label{fig:experiments-ood}
\end{figure}

\begin{figure}
    \centering
    \includegraphics[width=0.75\textwidth]{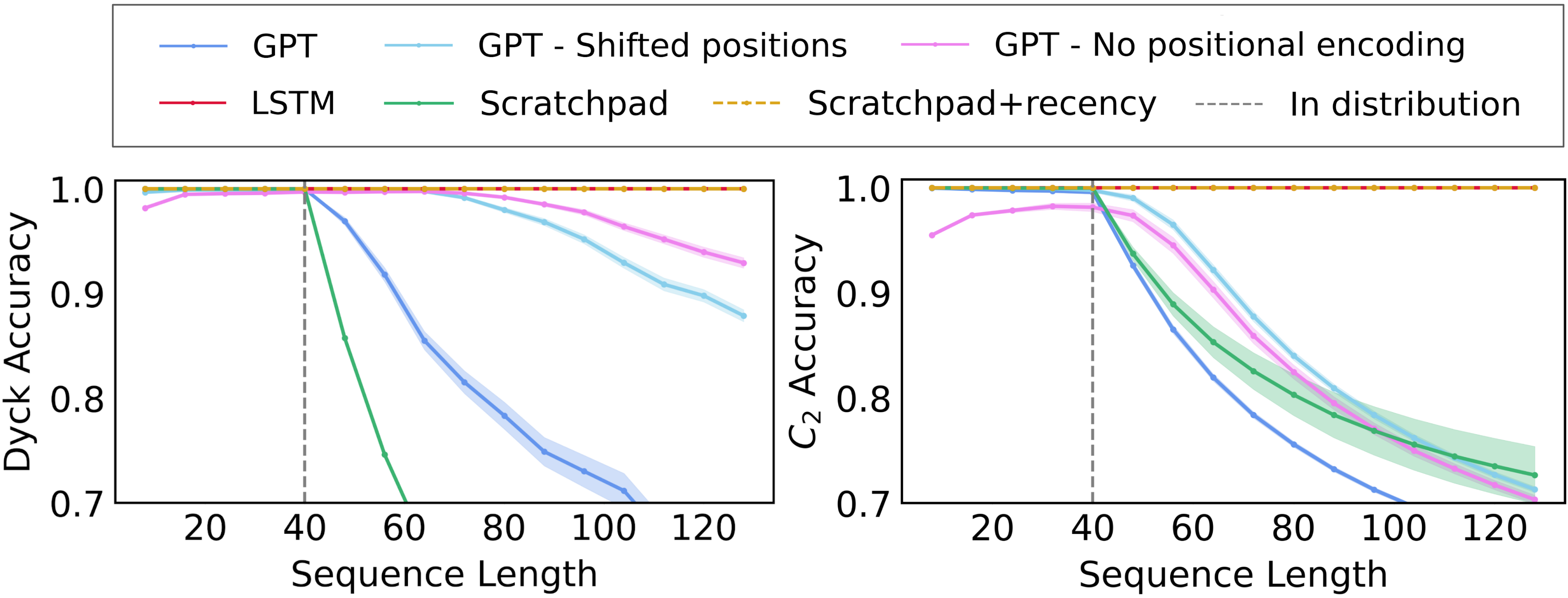}
    \caption{
    Length generalization on Dyck (\emph{left}) and $C_2$ (\emph{right}): Transformers fail to generalize to longer sequences, but can be improved by modifying positional encodings.
    In contrast, recurrent solutions (LSTM, and Transformer with recency-biased scratchpad training) maintain perfect accuracy.
    }
    \label{fig:experiments-length}
\end{figure}

The results from Sections~\ref{sec:theory} and \ref{sec:experiments-shortcut} show that Transformers can learn shortcuts end-to-end, unobstructed by depth, generalization, or optimization. However, the experiments in Section~\ref{sec:experiments-shortcut} are idealized in several ways; a natural question is whether these findings are robust to various challenges that arise in practice. In this section, we investigate the robustness of the shallow Transformer solutions, compared to those found by RNNs (the ``natural'' architecture for simulating semiautomata). We consider harder forms of supervision (Section~\ref{subsec:experiments-supervision}) and evaluation (Section~\ref{subsec:experiments-ood}); details are deferred to Appendix~\ref{subsec:appendix-experiments-scratchpad}.

\subsection{Incomplete and indirect supervision}
\label{subsec:experiments-supervision}

\paragraph{Successful learning with partial observations.}
Consider the case of \emph{partial observability}.
For any semiautomaton $\gA = (Q, \Sigma, \delta)$ and a (generally non-invertible) \emph{observation} function $\varphi : Q \rightarrow \tilde Q$, we can define the problem of predicting $\tilde q_t := \varphi(q_t)$. If we can only obtain observations $\tilde{q}_t$ (i.e., the state is latent), this fully captures the problem of learning a finite-state \emph{automaton} from data.
The results in this paper have shown that this is equivalent to the fully-observable case in terms of \emph{representation}. However, the \emph{learning} problem can be much harder; indeed, this may account for \cite{Bhattamishra20}'s negative results on learning regular languages with constant-depth Transformers. Note that this also captures autoregressive next-token prediction tasks induced by distributions (e.g., generating Dyck languages \citep{YaoPPN20}) where the sequence's continuations depend on a latent semiautomaton's state (e.g., the current stack for Dyck).
Despite these potential challenges, we find that Transformers are able to find solutions with good in-distribution performance for all partially observable settings we consider;
see Figure~\ref{fig:experiments-further}a.

\paragraph{Learning from incomplete state sequences: RNNs are better.} 
Next, we consider the setting which is identical to that described in Section~\ref{sec:experiments-shortcut}, but each state $q_t$ is randomly revealed from the training data with some probability $0 \leq p_\mathrm{reveal} \leq 1$.
As with partial observability, this does not affect representation issues, but can make learning/optimization much harder.
Figure~\ref{fig:experiments-further}b shows the accuracy of $S_5$ for models trained on length 100 sequences for various $p_{\mathrm{reveal}}$.
It can be seen that Transformers may be unable to find good solutions when the labels become sparser, whereas LSTM's performance stays robust across all choices of $p_{\mathrm{reveal}}$.

\subsection{Out-of-distribution shortcomings of shortcut solutions}
\label{subsec:experiments-ood}

\paragraph{Out-of-distribution generalization: RNNs are better.}
The theoretical construction of modular counters (Lemma~\ref{lem:cyclic}) suggests a possible failure mode: if attention performs prefix addition and the MLP computes the sum modulo $n$, the MLP could fail on sums unseen during training.
This suggests that if the distribution over $\sigma_{1:T}$ shifts between training and testing (but the semiautomaton remains the same), a non-recurrent shortcut solution might map inputs into an intermediate latent variable space (like the sum) which fails to generalize.

Indeed, we observe that with the same models which obtain the positive in-distribution results in Section~\ref{sec:experiments-shortcut}, accuracy degrades as distribution shift increases; see Figure~\ref{fig:experiments-ood} \emph{(left)}, where the performance drops as the probability of seeing input $\sigma=1$ deviates from the training distribution ($\mathrm{Pr}[\sigma=1]=0.5$).
From the viewpoint of mechanistic interpretation, one possible explanation is that Transformers learn shortcuts that calculates parity by first counting the number of 1s in the input sequence then computing modulo 2, and hence struggle to deal with sequences where the count is less frequently during training.
To verify this hypothesis, we further compare the accuracy against number of 1s in the sequence (Figure~\ref{fig:experiments-ood} \emph{(right)}); details are deferred to Appendix~\ref{sec:appendix-experiments}.

\paragraph{Length generalization: RNNs are better.}
More ambitiously, we could try to use these models to extrapolate to longer sequence lengths $T$ than those seen in the training data. Promoting this difficult desideratum of \emph{length generalization} is an intricate problem in its own right; see \citet{YaoPPN20,anil2022exploring} for more experiments similar to ours.
Figure \ref{fig:experiments-length} shows the performance on sequences of various lengths.
In contrast to LSTM's perfect performance on all scenarios, Transformer's accuracy drops sharply as we move to lengths unseen during training.
This is not purely due to unseen values of the positional encoding: randomly shifting the positions during training can cover all the positions seen during testing, which helps improve the length generalization performance but cannot make it perfect; we see similar results for removing positional encodings altogether.
Finally, we empirically show that the above flaws are circumventable.
Using a combination of \emph{scratchpad} (a.k.a. ``chain-of-thought'') \citep{nye2021,wei2022ChainofThought} and recency bias \citep{Press22ALiBi}, we demonstrate that Transformers can be guided towards learning recurrent (depth-$T$) solutions, which generalize out-of-distribution and to longer sequence lengths (Figure~\ref{fig:experiments-length}, yellow curves).
Details are deferred to Section~\ref{subsubsec:appendix-ood}. \looseness=-1

\paragraph{Discussion: shortcuts as ``unintended'' solutions.} Throughout the deep learning literature, the term \emph{shortcut} is often used to refer to undesired (i.e., misleading, spurious, or overfitting) statistical properties of learned representations \citep{geirhos2020shortcut,robinson2021can}. Meanwhile, under our computational ($o(T)$ circuit depth) definition, shortcut solutions are perfectly valid ways to represent recurrent computations. The results in this section establish a connection between these notions: partially-learned computational shortcuts can be statistical shortcuts. Specifically, a non-recurrent architecture can ``hallucinate'' intermediate variables other than the state (e.g. the ``count'' variable for the parity automaton), is thus sensitive to the coverage of these variables in the training data. This leads to out-of-distribution generalization failures (e.g. on rare counts) which are not present in recurrent models.

\paragraph{Computational-statistical tradeoffs.} The experiments in this section highlight a statistical penalty for learning recurrent computations with a non-recurrent architecture. However, the computational advantage of a shallow architecture is extremely appealing: maximally leveraging parallel computation, training and inference can be done much faster ($O(\log T)$ or $O(1)$ time, compared to $O(T)$). This highlights a delicate tradeoff between RNNs and Transformers, where neither architecture dominates the other, even when considering this elementary class of algorithmic problems. Attaining the best of both worlds with a practical architecture is an interesting avenue for future work.

\section{Conclusions and future work}
\label{sec:conclusion}

We have conducted a theoretical and empirical analysis of how shallow Transformers can learn \emph{shortcuts} to the recurrent computations of finite-state automata. These shortcuts replace $T$ sequential iterations of a recurrent unit with a single pass through $L \ll T$ parallel self-attention layers.
Our theoretical results show that shortcuts are ubiquitous, and characterize extremely shallow ones (with $L$ independent of the context length $T$) using algebraic machinery (Krohn-Rhodes theory). Empirically, we have shown that gradient-based optimization successfully finds these shortcuts. While the solutions found in practice generalize near-perfectly in-distribution (Section~\ref{sec:experiments-shortcut}), they lack out-of-distribution robustness (Section~\ref{sec:experiments-scratchpad}). We hope that these results shed new light on the internal reasoning mechanisms of Transformers, as well as the design space for architectures capable of algorithmic reasoning.

\noindent\textbf{Future work.} This work is an initial foray into the interplay between neural architectures, algebraic automata theory, and circuit complexity. We list some open questions:
\begin{itemize}[leftmargin=1.5em]
    \item \emph{Finer-grained circuit complexity of self-attention:} For certain automata of interest (e.g. bounded Dyck language parsers \citep{YaoPPN20}, and the gridworld automata from Theorem~\ref{thm:shortcut-gridworld}), there exist extremely shallow Transformer solutions, with depth independent of both $T$ and $|Q|$. Which other natural classes of automata have this property of ``beyond Krohn-Rhodes'' representability?
    \item \emph{When and why does gradient-based optimization work?} The precise understanding of hierarchical representation learning in neural networks is an active frontier of research. The empirical tractability of finding ``algebraic'' shortcut solutions with gradient descent is an especially striking case, as related problems are known to be $\mathsf{PSPACE}$-hard \citep{kozen1977lower,beaudry1992membership,cho1991finite}.
    \item \emph{Mechanistic interpretability:} Automaton simulation problems generate a rich class of challenging test cases for the reverse engineering of trained networks (see \citet{nanda2022mechanistic}), and may yield further insights on the inductive biases of Transformers. In our preliminary attempts, we were only able to interpret a small number of simple models.
\end{itemize}

\section*{Acknowledgements}
We are very grateful to Abhishek Shetty for helpful discussions about circuit complexity. We also thank Ashwini Pokle for thoughtful comments and suggestions towards improving clarity and readability.

\section*{Reproducibility Statement}
Complete proofs of the theoretical results are provided in Appendix \ref{app:proofs}, with a self-contained tutorial of relevant algebraic concepts in Appendix \ref{subsec:appendix-algebra-background}.
For the empirical results, all our datasets are derived from synthetic distributions, which are clearly described in Appendix \ref{subsec:appendix-experiments-shortcut} and \ref{subsec:appendix-experiments-scratchpad}.
The architectures, implementations (with references to popular base repositories), and hyperparameters (including training procedure) are documented in Appendix \ref{subsec:experiments-details}.
Our open-source data-generating code is available from our \href{https://clarabing.github.io/shortcut_automata/}{project page}.

\bibliography{references}

\begin{thebibliography}{102}
\providecommand{\natexlab}[1]{#1}
\providecommand{\url}[1]{\texttt{#1}}
\expandafter\ifx\csname urlstyle\endcsname\relax
  \providecommand{\doi}[1]{doi: #1}\else
  \providecommand{\doi}{doi: \begingroup \urlstyle{rm}\Url}\fi

\bibitem[Anil et~al.(2022)Anil, Wu, Andreassen, Lewkowycz, Misra, Ramasesh,
  Slone, Gur-Ari, Dyer, and Neyshabur]{anil2022exploring}
Cem Anil, Yuhuai Wu, Anders Andreassen, Aitor Lewkowycz, Vedant Misra, Vinay
  Ramasesh, Ambrose Slone, Guy Gur-Ari, Ethan Dyer, and Behnam Neyshabur.
\newblock Exploring length generalization in large language models.
\newblock \emph{arXiv:2207.04901}, 2022.

\bibitem[Arora and Barak(2009)]{arora2009computational}
Sanjeev Arora and Boaz Barak.
\newblock \emph{Computational complexity: a modern approach}.
\newblock Cambridge University Press, 2009.

\bibitem[Bansal et~al.(2022)Bansal, Schwarzschild, Borgnia, Emam, Huang,
  Goldblum, and Goldstein]{bansal2022endtoend}
Arpit Bansal, Avi Schwarzschild, Eitan Borgnia, Zeyad Emam, Furong Huang, Micah
  Goldblum, and Tom Goldstein.
\newblock End-to-end algorithm synthesis with recurrent networks: Logical
  extrapolation without overthinking.
\newblock \emph{arXiv:-2202.05826}, 2022.

\bibitem[Barak et~al.(2022)Barak, Edelman, Goel, Kakade, Malach, and
  Zhang]{barak2022hidden}
Boaz Barak, Benjamin~L Edelman, Surbhi Goel, Sham Kakade, Eran Malach, and
  Cyril Zhang.
\newblock Hidden progress in deep learning: {SGD} learns parities near the
  computational limit.
\newblock \emph{arXiv:2207.08799}, 2022.

\bibitem[Barrington(1986)]{barrington1986bounded}
David A.~Mix Barrington.
\newblock Bounded-width polynomial-size branching programs recognize exactly
  those languages in $\mathsf{NC}^1$.
\newblock In \emph{Symposium on the Theory of Computing}, 1986.

\bibitem[Barrington and Thérien(1988)]{barrington1988finite}
David A.~Mix Barrington and Denis Thérien.
\newblock Finite monoids and the fine structure of $\mathsf{NC}^1$.
\newblock \emph{Journal of the ACM}, 1988.

\bibitem[Bhattamishra et~al.(2020)Bhattamishra, Ahuja, and
  Goyal]{Bhattamishra20}
Satwik Bhattamishra, Kabir Ahuja, and Navin Goyal.
\newblock On the ability and limitations of transformers to recognize formal
  languages.
\newblock In \emph{Conference on Empirical Methods in Natural Language
  Processing}, 2020.

\bibitem[Bronstein et~al.(2021)Bronstein, Bruna, Cohen, and
  Veli{\v{c}}kovi{\'c}]{bronstein2021geometric}
Michael~M Bronstein, Joan Bruna, Taco Cohen, and Petar Veli{\v{c}}kovi{\'c}.
\newblock Geometric deep learning: Grids, groups, graphs, geodesics, and
  gauges.
\newblock \emph{arXiv:2104.13478}, 2021.

\bibitem[Chandra et~al.(1983)Chandra, Fortune, and
  Lipton]{chandra1983unbounded}
Ashok~K Chandra, Steven Fortune, and Richard Lipton.
\newblock Unbounded fan-in circuits and associative functions.
\newblock In \emph{Symposium on Theory of Computing}, 1983.

\bibitem[Chen et~al.(2021{\natexlab{a}})Chen, Lu, Rajeswaran, Lee, Grover,
  Laskin, Abbeel, Srinivas, and Mordatch]{Chen21decision}
Lili Chen, Kevin Lu, Aravind Rajeswaran, Kimin Lee, Aditya Grover, Michael
  Laskin, Pieter Abbeel, Aravind Srinivas, and Igor Mordatch.
\newblock Decision transformer: Reinforcement learning via sequence modeling.
\newblock In \emph{Advances in Neural Information Processing Systems},
  2021{\natexlab{a}}.

\bibitem[Chen et~al.(2021{\natexlab{b}})Chen, Tworek, Jun, Yuan, Pinto, Kaplan,
  Edwards, Burda, Joseph, Brockman, Ray, Puri, Krueger, Petrov, Khlaaf, Sastry,
  Mishkin, Chan, Gray, Ryder, Pavlov, Power, Kaiser, Bavarian, Winter, Tillet,
  Such, Cummings, Plappert, Chantzis, Barnes, Herbert-Voss, Guss, Nichol,
  Paino, Tezak, Tang, Babuschkin, Balaji, Jain, Saunders, Hesse, Carr, Leiki,
  Achiam, Misra, Morikawa, Radford, Knight, Brundage, Murati, Mayer, Welinder,
  McGrew, Amodei, McCandlish, Sutskever, and Zaremba]{codex}
Mark Chen, Jerry Tworek, Heewoo Jun, Qiming Yuan, Henrique Ponde de~Oliveira
  Pinto, Jared Kaplan, Harri Edwards, Yuri Burda, Nicholas Joseph, Greg
  Brockman, Alex Ray, Raul Puri, Gretchen Krueger, Michael Petrov, Heidy
  Khlaaf, Girish Sastry, Pamela Mishkin, Brooke Chan, Scott Gray, Nick Ryder,
  Mikhail Pavlov, Alethea Power, Lukasz Kaiser, Mohammad Bavarian, Clemens
  Winter, Philipp Tillet, Felipe~Petroski Such, Dave Cummings, Matthias
  Plappert, Fotios Chantzis, Elizabeth Barnes, Ariel Herbert-Voss,
  William~Hebgen Guss, Alex Nichol, Alex Paino, Nikolas Tezak, Jie Tang, Igor
  Babuschkin, Suchir Balaji, Shantanu Jain, William Saunders, Christopher
  Hesse, Andrew~N. Carr, Jan Leiki, Josh Achiam, Vedant Misra, Evan Morikawa,
  Alec Radford, Matthew Knight, Miles Brundage, Mira Murati, Katie Mayer, Peter
  Welinder, Bob McGrew, Dario Amodei, Sam McCandlish, Ilya Sutskever, and
  Wojciech Zaremba.
\newblock Evaluating large language models trained on code.
\newblock \emph{arXiv:2107.03374}, 2021{\natexlab{b}}.

\bibitem[Cho and Huynh(1991)]{cho1991finite}
Sang Cho and Dung~T Huynh.
\newblock Finite-automaton aperiodicity is {PSPACE}-complete.
\newblock \emph{Theoretical Computer Science}, 1991.

\bibitem[Chomsky and Sch{\"u}tzenberger(1959)]{chomsky1959algebraic}
Noam Chomsky and Marcel~P Sch{\"u}tzenberger.
\newblock The algebraic theory of context-free languages.
\newblock In \emph{Studies in Logic and the Foundations of Mathematics}. 1959.

\bibitem[Chu et~al.(2021)Chu, Tian, Zhang, Wang, Wei, Xia, and
  Shen]{chu2021conditional}
Xiangxiang Chu, Zhi Tian, Bo~Zhang, Xinlong Wang, Xiaolin Wei, Huaxia Xia, and
  Chunhua Shen.
\newblock Conditional positional encodings for vision transformers.
\newblock \emph{arXiv preprint arXiv:2102.10882}, 2021.

\bibitem[Clark et~al.(2019)Clark, Khandelwal, Levy, and Manning]{Clark19}
Kevin Clark, Urvashi Khandelwal, Omer Levy, and Christopher~D. Manning.
\newblock What does {BERT} look at? {A}n analysis of {BERT}'s attention.
\newblock In \emph{{ACL} Workshop BlackboxNLP: Analyzing and Interpreting
  Neural Networks for NLP}, 2019.

\bibitem[Cybenko(1989)]{cybenko1989approximation}
George Cybenko.
\newblock Approximation by superpositions of a sigmoidal function.
\newblock \emph{Mathematics of Control, Signals and Systems}, 1989.

\bibitem[Daniely(2017)]{daniely2017depth}
Amit Daniely.
\newblock Depth separation for neural networks.
\newblock In \emph{Conference on Learning Theory}, pages 690--696. PMLR, 2017.

\bibitem[Daniely and Malach(2020)]{daniely2020learning}
Amit Daniely and Eran Malach.
\newblock Learning parities with neural networks.
\newblock \emph{Advances in Neural Information Processing Systems}, 2020.

\bibitem[Dehghani et~al.(2019)Dehghani, Gouws, Vinyals, Uszkoreit, and
  Kaiser]{Dehghani19UniversalTransformer}
Mostafa Dehghani, Stephan Gouws, Oriol Vinyals, Jakob Uszkoreit, and Lukasz
  Kaiser.
\newblock Universal transformers.
\newblock In \emph{International Conference on Learning Representations}, 2019.

\bibitem[Del{\'e}tang et~al.(2022)Del{\'e}tang, Ruoss, Grau-Moya, Genewein,
  Wenliang, Catt, Hutter, Legg, and Ortega]{NNChomsky}
Gr{\'e}goire Del{\'e}tang, Anian Ruoss, Jordi Grau-Moya, Tim Genewein, Li~Kevin
  Wenliang, Elliot Catt, Marcus Hutter, Shane Legg, and Pedro~A Ortega.
\newblock Neural networks and the chomsky hierarchy.
\newblock \emph{arXiv preprint arXiv:2207.02098}, 2022.

\bibitem[Devlin et~al.(2018)Devlin, Chang, Lee, and Toutanova]{devlin2018bert}
Jacob Devlin, Ming-Wei Chang, Kenton Lee, and Kristina Toutanova.
\newblock {BERT}: Pre-training of deep bidirectional transformers for language
  understanding.
\newblock \emph{arXiv:1810.04805}, 2018.

\bibitem[Drori et~al.(2022)Drori, Zhang, Shuttleworth, Tang, Lu, Ke, Liu, Chen,
  Tran, Cheng, Wang, Singh, Patti, Lynch, Shporer, Verma, Wu, and
  Strang]{drori2022neural}
Iddo Drori, Sarah Zhang, Reece Shuttleworth, Leonard Tang, Albert Lu, Elizabeth
  Ke, Kevin Liu, Linda Chen, Sunny Tran, Newman Cheng, Roman Wang, Nikhil
  Singh, Taylor~L. Patti, Jayson Lynch, Avi Shporer, Nakul Verma, Eugene Wu,
  and Gilbert Strang.
\newblock A neural network solves, explains, and generates university math
  problems by program synthesis and few-shot learning at human level.
\newblock \emph{Proceedings of the National Academy of Sciences}, 2022.

\bibitem[Ebrahimi et~al.(2020)Ebrahimi, Gelda, and Zhang]{EbrahimiGZ20}
Javid Ebrahimi, Dhruv Gelda, and Wei Zhang.
\newblock How can self-attention networks recognize {D}yck-n languages?
\newblock In \emph{Findings of the Association for Computational Linguistics:
  {EMNLP}}, 2020.

\bibitem[Edelman et~al.(2022)Edelman, Goel, Kakade, and
  Zhang]{edelman2022inductive}
Benjamin~L Edelman, Surbhi Goel, Sham Kakade, and Cyril Zhang.
\newblock Inductive biases and variable creation in self-attention mechanisms.
\newblock In \emph{International Conference on Machine Learning}, 2022.

\bibitem[Egri-Nagy and Nehaniv(2015)]{egri2015computational}
Attila Egri-Nagy and Chrystopher~L Nehaniv.
\newblock Computational holonomy decomposition of transformation semigroups.
\newblock \emph{arXiv:1508.06345}, 2015.

\bibitem[Eilenberg(1974)]{eilenberg1974automata}
Samuel Eilenberg.
\newblock \emph{Automata, languages, and machines}.
\newblock Academic Press, 1974.

\bibitem[Eldan and Shamir(2016)]{eldan2016power}
Ronen Eldan and Ohad Shamir.
\newblock The power of depth for feedforward neural networks.
\newblock In \emph{Conference on learning theory}, pages 907--940. PMLR, 2016.

\bibitem[Elhage et~al.(2021)Elhage, Nanda, Olsson, Henighan, Joseph, Mann,
  Askell, Bai, Chen, Conerly, DasSarma, Drain, Ganguli, Hatfield-Dodds,
  Hernandez, Jones, Kernion, Lovitt, Ndousse, Amodei, Brown, Clark, Kaplan,
  McCandlish, and Olah]{elhage2021mathematical}
Nelson Elhage, Neel Nanda, Catherine Olsson, Tom Henighan, Nicholas Joseph, Ben
  Mann, Amanda Askell, Yuntao Bai, Anna Chen, Tom Conerly, Nova DasSarma, Dawn
  Drain, Deep Ganguli, Zac Hatfield-Dodds, Danny Hernandez, Andy Jones, Jackson
  Kernion, Liane Lovitt, Kamal Ndousse, Dario Amodei, Tom Brown, Jack Clark,
  Jared Kaplan, Sam McCandlish, and Chris Olah.
\newblock A mathematical framework for transformer circuits.
\newblock \emph{Transformer Circuits Thread}, 2021.
\newblock URL \url{https://transformer-circuits.pub/2021/framework/index.html}.

\bibitem[Furst et~al.(1984)Furst, Saxe, and Sipser]{FurstSS84Parity}
Merrick Furst, James~B. Saxe, and Michael Sipser.
\newblock Parity, circuits, and the polynomial-time hierarchy.
\newblock \emph{Mathematical Systems Theory}, 1984.

\bibitem[Geirhos et~al.(2020)Geirhos, Jacobsen, Michaelis, Zemel, Brendel,
  Bethge, and Wichmann]{geirhos2020shortcut}
Robert Geirhos, J{\"o}rn-Henrik Jacobsen, Claudio Michaelis, Richard Zemel,
  Wieland Brendel, Matthias Bethge, and Felix~A Wichmann.
\newblock Shortcut learning in deep neural networks.
\newblock \emph{Nature Machine Intelligence}, 2020.

\bibitem[Giannou et~al.(2023)Giannou, Rajput, Sohn, Lee, Lee, and
  Papailiopoulos]{giannou2023looped}
Angeliki Giannou, Shashank Rajput, Jy-yong Sohn, Kangwook Lee, Jason~D Lee, and
  Dimitris Papailiopoulos.
\newblock Looped transformers as programmable computers.
\newblock \emph{arXiv preprint arXiv:2301.13196}, 2023.

\bibitem[Goel et~al.(2017)Goel, Kanade, Klivans, and Thaler]{goel2017reliably}
Surbhi Goel, Varun Kanade, Adam Klivans, and Justin Thaler.
\newblock Reliably learning the {R}e{LU} in polynomial time.
\newblock In \emph{Conference on Learning Theory}, 2017.

\bibitem[Graves(2016)]{graves2016ACT}
Alex Graves.
\newblock Adaptive computation time for recurrent neural networks.
\newblock \emph{arXiv preprint arXiv:1603.08983}, 2016.

\bibitem[Graves et~al.(2014)Graves, Wayne, and Danihelka]{graves2014neural}
Alex Graves, Greg Wayne, and Ivo Danihelka.
\newblock Neural turing machines.
\newblock \emph{arXiv preprint arXiv:1410.5401}, 2014.

\bibitem[Gu et~al.(2017)Gu, Bradbury, Xiong, Li, and Socher]{gu2017non}
Jiatao Gu, James Bradbury, Caiming Xiong, Victor~O.K. Li, and Richard Socher.
\newblock Non-autoregressive neural machine translation.
\newblock \emph{arXiv:1711.02281}, 2017.

\bibitem[Hafner et~al.(2019)Hafner, Lillicrap, Ba, and
  Norouzi]{hafner2019dream}
Danijar Hafner, Timothy Lillicrap, Jimmy Ba, and Mohammad Norouzi.
\newblock Dream to control: Learning behaviors by latent imagination.
\newblock \emph{arXiv:1912.01603}, 2019.

\bibitem[Hahn(2020)]{Hahn20}
Michael Hahn.
\newblock Theoretical limitations of self-attention in neural sequence models.
\newblock \emph{Transactions of the Association for Computational Linguistics},
  2020.

\bibitem[Haviv et~al.(2022)Haviv, Ram, Press, Izsak, and
  Levy]{haviv2022transformer}
Adi Haviv, Ori Ram, Ofir Press, Peter Izsak, and Omer Levy.
\newblock Transformer language models without positional encodings still learn
  positional information.
\newblock \emph{arXiv:2203.16634}, 2022.

\bibitem[He et~al.(2016)He, Zhang, Ren, and Sun]{ResNet}
Kaiming He, Xiangyu Zhang, Shaoqing Ren, and Jian Sun.
\newblock Deep residual learning for image recognition.
\newblock In \emph{{IEEE} Conference on Computer Vision and Pattern
  Recognition}, 2016.

\bibitem[Hertrich et~al.(2021)Hertrich, Basu, Summa, and
  Skutella]{HertrichBSS21}
Christoph Hertrich, Amitabh Basu, Marco~Di Summa, and Martin Skutella.
\newblock Towards lower bounds on the depth of {R}e{LU} neural networks.
\newblock In \emph{Advances in Neural Information Processing Systems}, 2021.

\bibitem[Hillis and Steele~Jr.(1986)]{hillis1986data}
W~Daniel Hillis and Guy~L. Steele~Jr.
\newblock Data parallel algorithms.
\newblock \emph{Communications of the ACM}, 1986.

\bibitem[Hornik et~al.(1989)Hornik, Stinchcombe, and
  White]{hornik1989multilayer}
Kurt Hornik, Maxwell Stinchcombe, and Halbert White.
\newblock Multilayer feedforward networks are universal approximators.
\newblock \emph{Neural networks}, 1989.

\bibitem[Howard and Ruder(2018)]{howard2018universal}
Jeremy Howard and Sebastian Ruder.
\newblock Universal language model fine-tuning for text classification.
\newblock \emph{arXiv:1801.06146}, 2018.

\bibitem[Hutchins et~al.(2022)Hutchins, Schlag, Wu, Dyer, and
  Neyshabur]{hutchins2022block}
DeLesley Hutchins, Imanol Schlag, Yuhuai Wu, Ethan Dyer, and Behnam Neyshabur.
\newblock Block-recurrent transformers.
\newblock \emph{arXiv:2203.07852}, 2022.

\bibitem[Janner et~al.(2021)Janner, Li, and Levine]{JannerLL21trajectory}
Michael Janner, Qiyang Li, and Sergey Levine.
\newblock Offline reinforcement learning as one big sequence modeling problem.
\newblock In \emph{Advances in Neural Information Processing Systems}, 2021.

\bibitem[Kasai et~al.(2021)Kasai, Peng, Zhang, Yogatama, Ilharco, Pappas, Mao,
  Chen, and Smith]{kasai2021finetuning}
Jungo Kasai, Hao Peng, Yizhe Zhang, Dani Yogatama, Gabriel Ilharco, Nikolaos
  Pappas, Yi~Mao, Weizhu Chen, and Noah~A Smith.
\newblock Finetuning pretrained transformers into rnns.
\newblock \emph{arXiv:2103.13076}, 2021.

\bibitem[Ke et~al.(2020)Ke, He, and Liu]{ke2020rethinking}
Guolin Ke, Di~He, and Tie-Yan Liu.
\newblock Rethinking positional encoding in language pre-training.
\newblock \emph{arXiv preprint arXiv:2006.15595}, 2020.

\bibitem[Kleitman et~al.(1976)Kleitman, Rothschild, and
  Spencer]{kleitman1976number}
Daniel~J Kleitman, Bruce~R Rothschild, and Joel~H Spencer.
\newblock The number of semigroups of order n.
\newblock \emph{Proceedings of the American Mathematical Society}, 1976.

\bibitem[Kov{\'a}cs and Praeger(1989)]{kovacs1989finite}
L{\'a}szl{\'o} Kov{\'a}cs and Cheryl Praeger.
\newblock Finite permutation groups with large abelian quotients.
\newblock \emph{Pacific Journal of Mathematics}, 1989.

\bibitem[Krasner and Kaloujnine(1951)]{krasner1951produit}
Marc Krasner and L{\'e}o Kaloujnine.
\newblock Produit complet des groupes de permutations et probleme d’extension
  de groupes {II}.
\newblock \emph{Acta Scientiarum Mathematicarum}, 1951.

\bibitem[Krohn and Rhodes(1965)]{krohn1965algebraic}
Kenneth Krohn and John Rhodes.
\newblock Algebraic theory of machines, {I}: {P}rime decomposition theorem for
  finite semigroups and machines.
\newblock \emph{Transactions of the American Mathematical Society}, 1965.

\bibitem[Lample and Charton(2019)]{lample2019deep}
Guillaume Lample and Fran{\c{c}}ois Charton.
\newblock Deep learning for symbolic mathematics.
\newblock \emph{arXiv:1912.01412}, 2019.

\bibitem[Larsson et~al.(2016)Larsson, Maire, and
  Shakhnarovich]{larsson2016fractalnet}
Gustav Larsson, Michael Maire, and Gregory Shakhnarovich.
\newblock Fractal{N}et: {U}ltra-deep neural networks without residuals.
\newblock \emph{arXiv:1605.07648}, 2016.

\bibitem[Lee et~al.(2017)Lee, Ge, Ma, Risteski, and Arora]{lee2017ability}
Holden Lee, Rong Ge, Tengyu Ma, Andrej Risteski, and Sanjeev Arora.
\newblock On the ability of neural nets to express distributions.
\newblock In \emph{Conference on Learning Theory}, pages 1271--1296. PMLR,
  2017.

\bibitem[Li et~al.(2022)Li, Choi, Chung, Kushman, Schrittwieser, Leblond,
  Eccles, Keeling, Gimeno, Lago, Hubert, Choy, d'Autume, Babuschkin, Chen,
  Huang, Welbl, Gowal, Cherepanov, Molloy, Mankowitz, Sutherland~Robson, Kohli,
  de~Freitas, Kavukcuoglu, and Vinyals]{AlphaCode}
Yujia Li, David Choi, Junyoung Chung, Nate Kushman, Julian Schrittwieser,
  R{\'e}mi Leblond, Tom Eccles, James Keeling, Felix Gimeno, Agustin~Dal Lago,
  Thomas Hubert, Peter Choy, Cyprien de~Masson d'Autume, Igor Babuschkin,
  Xinyun Chen, Po-Sen Huang, Johannes Welbl, Sven Gowal, Alexey Cherepanov,
  James Molloy, Daniel~J. Mankowitz, Esme Sutherland~Robson, Pushmeet Kohli,
  Nando de~Freitas, Koray Kavukcuoglu, and Oriol Vinyals.
\newblock Competition-level code generation with alphacode.
\newblock \emph{arXiv:2203.07814}, 2022.

\bibitem[Loshchilov and Hutter(2017)]{AdamW}
Ilya Loshchilov and Frank Hutter.
\newblock Decoupled weight decay regularization.
\newblock \emph{arXiv:1711.05101}, 2017.

\bibitem[Maler(2010)]{maler2010krohn}
Oded Maler.
\newblock On the {K}rohn-{R}hodes cascaded decomposition theorem.
\newblock In \emph{Time for Verification}. 2010.

\bibitem[Maler and Pnueli(1994)]{maler1994cascaded}
Oded Maler and Amir Pnueli.
\newblock On the cascaded decomposition of automata, its complexity and its
  application to logic ({D}raft).
\newblock 1994.

\bibitem[Mereghetti and Palano(2000)]{mereghetti2000threshold}
Carlo Mereghetti and Beatrice Palano.
\newblock Threshold circuits for iterated matrix product and powering.
\newblock \emph{RAIRO-Theoretical Informatics and Applications}, 2000.

\bibitem[Merrill et~al.(2021)Merrill, Goldberg, Schwartz, and Smith]{Merrill21}
William Merrill, Yoav Goldberg, Roy Schwartz, and Noah~A. Smith.
\newblock On the power of saturated {T}ransformers: {A} view from circuit
  complexity.
\newblock \emph{arXiv:2106.16213}, 2021.

\bibitem[Micheli et~al.(2022)Micheli, Alonso, and
  Fleuret]{micheli2022transformers}
Vincent Micheli, Eloi Alonso, and Fran{\c{c}}ois Fleuret.
\newblock Transformers are sample efficient world models.
\newblock \emph{arXiv:2209.00588}, 2022.

\bibitem[Mukherjee and Basu(2017)]{MukherjeeB17}
Anirbit Mukherjee and Amitabh Basu.
\newblock Lower bounds over boolean inputs for deep neural networks with
  {R}e{LU} gates.
\newblock \emph{arXiv:1711.03073}, 2017.

\bibitem[Nanda and Lieberum(2022)]{nanda2022mechanistic}
Neel Nanda and Tom Lieberum.
\newblock A mechanistic interpretability analysis of grokking.
\newblock \emph{Alignment Forum}, 2022.
\newblock URL
  \url{https://www.alignmentforum.org/posts/N6WM6hs7RQMKDhYjB/a-mechanistic-interpretability-analysis-of-grokking}.

\bibitem[Newman et~al.(2020)Newman, Hewitt, Liang, and Manning]{NewmanHLM20}
Benjamin Newman, John Hewitt, Percy Liang, and Christopher~D. Manning.
\newblock The {EOS} decision and length extrapolation.
\newblock In \emph{BlackboxNLP Workshop on Analyzing and Interpreting Neural
  Networks for NLP}, 2020.

\bibitem[Nichani et~al.(2022)Nichani, Bai, and Lee]{nichani2022identifying}
Eshaan Nichani, Yu~Bai, and Jason~D Lee.
\newblock Identifying good directions to escape the {NTK} regime and
  efficiently learn low-degree plus sparse polynomials.
\newblock \emph{arXiv:2206.03688}, 2022.

\bibitem[Nogueira et~al.(2021)Nogueira, Jiang, and
  Lin]{nogueira2021investigating}
Rodrigo Nogueira, Zhiying Jiang, and Jimmy Lin.
\newblock Investigating the limitations of transformers with simple arithmetic
  tasks.
\newblock \emph{arXiv:2102.13019}, 2021.

\bibitem[Nye et~al.(2021)Nye, Andreassen, Gur-Ari, Michalewski, Austin, Bieber,
  Dohan, Lewkowycz, Bosma, Luan, Sutton, and Odena]{nye2021}
Maxwell Nye, Anders~Johan Andreassen, Guy Gur-Ari, Henryk Michalewski, Jacob
  Austin, David Bieber, David Dohan, Aitor Lewkowycz, Maarten Bosma, David
  Luan, Charles Sutton, and Augustus Odena.
\newblock Show your work: Scratchpads for intermediate computation with
  language models.
\newblock \emph{arXiv:2112.00114}, 2021.

\bibitem[Papadimitriou and Tsitsiklis(1987)]{papadimitriou1987complexity}
Christos~H Papadimitriou and John~N Tsitsiklis.
\newblock The complexity of {M}arkov decision processes.
\newblock \emph{Mathematics of Operations Research}, 1987.

\bibitem[Paszke et~al.(2019)Paszke, Gross, Massa, Lerer, Bradbury, Chanan,
  Killeen, Lin, Gimelshein, Antiga, Desmaison, K"{o}pf, Yang, DeVito, Raison,
  Tejani, Chilamkurthy, Steiner, Fang, Bai, and Chintala]{paszke2019pytorch}
Adam Paszke, Sam Gross, Francisco Massa, Adam Lerer, James Bradbury, Gregory
  Chanan, Trevor Killeen, Zeming Lin, Natalia Gimelshein, Luca Antiga, Alban
  Desmaison, Andreas K"{o}pf, Edward Yang, Zach DeVito, Martin Raison, Alykhan
  Tejani, Sasank Chilamkurthy, Benoit Steiner, Lu~Fang, Junjie Bai, and Soumith
  Chintala.
\newblock Py{T}orch: An imperative style, high-performance deep learning
  library.
\newblock \emph{Advances in Neural Information Processing Systems}, 2019.

\bibitem[P{\'e}rez et~al.(2021)P{\'e}rez, Barcel{\'o}, and
  Marinkovic]{perez2021attention}
Jorge P{\'e}rez, Pablo Barcel{\'o}, and Javier Marinkovic.
\newblock Attention is turing complete.
\newblock \emph{The Journal of Machine Learning Research}, 22\penalty0
  (1):\penalty0 3463--3497, 2021.

\bibitem[Peters et~al.(2018)Peters, Neumann, Iyyer, Gardner, Clark, Lee, and
  Zettlemoyer]{neumann2018deep}
Matthew~E. Peters, Mark Neumann, Mohit Iyyer, Matt Gardner, Christopher Clark,
  Kenton Lee, and Luke Zettlemoyer.
\newblock Deep contextualized word representations.
\newblock \emph{arXiv:1802.05365}, 2018.

\bibitem[Polu and Sutskever(2020)]{polu2020generative}
Stanislas Polu and Ilya Sutskever.
\newblock Generative language modeling for automated theorem proving.
\newblock \emph{arXiv:2009.03393}, 2020.

\bibitem[Press et~al.(2022)Press, Smith, and Lewis]{Press22ALiBi}
Ofir Press, Noah Smith, and Mike Lewis.
\newblock Train short, test long: Attention with linear biases enables input
  length extrapolation.
\newblock In \emph{International Conference on Learning Representations}, 2022.

\bibitem[Radford et~al.(2019)Radford, Wu, Child, Luan, Amodei, and
  Sutskever]{GPT2}
Alec Radford, Jeffrey Wu, Rewon Child, David Luan, Dario Amodei, and Ilya
  Sutskever.
\newblock Language models are unsupervised multitask learners.
\newblock \emph{OpenAI blog}, 2019.

\bibitem[Reif and Tate(1992)]{reif1992threshold}
John~H. Reif and Stephen~R. Tate.
\newblock On threshold circuits and polynomial computation.
\newblock \emph{SIAM Journal on Computing}, 1992.

\bibitem[Rhodes et~al.(2010)Rhodes, Nehaniv, and
  Hirsch]{rhodes2010applications}
John Rhodes, Chrystopher~L Nehaniv, and Morris~W Hirsch.
\newblock \emph{Applications of automata theory and algebra: via the
  mathematical theory of complexity to biology, physics, psychology,
  philosophy, and games}.
\newblock World Scientific, 2010.

\bibitem[Robinson et~al.(2021)Robinson, Sun, Yu, Batmanghelich, Jegelka, and
  Sra]{robinson2021can}
Joshua Robinson, Li~Sun, Ke~Yu, Kayhan Batmanghelich, Stefanie Jegelka, and
  Suvrit Sra.
\newblock Can contrastive learning avoid shortcut solutions?
\newblock \emph{Advances in Neural Information Processing Systems}, 2021.

\bibitem[Safran et~al.(2019)Safran, Eldan, and Shamir]{safran2019depth}
Itay Safran, Ronen Eldan, and Ohad Shamir.
\newblock Depth separations in neural networks: what is actually being
  separated?
\newblock In \emph{Conference on Learning Theory}, pages 2664--2666. PMLR,
  2019.

\bibitem[Schuster et~al.(2021)Schuster, Kalyan, Polozov, and
  Kalai]{SchusterKPK21puzzles}
Tal Schuster, Ashwin Kalyan, Alex Polozov, and Adam Kalai.
\newblock Programming puzzles.
\newblock In \emph{Advances in Neural Information Processing Systems Track on
  Datasets and Benchmarks}, 2021.

\bibitem[Sch{\"u}tzenberger(1965)]{schutzenberger1965finite}
Marcel~Paul Sch{\"u}tzenberger.
\newblock On finite monoids having only trivial subgroups.
\newblock \emph{Information and Control}, 1965.

\bibitem[Schwarzschild et~al.(2021)Schwarzschild, Borgnia, Gupta, Huang,
  Vishkin, Goldblum, and Goldstein]{SchwarzschildBG21}
Avi Schwarzschild, Eitan Borgnia, Arjun Gupta, Furong Huang, Uzi Vishkin, Micah
  Goldblum, and Tom Goldstein.
\newblock Can you learn an algorithm? generalizing from easy to hard problems
  with recurrent networks.
\newblock In \emph{Advances in Neural Information Processing Systems}, 2021.

\bibitem[Siegelmann and Sontag(1992)]{siegelmann1992computational}
Hava~T Siegelmann and Eduardo~D Sontag.
\newblock On the computational power of neural nets.
\newblock In \emph{Conference on Learning Theory}, 1992.

\bibitem[Telgarsky(2016)]{telgarsky2016benefits}
Matus Telgarsky.
\newblock Benefits of depth in neural networks.
\newblock In \emph{Conference on learning theory}, pages 1517--1539. PMLR,
  2016.

\bibitem[Tenney et~al.(2019)Tenney, Das, and Pavlick]{tenney2019bert}
Ian Tenney, Dipanjan Das, and Ellie Pavlick.
\newblock {BERT} rediscovers the classical {NLP} pipeline.
\newblock \emph{arXiv:1905.05950}, 2019.

\bibitem[van~den Oord et~al.(2016)van~den Oord, Dieleman, Zen, Simonyan,
  Vinyals, Graves, Kalchbrenner, Senior, and Kavukcuoglu]{oord2016wavenet}
Aaron van~den Oord, Sander Dieleman, Heiga Zen, Karen Simonyan, Oriol Vinyals,
  Alex Graves, Nal Kalchbrenner, Andrew Senior, and Koray Kavukcuoglu.
\newblock Wave{N}et: {A} generative model for raw audio.
\newblock \emph{arXiv:1609.03499}, 2016.

\bibitem[Vaswani et~al.(2017)Vaswani, Shazeer, Parmar, Uszkoreit, Jones, Gomez,
  Kaiser, and Polosukhin]{vaswani2017attention}
Ashish Vaswani, Noam Shazeer, Niki Parmar, Jakob Uszkoreit, Llion Jones,
  Aidan~N Gomez, {\L}ukasz Kaiser, and Illia Polosukhin.
\newblock Attention is all you need.
\newblock \emph{Advances in Neural Information Processing Systems}, 2017.

\bibitem[Vaswani et~al.(2021)Vaswani, Ramachandran, Srinivas, Parmar, Hechtman,
  and Shlens]{HaloNet}
Ashish Vaswani, Prajit Ramachandran, Aravind Srinivas, Niki Parmar, Blake~A.
  Hechtman, and Jonathon Shlens.
\newblock Scaling local self-attention for parameter efficient visual
  backbones.
\newblock In \emph{{IEEE} Conference on Computer Vision and Pattern
  Recognition}, 2021.

\bibitem[Vig(2019)]{vig2019visualizing}
Jesse Vig.
\newblock Visualizing attention in transformer-based language representation
  models.
\newblock \emph{arXiv:1904.02679}, 2019.

\bibitem[Wang et~al.(2022)Wang, Variengien, Conmy, Shlegeris, and
  Steinhardt]{wang2022interpretability}
Kevin Wang, Alexandre Variengien, Arthur Conmy, Buck Shlegeris, and Jacob
  Steinhardt.
\newblock Interpretability in the wild: a circuit for indirect object
  identification in gpt-2 small.
\newblock \emph{arXiv preprint arXiv:2211.00593}, 2022.

\bibitem[Wei et~al.(2022)Wei, Wang, Schuurmans, Bosma, Ichter, Xia, Chi, Le,
  and Zhou]{wei2022ChainofThought}
Jason Wei, Xuezhi Wang, Dale Schuurmans, Maarten Bosma, Brian Ichter, Fei Xia,
  Ed~Chi, Quoc Le, and Denny Zhou.
\newblock Chain of thought prompting elicits reasoning in large language
  models.
\newblock \emph{arXiv:2201.11903}, 2022.

\bibitem[Weiss et~al.(2021)Weiss, Goldberg, and Yahav]{weiss2021thinking}
Gail Weiss, Yoav Goldberg, and Eran Yahav.
\newblock Thinking like {T}ransformers.
\newblock In \emph{International Conference on Machine Learning}, 2021.

\bibitem[Wolf et~al.(2019)Wolf, Debut, Sanh, Chaumond, Delangue, Moi, Cistac,
  Rault, Louf, Funtowicz, Davison, Shleifer, von Platen, Ma, Jernite, Plu, Xu,
  Le~Scao, Gugger, Drame, Lhoest, and Rush]{wolf2019huggingface}
Thomas Wolf, Lysandre Debut, Victor Sanh, Julien Chaumond, Clement Delangue,
  Anthony Moi, Pierric Cistac, Tim Rault, R{\'e}mi Louf, Morgan Funtowicz, Joe
  Davison, Sam Shleifer, Patrick von Platen, Clara Ma, Yacine Jernite, Julien
  Plu, Canwen Xu, Teven Le~Scao, Sylvain Gugger, Mariama Drame, Quentin Lhoest,
  and Alexander~M. Rush.
\newblock Huggingface's transformers: State-of-the-art natural language
  processing.
\newblock \emph{arXiv:1910.03771}, 2019.

\bibitem[Wu et~al.(2016)Wu, Schuster, Chen, Le, Norouzi, Macherey, Krikun, Cao,
  Gao, Macherey, Klingner, Shah, Johnson, Liu, Kaiser, Gouws, Kato, Kudo,
  Kazawa, Stevens, Kurian, Patil, Wang, Young, Smith, Riesa, Rudnick, Vinyals,
  Corrado, Hughes, and Dean]{wu2016google}
Yonghui Wu, Mike Schuster, Zhifeng Chen, Quoc~V Le, Mohammad Norouzi, Wolfgang
  Macherey, Maxim Krikun, Yuan Cao, Qin Gao, Klaus Macherey, Jeff Klingner,
  Apurva Shah, Melvin Johnson, Xiaobing Liu, Łukasz Kaiser, Stephan Gouws,
  Yoshikiyo Kato, Taku Kudo, Hideto Kazawa, Keith Stevens, George Kurian,
  Nishant Patil, Wei Wang, Cliff Young, Jason Smith, Jason Riesa, Alex Rudnick,
  Oriol Vinyals, Greg Corrado, Macduff Hughes, and Jeffrey Dean.
\newblock Google's neural machine translation system: Bridging the gap between
  human and machine translation.
\newblock \emph{arXiv:1609.08144}, 2016.

\bibitem[Xiao et~al.(2022)Xiao, Wu, Guo, Li, Zhang, Qin, and
  Liu]{xiao2022survey}
Yisheng Xiao, Lijun Wu, Junliang Guo, Juntao Li, Min Zhang, Tao Qin, and
  Tie-yan Liu.
\newblock A survey on non-autoregressive generation for neural machine
  translation and beyond.
\newblock \emph{arXiv:2204.09269}, 2022.

\bibitem[Xu et~al.(2020)Xu, Zhang, Li, Du, Kawarabayashi, and
  Jegelka]{xu2020neural}
Keyulu Xu, Mozhi Zhang, Jingling Li, Simon~S Du, Ken-ichi Kawarabayashi, and
  Stefanie Jegelka.
\newblock How neural networks extrapolate: From feedforward to graph neural
  networks.
\newblock \emph{arXiv:2009.11848}, 2020.

\bibitem[Yao et~al.(2021)Yao, Peng, Papadimitriou, and Narasimhan]{YaoPPN20}
Shunyu Yao, Binghui Peng, Christos~H. Papadimitriou, and Karthik Narasimhan.
\newblock Self-attention networks can process bounded hierarchical languages.
\newblock In \emph{Association for Computational Linguistics}, 2021.

\bibitem[Ye et~al.(2021)Ye, Liu, Kurutach, Abbeel, and Gao]{ye2021mastering}
Weirui Ye, Shaohuai Liu, Thanard Kurutach, Pieter Abbeel, and Yang Gao.
\newblock Mastering atari games with limited data.
\newblock \emph{Advances in Neural Information Processing Systems}, 2021.

\bibitem[Zeiger(1967)]{zeiger1967cascade}
H~Paul Zeiger.
\newblock Cascade synthesis of finite-state machines.
\newblock \emph{Information and Control}, 1967.

\bibitem[Zhang et~al.(2021{\natexlab{a}})Zhang, Raghu, Kleinberg, and
  Bengio]{zhang2021pointer}
Chiyuan Zhang, Maithra Raghu, Jon Kleinberg, and Samy Bengio.
\newblock Pointer value retrieval: A new benchmark for understanding the limits
  of neural network generalization.
\newblock \emph{arXiv:2107.12580}, 2021{\natexlab{a}}.

\bibitem[Zhang et~al.(2021{\natexlab{b}})Zhang, Deng, Kawaguchi, Ghorbani, and
  Zou]{ZhangDKG021}
Linjun Zhang, Zhun Deng, Kenji Kawaguchi, Amirata Ghorbani, and James Zou.
\newblock How does mixup help with robustness and generalization?
\newblock In \emph{International Conference on Learning Representations},
  2021{\natexlab{b}}.

\bibitem[Zhang et~al.(2022)Zhang, Backurs, Bubeck, Eldan, Gunasekar, and
  Wagner]{zhang2022unveiling}
Yi~Zhang, Arturs Backurs, S{\'e}bastien Bubeck, Ronen Eldan, Suriya Gunasekar,
  and Tal Wagner.
\newblock Unveiling {T}ransformers with {LEGO}: a synthetic reasoning task.
\newblock \emph{arXiv:2206.04301}, 2022.

\bibitem[Zimmermann(2020)]{zimmermann2020krohn}
Karl-Heinz Zimmermann.
\newblock On {K}rohn-{R}hodes theory for semiautomata.
\newblock \emph{arXiv:2010.16235}, 2020.

\end{thebibliography}
\bibliographystyle{plainnat}

\renewcommand{\theequation}{\thesection.\arabic{equation}}

\newpage
\appendix

\addcontentsline{toc}{section}{Appendix} %
\part{Appendix} %
\parttoc %

\newpage

\section{Additional background and notation}\label{app:background}

\subsection{Notation}

Below, we list our notational conventions for indices, vectors, matrices, and functions.

\begin{itemize}
    \item For a natural number $n$, $[n]$ denotes the \emph{index set} $\{1, 2, \ldots, n\}$.
    \item For a vector $v \in \R^n$ and $i \in [n]$, $v_i$ denotes the $i$-th entry. When $v$ is an expression, we use $[v]_i$ for clarity. Vectors can be instantiated by square brackets (like $[1\;2\;3] \in \R^3$). They can be indexed by slices: $v_{a:b}$ denotes $[v_a \; v_{a+1} \; \ldots \; v_b]$. We adhere to the convention that all vectors are column vectors.
    \item For a matrix $M \in \R^{m \times n}, i \in [m], j \in [n]$: $M_{ij}$ (or $[M]_{ij}$) denotes the $(i,j)$-th entry. $M_{i,:}$ and $M_{:,j}$ denote the $i$-th row and $j$-th column, respectively. Importantly, we note the convention that this ``slice'' notation converts all vectors into column vectors.
    \item When the first dimension of a matrix $M \in \R^{T \times d}$ is to be interpreted as a sequence length, we will implicitly convert a sequence of vectors $v_1, \ldots, v_T \in \R^d$ into a matrix $(v_1, \ldots, v_T) \in \R^{T \times d}$ whose \emph{rows} the vectors $v_t$. This is an arbitrary choice (compared to concatenating columns and obtaining a matrix in $\R^{d \times T}$), selected to adhere to previously standardized notation for the Transformer.
    \item We will sometimes index vectors and matrices with named indices (such as $\bot$ for padding tokens) instead of integers, for clarity.
    \item $e_i$ denotes the $i$-th elementary (one-hot) unit vector. Likewise as above, we sometimes use non-integer indices (e.g. $e_\bot$).
    \item For vectors $u,v \in \R^d$, $\langle u, v \rangle = u^\top v$ both denote the inner product.
    \item For a function $f : X \times Y \rightarrow Z$ and all $y \in Y$, we will let $f (\cdot, y) : X \rightarrow Z$ denote the restriction of $f$ to $y$ (and similarly for other restrictions). This appears in the per-input state transition functions $\delta(\cdot, \sigma) : Q \rightarrow Q$, as well as the functions represented by neural networks for a particular choice of weights.
    \item For functions $f, g$, $f \circ g$ denotes composition: $(f \circ g)(x) := f(g(x))$. When we compose neural networks $f : X \times \Theta_f \rightarrow Y, g : Y \times \Theta_g \rightarrow Z$ with parameter spaces $\Theta_f, \Theta_g$, we will use $f \circ g : X \times (\Theta_f \times \Theta_g) \rightarrow Z$ to indicate the composition $f(g(x;\theta_g)\theta_f)$.
    \item $(\cdot)_+ : \R \rightarrow \R$ denotes the ReLU (a.k.a. positive part) function: $(x)_+ = \max \{x, 0\}$. In function compositions, we use $\sigma$ to denote the entry-wise ReLU (e.g. $f \circ \sigma \circ g$).
\end{itemize}

\subsection{Automata, semigroups, and groups}
\label{subsec:appendix-algebra-background}

Recall that a semiautomaton $\gA = (Q, \Sigma, \delta)$ has a state space $Q$, an input alphabet $\Sigma$, and a transition function $\delta: Q\times\Sigma \to Q$. For any natural number $T$ and a starting state $q_0 \in Q$, by repeated composition of the transition function $\delta$, one can use $\gA$ to define a map from a sequence of inputs $(\sigma_1,\ldots,\sigma_T) \in \Sigma^T$ to a sequence of states $(q_1,\ldots,q_T) \in Q^T$ via:
\begin{align*}
    q_t := \delta(q_{t-1}, \sigma_t), \forall t \in [T].
\end{align*}

Here and below, it is helpful to use a matrix-vector notation to express the computation of semiautomata. For a given semiautomaton we can always identify the state space $Q$ with index set $\{1,\ldots,|Q|\}$ and use a one-hot encoding of states into $\{0,1\}^{|Q|}$.
For each input symbol $\sigma\in\Sigma$, we associate a transition matrix $\delta(\cdot, \sigma) \in \{0,1\}^{|Q| \times |Q|}$ with entries $[\delta(\cdot, \sigma)]_{q',q} = \one\{\delta(q,\sigma) = q'\}$.
This implies that for all $q,\sigma$, we have $e_{\delta(q,\sigma)} = \delta(\cdot, \sigma)e_q$, so that the computation of the semiautomaton amounts to repeated matrix-vector multiplication.

While semiautomata are remarkably expressive, we discuss a few simple examples throughout this background section to elucidate the key concepts. 

\begin{example}[Parity]
Let $Q=\Sigma = \{0,1\}$ and let $\delta(q,0) = q$ and $\delta(q,1) = 1-q$. Then, starting with $q_0=0$, the state at time $t$, $q_t$, is $1$ if the binary sequence $(\sigma_1,\ldots,\sigma_t)$ has an odd number of $1$s.
\end{example}

\begin{example}[Flip-flop]
\label{ex:flipflop}
Let $Q= \{1,2\}, \Sigma = \{\perp,1,2\}$ and let $\delta$ be given by
\begin{align*}
\delta(\cdot, \perp) = I_{2 \times 2}, \quad \delta(\cdot, 1) = \left(\begin{matrix}
1 & 1\\
0 & 0
\end{matrix}\right),
\quad \delta(\cdot, 2) = \left(\begin{matrix}
0 & 0\\
1 & 1
\end{matrix}\right)
\end{align*}
As the name suggests, this semiautomaton implements a simple memory operation where the state at time $t$ is the value of the most recent non-$\perp$ input symbol. 
\end{example}

\begin{example}[1D gridworld]
\label{eg:gridworld_1d}
Let $S$ be a natural number, $Q = \{0,1,\ldots,S\}$ and $\Sigma = \{L,\perp,R\}$. Then the transition matrices are given by:
\begin{align*}
\delta(\cdot, \perp) = I_{S+1 \times S+1}, \quad \delta(\cdot, L) = \left(\begin{matrix}
1 & \smash{\ddots} & &\\
0 & & I_{S \times S} &\\
\smash{\vdots} & & & \smash{\ddots} \\
0 & \hdots & \hdots & 0
\end{matrix}\right)
\quad 
\delta(\cdot, R) = \left(\begin{matrix}
0 & \hdots & \hdots & 0\\
\smash{\ddots} & & & \smash{\vdots}\\
& I_{S \times S} & & 0\\
& & \smash{\ddots} & 1
\end{matrix}\right).
\end{align*}
This semiautomaton describes the movement of an agent along a line segment where actions $-1$ and $+1$ correspond to decrementing and incrementing the state respectively, except that the decrement input has no effect at state $0$ and the increment input has no effect at state $S$. 
\end{example}

Note that we have chosen a convention which differs slightly from the main paper (i.e. Figure~\ref{fig:semiautomaton-examples}): we enumerate the indices starting from $0$ rather than $1$. This is because the proofs are stated more naturally when the boundaries of the gridworld are identified with the indices $0$ and $S$.

For a semiautomaton $\gA = (Q,\Sigma,\delta)$ each input symbol $\sigma \in \Sigma$ defines a function $\delta(\cdot, \sigma): Q \to Q$. These functions can be composed in the standard way, and we use $\delta(\cdot, \sigma_{1:t})$ to denote the $t$-fold function composition. Note that $\delta(q_0,\sigma_{1:t})$ is precisely the value of the state at time $t$ on input $\sigma_{1:t}$. Thus, the set of all functions that can be obtained by composition of the transition operator, formally
\begin{align*}
\gT(\gA) := \{ \delta(\cdot, \sigma_{1:t}) : t \in \mathbb{N}, \sigma_{1:t} \in \Sigma^t\},
\end{align*}
plays a central role in describing the computation of the semiautomaton. This object is a \emph{transformation semigroup}. We now turn to describing the necessary algebraic background. 

Recall that a \emph{group} $(\gG,\cdot)$ is a set $\gG$ equipped with a binary operation $\cdot : \gG \times \gG \to \gG$ such that
\begin{itemize}
\item (\textit{identity}) There exists an identity element $e \in \gG$ such that $e\cdot g = g \cdot e = g$ for all $g \in \gG$. 
\item (\textit{invertibility}) Every element $g \in \gG$ has an inverse $g^{-1} \in \gG$ such that $g\cdot g^{-1} = g^{-1}\cdot g = e$
\item (\textit{associativity}) The binary operation is associative: $(g_1\cdot g_2)\cdot g_3 = g_1 \cdot (g_2 \cdot g_3)$. 
\end{itemize}
A \emph{monoid} is less structured than a group; there must be an identity element and the binary operation must be associative, but invertibility is relaxed. A \emph{semigroup} is even less structured: the only requirement is that the binary operation is associative.  

It is common to let $\gG$ be a subset of functions from $Q \to Q$ where $Q$ is some ground set and let the binary operation be function composition. In this case, the structure is called a \emph{permutation group} or \emph{transformation monoid/semigroup} depending on which subset of the above properties hold. For transformation groups, since every element has an inverse under function composition, it is immediate that every element is some permutation over the ground set. 

In fact, taking $\gG$ to be a subset of functions as above is without loss of generality: by Cayley's theorem every group is isomorphic (equivalent after renaming elements) to a transformation group on some ground set, and we can take the ground set to have the same number of elements as the original group (for finite groups). Analogously, all semigroups are isomorphic to a transformation semigroup, but the ground set may need one additional element (for the identity); this is Cayley's theorem for semigroups. It is also clear that every transformation semigroup can be realized by some semiautomaton by trivially having the input symbols correspond to the functions in $\gG$.\footnote{More succinctly, inputs can correspond to a generating set of the group, but this is not relevant for our results.} Therefore we have lost no structure when passing from finite semiautomata to finite semigroups. 

Before discussing the compositional structure of semigroups, we give one more canonical example.
\begin{example}[Cyclic group]
\label{ex:cyclic}
Let $S$ be a natural number let $Q=\{0,1,\ldots,S-1\}$ and let $\Sigma = \{1\}$ have only one element. The dynamics are given by $\delta(q,1) = (q+1) \mod S$. Clearly this semiautomaton implements counting modulo $S$. The underlying group is the \emph{cyclic group}, denoted $C_S$, which is isomorphic to the integers mod $S$ with addition as the binary operation. Note that in this case, the operation is commutative, which makes the group \emph{abelian}. 
\end{example}

Let us now turn to the compositional structure of groups and semigroups. Since it is without loss of generality to consider transformation (semi)groups, we always take the binary operation to be function composition. A \emph{subgroup} $H$ of a group $G$ is a subset of the elements of $G$ that is also a group, denoted as $H \leq G$. In particular it must be closed under the binary operation.
$N$ is a \emph{normal subgroup}, denoted $N \triangleleft G$, if in addition to being a subgroup, it satisfies that $\{g n: n \in N\} = \{ng: n \in N\}$. (These sets are known as the left and right cosets of $N$ in $G$ and denoted $gN$ and $Ng$ respectively.)
\footnote{An equivalent definition of a normal group is a subgroup $N$ such that $g^{-1}ng \in N$, $\forall g \in \gG, n \in N$.}
Normal subgroups can also arise as the kernel of a mapping from $G$ to a subgroup $H$ of $G$. Let $\phi: G \to H$ be a mapping that preserves the group operation (i.e., a group homomorphism) and let $\ker(\phi) := \{ g: \phi(g) = \mathrm{id} \}$. Then $\ker(\phi)$ is a normal subgroup of $G$. We will see below that normal subgroups provide a weak form of commutativity, that allows us to construct more complex groups out of simpler ones.

\paragraph{Direct products.} The most natural way to compose larger groups from smaller ones is via the \emph{direct product}. Given two groups $G$ and $H$, we can form a new group with elements $\{(g,h): g \in G, h \in H\}$  with a binary operation that is applied component-wise $(g,h)\cdot(g',h') = (g \cdot g', h \cdot h')$ (here, $\cdot$ is overloaded to be the group operation for all three groups). This direct product group is denoted $G \times H$. In the context of permutation groups, say $G$ is a permutation group over ground set $Q_G$ and $H$ is over ground set $Q_H$. Then $G \times H$ has ground set $Q_G \times Q_H$ and every function in $G \times H$ factorizes component-wise, i.e., every element in $G \times H$ is identified with a permutation $(q_G, q_H) \mapsto (g(q_G), h(q_H))$ where $g \in G, h \in H$.

Observe that $G \times H$ contains normal subgroups which are isomorphic to both $G$ and $H$. To see this, take $N = \{(e_G,h): h \in H\}$ where $e_G$ is the identity element in $G$. Then since $g e_G = e_G g$ and since $H$ is closed under its group operation, we have $(g,h)N = N(g,h)$ for all $(g,h) \in G \times H$. A symmetric argument shows that $G$ is also a normal subgroup of the direct product.

Note that we can analogously define direct products in the absence of the group axioms, and thus for monoids and semigroups. This gives a natural construction of the semigroup corresponding to moving around both axes of a 2-dimensional rectangular gridworld, as a concatenation of two non-interacting 1-dimensional gridworlds:

\begin{example}[2D gridworld]
\label{ex:2d-gridworld}
If $G_S$ is the transformation semigroup of the 1-d grid world with $S+1$ states, then $G_S \times G_S$ corresponds to a 2-dimensional gridworld. A semiautomaton that yields this transformation semigroup has state space $Q = \{(i,j): i,j \in \{0,\ldots,S\}\}$ and 5 actions: increment or decrement $i$ or $j$, subject to boundary effects, or do nothing. 
\end{example}

The definition of direct product extends straightforwardly to more than two terms $G_1 \times G_2 \times \ldots \times G_n$; we identify the items with tuples $(g_1, g_2, \ldots, g_n)$.

\paragraph{Semidirect products.}
However, it is possible to compose larger groups so that one of the subgroups is \emph{not} a normal subgroup. This operation is called a \emph{semidirect product}, with the group law $(g, h) \cdot (g', h') = (g \cdot \phi_{h}(g'), h \cdot h')$ for some $\phi_{h}$ to be defined later. Observe that in the direct product $G \times H$, we have constructed the elements from ordered pairs $(g \in G, h \in H)$, lifting $G$ and $H$ into a shared \emph{product space} (i.e., the Cartesian product of the underlying sets of $G$ and $H$), defining the group operation as simply applying those of $G$ and $H$ separately.

In fact, there are other ways, to define the group operation in the product space, but a difficulty arises:
we need to find other nontrivial multiplication rules on pairs $(g,h)$, and we cannot take for granted that an arbitrary binary operation satisfies the group axioms. We would like to define other operations $(g,h) \cdot (g',h')$ which output an element of $g$ and an element of $h$. An attempt would be to pick two arbitrary injective homomorphisms $\phi_G, \phi_H$ which embed $G$ and $H$ into a ``shared space,'' so that elements of $G$ and $H$ can be multiplied together:
\[(g,h) \cdot (g',h') := \phi_G(g) \cdot \phi_H(h) \cdot \phi_G(g') \cdot \phi_H(h').\]
However, we need to ensure that this group operation is closed. Since all elements of the group are of the form $(g,h)$ where $g \in G$ and $h \in H$, we must find a pair $(\tilde{g},\tilde{h})$ that yields the right hand side of the above display when embedded into the shared space via $\phi_G,\phi_H$. For this, the most natural choice is $\tilde{g} = g\cdot g'$ and $\tilde{h} = h \cdot h'$, and thus we must check that:
\begin{align*}
    \phi_G(g\cdot g')\cdot \phi_H(h\cdot h') & = \phi_G(g)\cdot\phi_G(g')\cdot \phi_H(h)\cdot \phi_H(h')
    \\
    &= \phi_G(g) \cdot \phi_H(h) \cdot \phi_G(g') \cdot \phi_H(h') = (g,h) \cdot (g',h')
\end{align*}
However, the middle equality may not hold, because $\phi_G(g')$ and $\phi_H(h)$ are not guaranteed to commute.
(Observe that for the special case of $g \mapsto (g, e_H), h \mapsto (e_G, h)$, these two elements always commute, giving rise to the direct product.)

Eliding $\phi_G,\phi_H$ and simply using $g,h$ as elements of the shared space, a sufficient condition for this to hold is that $h g' h^{-1} \in G$, since then, for some $\tilde{g} \in G$,
\begin{align*}
(g, h)\cdot(g, h') = g h g' (h^{-1} h) h' = g (h g' h^{-1}) h h' = g \tilde{g} h h',
\end{align*}
which is of the form $\phi_G(\cdot) \cdot \phi_H(\cdot)$ since both $G$ and $H$ are themselves closed. This condition is precisely that $G$ is a normal subgroup. 

There is a degree of freedom here: for each pair $h$ and $g'$, we can choose which element of $G$ is given by $h g' h^{-1}$. When we make this choice we must ensure all of the group axioms are preserved, e.g., when $h = e_H$ we should always have $e_Hg'e_H = g'$. Suppose we make this choice and define $\phi_h : g \mapsto hgh^{-1} \in G$ (this $\phi$ is a homomorphism from $H \to \Aut(G)$, where $\Aut(\cdot)$ denotes the \emph{automorphism group}, the group of bijections on $G$ that preserve the group axioms, under composition). Then, these ordered pairs do indeed form a group, but the group operation is
\begin{align*}
(g, h)\cdot(g', h') = g \phi_{h}(g') h h'
\end{align*}
This object is the semidirect product, and it is denoted $G \rtimes H$. Note that the choice of mapping $\phi$ is unspecified in the notation, and, in general, different choices of $\phi$ will yield different structures for the semidirect product.

Finally, when $G = N \rtimes H$, both $N$ and $H$ are subgroups of $G$, but $N$ is also a normal subgroup. To see this, we need to check that $hN = Nh$ for any $h \in H$. This is equivalent to $h n h^{-1} \in N$ for each $h,n$, but we defined the group operation to be $hnh^{-1} = \phi_h(n) \in N$, specifically so this would hold. 
On the other hand, $H$ may not be a normal subgroup, and in this sense the semidirect product is a generalization of the direct product (for which both subgroups are normal). However, when the mapping $\phi$ is trivial, that is $\phi_h(n) = n$ then both $N$ and $H$ are normal subgroups, and one can verify that in this case the semidirect product and direct product coincide. 
\begin{example}[Dihedral group]
Consider a semiautomaton with $Q = \{0,\ldots,S-1\}\times\{-1,+1\}$ and input alphabet $\Sigma = \{\mathrm{advance}, \mathrm{reverse}\}$. The transitions are given by:
\begin{align*}
\delta((s,b), \mathrm{advance}) &= (s+b\ \mathrm{mod}\ S, b)
\\
\delta((s,b), \mathrm{reverse}) &= (s, -b)
\end{align*}
The transformation semigroup for this semiautomaton is $C_S \rtimes C_2$ where $C_S$ is the cyclic group on $S$ elements (cf. Example~\ref{ex:cyclic}). $C_2$ has two elements, the identity $e$ and one element $h$ such that $hh = e$. $C_S$ has $S$ elements where each element $g$ is a function that adds some number $k \in \{0,\ldots,S-1\}$ to the input modulo $S$. The inverse $g^{-1}$ is naturally to subtract $k$ to the input, modulo $S$. The homomorphism $\phi$ in the semidirect product is such that $\phi_e(g) = g$ and $\phi_h(g) = g^{-1}$. %
\end{example}

\paragraph{Wreath products.} We define one more type of product between groups $N$ and $H$: the \emph{wreath product} $N \wr H := (N \times \ldots \times N) \rtimes H$.
This is a group containing $|N|^{|H|} \cdot |H|$ elements (rather than $|N| \cdot |H|$, like the direct and semidirect products). Intuitively, it is defined by creating one copy of $N$ per element in $H$ via the direct product, then letting $H$ specify a way to \emph{exchange} these copies. Formally, $N \wr H$ is the unique group generated by
\[(g_1, \ldots, g_{|H|}, h) \quad \forall g_i \in N, h \in H,\]
where
\[(g_1, \ldots, g_{|H|}, e_H) \cdot (g_1', \ldots, g_{|H|}', e_H) := (g_1 \cdot g_1', \ldots, g_{|H|} \cdot g_{|H|}', e_H) \quad \forall g_i, g_i' \in N,\]
and
\begin{equation}
\label{eq:wreath-product-automorphism}
(g_1, \ldots, g_{|H|}, e_H) \cdot (e_N, \ldots, e_N, h) := (g_{\pi_h(1)}, \ldots, g_{\pi_h(|H|)}, e_H) \quad \forall g_i \in N, h \in H,
\end{equation}
where we have enumerated the elements of $H$ in arbitrary order, such that each $\pi_h : [H] \rightarrow [H]$ is the permutation defined by right multiplication $h' \mapsto h' h$ (by convention).

To write this explicitly as a semidirect product $N \wr H := (N \times \ldots \times N) \rtimes H$, the homomorphism into the direct product's automorphism group $\phi : H \rightarrow \Aut(N \times \ldots \times N)$ is given by \ref{eq:wreath-product-automorphism}: for each $h \in H$, $\phi$ is the automorphism defined by permuting the indices between the terms in the direct product, according to the permutation induced by right multiplication by $h$.

\begin{example}[Rubik's Cube]
A naive way to construct the Rubik's Cube is to assign labels $\{1, \ldots, 54\}$ to the stickers on the cube, and define the Rubik's Cube group $G$ via the sticker configurations reachable by the $6$ face turns (which each specify a permutation $\delta_L, \delta_R, \delta_U, \delta_D, \delta_B, \delta_F: [54] \rightarrow [54]$ of the stickers). This establishes $G$ as a subgroup of $S_{54}$. First, notice that the 6 central stickers never move (so this is really improvable to $S_{48}$). Next, notice that the $24 = 8 \times 3$ vertex stickers never switch places with the $24 = 12 \times 2$ edge stickers. The vertex stickers form a subset of the wreath product $C_3 \wr S_8$, while the edge stickers form a subset of the wreath product $C_2 \wr S_{12}$. In all, this realizes $G$ as a subgroup of a direct product of wreath products:
\[G \leq (C_3 \wr S_8) \times (C_2 \wr S_{12}).\]
Among other consequences towards solving the Rubik's Cube, this gives an improved upper bound on the size of $G$ (which turns out to still be off by a factor of 12, because of nontrivial invariants preserved by the face rotations, a.k.a. unreachable configurations).
\end{example}

\paragraph{Quotients, simple groups, and maximal subgroups.}
When $N$ is a normal subgroup of $G$, the quotient group $G/N$ is defined as $\{gN: g \in G\}$ with binary operation $(gN)(g'N) = (gg')N$. The fact that $N$ is a normal subgroup implies that this is a well defined group. We can also check that if $G= N \rtimes H$ then the quotient group $G/N$ is isomorphic to $H$, which matches the intuition for multiplication and division.

A $G$ group is \emph{simple} if it has no non-trivial normal subgroups. Intuitively, a simple group cannot be factorized into components; this generalizes the fact that a prime number admits no non-trivial factorization. When $G$ is not simple then it has a non-trivial normal subgroup, say $N$.
We call $N$ proper if $N \neq G$.
We call a proper subgroup $N$ \emph{maximal} if there is no other proper normal subgroup $N' \triangleleft G$ such that $N \triangleleft N'$. Equivalently, $N$ is a maximal proper normal subgroup if and only if $G/N$ is simple. This is akin to extracting a prime factor from a number, since the quotient group $G/N$ cannot be further factorized.
We will revisit this idea of factorization when defining \textit{composition series} and \textit{solvable groups} in Section \ref{subsubsec:kr-proof-decompositions}.

\paragraph{Group extensions.} Finally, we provide some additional terminology related to these different notions of products, which provide a cleaner unifying language in which to state our constructions. Let $N, H$ be arbitrary groups. Which groups $G$ contain a normal subgroup isomorphic $N$, such that the quotient $G / N$ is isomorphic to $H$? Such a group $G$ is said to be an \emph{extension} of $N$ over $H$. The direct product $G = N \times H$ is known as the \emph{trivial extension}. A semidirect product $G = N \rtimes H$ is known as a \emph{split extension}. However, not all extensions are split extensions; the smallest example is the \emph{quaternion group} $Q_8$, the group of unit quaternions $\{\pm 1, \pm i, \pm j, \pm k\}$ under multiplication ($i^2 = j^2 = k^2 = ijk = -1$), which cannot be realized as a semidirect product of smaller groups. In general, it is very hard to derive interesting properties of a group extension based on the properties of $N$ and $H$. Fortunately, there \emph{is} a characterization of general extensions. The Krasner-Kaloujnine \emph{universal embedding theorem} \citep{krasner1951produit} states that all extensions $G$ can be found as subgroups of the wreath product $N \wr H$.
The proof of Theorem~\ref{thm:shortcut-const-depth} essentially shows how to \emph{implement} the different kinds of group extensions, given constructions which implement the substructures $N, H$. In the worst case, we will have to implement a wreath product.

\subsection{Shallow circuit complexity classes}
\label{subsec:appendix-circuits-background}

We provide an extremely abridged selection of relevant concepts in circuit complexity. For a systematic introduction, refer to \citep{arora2009computational}. In particular, we discuss each circuit complexity class and inclusion below:
\[\mathsf{NC}^0 \subset \mathsf{AC}^0 \subset \mathsf{ACC}^0 \subseteq \mathsf{TC}^0 \subseteq \mathsf{NC}^1.\]
\begin{itemize}
    \item $\mathsf{NC}^0$ is the class of constant-depth, constant-fan-in, polynomial-sized $\mathsf{AND}$/$\mathsf{OR}$/$\mathsf{NOT}$ circuits. If a constant-depth Transformer only uses the constant-degree sparse selection constructions in \citep{edelman2022inductive}, it can be viewed as representing functions in this class. However, the representational power of these circuits is extremely limited: they cannot express any function which depend on a number of inputs growing with $T$.
    \item $\mathsf{AC}^0$ is the class of constant-depth, unbounded-fan-in, polynomial-sized $\mathsf{AND}$/$\mathsf{OR}$ circuits, allowing $\mathsf{NOT}$ gates only at the inputs. A classic result is that the parity of $T$ bits is not in $\mathsf{AC}^0$~\citep{FurstSS84Parity}; \cite{Hahn20} concludes the same for bounded-norm (and thus bounded-Lipschitz-constant) constant-depth Transformers.

    \item $\mathsf{ACC}^0$ extends $\mathsf{AC}^0$ with an additional type of unbounded-fan-in gate known as $\mathsf{MOD}_m$ for arbitrary number $m$, which checks if the sum of the input bits is a multiple of $m$.
    Theorem 2 comes from the fact that the semigroup word problem (which is essentially identical to semiautomaton simulation) is in this class; see \citep{barrington1988finite}.
    \item $\mathsf{TC}^0$ extends $\mathsf{AC}^0$ with an additional type of unbounded-fan-in gate called $\mathsf{MAJ}$, which computes the majority of an odd number of input bits (a \emph{threshold} gate). It is straightforward to simulate modular counters using a polynomial number of parallel thresholds (i.e. $\mathsf{ACC}^0 \subseteq \mathsf{TC}^0$). Whether this inclusion is strict (\emph{can you simulate a threshold in constant depth with modular counters?}) is a salient open problem in circuit complexity. Threshold circuits are a very natural model for objects of interest in machine learning like decision trees and neural networks~\citep{Merrill21}.
    \item $\mathsf{NC}^1$ is the class of $O(\log T)$-depth, constant-fan-in, polynomial-sized $\mathsf{AND}$/$\mathsf{OR}$/$\mathsf{NOT}$ circuits. It is an extremely popular and natural complexity class capturing \emph{efficiently parallelizable} algorithms. It is unknown whether any of the inclusions in the ``larger'' classes $\mathsf{TC}^0 \subseteq \mathsf{NC}^1 \subseteq \mathsf{L} \subseteq \mathsf{P}$ are strict.
\end{itemize}

\subsection{The Transformer architecture}
\label{subsec:appendix-nn-background}

In this section, we define the Transformer function class used in our theoretical results, and discuss remaining discrepancies with the true architecture.

An $L$-layer \emph{Transformer} is a sequence-to-sequence network $f_\mathrm{tf} : \R^{T \times d} \times \Theta_\mathrm{tf} \rightarrow \R^{T \times d}$, consisting of alternating self-attention blocks and feedforward blocks
\[f_\mathrm{tf} := f_\mathrm{mlp}^{(L)} \circ f_\mathrm{attn}^{(L)} \circ f_\mathrm{mlp}^{(L-1)} \circ ... \circ f_\mathrm{attn}^{(1)}.\] The parameter space $\Theta_\mathrm{tf}$ is the Cartesian product of those of the individual blocks (without recurrent weight sharing across layers, by default). We define these two types of blocks below.

\paragraph{Attention.} A single-headed ($H = 1$) \emph{self-attention block} is a sequence-to-sequence network $f_\mathrm{attn} : \R^{T \times d} \times \Theta_\mathrm{attn} \rightarrow \R^{T \times d}$, parameterized by $\theta_\mathrm{attn} = (W_Q, W_K, W_V, W_C)$. With an inner embedding dimension $k$, the shapes of these matrices are as follows: $W_Q, W_K, W_V, W_C^\top \in \R^{d \times k}$. Each head, indexed by $t \in [T]$, computes pairwise \emph{query-key alignment scores} $\langle W_Q^\top x_t, W_K^\top x_{t'} \rangle$ for each position $t' \in [T]$, normalizes them with a $T$-dimensional causally-masked softmax (forcing weights for positions $t > t'$ to be 0), and uses these \emph{attention weights} $\alpha \in \R^T$ to mix value embeddings: $\sum_{t' \in [T]} \alpha_{t'} W_V^\top x_{t'}$. This mixture is mapped back into $\R^d$ by multiplying by $W_C$, to form the $t$-th row of the output matrix. In a single equation:
\[ f_\mathrm{attn}(X ; W_Q, W_K, W_V, W_C) := \mathsf{CausalAttn}( X W_Q W_K^\top X^\top ) X W_V W_C, \]
where $\mathsf{CausalAttn} : \R^{T \times T} \rightarrow \R^{T \times T}$ applies a row-wise causally-masked $T$-dimensional softmax function. The standard softmax function
$\mathsf{softmax}(z) : \R^T \rightarrow \R^T$ is defined by
\[[\mathsf{softmax}(z)]_t := \frac{e^{z_t}}{\sum_{t'\in[T]} e^{z_{t'}}};\]
the causally-masked softmax at row $t$ is defined to be $\mathsf{softmax}(z_{1:t})$ on the first $t$ coordinates, and 0 on the rest. To implement the causal masking operation, it is customary to set the entries above the diagonal of the attention score matrix $X W_Q W_K^\top X^\top$ to $-\infty$, then obtaining $\mathsf{CausalAttn}( X W_Q W_K^\top X^\top )$ via a row-wise softmax (letting $e^{-\infty}$ evaluate to $0$).

In general, for any positive integer $H$, a \emph{multi-headed self-attention block} consists of a sum of $H$ copies of the above construction, each with its own parameters.

This component is often called \emph{soft attention}: the softmax performs continuous selection, taking a convex combination of its inputs. In contrast, \emph{hard attention} refers to attention heads which perform truly sparse selection (putting weight $1$ on the position with the highest score, and $0$ on all others).

\paragraph{Feedforward MLP.} An $L'$-layer \emph{position-wise feedforward MLP block} is a sequence-to-sequence network $f_\mathrm{mlp} : \R^{T \times d} \times \Theta_\mathrm{mlp} \rightarrow \R^{T \times d}$, parameterized by $\theta_{\mathrm{mlp}} = (W_1,b_1,\ldots,W_{L'},b_{L'})$.
For a choice of activation function $\sigma : \R \rightarrow \R$ (which is always ReLU in our theoretical constructions, for simplicity), $f_\mathrm{mlp}$ applies the same nonlinear map $(x \mapsto W_{L'} x + b_{L'}) \circ \sigma \circ \ldots \circ \sigma \circ (x \mapsto W_{1} x + b_{1})$ to each row $t$ of the input matrix $X \in \R^{T \times d}$ (with the same parameters per position $t$); here, $\sigma$ is applied pointwise.

Finally, an extra term $P \in \mathbb{R}^{T \times d}$ is added to the first layer’s input, the matrix of \emph{position encodings}.

\paragraph{Residual connections.}
It is typical to add \emph{residual connections} which bypass each block. That is, letting $\mathrm{id}$ denote the identity function in $\R^{T \times d}$, the network (with position encodings) becomes
\[f_\mathrm{tf} := (\mathrm{id} + f_\mathrm{mlp}^{(L)}) \circ (\mathrm{id} + f_\mathrm{attn}^{(L)}) \circ (\mathrm{id} + f_\mathrm{mlp}^{(L-1)}) \circ ... \circ (\mathrm{id} + f_\mathrm{attn}^{(1)}) + (\mathrm{id} + P).\]

At the level of granularity of the results in this paper (up to negligible changes in the width and weight norms), this changes very little from the viewpoint of representation. A residual connection can be implemented (or negated) by appending two ReLU activations to a non-residual network:
\[x = (x)_+ - (-x)_+.\]
Similarly, a residual connection can be implemented with one attention head (with internal embedding dimension $k = d$), as long as it is able to select its own position (which will be true in all of our constructions).

In some of our constructions, we choose to use residual connections (sometimes restricted to certain dimensions); it will be very natural to view the embedding space $\R^d$ as a ``workspace'', where residual connections ensure that downstream layers can access the input (and position) embeddings, as well as outputs of all earlier layers. We will specify whether to use residual connections in each construction, to make the proofs as clear as possible. When we do so, we do not add the extra weights explicitly to $f_\mathrm{attn}$ and $f_\mathrm{mlp}$.

\paragraph{Layer normalization.} For simplicity of presentation, we omit the normalization layers which are usually present after each attention and MLP block. It would be straightforward (but an unnecessary complication) to modify the function approximation gadgets in our constructions to operate with unit-norm embeddings.

\paragraph{Padding tokens.} Finally, it will greatly simplify the constructions to add padding tokens: to simulate a semiautomaton at length $T$, we will choose to prepend $\tau$ tokens, with explicitly chosen embeddings, which do not depend on the input $\sigma_{1:T}$. Theorem 1 uses $\tau = \Theta(T)$ padding, and Theorem 2 uses $\tau = 1$. In both cases, padding is not strictly necessary (the same functionality could be implemented by the MLPs without substantially changing our results), but we find that it leads to the most intuitive and concise constructions.

\paragraph{Complexity measures.} We define the following quantities associated with a Transformer network, and briefly outline their connection to familiar concepts in circuit complexity:
\begin{itemize}
\item The dimensions according to the definition of a sequence-to-sequence network: \emph{sequence length} $T$ and \emph{embedding dimension} $d$. Up to a factor of bit precision, this corresponds to the number of inputs in a classical Boolean circuit. We will exclusively define architectures where $d$ is independent of $T$.
\item Its \emph{depth} $L$, the number of repeated $f_{\mathrm{mlp}} \circ f_{\mathrm{attn}}$ blocks. When each of these modules contains a constant number of sequential computations, this coincides with the usual notion of circuit depth, up to a constant factor. This is true in practice and our theoretical treatment (the attention and MLP have a constant number of layers).
\item The other shape parameters from the definition of the architecture: \emph{number of heads} (per layer and position) $H$\footnote{There will be a notational collision between $h,H$ denoting attention heads, and $h \in H$ denoting an element in a group. We keep the overloaded notation for clarity, and this will certainly be unambiguous.}, and \emph{internal embedding dimension} $k$. When $f_{\mathrm{mlp}}$ is an $L'$-layer MLP, it has \emph{MLP intermediate widths} $d_1, \ldots, d_{L'-1}$. We will exclusively think of $L'$ as a small constant, so that the number of sequential matrix multiplications in the entire network is within a constant factor of $L$.
\item Its \emph{attention width} $w_\mathrm{attn}$ is defined to be the maximum of $\{d, Hk\}$, and its \emph{MLP width} $w_\mathrm{mlp}$ is defined as the maximum of $\{d_1, \ldots, d_{L'-1}\}$. Taking $w = \max(w_\mathrm{attn}, w_\mathrm{mlp})$ as a coarse upper bound we will use to summarize the number of per-position trainable embeddings in our constructions. To map this to the usual notion of \emph{circuit size}, note that the computations are repeated position-wise. Thus, Transformers induce a computational graph with $O(T \cdot L \cdot w)$ gates and $O(T \cdot L \cdot w^2)$ wires. The position-wise parameter-sharing induces a special notion of circuit uniformity.
\item A bound on its $\infty$-weight norms: the largest absolute value of any trainable parameter. These can be converted into norm-based generalization bounds via the results in \citet{edelman2022inductive}. Note that the results in this paper go beyond the \emph{sparse variable creation} constructions of bounded-norm attention heads; in general, the norms scale with $T$. The attention heads express meaningful non-sparse functions. Aside from the positive experimental results, we do not directly investigate generalization in this paper.
\item The bit precision (length of finite-precision truncation of real numbers in a computational graph implementing $f_\mathrm{tf}$), which lets us implement approximate real-valued computations as Boolean (or discrete arithmetic) circuits. With infinite-precision real numbers, there are pathological constructions for RNNs \citep{siegelmann1992computational} and Transformers \citep{Merrill21} which give single parameters of neural networks infinite representational power. Throughout this work, our circuits will work with $O(\log T)$ bit precision, which can represent real numbers (as integers $\lfloor x \cdot 2^c \rfloor$ in their binary representation, for some choice of $c = \Theta(\log T)$) with magnitude up to $O(\poly(T))$, with $O(1/\poly(T))$ approximation error. Since this is far from the focus of our results, we will elide details for the remainder of this paper, returning to these considerations only to make Theorem~\ref{thm:barrington-lower-bound} more concrete. All of our constructions are robust up to this noise level: this is because the internal weight norms and activations are bounded by a quantity at most polynomial in $T$, and the function approximation construction in Lemmas~\ref{lem:mlp-1d} and \ref{lem:mlp-nd} can tolerate $1/\poly(T)$ perturbations using $\poly(T)$ weight norms.
\end{itemize}

\subsection{Additional discussion of related work}
\label{subsec:appendix-related-work}

\paragraph{Relevant applications.} We first provide references for the ``reasoning-like'' applications of neural networks mentioned in the main paper.
\begin{itemize}
    \item Program synthesis: \citep{codex,SchusterKPK21puzzles,AlphaCode}.
    \item Mathematical reasoning: \citep{lample2019deep,polu2020generative,drori2022neural}.
    \item Neural dynamics models for decision-making: recurrent \citep{hafner2019dream,ye2021mastering,micheli2022transformers} and non-recurrent \citep{Chen21decision,JannerLL21trajectory}.
\end{itemize}

\paragraph{Synthetic combinatorial experiments (and relations).} We provide an expanded discussion of empirical analyses of neural networks trained on synthetic combinatorial tasks.
\begin{itemize}
    \item \textit{Pointer Value Retrieval:} \cite{zhang2021pointer} propose a benchmark of tasks based on pointer value retrieval (PVR) to study the generalization ability (in-distribution as well as distribution shift) of different neural network architectures. Their key idea behind the task is ``indirection through a pointer rule'', that is, a specific position of the input acts as a pointer to the relevant position (window) of the input which contains the answer. Using our results, we can implement a certain sub-class of PVR tasks: (1) we use the first attention layer to identify the pointer, and (2) we use the second attention layer to select the window between the pointer value and the width. (2) is doable with $O(1)$ attention heads if we are computing a function that is based on the sum (for example,$\mod n$). Otherwise it would require the window size number of attention heads similar to our grid-world construction.
    \item \textit{LEGO}: \cite{zhang2022unveiling} propose a task based on solving a simple chain-of-reasoning problem based on group-based equality constraints. They study the ability of transformers to generalize the entire chain of reasoning given only part of the chain while training. A direct comparison to our setting is not clear since this task is not modelled as a sequence-to-sequence task, however it serves as another example of the emergence of ``shortcut'' solutions: transformers solve certain variables without resolving the chain of reasoning.
    \item \textit{Dyck}: Several works~\citep{Hahn20,EbrahimiGZ20,NewmanHLM20,YaoPPN20} have studied the ability of Transformers to represent Dyck languages, both for generation and closing bracket prediction. The most closely related to our work is \cite{YaoPPN20}, which constructs a clever depth-2 as well as a depth-$D$ solution for bounded-depth $D$ Dyck languages. Bounded-depth Dyck can be captured by our semiautomata formalism and our main construction would recover the depth-$2^D$ solution by default. Their depth-2 construction bears semblance to the constructions we use in Theorem \ref{thm:shortcut-gridworld}: they implement a counter in the first layer similar to our $\mod n$ construction, and implement a proximity-based depth matching in the second layer. Our grid-world construction generalizes their construction to a significantly more complex problem. We view our work as a generalization of their results to a wider class of semiautomata.
    \item \textit{Parity}: Another commonly studied synthetic setup is the task of learning parities. \cite{edelman2022inductive,barak2022hidden} perform a theoretical and empirical study of the ability of Transformers (and other architectures) to learn sparse parities where the support size $k \ll T$.  \cite{Bhattamishra20,SchwarzschildBG21} study the task of computing prefix sum in the binary basis (which is essentially parity of the prefix sum) for Transformers and recurrent models, repsectively. \cite{anil2022exploring,wei2022ChainofThought} study essentially the same problem however they model the task as a natural language task and use pretrained Transformers. 
    \item \textit{Modular addition:} In the pursuit of understanding grokking,   \cite{nanda2022mechanistic} focus on the task of adding two 5 digit numbers modulo a large prime ($113$ in their setting). They take the viewpoint of mechanistic interpretability and attempt to reverse engineer what the Transformer is learning on this task in the low sample regime. They claim that the trained model learns sinusoidal encodings that we also use in our theoretical constructions. Note that our setting of modular counters performs a $T$-way summation, while their setting involves only a 2-way summation (with carryover). Inspired by their work, we do some preliminary investigation into interpreting the trained Transformer on the grid world (see Figure \ref{fig:appendix_attention}).
\end{itemize}

\paragraph{Formal languages and neural networks.} Dyck languages are particularly interesting for their completeness property: the Chomsky-Sch\"utzenberger representation theorem \citep{chomsky1959algebraic} states that all context-free languages can be (homomorphically) represented by the intersection of a Dyck language and a regular language. For more on this topic, see the discussion in \citet{YaoPPN20}.
In the context of regular languages (which in general induce finite-state automata), our findings imply that $O(\log T)$-depth networks can simulate all context-free languages (Theorem~\ref{thm:shortcut-log-depth}), and $O(1)$-depth networks can represent some of them.
The obstructing regular languages are the ones whose associated syntactic monoids are non-solvable.
We further note that the gridworld semigroups are \emph{aperiodic} and thus simulable by star-free regular expressions \citep{schutzenberger1965finite} and $\mathsf{AC}^0$ circuits \citep{chandra1983unbounded,barrington1988finite}. We did not see a way for this to generically entail $O(1)$-depth shortcuts with self-attention.
For the relation between the Chomsky hierarchy and various neural networks \emph{in practice}, \cite{NNChomsky} provide an extensive empirical study for memory-augmented RNNs and Transformers on tasks spanning all 4 levels of the hierarchy, and conclude the Transformers lack the ability to even recognize regular languages.
Their results do not contradict with ours, since they measure performance on ``inductive inference'', which is similar to our length generalization setup where we also see the failure of Transformer.

\paragraph{Different axes of generalization: length, size, and algorithmic.} 
There has been much recent interest in quantifying out of distribution generalization of trained models under distribution shifts that maintain some notion of ``logical'' invariance.  \citet{wei2022ChainofThought,anil2022exploring} empirically investigate the ability of pre-trained Transformers to generalize to longer sequence length for parity-like problems modelled as language tasks. \cite{xu2020neural} study size generalization in graph neural networks where they train on small graphs and evaluate on larger sized graphs with similar structural properties.
\cite{SchwarzschildBG21,bansal2022endtoend} focus on length and algorithmic generalization for recurrent models where they train on simple/easy instances of the underlying problem and evaluate on harder/complex instances using the power of recurrence to simulate extra computational steps,
inspired by the ideas of Neural Turing Machines~\citep{graves2014neural} and Adaptive Computation Time~\citep{graves2016ACT}.
We view our results as complementing those of \cite{YaoPPN20, anil2022exploring} for a richer class of problems. Our use of scratchpad is inspired by \cite{nye2021,wei2022ChainofThought,anil2022exploring}.

\paragraph{Recurrent Transformers.} Our work is not the first to notice that Transformer architectures make brittle predictions out-of-distribution. Indeed, even the seminal paper introducing the architecture \citep{vaswani2017attention} notes that length generalization is promoted by a subtle hyperparameter choice (namely, the positional encoding scheme). Furthermore, there have been several attempts to reconcile this gap by modifying Transformers to behave more like RNNs; \citep{Dehghani19UniversalTransformer,nye2021,wei2022ChainofThought,anil2022exploring,hutchins2022block}. \cite{kasai2021finetuning} consider training a non-recurrent Transformer, and finetuning it into an RNN. All of these works have some element of natural language experiments: either the task is end-to-end language modeling, or the synthetic reasoning task is framed as a natural language problem, for a pretrain-finetune pipeline. We view our work as strengthening the foundations of these lines of inquiry. Theoretically, we provide structural guarantees for how shallow non-recurrent models can (perhaps deceptively) fit recurrent dynamics over long sequences. Empirically, we perform a \emph{pure} (no confounds arising from the influence of a natural langauge corpus) analogue of the experiments seeking to help neural networks follow long chains of reasoning.

\paragraph{Recurrent vs. non-recurrent sequence transduction.} As mentioned briefly towards the end of Section~\ref{sec:experiments-scratchpad}, the setting of indirectly-supervised semiautomata matches that of autoregressive generative modeling (a.k.a. next-token prediction), if the continuations of the sequence depend on the state of a latent semiautomaton. This is the case in (for example) generating Dyck languages \citep{YaoPPN20}, where the possible continuations are $\{$all possible open brackets, if the stack $q_t$ is not full$\} \cup \{$close bracket which pairs with the top of the stack $q_t \}$. We note that when an autoregressive model is used for sequence generation via a token-by-token inference procedure, this amounts to a special case of scratchpad inference (with a naive $1$-step training procedure): the constant-depth network is used as a single iteration of a recurrent network, whose state is the completed prefix of the current generated sequence. Non-autoregressive natural language generation and transduction are an exciting area of research \citep{gu2017non}; for a recent survey, see \citet{xiao2022survey}. Our results are relevant to this line of work, suggesting that there may not be an expressivity barrier to expressing deep recurrent linguistic primitives, but there may be issues with out-of-distribution robustness.

\paragraph{Transformers as universal computation machines.}
\cite{perez2021attention} show that an infinite-precision Transformer achieves Turing completeness, as a single forward pass through a 3-layer decoder can simulate one transition step of a Turing machine.
\cite{giannou2023looped} exhibit a 13-layer Transformer whose weights are hard-coded to a universal Turing machine, and can be looped to perform any computation.
These works show that one pass through a Transformer can implement a single computational step of a Turing machine. In contrast, our results show how a shallow Transformer can sometimes execute a computational loop over the entire context in a single non-recurrent pass. This requires a significantly more refined analysis, which depends on the global algebraic structure induced by the automata in question.

\paragraph{Algebraic structures in deep learning.} Another area where tools from abstract algebra are used to reason about neural networks is \emph{geometric deep learning}, a research program which seeks to understand how to specify inductive biases stemming from algebraic invariances. For a recent survey, see \citet{bronstein2021geometric}. In contrast, this work studies the ability of a fixed architecture to learn a wide variety of algebraic operations, in the absence of special priors (but a large amount of data). There are certainly possible connections (e.g. \emph{``how do you bias an architecture to perform operations in a known group, when there is limited data?''}) to explore in future work.

\paragraph{Theoretical role of depth.} Our theoretical results can be interpreted as a \emph{depth separation} result: contingent on $\mathsf{TC}^0 \neq \mathsf{NC}^1$, it takes strictly more layers to simulate non-solvable semiautomata, compared to their solvable counterparts. In a similar spirit, there have been several works establishing depth separation for feed-forward neural networks (mostly using ReLU activations) \citep{telgarsky2016benefits,eldan2016power, daniely2017depth,lee2017ability, safran2019depth}. These results are usually constructive in nature, that is, they show the existence of functions that can be represented by depth $L$ but would require exponential width for depth $L-1$ (or $\sqrt{L}$, depending on the result). 

\paragraph{Universal function approximation with other networks.} More elementary neural architectures, such as MLPs, have the ability to represent arbitrary functions, given sufficiently many neurons \citep{hornik1989multilayer,cybenko1989approximation}. The $\mathsf{ACC}^0$ circuit construction described in \citep{barrington1988finite} can be implemented by any such architecture, not just the Transformer-- there is a naive black-box way to ``compile'' each gate in the circuit into a network with the same depth as the $\mathsf{ACC}^0$ circuit. A natural question is: \textit{why, then, should we prefer Transformers?} The primary advantage of Transformers comes from the position-wise weight sharing of the attention layers and the casual structure from causal attention maps. Unlike MLPs, the shared parameters in Transformers allow for significant reduction in the total parameter count for representing the two main operations across positions: modular counters, and resets.
In particular, these functions can be represented with $O(1)$ trainable parameters in the attention and MLP weight matrices (i.e. independent of $T$), as opposed to the $\Theta(T)$ parameters in a vanilla MLP, where position-wise parameter sharing is not available. In a sense, the Transformer architecture is naturally suited for implementing this construction.
\section{Experiments}
\label{sec:appendix-experiments}

\subsection{Section~\ref{sec:experiments-shortcut}: SGD finds the shortcuts, under ideal supervision}
\label{subsec:appendix-experiments-shortcut}

This section contains a full description and discussion of the in-distribution simulation experiments from Section~\ref{sec:experiments-shortcut}.

\subsubsection{Shallow Transformers simulate small groups and semigroups}
\label{subsubsec:appendix-learn-shortcuts}

The main experiments in this paper investigate whether gradient-based training of Transformers finds low-depth solutions to the problem of simulating semiautomata.
In these experiments, we consider a wide variety of semiautomata $\gA$, corresponding to various groups and semigroups, and construct a distribution $\gD_\gA$ over input sequences $(\sigma_1, \ldots \sigma_T)$ and their corresponding state sequences $(q_1, \ldots, q_T) = \gA_{T,q_0}(\sigma_{1:T})$.
In each setting, the $\sigma_t$ are chosen uniformly at random from the set of valid tokens in $\Sigma$.
\footnote{Take for instance the Dyck language, if the current stack is empty, then $\sigma_t$ is chosen uniformly from the choices of open parentheses but not the closing parentheses. This is in accordance with~\cite{YaoPPN20}.}
Given this distribution $\gD_\gA$, and a sequence-to-sequence neural network (with a token embedding and a linear classification head) which maps $\Sigma^T$ to token predictions $Y \in \R^{T \times |Q|}$ (such that $Y_{t,q} := \widehat\Pr_\theta(q_t = q | \sigma_{1:t})$), we establish the task of minimizing the cross-entropy loss
\[ L(\theta) := \frac{1}{T} \sum_{t=1}^T \log(1/Y_{t,q_t}). \]
This defines a supervised learning problem over sequences.

Note that without intermediate states in the input these problems exhibit \emph{long-range dependencies}: for example, in the parity semiautomaton (and for any semiautomaton whose transformation semigroup is a group), every $q_t$ depends on \emph{every} preceding input $\{\sigma_{t'} : t' < t\}$. Indeed, this is why previous studies have used group operations as a benchmark for reasoning \citep{anil2022exploring,zhang2022unveiling}.

\paragraph{Settings.} We proceed to enumerate the semiautomata considered in these simulation experiments.
\begin{itemize}
    \item Cyclic groups $C_2, C_3, \ldots, C_8$. For each cyclic group $C_n$ (realized as $Q := \{0, 1, \ldots, n-1\}$ under mod-$n$ addition), we choose the generator set $\Sigma$ to be the full set of group elements $\{0, \ldots, n-1\}$.
    An alternative could be to let $\Sigma$ be a minimal\footnote{In the sense that it induces a non-trivial learning problem on this group.If we only pick the generator $\{1\}$, the output sequence is deterministic, and there is no learning problem.} set $\{0, 1\}$, which we do not use in the experiments.
    
    \item Direct products of cyclic groups $C_2 \times C_2, C_2 \times C_2 \times C_2$, realized as concatenated copies of the component semiautomata. Note that $C_6$ (which is isomorphic to $C_2 \times C_3$), included in the above set, is another example.
    \item Dihedral groups $D_6, D_8$.
    Our realization of $D_{2n}$ chooses $Q = \{0, 1, \ldots, n-1\} \times \{0, 1\}$ and $\Sigma = \{(1,0), (0,1)\}$. Since these groups are non-abelian, it is already not so straightforward (compared to parity) to see why constant-depth shortcuts should exist.

    \item Permutation groups $A_4, S_4, A_5, S_5$.
    We choose $Q$ to be the set of $n!$ permutations for $S_n$ (\textit{symmetric group}), and $Q$ to be the set of $\frac{n!}{2}$ even permutations for $A_n$ (\textit{alternating group} on $n$ elements).
    The generator set for $S_n$ consists of the minimal generators, a transposition and an $n$-cycle, as well as 6 other permutations.
    \footnote{These other permutations are chosen following the ordering given by the $\mathsf{sympy.combinatorics}$ \href{https://docs.sympy.org/latest/modules/combinatorics/perm_groups.html}{package}.
    They are not necessary for covering the state space (since the minimal set of 2 permutations already suffice to cover $Q$), but can help speed up the mixing of the states.}
    For $A_n$, we choose the $3$-cycles of the form $(12i)$ for $i \in \{3, 4, \cdots, n\}$.
    Note that $A_4, S_4$ are solvable (leading to constant-depth shortcuts), while $A_5, S_5$ are not. Also, note that to learn a constant-depth shortcut for $A_4$, a model needs to discover the wondrous fact that $A_4$ has a nontrivial normal subgroup, that of its double transpositions.
    \item The quaternion group $Q_8$. This is the smallest example of a non-abelian solvable group which is not realizable as a semidirect product of smaller groups, thus requiring the full wreath product construction (Lemma~\ref{lem:wreath-product-simulation}) in our theory.
    \item The Dyck language $\mathrm{Dyck}_{n,k}$ (correctly nested brackets of $k$ types, with depth at most $n$).
    We take $n=4$, $k=2$ in the experiments.
    To realize $\mathrm{Dyck}_{n,k}$ as a semiautomaton simulation problem, the state $Q$ is the state of the stack which implements Dyck language recognition (there are thus $\sum_{i=0}^n k^i$ distinct states);
    \footnote{In the experiments we use $(k+1)^n$ classes (i.e. each of the $n$ positions can take $(k+1)$ possible values), $\sum_{i=0}^n k^i$ of which are reachable.}
    $\Sigma$ is the set of $2k$ opening and closing brackets. The distribution in inputs is slightly different, since there is a notion of ``illegal'' inputs: if the stack is empty, then the set of feasible inputs contain all the opening brackets; if the stack is full (i.e. reaching depth $n$), then the only feasible input is the closing bracket for the opening bracket at the top of the stack.

    \item Gridworld semiautomata $\mathrm{Grid}_4, \mathrm{Grid}_9$, where $Q = \{0, 1, \cdots, n-1\}$ (for $n = 4 \text{ or } 9$) and $\Sigma = \{\pm 1\}$.\footnote{$-1$ for $L$, 1 for $R$. We omit the no-op $\perp$ in the experiment which does not change the difficulty of the task.}
    For this special case, we have a constant-depth solution as stated in Theorem \ref{thm:shortcut-gridworld}.
\end{itemize}

\paragraph{Training.} We focus on the \emph{online learning} setting for all experiments in this paper: at training iteration $i$, draw a fresh minibatch of samples from $\gD_\gA$, compute the network's loss and gradients on this minibatch, and update the model's weights using a standard first-order optimizer (we use AdamW~\citep{AdamW}). This is to mitigate the orthogonal challenge of overfitting; note that the purpose of these experiments is to determine \emph{whether} standard gradient-based training finds shortcut solutions in these combinatorial settings (in a reasonable amount of time), not \emph{how efficiently}. We do not investigate how to improve sample efficiency in this paper.
The results in the paper are based on sinusoidal positional encodings~\citep{vaswani2017attention} unless otherwise specified.

\paragraph{Sequence length.} We report our main results with sequence length $T = 100$, which is large enough to rule out memorization: for this choice of $T$, the inputs come from a uniform distribution over $|\Sigma|^{100} > 10^{30}$ sequences, rendering it overwhelmingly unlikely for a sample to appear twice between training and evaluation. We observed positive results in most of the settings for larger $T$, but training became prohibitively unstable and computationally expensive; mitigating this is an interesting direction for future empirically-focused studies.

\paragraph{Depth.} We seek to investigate the sufficient depth for learning to simulate each semiautomaton. Thus, for each problem setting, we vary the number of layers $L$ in the Transformer between $1$ and $16$. Note that we do not attempt in this work to distinguish between depths $O(\log T)$ and $O(1)$, nor do we attempt to tackle the problem of exhaustively enumerating and characterizing the shortcut solutions for any particular semiautomaton.

\begin{figure}
    \centering
    \includegraphics[width=\textwidth]{figs/tables/results_heatmap_main_max_1017_10pm.pdf}
    \caption{A complete version of Figure \ref{fig:experiments-shortcut}, for various tasks (rows) and numbers of network layers (columns).
    Reported performance is the \emph{maximum} test accuracy over 20 runs.
    }
    \label{fig:appendix-main-table-full}
\end{figure}

\begin{figure}
    \centering
    \includegraphics[width=\textwidth]{figs/tables/results_heatmap_full_median_1017_10pm.pdf}
    \caption{The \emph{median} accuracy for various tasks (rows) and numbers of network layers (columns).
    Reported performance is the \emph{median} test accuracy over 20 runs.
    }
    \label{fig:appendix-median-table}
\end{figure}

\paragraph{Results.}
For each task and number of layers, we report the highest (Figure \ref{fig:appendix-main-table-full}) and median (Figure \ref{fig:appendix-median-table}) accuracies over 20 runs.
The accuracy is calculated at token level (i.e. $\frac{1}{T}\sum_{t\in[T]}\mathbbm{1}[\hat{q}_t = q_t]$), as opposed to the sequence-level accuracy (i.e. $\mathbbm{1}[\hat{q}_{1:T} = q_{1:T}]$) as reported in~\cite{Bhattamishra20}.
We evaluate in-distribution accuracy on
independent (unseen) samples of $\gD_\gA$, which contain $2048$ sequences of length $T = 100$.
\footnote{This size is sufficient for evaluating the model performance: for example, for $C_2$ (i.e. parity), evaluating a model on 10 evaluation sets of this size gives a standard deviation of $0.031\%$ in the accuracy.}
As shown in Figure \ref{fig:appendix-main-table-full}, Transformers, trained with standard gradient-based methods, are able to find solutions which generalize well (in-distribution) on all of the tasks. Performance tends to improve as the number of layers increases (there is a small amount of non-monotonicity in some settings due to training instability); the sufficient depth to achieve high accuracy varies depending on the problem setting, as discussed below.

\paragraph{Trends in sufficient depth.}
The minimum number of layers required to achieve 99\%+ performance reflects our beliefs on the difficulty of the task: a high-level trend is that the semigroups which don't contain groups (which only require memory lookups) are the easiest to learn, and among the groups, the larger non-abelian groups require more layers to learn, with the non-solvable group $S_5$ requiring the largest depth.\footnote{However, we stress that these experiments do \emph{not} control for the fact that larger groups have richer supervision (for example, $A_5$ has more informative labels than $A_4$), possibly accounting for the counterintuitive result that the latter requires more layers, despite being a subgroup of the former.}
Between the non-abelian groups, the difficulty of learning $Q_8$ compared to $D_8$ (which has the same cardinality) agrees with our theoretical characterizations of the respective constant-depth shortcuts for these groups: $D_8$ can be written as a semidirect product of smaller groups, while $Q_8$ cannot, so our theoretical construction of a constant-depth shortcut must embed $Q_8$ in a larger structure (i.e. the wreath product).

\paragraph{Improving training stability.} Throughout these experiments, we observe the following forms of training instability: high variance in training curves (based on initialization and random seeds for the gradient-based optimization algorithm), and negative progress (i.e. non-monotonic loss curves), even for training runs which eventually converge successfully. This is  evident in Figure \ref{fig:experiments-shortcut}(b),(c) and in the significant difference between the maximum accuracies in Figure \ref{fig:appendix-main-table-full} and the median in Figure \ref{fig:appendix-median-table}.

To stabilize training, we experiment with dropout %
and exponential moving average (EMA)\footnote{We use the EMA implementation from \href{https://github.com/fadel/pytorch_ema}{https://github.com/fadel/pytorch\_ema}.}.
The effectiveness of dropout varies across datasets; for example, we find using a dropout of 0.1 (the best among $\{0, 0.1, 0.2, 0.3\}$) to be helpful for Dihedral and Quaternion, while such dropout hurts the training of Dyck and Gridworld.
We find EMA to be generally useful, and fix the decay parameter $\gamma = 0.9$ in the experiments since the performance of the EMA model does not seem to be sensitive to the choice of $\gamma \in \{0.85, 0.9, 0.95\}$.
Further, increasing the patience of the learning rate scheduler can be helpful.

\subsubsection{Visualizing and interpreting attention heads}
\label{subsubsec:appendix-interpret}

Although we defer a fine-grained mechanistic interpretability study (\emph{``which group/semigroup factorizations did these shallow Transformers discover, if any?''}) to future work, we provide some preliminary visualizations of attention heatmaps which strongly corroborate their theoretical counterparts.
In particular, consider the gridworld setup in Theorem \ref{thm:shortcut-gridworld}.
The theoretical construction consists of two steps: the first attention layer calculates the prefix sum of the actions (i.e. the sum of $\{\tilde\sigma_i\}_{i \in [T]} \in \{0, \pm 1\}^T$), and the second attention layer identifies the last time the process is at a boundary state (i.e. $0$ or $S$) where the process can be ``reset'' (i.e. the model can ignore the history before the boundary state and only needs to calculate the sum of subsequent actions).

\begin{figure}
    \centering
    \begin{subfigure}[b]{\textwidth}
    \includegraphics[width=\textwidth]{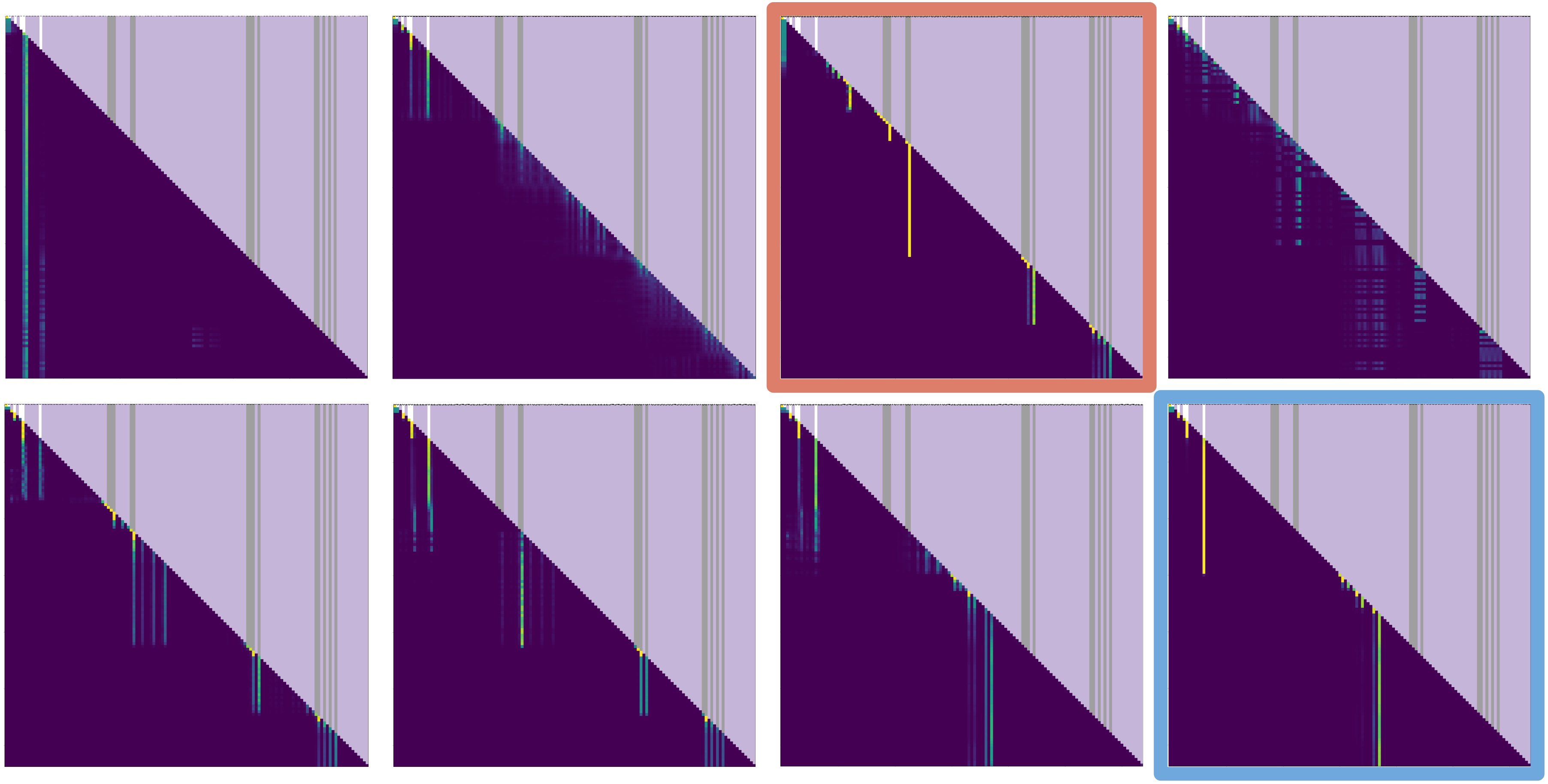}
    \caption{}
    \end{subfigure}
    \begin{subfigure}[b]{\textwidth}
    \includegraphics[width=\textwidth]{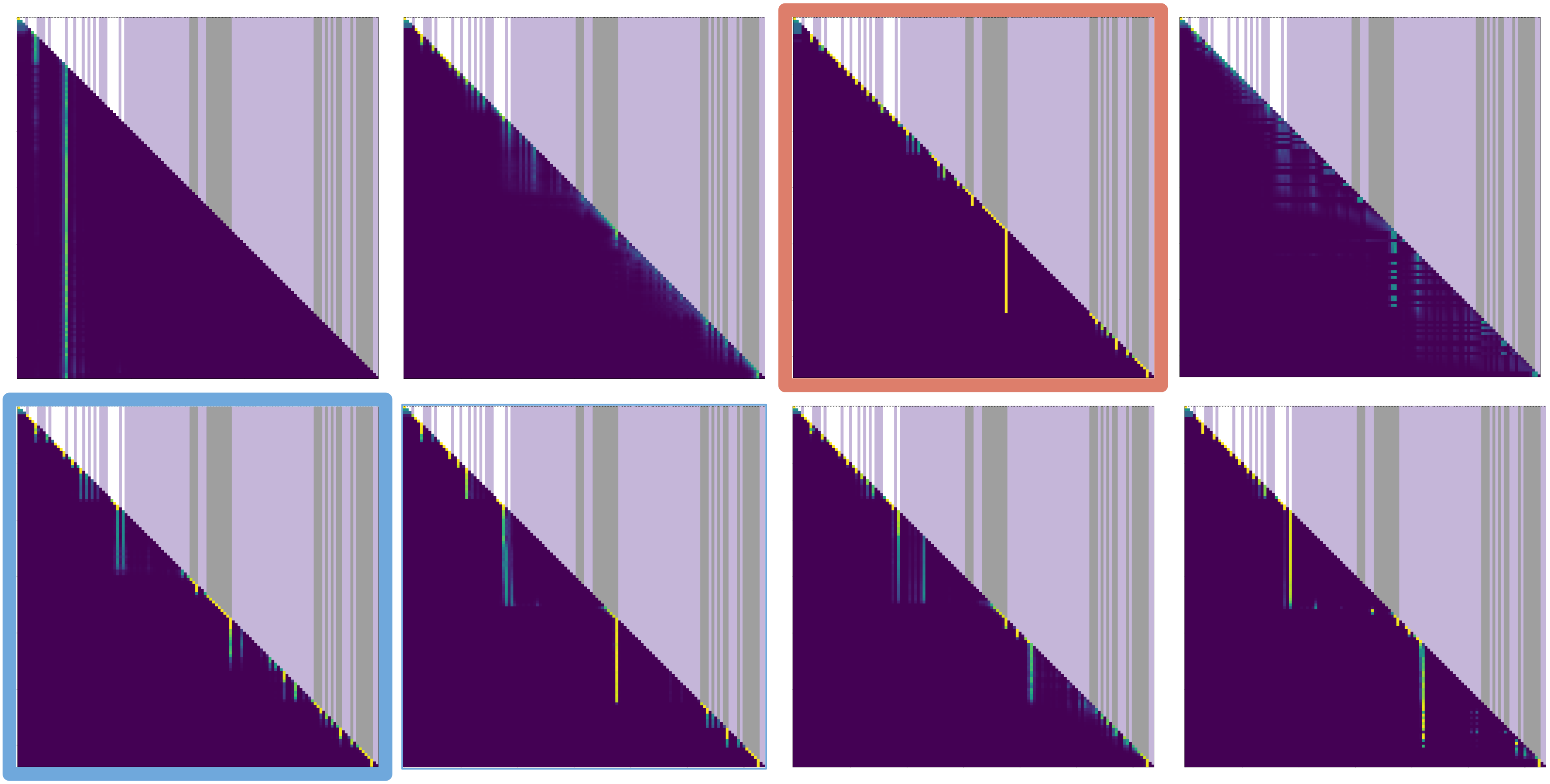}
    \caption{}
    \end{subfigure}
    \caption{Visualization of the entire set of 8 attention heads on two randomly selected length-128 sequences for models trained on $\mathrm{Grid}_9$.
    The lower triangles visualize the attention patterns, while the upper triangles are ignored by the attention head because of the causal mask.
    We use the upper triangles to visualize the positions of state $0$ and state $S=8$ in the output: \textit{white strips} mark the position of state 0 and \textit{gray strips} mark state $S=8$.
    Example heads that clearly detect state 0 and $S$ are highlighted with blue and red frames, respectively.
    Note that in many cases, the white/gray strips align with the locations of high attention scores (the bright yellow patterns).
    This suggests that the model indeed learns to identify the boundary states and that the construction in Theorem \ref{thm:shortcut-gridworld} agrees with solutions found in practice.
    }
    \label{fig:appendix_attention}
\end{figure}

\if\VarXiv1
    We have seen in Figure \ref{fig:attention_main} that the network indeed learns to 1) compute the prefix sum, as evidenced by the uniform attention in the first layer, and 2) detect boundary states, as highlighted by large attention scores in the last layer.
\else
    We have seen in Figure \ref{fig:experiments-shortcut} that the network indeed learns to 1) compute the prefix sum, as evidenced by the uniform attention in the first layer, and 2) detect boundary states, as highlighted by large attention scores in the last layer.
\fi 
Figure \ref{fig:appendix_attention} provides more examples of attention patterns, which are taken from the last layer of a 4-layer GPT-2 model on two randomly selected $\mathrm{Grid}_9$ sequences.
We highlight the locations where the process is at a boundary state (white strips for state 0 or gray strips for state $S=8$), which align well with the highly activated positions of the attention heads, showing that the model learns to locate the closest boundary states.
Moreover, when processing tokens appearing later in the sequence than these highly activated positions, no attention weight is put on tokens \emph{before} these positions.
This suggests that these highly activated locations reset the state so that the model does not need to look further back past them.

\subsection{Section~\ref{sec:experiments-scratchpad}: Failures of shortcuts in more challenging settings}
\label{subsec:appendix-experiments-scratchpad}

Our theoretical and main empirical findings have shown that not only do shallow non-recurrent networks subsume deeper finite-state recurrent models in theory, these shallow solutions can also be found empirically via standard gradient-based training.
However, experiments in Section \ref{sec:experiments-shortcut} and Appendix \ref{sec:appendix-experiments} are in an idealized setting, with full state supervision during training and in-distribution evaluation at test time.
This section studies more challenging settings where these assumptions are relaxed.
We consider training under indirect (Section \ref{subsubsec:appendix-indirect}) or incomplete (Section \ref{subsubsec:appendix-incomplete}) state supervision,
and evaluation on sequences that is out-of-distribution (Section \ref{subsubsec:appendix-ood}) or of longer lengths (Section \ref{subsubsec:appendix-extrapolation}).

\subsubsection{Challenges from indirect supervision}
\label{subsubsec:appendix-indirect}

One type of limited supervision is that the observations may not provide full information of the underlying state.
To model this, we consider the case where instead of observing the state $q$ directly, we get a function of the state, denoted $\varphi(q)$, where $\varphi: Q \rightarrow \tilde Q$ is non-injective (i.e. $|\tilde Q| < |Q|$).
In each of the experiments involving partially-observable semiautomata, we specify the underlying semiautomaton, as well as the observation function $\varphi$.
\begin{itemize}
    \item \emph{Dyck language with stack top observations:} For $\mathrm{Dyck}_{n,k}$, the state $Q$ is the state of the stack which takes $\sum_{i=0}^n k^i$ values.
    We take $\varphi$ to be the function that takes in a stack and returns the element at the top of the stack, which is either one of the $k$ open brackets if the stack if non-empty, or a special token $\perp$ indicating an empty stack, i.e. $\tilde{Q} := \{1, 2, \cdots, k, \perp\}$.
    We consider $k=8$ (as opposed to $k=2$ in Section \ref{sec:experiments-shortcut}) to make the prediction task more challenging.
    
    \item \emph{Gridworld with boundary observations:} We consider the case where the underlying semiautomaton is $\mathrm{Grid}_9$ with $Q = \{0, 1, \cdots, 8\}$.
    The observation function $\varphi: Q \rightarrow \{0, 1\}$ outputs whether the current state is of two boundary states, i.e. at state 0 or state $S=8$.
    
    \item \emph{Permutations with single-element observations:} We take the permutation group $S_5$ with $Q$ is the set of $5!$ operations.
    The observation function $\varphi: Q \rightarrow \{1,2,3,4,5\}$ returns the first value of the permutation. For example, $\varphi((2,1,4,3,5) = 2$.
    We use a set of 5 generators for the experiments.
    
    \item \emph{Cyclic group with ``$0 \; \mathrm{mod} \; 4$'' observations:} We take $C_4$ as the underlying group with $Q = \{0,1,2,3\}$.
    The observation function computes whether the current state is state 0, i.e. $\varphi(q) = \mathbbm{1}[q = 0]$.
    
    \item \emph{Dihedral group with rotation component only:}
    Recall that $D_{2n} = C_n \rtimes C_2$.
    We take $n=4$ with $Q = \{0, 1, 2,3\} \times \{0,1\}$, and let the observation function $\psi$ output only the the first component (i.e. $\tilde{Q} = \{0,1,2,3\}$).

    \item \emph{$\texttt{(abab)}^*$:} We consider one semiautomaton which is not featured in Section \ref{sec:experiments-shortcut}: the one which recognizes the regular expression $\texttt{(abab)}^*$, which is also studied in \cite{Bhattamishra20}.
    The underlying semiautomaton has 5 states: 4 states are in a cyclic fashion when seeing repeated patterns of $abab$, and a fifth absorbing ``failure'' state is entered if any other pattern is seen.
    For example, the input sequence $abababaaabab$ corresponds to states $012301244444$, where the $5^{th}$ ``$a$'' leads to the absorbing state.
    The observation function $\varphi$ computes whether the current state is the ``accepting'' state (i.e. state 3), with $\tilde{Q} = \{0,1\}$.
    For example, the output of $\varphi$ for the input sequence $ababababa$ is $000100010$,
    and the output for the input sequence $ababbabab$ is $000111111$, i.e. the sequence enters the absorbing state at position 5 and never recovers.
    
    We consider two distributions on the input sequences: (1) the input is always a sequence of the form $abababa\cdots$ (i.e. the process is never in the absorbing state), which is the setup in \cite{Bhattamishra20}; and (2) the input is of the form $abababa\cdots$ with probability 0.5, and is some randomly drawn string of $a,b$ otherwise.
    Note that case (1) can be solved purely based on the positional encoding, since the label is 1 when the position is a multiple of 4 and 0 otherwise, while case (2) is more difficult since the model needs to take into account the input tokens.

\end{itemize}

\begin{figure}[]
    \centering
    {%
        \centering
     \begin{tabular}{c|ccccccc}
        \toprule
        \textbf{Task} & $\mathrm{Dyck}_{4,8}$ & $\mathrm{Grid}_9$ & $S_5$ & $C_4$ & $D_{8}$ & $(\texttt{abab})^\star$, (1)
        & $(\texttt{abab})^\star$, (2)
        \\ \midrule
        \textbf{Observation} & stack top & $\mathbbm{1}_{\text{boundary}}$ & $\pi_{1:t}(1)$ & $\mathbbm{1}_{0 \text{ mod } 4}$ & location & accept & accept
        \\  \midrule
        \textbf{Accuracy} & 100.0 & 100.0 & 99.6 & 99.9 & 100.0 & 100.0 & 100.0
        \\
        \bottomrule
      \end{tabular}
    }
    \caption{Accuracies with indirect supervision, extending results in Figure \ref{fig:experiments-further}(a).
    The numbers are the maximum over 25 runs.
    As a reference, LSTM gets 100\% on all tasks.
    }
    \label{tab:appendix-experiments-further}
\end{figure}

\paragraph{Results.}
We train GPT-2-like models on sequences of length 40.
We use 16 layers for $S_5$ and 8 layers for other tasks, with embedding dimension $d=512$ and $H=8$ attention heads.
As shown in Figure \ref{tab:appendix-experiments-further}, the model is able to achieve near-perfect in-distribution accuracies for all tasks.
An interesting side finding is that the choice of positional encoding turns out to be important for both cases of $(abab)^*$: learning is challenging for linear encoding (i.e. $p_i \propto i$) but is easy when using sinusoidal positional encoding, which is likely because the sinusoidal encoding naturally matches the periodicity in $(abab)^*$.
In all other experiments, we use sinusoidal positional encodings unless otherwise noted.

\subsubsection{Challenges from incomplete supervision}
\label{subsubsec:appendix-incomplete}

Another challenge of limited supervision is that the observation sequence may be incomplete, that is, we may not be able to get supervision on the states at every time step.
We consider the task of learning length 100 sequences, where the state at each position is revealed with some probability $p_{\mathsf{reveal}} \in (0, 1]$.

\begin{figure}
    \centering
    \includegraphics[width=0.45\textwidth]{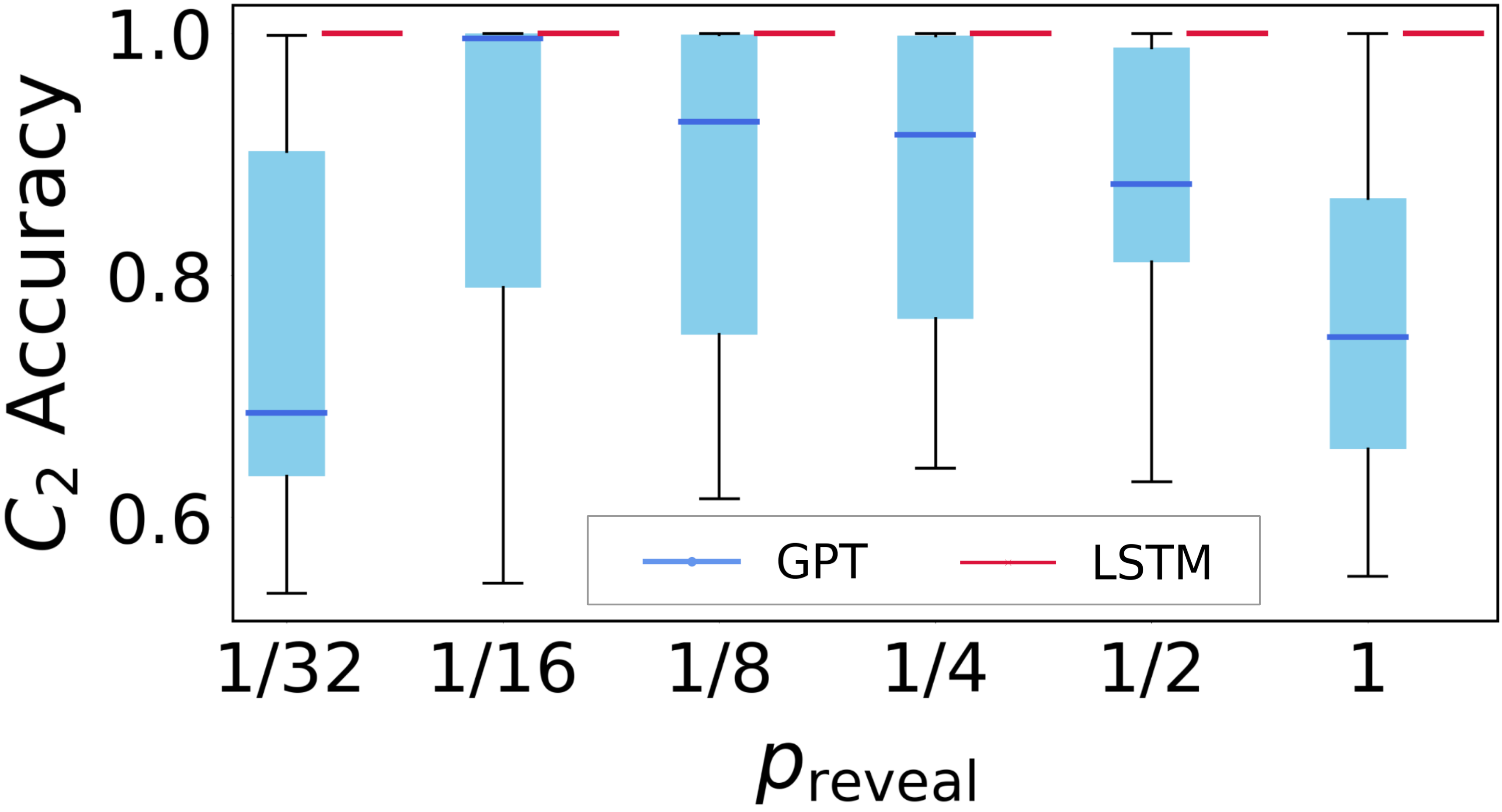}
    \includegraphics[width=0.45\textwidth]{figs/5.2_sparsify_S5_box.png}
    \caption{
    Learning from incomplete state sequences, extending results from Figure \ref{fig:experiments-further}(b):
    accuracy vs. position-wise probability of a hidden token (i.e. $p_{\mathrm{reveal}}$), for GPT and LSTM.
    While LSTM is able to maintain a perfect accuracy across different values of $p_{\mathrm{reveal}}$, GPT's performance may degrade as labels get sparser.
    The mean and standard deviation are taken over 25 runs.
    }
\label{fig:appendix_incomplete}
\end{figure}

\paragraph{Results.} Figure~\ref{fig:appendix_incomplete} shows the accuracy against $p_{\mathrm{reveal}}$, for $S_5$ and $C_2$ (i.e. parity).
Transformer training pipeline is worse than LSTM at tolerating incomplete supervision:
while Transformer is able to maintain the performance across $p_{\mathrm{reveal}}$ for $C_2$, the performance degrades significantly at lower $p_{\mathrm{reveal}}$ for $S_5$.
We leave improving the robustness to sparse supervision to future work.

\subsubsection{Out-of-distribution generalization}
\label{subsubsec:appendix-ood}

The previous subsections show positive results on learning shallow non-recurrent shortcuts with limited supervision during training, either in the form of indirect observations or incomplete observation sequences.
In this section, we study challenges at test time, and evaluate Transformers on their \emph{out-of-distribution} generalization performance.
For this and the next subsection, the models are trained in the standard way with full state supervision.
The training sequences are of length 40, where each position has an equal probability of being 0 or 1, i.e. $\Pr[\sigma = 1] = 0.5$.
At test time, the sequences of the same length as training, but the Bernoulli parameter $\Pr[\sigma = 1]$ varies in the range $\{0.05, 0.1, 0.15, \ldots, 0.9, 0.95\}$.

\begin{figure}
    \centering
    \includegraphics[width=0.85\textwidth]{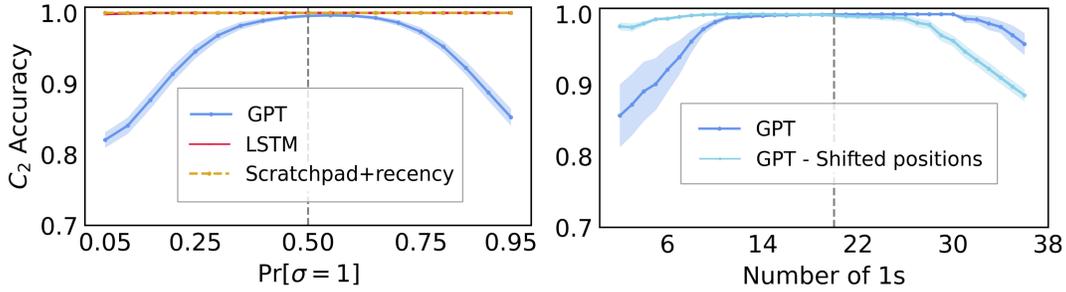}
    \caption{OOD generalization performance on $C_2$:
    (\textit{Left}) Accuracy on sequences of the same length as training, with varying $Pr[\sigma = 1]=0.5$.
    GPT fails at OOD generalization, whereas recurrent solutions implemented by LSTM and Scratchpad with recency bias is robust to different distributions.
    (\textit{Right}) Accuracy on sequences of the same length as training, with a varying number of 1s in each sequence.
    GPT has worse performance on counts less frequently seen during training.
    The lines show the mean accuracy (with shadows showing standard error) over 25 replicates.
    }
    \label{fig:appendix_parity_ood}
\end{figure}

\paragraph{Negative results for vanilla Transformers.}
Figure~\ref{fig:appendix_parity_ood} (\emph{left}) shows the accuracy as $\Pr[\sigma=1]$ varies.
The performance of the Transformer degrades sharply as the test distribution changes away from training, failing at out-of-distribution generalization.
Given the theoretical construction of modular counters (Lemma~\ref{lem:cyclic}), our hypothesis is that Transformer may be learning a shortcut solution that computes the parity by counting the number of 1s, and that counts less frequently seen during training will cause the model to fail.
The experimental results agree with the hypothesis: as $\Pr[\sigma=1]$ deviates from 0.5, it is less likely for the value of the count (which concentrates around $T \times \Pr[\sigma=1]$) to be seen during training, hence the performance degrades.
In contrast, an LSTM recurrent network maintains perfect accuracy when evaluated on all values of $\Pr[\sigma=1]$.

We further test this hypothesis by checking how the accuracy changes as we vary the count (i.e. the number of 1s) in the input sequence.
As shown in Figure~\ref{fig:appendix_parity_ood} \emph{(right)}, Transformer's performance degrades as the count moves away from the expected number during training, agreeing with the hypothesis.
It might appear strange that GPT fails at a lower count more than a higher count.
However, this may be because the shortcut learns a correlation between the count and the position: during training, a lower count is more likely to appear early in an input sequence, as opposed to the testing scenario where a lower count is equally likely to appear at a later part of an sequence.
This is further supported by the observation that training the model with randomly shifted positions significantly improves the performance at lower counts.

\paragraph{Guiding the Transformer to learn the recurrent solution.}
We investigate one established mitigation for the out-of-distribution brittleness of non-recurrent Transformers: \emph{scratchpad} training and inference. 
Given a sequence of inputs $(\sigma_1, \ldots, \sigma_T)$ and states $(q_1, \ldots, q_T)$, in the standard (non-recurrent) sequence-to-sequence learning pipeline, the network receives $\sigma_{1:T}$ as input, and outputs the sequence of predictions for $q_t$. In scratchpad training \citep{nye2021,wei2022ChainofThought}, we instead feed the network an interleaved sequence of inputs and states
$(\sigma_1, q_1, \sigma_2, q_2, \sigma_3, q_3, \ldots, q_{T-1}, \sigma_T)$
(with an appropriately expanded token vocabulary), and define the network's state predictions to be those at the appropriately aligned positions: $(\hat q_1, \bot, \hat q_2, \bot, \ldots, \bot, \hat q_T)$ (where $\bot$ denotes a position where the prediction is ignored by the loss function). During inference, we iteratively fill in the state predictions.
This removes the need for the network to learn long-range dependencies in a single non-recurrent pass, by splitting it into $T$ sequential state prediction problems which can depend on previous predicted state $\hat q_{t-1}$; one can think of this as a way to guide a shallow Transformer to learn the recurrent solution (i.e. explicit depth-$\Theta(T)$ iteration of the state transition function), rather than a shortcut.

We note that introducing the scratchpad itself is not sufficient to remove the parallel solution, since the model can simply ignore the scratchpad positions and find the same parallel shortcut as before.
The good news is that we can couple scratchpad with an explicit \emph{recency bias} in the attention mechanism \citep{Press22ALiBi} which biases the model towards putting more attention weights on closer input.
Intuitively, if the model is only allowed to put attention on the current input token and the current scratchpad (which is simply the current state), then the model is forced to be recurrent;
recency bias can be considered as a soft relaxation of the same idea.
Combining scratchpad and recency bias, we are able to train a Transformer to learn the recurrent solution, which is resilient to distribution shift; see Figure~\ref{fig:appendix_parity_ood} \emph{(left)}.
Notice that this mitigation completely foregoes the computational advantage of a shallow shortcut; we leave it to future work to obtain shortcuts which are resilient to distribution shift. Towards this, the constructions used in the proof of Theorem~\ref{thm:shortcut-log-depth} may be helpful.
Finally as a side note, even though the state transitions are Markov, the dependency in the input sequence can still be long range, so we do not expect recency bias to help without scratchpad, since in this case the output can depend uniformly on each input positions (e.g. consider parity).

\subsubsection{Length generalization}
\label{subsubsec:appendix-extrapolation}

\if\VarXiv1
    \begin{figure}
    \centering
    \includegraphics[width=0.88\textwidth]{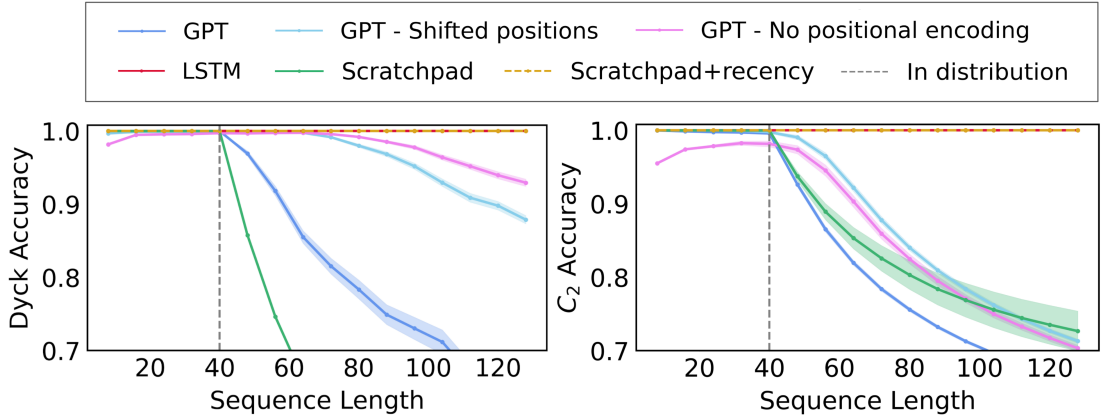}
    \caption{Length generalization on Dyck and $C_2$ (Figure~\ref{fig:experiments-ood}(b), reproduced here for convenience):
    Transformer fails at length generalization, but adding Scratchpad~\citep{nye2021} and recency bias~\citep{Press22ALiBi} serves as a remedy. The lines show the mean accuracy (with shadows showing standard error) over 25 ($\pm 1$) replicates.
    }
    \label{fig:appendix_length}
    \end{figure}
\else
    \begin{figure}
    \centering
    \begin{subfigure}[b]{\textwidth}
    \centering
    \includegraphics[width=0.88\textwidth]{figs/5.2_OOD_varyL.png}
    \caption{Mean accuracy over 25 replicates, with shadows showing standard error.}
    \end{subfigure}
    \begin{subfigure}[b]{\textwidth}
    \centering
    \includegraphics[width=0.88\textwidth]{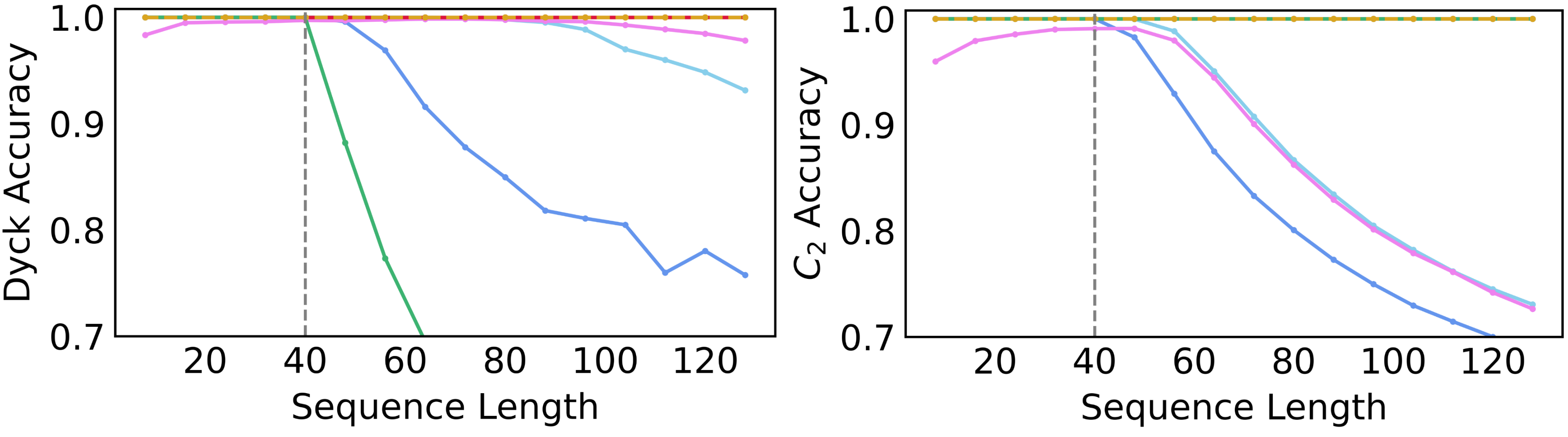}
    \caption{Max accuracy over 25 replicates.}
    \end{subfigure}
    \caption{Length generalization on Dyck and $C_2$:
    Transformer fails to generalize, but adding Scratchpad~\citep{nye2021} and recency bias~\citep{Press22ALiBi} serves as a remedy.
    For ``GPT--Shifted positions'', the positions in a sequence are shifted by a random number.
    For ``GPT--No positional encoding'', no position encodings are provided but the causal mask is still present.
    }
    \label{fig:appendix_length}
    \end{figure}
\fi

\paragraph{Settings for length generalization.}
Our final setup is length generalization, where the model is evaluated on sequences of lengths unseen during training.
Promoting this difficult desideratum of \emph{length generalization} is an intricate problem in its own right; see \citet{YaoPPN20,anil2022exploring} for more experiments similar to ours and more discussions on length generalization in \ref{subsec:appendix-related-work}.
In the following, we check the length generalization performance on $\mathrm{Dyck}_{4,2}$ and $C_2$ (with $Pr[\sigma=1] = 0.5$), where the model is trained on sequences of length 40 and tested on sequences of length $\{8, 16, 24, \cdots, 120, 128\}$.

\paragraph{Results.}
Figure \ref{fig:appendix_length} shows the performance on sequences of various lengths.
In contrast to LSTM's perfect performance on all scenarios, Transformer's accuracy drops sharply as we move to lengths unseen during training.
This is not purely due to unseen values of the positional encoding: randomly shifting the positions during training can cover all the positions seen during testing, which helps improve the length generalization performance but cannot make it perfect; we see similar results for removing positional encodings altogether.
However, similar to the OOD setup in the previous subsection, we empirically show that the above flaws are circumventable.
Using a combination of \emph{scratchpad} (a.k.a. ``chain-of-thought'') \citep{nye2021,wei2022ChainofThought} and recency bias \citep{Press22ALiBi}, we demonstrate that Transformers can be guided towards learning recurrent (depth-$T$) solutions, which generalize out-of-distribution and to longer sequence lengths (Figure~\ref{fig:appendix_length}, yellow curves).
The results also confirm that the inclusion of recency bias is necessary: without it, scratchpad training shows no improvement on length generalization.

\paragraph{Impact of positional encoding.} 
Figure \ref{fig:appendix_length} also shows some interesting findings related to positional encoding, which is believed to be a key component for Transformers and a topic with active research \citep{ke2020rethinking,chu2021conditional}.
While this work does not aim to improve positional encoding, some of our results may be of interest for future research.

\begin{figure}
    \centering
    \includegraphics[width=0.85\textwidth]{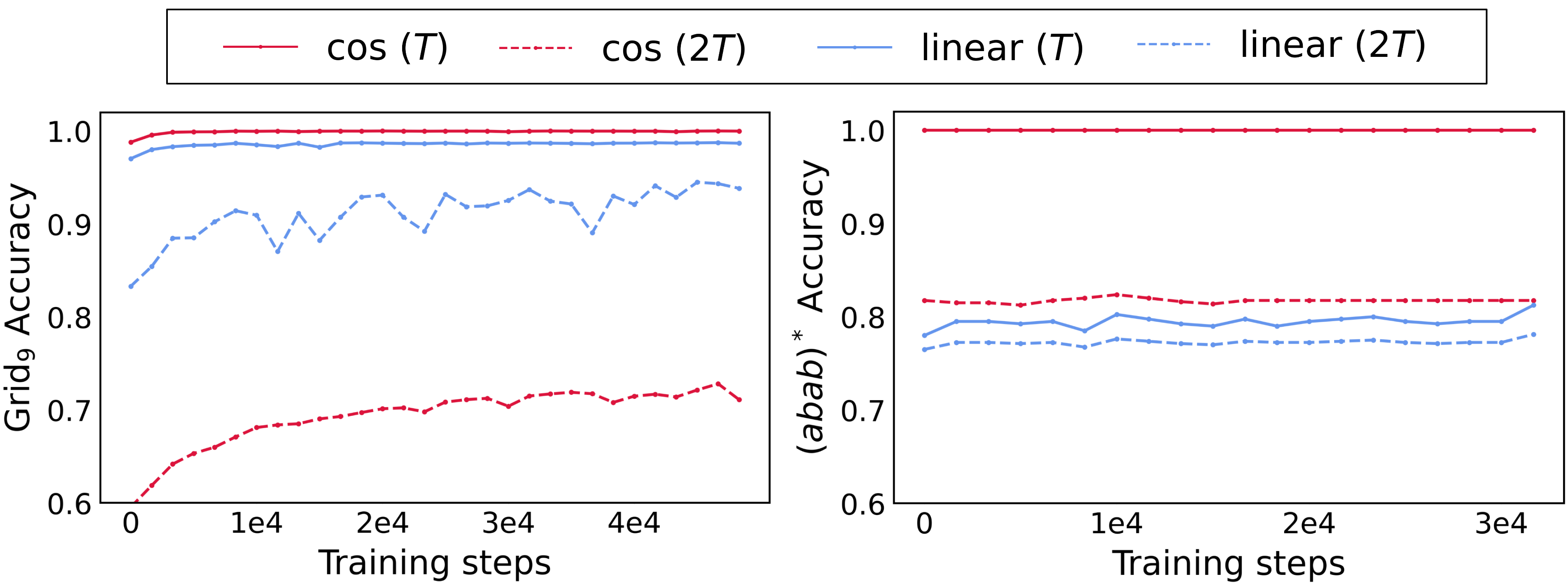}
    \caption{Choice of the positional encoding: while having similar or even superior in-distribution performance (on sequences of length $T=40$), sinusoidal positional encoding may suffer a larger generalization gap than linear positional encoding when testing on length $2T$. The lines show the mean accuracy over 25 replicates. 
    }
    \label{fig:posEnc}
\end{figure}

\textit{Sinusoidal vs linear encoding:}
We find that the conventional sinusoidal encoding (which is the default for results in this paper) seems to generalize worse to unseen length than linear encoding (where $p_i \propto i$), despite having comparable or better in-distribution performance.
Figure \ref{fig:posEnc} shows examples on $\mathrm{Grid}_9$ and partially observed $(abab)^*$, where we compare the accuracy on freshly drawn samples of the same length as the training sequences (i.e. in-distribution), or of twice the training length.
For $\mathrm{Grid}_9$, both positional encodings achieve comparable accuracy, however the sinusoidal encoding performs significantly worse when tested on sequences of doubled length.
For partially observed $(abab)^*$ where the label is whether the current string is a multiple of $abab$, the sinusoidal encoding has a clear advantage over the linear encoding on in-distribution performance.
However, when tested on sequences of lengths twice as those during training, the performance gap between the two positional encodings shrinks significantly.

\textit{Training with shifted positions:} In general, unseen positions appear to be a major contributor to Transformer's failure of length generalization.
This is evidenced by the comparison between Transformer trained with absolute positional encoding, and Transformers trained with random shifts added to the positional encoding: for each batch, we sample a random positive integer in [0,400] and add it to the position indices before calculating the positional encoding; this random integer is the same for each batch and varies across batches.
Figure~\ref{fig:appendix_length} shows that adding such random shifts gives a significant boost to Transformer's length generalization performance, for both Dyck and $C_2$.
This suggests that a main challenge to length generalization is the distribution shifts due to positions unseen during training,
and finding better positional encoding could be a potential remedy for poor length generalization.

As a side note, we also find that \textit{removing positional encoding altogether} helps improve generalization for both parity and Dyck.
For the former, removing positional encodings makes sense since parity is a symmetric function where the ordering of the arguments does not matter,
\footnote{
Empirically, we are able to achieve non-trivial accuracy (even when evaluated at the sequence level) without positional encoding, whereas~\cite{Bhattamishra20} reports 0 accuracy.
The discrepancy may be due to different model size: Bhattamishra et al. considers Transformers with up to 4 layers, 4 heads and dimension up to 32, whereas for the parity experiments we consider Transformers with 8 layers, 8 heads, and dimension 512.}
though the positive result for Dyck is less clearly understood.
Note that removing positional encoding does not mean having no position information, since the use of the causal mask implicitly encodes the position, which is also noted in~\cite{Bhattamishra20} and concurrent work by~\cite{haviv2022transformer}.
Understanding this phenomenon is tangential to the current work and is left to future work.

\subsection{Additional details} \label{subsec:experiments-details}

\paragraph{Hyperparameters.}
For GPT-2 models, we fix the embedding dimension and MLP width to 512 and the number of heads to 8 in all experiments in Section \ref{sec:experiments-shortcut}, and vary the number of layers from 1 to 16.
For LSTM, we fix the embedding dimension to 64, the hidden dimension to 128, and the number of layers to 1.
We use the AdamW optimizer~\citep{AdamW}, with learning rate in \{3e-5, 1e-4, 3e-4\} for GPT-2 or \{1e-3, 3e-3\} for LSTM, weight decay 1e-4 for GPT-2 or 1e-9 for LSTM,
and batch size 16 for GPT-2 or 64 for LSTM.
As detailed in Section \ref{subsubsec:appendix-learn-shortcuts}, the models are trained in an online fashion with freshly drawn samples in each batch.
The number of freshly drawn samples ranges from 600k to 5000k for different datasets, which is much fewer than the number of possible strings of length 100.

\paragraph{Implementation details.} Our experiments are implemented with PyTorch \citep{paszke2019pytorch}. The Transformers architectures are taken from the HuggingFace Transformers library \citep{wolf2019huggingface}, using the GPT-2 configuration as a base. The LSTM architecture is the default one provided by the PyTorch library.

\paragraph{Computational resources.} The experiments were performed on an internal cluster with NVIDIA Tesla P40, P100, V100, and A100 GPUs. For the experiments in Section~\ref{sec:experiments-shortcut}, each training run took up to $10$ hours on a single GPU, for a total of $\approx 10^4$ GPU hours. The remaining experiments in Section~\ref{sec:experiments-scratchpad} amount to less than $1\%$ of this expenditure.
\section{Proofs} \label{app:proofs}

\subsection{Useful definitions and lemmas}

\paragraph{Formal definitions of simulation.} We first recall the notions of simulation introduced in Section~\ref{sec:prelims}:
\begin{itemize}
    \item A \emph{function} can simulate an automaton for particular choices of $T, q_0$. For a semiautomaton $\gA = (Q, \Sigma, \delta)$, a function $f : \Sigma^T \rightarrow Q^T$ \emph{simulates} $\gA_{T, q_0}$ if $f(\sigma_{1:T}) = \gA_{T, q_0}(\sigma_{1:T})$ for all input sequences $\sigma_{1:T}$. Here, the right-hand side denotes the sequence of states $q_{1:T}$ induced by the input sequence $\sigma_{1:T}$ under the transitions $\delta$ starting from state $q_0$.
    \item A \emph{function class} can simulate multiple functions associated with a semiautomaton. For a semiautomaton $\gA = (Q, \Sigma, \delta)$ and a positive integer $T$, a function class $\gF$ (a set of functions $f : \Sigma^T \rightarrow Q^T$) \emph{simulates} $\gA$ at length $T$ if, for every $q_0 \in Q$, there is function $f_{q_0} \in \gF$ which simulates $\gA_{T, q_0}$.
\end{itemize}
Our proofs rely on composing ``gadgets'' which simulate various substructures of the transformation semigroup $\gT(\gA)$. Thus, it will be useful to establish a third notion of simulation, which works for functions in the embedding space $\R^d$ rather than the symbol spaces $Q, \Sigma$. For clarity, we give this notion a different name (\emph{continuous simulation}):
\begin{itemize}
    \item For a semiautomaton $\gA = (Q, \Sigma, \delta)$, a function $f : \R^d \rightarrow \R^d$ \emph{continuously simulates} $\gA_{T, q_0}$ if there exist functions $E : \Sigma \rightarrow \R^d, W : \mathrm{im} f \rightarrow Q$ such that $W \circ f \circ E$ simulates $\gA_{T, q_0}$.
\end{itemize}
When $W$ is a linear threshold function $z \mapsto \argmax_q [Wz]_q$, this corresponds to a standard classification head. However, our constructions may leverage other encodings of discrete objects.

\paragraph{Function approximation.} We provide some simple function approximation results below.

\begin{lemma}[1D discrete function interpolation with an MLP]
\label{lem:mlp-1d}
Let $\mathcal{X}$ be a finite subset of $\mathbb{R}$, such that $|x| \leq B_x$ for all $x \in \gX$, and $|x - x'| \geq \Delta$ for all $x \neq x' \in \mathcal{X}$.
Let $f:\mathcal{X} \rightarrow \mathbb{R}^d$ be such that $\norm{f(x)}_\infty \leq B_y$ for all $x \in \gX$. Then, there is a 2-layer ReLU network for which
\[f_\mathrm{mlp}(x + \xi; \theta_\mathrm{mlp}) = f(x) \qquad \forall x \in \gX, \quad |\xi| \leq \Delta/4.\]
The inner dimension is $d' = 4 |\gX|$, and the weights satisfy
\[\norm{W_1}_\infty \leq \frac{4}{\Delta}, \quad \norm{b_1}_\infty \leq \frac{4B_x}{\Delta} + 2, \quad \norm{W_2}_\infty \leq B_y, \quad b_2 = 0. \]
\end{lemma}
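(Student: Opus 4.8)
The plan is to build the 2-layer ReLU network as a sum of localized "bump" functions, one bump per point $x \in \gX$, where the $x$-th bump evaluates to $f(x) \in \R^d$ on the interval $(x - \Delta/4, x + \Delta/4)$ and to $\vzero$ outside a slightly larger neighborhood; the disjointness of these neighborhoods (guaranteed by the separation $|x - x'| \geq \Delta$) ensures the bumps do not interfere, so the total network output on a perturbed input $x + \xi$ with $|\xi| \leq \Delta/4$ is exactly $f(x)$. Concretely, I would use the standard triangle/trapezoid gadget: for a single center $c$ and radius $r$, the function $t \mapsto (t - (c - r))_+ - (t - c)_+ - (t - c)_+ + (t - (c + r))_+$ (appropriately normalized) is a piecewise-linear "hat" supported on $(c - r, c + r)$, peaking at $1$ when $t = c$. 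Since I need the bump to be \emph{flat} at height $1$ across the whole tolerance interval $(x - \Delta/4, x + \Delta/4)$ and not just peak there, I would instead use a trapezoidal bump: let the function rise linearly from $0$ to $1$ on $(x - \Delta/2, x - \Delta/4)$, stay at $1$ on $(x - \Delta/4, x + \Delta/4)$, and fall back to $0$ on $(x + \Delta/4, x + \Delta/2)$. This trapezoid is a combination of $4$ ReLU units per point $x$ (the four "kinks" at $x \pm \Delta/2$ and $x \pm \Delta/4$), which accounts for the inner dimension $d' = 4|\gX|$.

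The construction proceeds in three steps. First, I would write the scalar trapezoid $\phi_x : \R \to [0,1]$ for each $x \in \gX$ explicitly as $\phi_x(t) = \tfrac{4}{\Delta}\bigl[ (t - x + \tfrac{\Delta}{2})_+ - (t - x + \tfrac{\Delta}{4})_+ - (t - x - \tfrac{\Delta}{4})_+ + (t - x - \tfrac{\Delta}{2})_+ \bigr]$, and verify by cases that $\phi_x(t) = 1$ for $|t - x| \leq \Delta/4$ and $\phi_x(t) = 0$ for $|t - x| \geq \Delta/2$. Second, I would observe that for the perturbed input $t = x + \xi$ with $|\xi| \leq \Delta/4$ and any $x' \neq x$, the separation $|x - x'| \geq \Delta$ forces $|t - x'| \geq \Delta - \Delta/4 = 3\Delta/4 > \Delta/2$, so $\phi_{x'}(t) = 0$; hence $\sum_{x' \in \gX} \phi_{x'}(x + \xi) = \phi_x(x + \xi) = 1$. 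Third, I would set the first layer to be the affine-then-ReLU map computing all $4|\gX|$ pre-activations $(t - x + \tfrac{\Delta}{2}), (t - x + \tfrac{\Delta}{4}), (t - x - \tfrac{\Delta}{4}), (t - x - \tfrac{\Delta}{2})$ over $x \in \gX$ (so $W_1 \in \R^{4|\gX| \times 1}$ has all entries equal to $1$, giving $\norm{W_1}_\infty = 1 \le 4/\Delta$ — wait, this needs the $\tfrac{4}{\Delta}$ scaling absorbed; I would instead put the $\tfrac{4}{\Delta}$ factor into $W_1$ so that $\norm{W_1}_\infty = 4/\Delta$, and correspondingly the bias entries become $\tfrac{4}{\Delta}(-x \pm \tfrac{\Delta}{2})$ or $\tfrac{4}{\Delta}(-x \pm \tfrac{\Delta}{4})$, which are bounded in absolute value by $\tfrac{4 B_x}{\Delta} + 2$, matching the claimed $\norm{b_1}_\infty$ bound). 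The second layer $W_2 \in \R^{d \times 4|\gX|}$ reads off $\sum_x f(x) \cdot (\text{trapezoid combination of the four ReLUs indexed by }x)$ with coefficients $\pm 1$ scaled by the coordinates of $f(x)$, so each entry of $W_2$ has magnitude at most $\norm{f(x)}_\infty \leq B_y$, giving $\norm{W_2}_\infty \leq B_y$ and $b_2 = 0$. Composing, $f_\mathrm{mlp}(x + \xi; \theta_\mathrm{mlp}) = \sum_{x' \in \gX} f(x') \phi_{x'}(x + \xi) = f(x)$.

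The main obstacle — really the only delicate point — is bookkeeping the normalization so that the weight-norm bounds come out exactly as stated: one must decide where to place the $\tfrac{4}{\Delta}$ scaling factor (I put it in $W_1$, forcing the bias magnitudes up to $\tfrac{4B_x}{\Delta} + 2$ but keeping $\norm{W_2}_\infty \le B_y$) and check the worst-case bias term. The additive "$+2$" in the $\norm{b_1}_\infty$ bound comes from the $\tfrac{4}{\Delta}\cdot\tfrac{\Delta}{2} = 2$ contribution of the $\pm\Delta/2$ offsets, and the $\pm\Delta/4$ offsets contribute only $1$, so $2$ is indeed the binding constant. Everything else is a routine case analysis on the piecewise-linear trapezoid, and the robustness to the $|\xi| \le \Delta/4$ perturbation is exactly why the flat plateau (rather than a pointed hat) is needed. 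I would present the scalar lemma for $d = 1$ and then note the $d$-dimensional version follows by stacking $d$ independent copies of the output layer (sharing the first layer), which does not change $d'$ or any norm bound.
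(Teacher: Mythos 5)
Your construction is essentially identical to the paper's: the paper's indicator $\psi_{x_0}$ (built from four ReLUs at offsets $\pm\Delta/4$ and $\pm\Delta/2$ with slope $4/\Delta$) is exactly your trapezoidal bump $\phi_x$, and the second layer in both cases sums these bumps weighted by $f(x_0)$, with the same disjoint-support argument and the same norm bookkeeping. The proposal is correct and matches the paper's proof.
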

\begin{proof}
For each $x_0 \in \gX$, we construct an indicator $\psi_{x_0}(x)$ for $x_0$, out of 4 ReLU units. Letting $\Delta' := \Delta/4$, the construction is
\begin{align*}
\psi_{x_0}(x) &:= \left( \frac{x - (x_0 - 2\Delta') }{\Delta'} \right)_+
-
\left( \frac{x - (x_0 - \Delta') }{\Delta'} \right)_+ \\
&-
\left( \frac{x - (x_0 + \Delta') }{\Delta'} \right)_+
+
\left( \frac{x - (x_0 + 2\Delta') }{\Delta'} \right)_+.
\end{align*}

The second layer simply sums these indicators, weighted by each $f(x_0)$.
\end{proof}

\begin{lemma}[General discrete function interpolation with an MLP]
\label{lem:mlp-nd}
Let $\mathcal{X}$ be a finite subset of $\mathbb{R}^{d_\mathrm{in}}$, such that $\norm{x}_\infty \leq B_x$ for all $x \in \gX$, and $\norm{x - x'}_\infty \geq \Delta$ for all $x \neq x' \in \mathcal{X}$.
Let $f:\mathcal{X} \rightarrow \mathbb{R}^{d_\mathrm{out}}$ be such that $\norm{f(x)}_\infty \leq B_y$ for all $x \in \gX$. Then, there is a 3-layer ReLU network for which
\[f_\mathrm{mlp}(x + \xi; \theta_\mathrm{mlp}) = f(x) \qquad \forall x \in \gX, \quad |\xi| \leq \Delta/4.\]
Letting $\gX_i$ denote the set of unique values in coordinate $i$,
the inner MLP dimensions are as follows:
\[ d_1 = 4 \sum_{i \in [d_\mathrm{in}]} |\mathcal{X}_i|, \quad d_2 = |\gX|.\]
The weights satisfy
\[\norm{W_1}_\infty \leq \frac{4}{\Delta}, \quad \norm{b_1}_\infty \leq \frac{4B_x}{\Delta} + 2,
\quad \norm{W_2}_\infty \leq 1, \quad \norm{b_2}_\infty \leq d_\mathrm{in},
\quad \norm{W_3}_\infty \leq B_y, \quad b_3 = 0. \]
\end{lemma}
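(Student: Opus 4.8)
The plan is to lift the single-coordinate bump construction of Lemma~\ref{lem:mlp-1d} to $\mathbb{R}^{d_\mathrm{in}}$ by building, for each point $x' \in \gX$, a ReLU-computable ``indicator'' $\Phi_{x'} : \mathbb{R}^{d_\mathrm{in}} \to [0,1]$ that equals $1$ on all inputs within $\ell_\infty$-distance $\Delta/4$ of $x'$ and equals $0$ near every other point of $\gX$, and then outputting $\sum_{x' \in \gX} f(x')\,\Phi_{x'}(z)$. Write $\Delta' := \Delta/4$. First I would, for each coordinate $i \in [d_\mathrm{in}]$ and each value $v$ in the set $\gX_i$ of values occurring in that coordinate, instantiate the $4$-ReLU bump $\psi_v$ from the proof of Lemma~\ref{lem:mlp-1d}, applied to input coordinate $z_i$; this uses $4\sum_i |\gX_i|$ hidden units, which is exactly the claimed $d_1$. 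From Lemma~\ref{lem:mlp-1d}'s construction, each such bump satisfies $\psi_v(z_i) \in [0,1]$ for all $z_i$, $\psi_v(v+\xi) = 1$ whenever $|\xi| \leq \Delta'$, and $\psi_v(w) = 0$ whenever $|w-v| \geq 2\Delta'$. The weight bounds $\norm{W_1}_\infty \leq 4/\Delta$ and $\norm{b_1}_\infty \leq 4B_x/\Delta + 2$ are inherited directly: each row of $W_1$ selects one coordinate with weight $\pm 1/\Delta'$, and each shift has magnitude at most $B_x + 2\Delta'$.

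The second layer forms, for each $x' = (x'_1,\ldots,x'_{d_\mathrm{in}}) \in \gX$, the quantity $\Phi_{x'}(z) := \bigl(\sum_{i \in [d_\mathrm{in}]} \psi_{x'_i}(z_i) - (d_\mathrm{in}-1)\bigr)_+$. Since each $\psi_{x'_i}(z_i)$ is a fixed $\{+1,-1\}$-combination of four first-layer activations (with sign pattern $+,-,-,+$), this pre-activation is a linear function of the first layer whose coefficients all lie in $\{-1,0,1\}$ and whose bias is $-(d_\mathrm{in}-1)$; hence $d_2 = |\gX|$, $\norm{W_2}_\infty \leq 1$, and $\norm{b_2}_\infty = d_\mathrm{in}-1 \leq d_\mathrm{in}$. (The extra layer compared to Lemma~\ref{lem:mlp-1d} is needed precisely to apply a ReLU to this sum.) The third layer is linear with column $x'$ equal to $f(x')$, so $\norm{W_3}_\infty = \max_{x'}\norm{f(x')}_\infty \leq B_y$ and $b_3 = 0$, and it outputs $\sum_{x'} f(x')\,\Phi_{x'}(z)$.

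It remains to verify that $\Phi_{x'}(x+\xi) = \mathbbm{1}[x = x']$ for every $x \in \gX$ and every $\xi$ with $\norm{\xi}_\infty \leq \Delta'$; the lemma then follows by summing against $f(x')$. If $x = x'$, each $\psi_{x'_i}(x_i+\xi_i) = 1$, so the inner sum is $d_\mathrm{in}$ and $\Phi_{x'}(x+\xi) = 1$. If $x \neq x'$, the hypothesis $\norm{x-x'}_\infty \geq \Delta$ yields a coordinate $j$ with $|x_j - x'_j| \geq \Delta$, hence $|x_j+\xi_j - x'_j| \geq \Delta - \Delta' = 3\Delta' > 2\Delta'$, so $\psi_{x'_j}(x_j+\xi_j) = 0$; the remaining $d_\mathrm{in}-1$ terms are each at most $1$, so the inner sum is at most $d_\mathrm{in}-1$ and the ReLU clips $\Phi_{x'}(x+\xi)$ to $0$. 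I expect this last step to be the only non-routine part: the per-coordinate bumps $\psi_v$ need \emph{not} be pairwise disjoint within a coordinate (two distinct values of $\gX_i$ may be arbitrarily close, so the bumps may overlap and contribute fractional values), and it is exactly the $\ell_\infty$ separation \emph{across} points that guarantees a single ``vetoing'' coordinate forces $\Phi_{x'}$ to zero. Everything else — the dimension counts and the weight-norm bounds — is bookkeeping carried over from Lemma~\ref{lem:mlp-1d}.
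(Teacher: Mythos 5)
Your proof is correct and follows essentially the same construction as the paper's: per-coordinate bump indicators from Lemma~\ref{lem:mlp-1d} in the first layer, a summed-and-thresholded indicator per point of $\gX$ in the second, and a linear readout of $f$ in the third. Your version is in fact slightly more careful — you use the correct bias $-(d_\mathrm{in}-1)$ (the paper's stated $-d_\mathrm{in}$ would zero out the indicator at a match) and you explicitly justify the ReLU clipping via the $\ell_\infty$ separation, which the paper leaves implicit.
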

\begin{proof}
The first layer uses the same construction as that in Lemma~\ref{lem:mlp-1d}, creating indicators for each $x \in \gX_i$ for each $i$. For each $x \in X$, the second layer has an activation which sums the indicators from each $x_i$, with bias $-d_\mathrm{in}$ (thus creating indicators for each $x$). The third layer outputs $f(x)$ for each indicator.
\end{proof}

When we apply Lemmas~\ref{lem:mlp-1d} and \ref{lem:mlp-nd} in recursive constructions, and $B_x/\Delta \geq 1$, we will opt to use the bound $\norm{b_1}_\infty \leq 6B_x/\Delta$, to reduce the clutter of propagating the $2$ term without resorting to asymptotic notation.

We also introduce a simpler version of Lemma \ref{lem:mlp-1d} for the special case of the threshold function $f(x) := \mathbbm{1}[x>0]$:
\begin{lemma}[Threshold with an MLP]\label{lem:threshold}
Let $\mathcal{X}$ be a subset of $\mathbb{R}$, and $|x| \geq \Delta$ for all $x\in \mathcal{X}$.
Then, there is a 2-layer ReLU network for which
\[f_\mathrm{mlp}(x + \xi; \theta_\mathrm{mlp}) = \mathbbm{1}[x > 0] \qquad \forall x \in \gX, \quad |\xi| \leq \Delta/4.\]
The inner dimension is $d' = 2$, and the weights satisfy
\[\norm{W_1}_\infty \leq \frac{1}{\Delta}, \quad \norm{b_1}_\infty \leq 1/2, \quad \norm{W_2}_\infty \leq 1, \quad b_2 = 0. \]
\end{lemma}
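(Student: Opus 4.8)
The plan is to specialize the ramp gadget from the proof of Lemma~\ref{lem:mlp-1d} to a single threshold located at the origin. I would take the $2$-layer ReLU network
\[ f_\mathrm{mlp}(z;\theta_\mathrm{mlp}) := \left( \frac{z}{\Delta} + \frac{1}{2} \right)_+ - \left( \frac{z}{\Delta} - \frac{1}{2} \right)_+, \]
so that $W_1 = (1/\Delta,\ 1/\Delta)^\top$, $b_1 = (1/2,\ -1/2)^\top$, $W_2 = (1,\ -1)$, and $b_2 = 0$. This immediately gives inner width $d' = 2$ and matches the stated bounds $\norm{W_1}_\infty = 1/\Delta$, $\norm{b_1}_\infty = 1/2$, $\norm{W_2}_\infty = 1$, $b_2 = 0$ (all attained with equality).

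Next I would record the elementary fact that $f_\mathrm{mlp}$ is the ``clipped ramp'': it is continuous and piecewise linear, equal to $0$ on $(-\infty, -\Delta/2]$, equal to $1$ on $[\Delta/2, \infty)$, and affine on $[-\Delta/2,\Delta/2]$. The remainder is a two-line case analysis. Fix $x \in \gX$ and $|\xi| \leq \Delta/4$, and set $z := x + \xi$. If $x > 0$ then $x \geq \Delta$ by hypothesis, so $z \geq \Delta - \Delta/4 = 3\Delta/4 > \Delta/2$, whence $f_\mathrm{mlp}(z) = 1 = \mathbbm{1}[x > 0]$. If $x < 0$ then $x \leq -\Delta$, so $z \leq -3\Delta/4 < -\Delta/2$, whence $f_\mathrm{mlp}(z) = 0 = \mathbbm{1}[x > 0]$. (The hypothesis $|x| \geq \Delta$ rules out $x = 0$, so these two cases are exhaustive.)

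There is no substantive obstacle here; the only point requiring care is that the noise margin $\Delta/4$ is strictly smaller than the half-width $\Delta/2$ of the ramp's transition region, which is exactly what keeps every perturbed input $x+\xi$ inside one of the two flat pieces of $f_\mathrm{mlp}$, together with the routine check that $W_1,b_1,W_2$ meet the claimed $\ell_\infty$ bounds. Optionally I would remark that this is simply a one-sided version of the indicator gadget in Lemma~\ref{lem:mlp-1d}: retaining only the ``rising edge'' of that four-unit construction (two of its ReLU units) produces a step rather than a bump, which is all that is needed here.
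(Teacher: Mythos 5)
Your construction is correct and follows essentially the same approach as the paper's own proof: both realize the threshold as a difference of two ReLU units forming a clipped ramp centered at the origin, with inner width $2$ and the stated weight bounds. If anything, your choice of transition region $[-\Delta/2,\Delta/2]$ (slope $1/\Delta$) is the more careful one: the paper's ramp $\left(\frac{x+\Delta}{2\Delta}\right)_+ - \left(\frac{x-\Delta}{2\Delta}\right)_+$ saturates only at $|x+\xi|\geq\Delta$, so at a perturbed input such as $x+\xi=3\Delta/4$ it outputs $7/8$ rather than exactly $1$, whereas your version attains $\mathbbm{1}[x>0]$ exactly for all $x\in\gX$ and $|\xi|\leq\Delta/4$.
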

\begin{proof}
We construct the threshold using 2 ReLU units. The construction is
\begin{align*}
 &\psi(x) := \left( \frac{x + \Delta}{2\Delta} \right)_+
-
\left( \frac{x - \Delta}{2\Delta} \right)_+.
\end{align*}
\end{proof}

\paragraph{Selection via soft attention.} We record some useful lemmas pertaining to approximating hard coordinate selection with soft attention. The following is a simplified version of Lemma B.7 from \citep{edelman2022inductive} (which generalizes this to multi-index selection):

\begin{lemma}[Softmax approximates hard max]
\label{lem:softmax-selection}
Let $z \in \R^T$. Let $\mathsf{softmax}(z) : \R^T \rightarrow \R^T$ denote the $T$-dimensional softmax function:
\[[\mathsf{softmax}(z)]_t := \frac{e^{z_t}}{\sum_{t'\in[T]} e^{z_{t'}}}.\]
Let $t^* := \argmax_t z_t$. Suppose that for all $t' \neq t^*$, $z_{t'} \leq z_{t^*} - \gamma$. Then,
\[\norm{\mathsf{softmax}(z) - e_{t^*}}_1 \leq 2T \cdot e^{-\gamma}.\]
\end{lemma}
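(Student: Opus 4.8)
The plan is to bound the $\ell_1$ distance between $\mathsf{softmax}(z)$ and $e_{t^*}$ by splitting the coordinates into the single maximizing index $t^*$ and the remaining $T-1$ indices, and controlling each piece by the gap assumption $z_{t'} \le z_{t^*} - \gamma$. Writing $S := \sum_{t' \in [T]} e^{z_{t'}}$, I would first observe that the $t^*$ coordinate contributes $\bigl| e^{z_{t^*}}/S - 1 \bigr| = (S - e^{z_{t^*}})/S = \sum_{t' \neq t^*} e^{z_{t'}}/S$, and each off-max coordinate contributes $e^{z_{t'}}/S$. Hence the total $\ell_1$ error is exactly $2 \sum_{t' \neq t^*} e^{z_{t'}}/S$.

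Next I would bound $S \ge e^{z_{t^*}}$ (keeping only the max term in the denominator sum) and, for each $t' \neq t^*$, bound $e^{z_{t'}} \le e^{z_{t^*} - \gamma} = e^{-\gamma} e^{z_{t^*}}$ using the gap hypothesis. Substituting these two estimates gives
\[
\norm{\mathsf{softmax}(z) - e_{t^*}}_1 = 2 \sum_{t' \neq t^*} \frac{e^{z_{t'}}}{S} \le 2 \sum_{t' \neq t^*} \frac{e^{-\gamma} e^{z_{t^*}}}{e^{z_{t^*}}} = 2 (T-1) e^{-\gamma} \le 2 T e^{-\gamma},
\]
which is the claimed bound.

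This argument is entirely elementary; there is no real obstacle. The only point requiring minimal care is the identity for the error at the $t^*$ coordinate (making sure the factor of $2$ comes out correctly by accounting for both the deficit at $t^*$ and the excess mass spread over the other coordinates), and noting that the maximizer $t^*$ is unique under the strict gap assumption so that $e_{t^*}$ is well-defined. Everything else is a one-line application of monotonicity of $\exp$ and the trivial lower bound on $S$.
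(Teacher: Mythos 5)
Your proof is correct and follows essentially the same route as the paper's: both bound the off-maximum softmax mass using the gap hypothesis $e^{z_{t'}} \le e^{-\gamma} e^{z_{t^*}}$ together with the trivial bound $S \ge e^{z_{t^*}}$, the only cosmetic difference being that the paper first shifts $z$ so that $z_{t^*} = \gamma$ while you work with the exact identity $\norm{\mathsf{softmax}(z) - e_{t^*}}_1 = 2\sum_{t' \ne t^*} e^{z_{t'}}/S$. Your version is, if anything, marginally cleaner, and the resulting bound $2(T-1)e^{-\gamma} \le 2Te^{-\gamma}$ matches the paper's.
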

\begin{proof}
Without loss of generality, $\max z = \gamma$ (since the softmax function is invariant under shifting all inputs by the same value), so that all other coordinates are non-positive. Also, assume $T \geq 2$ (the $T=1$ case is trivial).
We have
\[ [\softmax(z)]_{t^*} = \frac{e^\gamma}{e^\gamma + \sum_{t \neq t^*} e^{t} } \geq \frac{e^\gamma}{e^\gamma + T - 1}
= 1 - \frac{T-1}{e^\gamma + T - 1} \geq 1 - \frac{T-1}{e^\gamma},\]
and for $t' \neq t$,
\[ [\softmax(z)]_{t'} = \frac{e^{t'}}{e^\gamma + \sum_{t \neq t^*} e^{t} } \leq \frac{1}{e^\gamma}. \]
Thus, the 1-norm of the difference is bounded by
\[ \frac{T-1}{e^\gamma} + (T-1)\cdot \frac{1}{e^\gamma} < \frac{2T}{e^\gamma}, \]
as claimed.
\end{proof}

\paragraph{Positional embeddings.} We note the following elementary fact about 2-dimensional circular embeddings.
\begin{proposition}[Circular embeddings]
\label{prop:circle-embeddings}
Consider $p_1, \ldots, p_T$, the $T$ equally-spaced points on the $2$-dimensional circle:
\[[p_t]_1 := \cos\left( \frac{2 \pi t}{T} \right), \quad [p_t]_2 := \sin \left( \frac{2 \pi t}{T} \right).\]
Then, for any $t \neq t'$,
\[| \langle p_t, p_{t'} \rangle | \leq 1 - \frac{2\pi^2}{T^2} < 1 - \frac{19.7}{T^2}. \]
\end{proposition}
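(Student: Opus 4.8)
The plan is to collapse the two-dimensional inner product into a single cosine and then reduce everything to an elementary one-variable trigonometric estimate.

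\emph{Step 1: rewrite the inner product.} By the cosine angle-subtraction identity,
\[
\langle p_t, p_{t'}\rangle
= \cos\!\Big(\frac{2\pi t}{T}\Big)\cos\!\Big(\frac{2\pi t'}{T}\Big)
+ \sin\!\Big(\frac{2\pi t}{T}\Big)\sin\!\Big(\frac{2\pi t'}{T}\Big)
= \cos\!\Big(\frac{2\pi(t-t')}{T}\Big).
\]
Setting $k := t - t'$, which is a nonzero integer with $1 \le |k| \le T-1$ because $t \ne t'$ and $t, t' \in [T]$, the claim becomes a bound on the single quantity $\cos(2\pi k / T)$.

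\emph{Step 2: isolate the extremal $k$.} Since $\cos$ is decreasing on $[0,\pi]$ and increasing on $[\pi, 2\pi]$, over the admissible angles $\{2\pi k/T : k = 1,\dots,T-1\}$ the cosine is maximized at $k=1$ and $k=T-1$, both of which give $\cos(2\pi/T)$; hence $\langle p_t, p_{t'}\rangle \le \cos(2\pi/T)$ for all $t \ne t'$. For the absolute-value form one must also control angles near $\pi$: the distance from $2\pi k/T$ to $\pi$ equals $\pi\,|2k-T|/T$, which is at least $\pi/T$ unless $2k=T$, so, apart from the diametrically-opposite pair that arises when $T$ is even, one also gets $\cos(2\pi k/T)\ge -\cos(\pi/T)$, and $\cos(2\pi/T)\ge\cos(\pi/T)$ continues to dominate; it therefore suffices to upper bound $\cos(2\pi/T)$.

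\emph{Step 3: the trigonometric estimate, and where the work is.} Apply the truncated Taylor bound $\cos x \le 1 - \frac{x^2}{2} + \frac{x^4}{24}$ (valid for all real $x$ since the cosine series alternates) at $x = 2\pi/T$, which yields $\cos(2\pi/T) \le 1 - \frac{2\pi^2}{T^2} + \frac{2\pi^4}{3T^4}$. Because $2\pi^2 = 19.739\ldots > 19.7$, the stated bound $1 - 19.7/T^2$ follows once the residual $\frac{2\pi^4}{3T^4}$ is absorbed by the slack $\frac{2\pi^2 - 19.7}{T^2}$, i.e.\ for all $T$ beyond an explicit threshold. I expect this constant-chasing to be the only real obstacle: pinning down the threshold on $T$ so that the narrow gap between $2\pi^2$ and $19.7$ is not wasted, and — for the absolute-value statement — cleanly dispatching the near-$\pi$ angles, which is the one place the parity of $T$ enters. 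If a looser constant is acceptable, the argument is immediate: from $\cos(2\pi/T) = 1 - 2\sin^2(\pi/T)$ and $\sin x \ge \frac{2x}{\pi}$ on $[0,\pi/2]$ one reads off $\langle p_t,p_{t'}\rangle \le 1 - 8/T^2$.
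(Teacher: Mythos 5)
Your Step 1 and the reduction to $\cos(2\pi k/T)$ are fine, but Step 2 contains a monotonicity error that hides the real obstruction, and in fact the statement you are trying to prove cannot be proved because it is false as written (the paper asserts it without proof). Since $\cos$ is decreasing on $[0,\pi]$, we have $\cos(\pi/T) > \cos(2\pi/T)$, not ``$\cos(2\pi/T)\ge\cos(\pi/T)$''; so for the absolute-value claim the dominant term is the near-antipodal angle, not the nearest-neighbour one. Concretely, for even $T$ the pair $t' = t + T/2$ gives $|\langle p_t,p_{t'}\rangle| = |\cos\pi| = 1$, and for odd $T$ the pair with $k = (T\pm 1)/2$ gives $|\langle p_t,p_{t'}\rangle| = \cos(\pi/T) = 1 - \pi^2/(2T^2) + O(T^{-4})$, which already exceeds $1 - 2\pi^2/T^2$ for every $T \ge 3$. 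So no amount of ``dispatching the near-$\pi$ angles'' can rescue the absolute-value form. Moreover, even the one-sided bound with the constant $2\pi^2$ fails: since $\cos x \ge 1 - x^2/2$ for all real $x$, the adjacent pair satisfies $\cos(2\pi/T) > 1 - 2\pi^2/T^2$, so your Taylor step can at best deliver $1 - 19.7/T^2$, and only for $T$ large enough that $2\pi^4/(3T^4) \le (2\pi^2 - 19.7)/T^2$ (roughly $T \ge 41$), never the stated $1 - 2\pi^2/T^2$.

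The salvageable statement — and the one the paper actually needs — is precisely your closing remark, made one-sided and with a clean constant: for all $T \ge 2$ and $t \ne t'$,
$\langle p_t, p_{t'}\rangle = \cos(2\pi k/T) \le \cos(2\pi/T) = 1 - 2\sin^2(\pi/T) \le 1 - 8/T^2$,
using $\sin y \ge 2y/\pi$ on $[0,\pi/2]$. In the only place the proposition is invoked (the attention-selection step in the proof of Theorem~\ref{thm:shortcut-log-depth}), the matched position attains inner product exactly $1$ and one merely needs an $\Omega(1/T^2)$ one-sided margin over every other position, which is then amplified by $\gamma = 100T^2(\log|Q|+\log T)$ before Lemma~\ref{lem:softmax-selection} is applied; negative inner products only help, so both the absolute value and the precise constant are immaterial. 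To write a correct proof, drop the absolute value and establish the one-sided $1 - 8/T^2$ bound (or $1 - c/T^2$ for any fixed $c < 2\pi^2$ together with an explicit threshold on $T$).
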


\subsection{Proof of Theorem~\ref{thm:shortcut-log-depth}: Logarithmic-depth shortcuts via parallel prefix sum}
\label{subsec:appendix-log-depth-proof}

In this section, we give the full statement and proof of the universal existence of logarithmic-depth shortcuts.

\newtheorem*{T1}{Theorem~\ref{thm:shortcut-log-depth}}
\begin{T1}[Simulation is generically parallelizable]
Let $\mathcal{A} = (Q, \Sigma, \delta)$ be a semiautomaton, $q_0 \in Q$, and $T \geq 1$.
Then, there is a depth-$\lceil \log_2 T\rceil$ Transformer which continuously simulates $\gA_{T, q_0}$, with embedding dimension $2|Q| + 2$, MLP width $|Q|^2 + |Q|$, and $\infty$-weight norms at most $\max \{4 |Q| + 2, 10T \sqrt{\log |Q| + \log T }\}$. It has $H=2$ heads with embedding dimension $|Q|$ implying $2|Q| + 2$ attention width, and a $3$-layer MLP.
\end{T1}
\begin{proof}
The basic idea is that all prefix compositions $\delta(\cdot, \sigma_t) \circ \ldots \circ \delta(\cdot, \sigma_1)$ can be evaluated in logarithmic depth using a binary tree whose leaves are the per-input transition functions $\delta(\cdot, \sigma) : Q \rightarrow Q$. The attention heads select the pairs of functions that need to be composed, while the feedforward networks implement function composition. The network will manipulate functions in terms of their \emph{transition maps}: for example, the encoding of $f := (1 \mapsto 1, 2 \mapsto 1, 3 \mapsto 2)$ is \[ \sum_{q \in \{1,2,3\}} f(q) \cdot e_q = [1 \; 1 \; 2].\]

\paragraph{Small nuances.} We will produce a construction for the case where $T$ is a power of 2; general $T$ can be handled via padding. To simplify the construction, we also introduce $T$ padding positions $-(T-1), \ldots, 0$ at the beginning; while this greatly simplifies the positional selection construction, this padding construction could be replaced with a slightly more complicated MLP. Also, in this construction, we do not need to use residual connections; the parallel prefix sum algorithm we use can be executed ``in place'', saving a logarithmic factor in the width. We do assume access to the 2 positional embeddings at each layer; in the absence of residual connections, the identity function restricted to these 2 dimensions can be implemented by the MLP and attention heads.

Let $L = \log_2 T$ be the depth of the binary tree.
We choose $d := 2|Q| + 2$. Instead of indexing the dimensions by $[d]$, we give them names:

\newcommand{\Lq}{(q, \mathsf{L})}
\newcommand{\Rq}{(q, \mathsf{R})}
\newcommand{\Pone}{\mathsf{P}_1}
\newcommand{\Ptwo}{\mathsf{P}_2}

\begin{itemize}
    \item \emph{Left function encoding} dimensions $\Lq$ for each $q \in Q$.
    \item \emph{Right function encoding} dimensions $\Rq$ for each $q \in Q$.
    \item \emph{Positional encoding} dimensions $\Pone, \Ptwo$.
\end{itemize}

Without loss of generality, let $Q = [|Q|] = \{1, \ldots, Q\}$ (selecting an arbitrary enumeration of the state space). Also, assume $|Q| \geq 2$ (if not, add a dummy state).
We choose $E(\sigma_t) := \sum_{q \in Q} \delta(q, \sigma_t) \cdot e_{\Rq}$, mapping each input symbol to the ``transition map'' of its transitions. At the padding positions $-(T-1), \ldots, 0$, we will encode the ``go to $q_0$'' function: $\sum_{q \in Q} q_0 \cdot e_{\Rq}$.

\paragraph{Function composition gadget.} We first introduce the construction for function composition with a 3-layer ReLU MLP, which will be used by all layers. It gives an exponential improvement over the generic universal function approximation gadget from Lemma~\ref{lem:mlp-nd}.

\begin{lemma}
\label{lem:mlp-func-composition}
There exists a 3-layer ReLU MLP $\phi_{\mathrm{mlp}} : \R^d \rightarrow \R^d$, with fixed parameters $W_1, b_1, W_2, b_2, W_3$ whose dimensions and weights only depend on $Q$, such that for all $f, g : Q \rightarrow Q$, $\phi_{\mathrm{mlp}}$ outputs the transition map of $f \circ g$ given the concatenated transition maps of $f$ and $g$. That is, for all $|Q|^{2|Q|}$ choices of $f,g$:
\[ \phi\left( \sum_{q \in Q} g(q) \cdot e_{\Lq} + \sum_{q \in Q} f(q) \cdot e_{\Rq} \right) = \sum_{q \in Q} (f \circ g)(q) \cdot e_{\Rq}. \]
The intermediate dimensions are $d_1 = |Q|^2 + |Q|$ and $d_2 = |Q|^2$, and weight norms are bounded by $4|Q|+2$.
\end{lemma}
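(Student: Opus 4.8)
The plan is to realize function composition as a \emph{table lookup}. Writing the input $x\in\R^d$ as the concatenation of the transition map of $g$ in the $\mathsf{L}$-block (so coordinate $\Lq$ of $x$ is the integer $g(q)\in\{1,\dots,|Q|\}$) and the transition map of $f$ in the $\mathsf{R}$-block (coordinate $\Rq$ is $f(q)$), the network must place $(f\circ g)(q)=f(g(q))$ into coordinate $\Rq$, i.e.\ read the $g(q)$-th entry of the table of $f$. I would implement this with three layers that respectively (i) build the $|Q|^2$ equality indicators $\mathbbm{1}[g(q)=r]$ for all $q,r\in Q$, (ii) multiply each indicator by the matching value $f(r)$, and (iii) sum over $r$ for each $q$ and deposit the answer in coordinate $\Rq$. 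This is an exponential saving over the generic interpolator of Lemma~\ref{lem:mlp-nd} (which would spend one neuron per $(f,g)$ pair, hence $|Q|^{2|Q|}$ of them).

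Concretely: the first ReLU layer computes, for each $q\in Q$ and each $s\in\{1,\dots,|Q|\}$, the unit $u_{q,s}:=(g(q)-s+1)_+$ --- $|Q|$ units per state, \emph{reused} across all indicators involving $q$ --- together with $|Q|$ further units that merely carry the table of $f$ forward via $(f(r))_+=f(r)$ (valid since $f(r)\ge 1$); this is exactly $d_1=|Q|^2+|Q|$ units. Since $g(q)$ is an integer in $\{1,\dots,|Q|\}$, a short calculation shows that, reading $u_{q,s}:=0$ for $s>|Q|$, the fixed second-difference combination $u_{q,r}-2u_{q,r+1}+u_{q,r+2}$ equals $\mathbbm{1}[g(q)=r]$ exactly, so each indicator is available as a fixed linear functional of the first layer's outputs. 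The second ReLU layer has one unit for each pair $(q,r)$ (so $d_2=|Q|^2$), computing $\big(\,|Q|\cdot\mathbbm{1}[g(q)=r]+f(r)-|Q|\,\big)_+$: because $\mathbbm{1}[g(q)=r]\in\{0,1\}$ and $1\le f(r)\le|Q|$, this ReLU outputs $f(r)$ when $g(q)=r$ and $0$ otherwise, so a single unit realizes the product $\mathbbm{1}[g(q)=r]\cdot f(r)$. The final linear layer sums these units over $r$ for each $q$, giving $\sum_r\mathbbm{1}[g(q)=r]\,f(r)=f(g(q))$, writes it into coordinate $\Rq$, and zeroes every other output coordinate, so the map outputs $\sum_{q\in Q}(f\circ g)(q)\,e_{\Rq}$. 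A quick weight audit then yields the bound: the entries of $W_1,W_3$ lie in $\{-2,\dots,2\}$, the biases $b_1,b_2$ are bounded by $|Q|$, and $W_2$ by $2|Q|$, all comfortably within $4|Q|+2$; the dimensions and weights depend only on $Q$, as required.

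The step I expect to be the crux --- and the only one that is not routine bookkeeping --- is the second-layer ``product'' gadget: multiplying a $\{0,1\}$-indicator by a value using a single ReLU works precisely because transition-map entries are confined to the narrow integer window $[1,|Q|]$, the lower bound $1$ being exactly what forces the ``off'' case to a nonpositive pre-activation so the ReLU annihilates it. Because all compared quantities are integers separated by gaps of at least $1$, this construction also tolerates $1/\poly(T)$-scale perturbations of its inputs; this is what makes it safe to compose inside the depth-$\lceil\log_2 T\rceil$ network of Theorem~\ref{thm:shortcut-log-depth} under $O(\log T)$-bit precision. I would therefore state the gadget for exact inputs and record the robustness as a remark.
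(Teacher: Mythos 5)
Your construction is correct, and its skeleton is the same as the paper's: layer 1 produces the equality indicators $\mathbf{1}[g(q)=r]$ together with pass-through copies of $f$'s table, layer 2 realizes the product $\mathbf{1}[g(q)=r]\cdot f(r)$ with the single-ReLU gate $\bigl(|Q|\cdot\mathbf{1}[g(q)=r]+f(r)-|Q|\bigr)_+$ --- exactly the paper's second-layer weights with bias $-|Q|$ --- and layer 3 sums over $r$ into coordinate $(q,\mathsf{R})$. Where you genuinely depart is the indicator gadget: the paper invokes Lemma~\ref{lem:mlp-1d}, spending four ReLU bumps per indicator (so $4|Q|^2+|Q|$ first-layer units), whereas you build $|Q|$ ramps $u_{q,s}=(g(q)-s+1)_+$ per state and recover each indicator as the second difference $u_{q,r}-2u_{q,r+1}+u_{q,r+2}$; this identity is exact for integer inputs in $\{1,\dots,|Q|\}$, and your count $d_1=|Q|^2+|Q|$ actually matches the lemma's stated intermediate dimension more tightly than the paper's own construction does. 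Since the lemma is stated for exact inputs, your proof is complete as written. The one caveat is your closing robustness remark: the paper's trapezoidal indicators are \emph{exactly} $0/1$ under input perturbations up to $1/4$, so in the log-depth composition of Theorem~\ref{thm:shortcut-log-depth} no error enters through the indicator channel, whereas your second differences degrade linearly in the perturbation, and a perturbation $\epsilon$ can surface at the output amplified by roughly $|Q|$ (e.g.\ via spurious activation of gates with $f(r)=|Q|$). This is still harmless there, because the per-layer attention error is super-polynomially small and $|Q|$ is constant in $T$, but the remark should be recorded as ``output error $O(|Q|\,\epsilon)$'' rather than exact tolerance of $1/\mathrm{poly}(T)$ perturbations.
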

\begin{proof}
The first layer uses Lemma~\ref{lem:mlp-1d} to create $|Q|^2$ indicators: one to recognize each value along the $e_{\Lq}$ direction. Let us index these by $q, q' \in Q$. Then, this gives us $W_1 \in \R^{d \times 4|Q|^2}, b_1 \in \R^{4|Q|^2}, W_2' \in \R^{|Q|^2}$ such that
\[[ ( (z \rightarrow W_2' z) \circ \sigma \circ (z \mapsto W_1 z + b_1) )(z) ]_{q,q'} = \mathbf{1}[ e_{\Lq}^\top z = q'].\]
We also add $Q$ more weights which let the inputs pass through along the $e_{\Rq}$ directions (add $Q$ more rows $e_{\Rq}^\top$ to $W_1, W_2'$, calling these indices $\bullet q$ for all $q \in Q$; set biases to 0), for a total of $4|Q|^2 + |Q|$ hidden units and $|Q|^2 + |Q|$ output dimensions of $W_2'$.

The second layer implements multiplication between the indicators and function values. The outputs are again indexed by $q, q' \in Q$. We define $W_2'' \in \R^{|Q|^2 \times (|Q|^2 + |Q|)}$ and $b_2'' \in \R^{|Q|^2}$ to be such that
\begin{align*}
[W_2'']_{(q,q'),(\bar{q},\bar{q}')} := |Q| \cdot \mathbf{1}[(q,q') = (\bar{q},\bar{q}')], \quad [W_2'']_{(q,q'),\bullet \bar{q}} &:= \mathbf{1}[q' = \bar{q}],
\quad [b_2'']_{(q,q')} = -|Q|, \\
&\forall q,q',\bar{q},\bar{q}' \in Q.
\end{align*}
Overall, so far we have
\[[ ( \sigma \circ (z \mapsto W_2'' W_2' z + b_2'') \circ \sigma \circ (z \mapsto W_1 z + b_1) )(z) ]_{q,q'} = g(q') \cdot \mathbf{1}[ e_{\Lq}^\top z = q'].\]
The third layer $W_3 \in \R^{d \times |Q|^2}$ simply converts these activations back into an transition map:
\[ W_3 = \sum_{q' \in Q} e_{\Rq} e_{q,q'}^{\top}.\]
Finally, we note the weight norms:
\[ \norm{W_1}_\infty \leq 4|Q|, \quad \norm{b_1}_\infty \leq 4|Q| + 2, \quad \norm{W_2'' W_2'}_\infty \leq 4|Q|, \quad \norm{b_2''}_\infty = |Q|, \quad \norm{W_3}_\infty = 1. \]
\end{proof}

\paragraph{Recursive parallel scan.} The rest of the construction uses a standard parallel algorithm for computing all prefix function compositions: \textit{at layer $l \in [L]$, compose the function at position $t$ with the function at position $t - 2^{l-1}$}. This is a standard algorithm for computing all prefix compositions of associative binary operations with a logarithmic-depth circuit \citep{hillis1986data}.
We choose the position embeddings to enable implementing these ``look-backs'' with rotation matrices. For each $t \in \{-T+1, \ldots, 0, 1, \ldots, T\}$, we use the circle embeddings
\[P_{t,\Pone} := \cos\left( \frac{\pi t}{T} \right),
\quad P_{t,\Ptwo} := \sin\left( \frac{\pi t}{T} \right).\]

In detail, for each $1 \leq l \leq L$:
\begin{itemize}
    \item Let $\theta := -\frac{\pi 2^{l-1}}{T}$, $\gamma := 100 T^2 (\log |Q| + \log T)$.
    \item Let $H := 2, k := |Q|$. Recall that $|Q| \geq 2$. We will index the heads by superscripts $[\mathsf{L}], [\mathsf{R}]$.
    \item Select $W_Q^{[\mathsf{L}]} = W_Q^{[\mathsf{R}]} = W_K^{[\mathsf{R}]} := \sqrt{\gamma} \cdot ( e_{\Pone} e_1^\top + e_{\Ptwo} e_2^\top )$.
    \item Select $W_K^{[\mathsf{L}]} := \sqrt{\gamma} \cdot (e_{\Pone} e_1^\top + e_{\Ptwo} e_2^\top) \rho_\theta$, where $\rho_\theta$ is the rotation matrix 
    \[\begin{bmatrix}
    \cos(\theta) & \sin(\theta) \\
    -\sin(\theta) & \cos(\theta)
    \end{bmatrix}\] in the $e_1, e_2$ basis.
    \item Select $W_V^{[\mathsf{L}]} = W_V^{[\mathsf{R}]} := \sum_{q \in Q} e_{\Lq} e_q^\top$.
    \item Select $W_C^{[\mathsf{L}]} = \sum_{q \in Q} e_q e_{\Lq}^\top$, $W_C^{[\mathsf{R}]} = \sum_{q \in Q} e_q e_{\Rq}^\top$.
\end{itemize}

At layer $l$, let $\alpha^{[\mathsf{L}]}, \alpha^{[\mathsf{R}]} \in \R^{2T}$ denote the attention mixture weights of the two heads.
With this choice of $\gamma$, Lemma~\ref{lem:softmax-selection} and Proposition~\ref{prop:circle-embeddings}, for each $t \in [T]$, we are guaranteed that
$\norm{ \alpha^{[\mathsf{R}]} - e_t }_1$ and 
$\norm{ \alpha^{[\mathsf{L}]} - e_{t - 2^{l-1}} }_1$ are both at most $\frac{0.1}{|Q| \cdot T}$.
Thus, by H\"older's inequality (noting that this mixture is over $T$ vectors of $\infty$-norm at most $|Q|$), this attention layer's output is $0.1$-close in the $\infty$-norm to the concatenated transition maps of the functions at positions $t$ and $t - 2^{l-1}$, allowing us to invoke the perturbation-robust function approximation guarantee of Lemma~\ref{lem:mlp-1d} with $\Delta = 1$.
For the MLP, we use the function composition gadget.

Thus, at the final layer, the $\Rq$ dimensions at position $t$ contains the transition map of the prefix composition
\[ \delta(\cdot, \sigma_t) \circ \ldots \circ \delta(\cdot, \sigma_1) \circ (q \mapsto q_0). \]
It suffices to choose $W$ to be $z \mapsto e_{\Rq}^\top z$ for an arbitrary $q$ to read out the sequence of states as scalar outputs in $[|Q|]$. To output a one-hot encoding, an additional MLP (appended to the end of the final layer) would be required.
\end{proof}

\subsection{Proof of Theorem~\ref{thm:shortcut-const-depth}: Constant-depth shortcuts via Krohn-Rhodes decomposition}
\label{subsec:appendix-krohn-rhodes-proof}

We begin with the full statement of the theorem:
\newtheorem*{T2}{Theorem~\ref{thm:shortcut-const-depth}}

\begin{T2}[Transformer Krohn-Rhodes]
Let $\mathcal{A} = (Q, \Sigma, \delta)$ be a solvable semiautomaton (see Definition~\ref{def:solvable-semiautomaton}), $q_0 \in Q$, and $T \geq 1$.
Then, there is a depth-$O(|Q|^2 \log |Q|)$ Transformer which continuously simulates $\gA_{T, q_0}$, with embedding dimension $O(2^{|Q|} |\gT(\gA)|)$, MLP width $|Q|^{O(2^{|Q|})} + O(2^{|Q|} |Q| \, |\gT(\gA)| \, T)$, attention width $O(|Q|2^{|Q|} |\gT(\gA)|)$ heads, and weight norms are bounded by $6|Q| \, T \log T +  6 \max\{|Q|,|\Sigma|\}$.
\end{T2}

We will begin by presenting self-contained constructions for the two atoms in the Krohn-Rhodes decomposition: a modular counter and a memory unit.
In Appendix~\ref{subsubsec:kr-proof-decompositions}, we will introduce necessary background from Krohn-Rhodes theory (including the definition of a solvable semiautomaton). In Appendix~\ref{subsubsec:kr-proof-groups} and \ref{subsubsec:kr-proof-semigroups}, we will complete the proof of Theorem~\ref{thm:shortcut-const-depth}.

\subsubsection{Base cases: modular counting and memory}

\paragraph{Base case 1: modular addition.} We will start with a construction of a tiny network which lets us simulate any semiautomaton whose transformation semigroup is a cyclic group. Later on, we will use copies of this unit to handle all solvable groups. The construction simply uses attention to perform a flat prefix sum, and an MLP to compute the modular sum.

\begin{definition}[Modular counter semiautomaton]
For any positive integer $n$, define the mod-$n$ \emph{modular counter} semiautomaton $\gA = (Q, \Sigma, \delta)$:
\[ Q := \{0, \ldots, n-1\}, \]
\[ \Sigma := \{0, \ldots, n-1\}, \]
\[ \delta(q, \sigma) := (q + \sigma) \text{ mod } n, \quad \forall q\in Q, \sigma \in \Sigma. \]
\end{definition}

\begin{lemma}[Simulating a modular counter]
\label{lem:cyclic}
Let $\mathcal{A} = (Q, \Sigma, \delta)$ be the mod-$n$ modular counter semiautomaton. Let $q_0 \in Q$, and $T \geq 1$.
Then, there is a depth-$1$ Transformer which continuously simulates $\gA_{T, q_0}$, with embedding dimension $3$, width $4n T$, and $\infty$-weight norms at most $4n T + 2$. It has $H=1$ head with embedding dimension $k=1$, and a 2-layer ReLU MLP.
\end{lemma}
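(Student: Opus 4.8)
\textbf{Proof plan for Lemma~\ref{lem:cyclic}.}

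The plan is to implement the standard ``attention computes prefix sum, MLP computes mod $n$'' recipe, being careful to push all the error bounds through. First I would set up the positional encodings and padding: I would prepend a single padding token at position $0$ carrying the encoding of the initial state $q_0$, and use a $3$-dimensional embedding with coordinates $(\mathsf{val}, \mathsf{P}_1, \mathsf{P}_2)$, where $E(\sigma) := \sigma \cdot e_{\mathsf{val}}$ stores the input symbol in the $\mathsf{val}$ coordinate (and the pad token stores $q_0$), and $P_t := (\,\cdot\,, \cos(\pi t/T), \sin(\pi t/T))$ are circular position embeddings as in Proposition~\ref{prop:circle-embeddings}. Then I would choose the attention head to implement uniform (flat) attention over all positions $t' \le t$: take $W_Q = W_K = 0$ so that every query-key score is $0$ and the causally-masked softmax at row $t$ is exactly the uniform distribution $\alpha_{t'} = 1/t$ for $t' \le t$ (including the pad position). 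With $W_V$ reading off the $\mathsf{val}$ coordinate and $W_C$ writing it back, the attention output at position $t$ is $\frac{1}{t+1}\bigl(q_0 + \sum_{t' \le t}\sigma_{t'}\bigr)$ (indexing the pad as position $0$). Since $T$ is fixed and known, multiplying by the (position-dependent but data-independent) scalar $t+1$ is a linear operation the attention head's output matrix, or equivalently can be folded into the first MLP layer using the position coordinates --- actually the cleanest route is to let the residual stream retain the position and have the MLP's first layer undo the normalization; I would state this explicitly and absorb the factor of at most $T$ into the weight norm bound, which is where the $4nT$ comes from.

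Next, for the MLP: after un-normalizing, position $t$ holds the integer $s_t := q_0 + \sum_{t' \le t} \sigma_{t'} \in \{0, 1, \ldots, (n-1)(T+1)\}$, and we need to output $s_t \bmod n$, which is a function on a discrete set $\gX \subseteq \Z$ of at most $n(T+1) \le nT$ points with minimum gap $\Delta = 1$ and $B_x = O(nT)$. Applying Lemma~\ref{lem:mlp-1d} with $d_\mathrm{out} = 1$ (outputting the scalar state value) gives a $2$-layer ReLU MLP of inner width $4|\gX| = O(nT)$ and weight norms $\norm{W_1}_\infty \le 4$, $\norm{b_1}_\infty \le 4nT + 2$, $\norm{W_2}_\infty \le n$; this robustly computes the mod-$n$ map even under $1/4$-perturbations, which is more than enough slack to absorb the $O(1/\mathrm{poly}(T))$ softmax approximation error --- though in fact, since $W_Q = W_K = 0$ the attention is \emph{exactly} uniform and there is no approximation error in the mixture at all, only (negligible, controllable) floating-point error. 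Finally I would set the decoding map $W$ to read off the scalar output coordinate (and note that a one-hot output merely requires one more trivial MLP layer). The combination is depth $1$ (one attention block plus one $2$-layer MLP block), embedding dimension $3$, one head with $k=1$, width $O(nT)$, and weight norms $O(nT)$, matching the claim.

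The main obstacle --- really a bookkeeping point rather than a genuine difficulty --- is the un-normalization step: softmax attention produces the \emph{average} of the values rather than their sum, and the normalizing denominator $t+1$ varies with position. I would handle this by noting that the denominator is a fixed, input-independent affine function of the position, and hence can be inverted inside the first MLP layer (which has access to the position coordinates in the residual stream, or to which we can feed the position directly) at the cost of a factor $O(T)$ in the relevant weights; alternatively, one can prepend a length-$(n{-}1)(T{+}1)$ block of ``virtual'' padding positions so the denominator becomes essentially constant, but the former is cleaner and is what the stated weight-norm bound $4nT+2$ reflects. A secondary point is simply citing Lemma~\ref{lem:mlp-1d} with the correct parameters and observing its perturbation-robustness covers any residual softmax/precision error; no new ideas are needed there.
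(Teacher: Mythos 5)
Your overall recipe (flat attention to aggregate the prefix, a padding token, then a memorization MLP for the mod-$n$ map) is the same as the paper's, but the step you dismiss as ``bookkeeping'' is exactly where your construction breaks. The softmax at position $t$ produces the \emph{average} $s_t/(t+1)$, and you propose to undo the position-dependent denominator by ``folding'' multiplication by $t+1$ into $W_C$ or into the first MLP layer. This is not possible: $W_C$ and all MLP weights are shared across positions, so a scalar rescaling that varies with $t$ is not a fixed linear map they can implement. The only way the position can enter is as an \emph{input} (via the residual position coordinates), and then undoing the normalization means computing the product $x_t \cdot (t+1)$ of two inputs, which a $2$-layer ReLU MLP cannot do as a linear operation; the honest fallback is to memorize the map $(x_t, t) \mapsto s_t \bmod n$ over all realizable pairs via Lemma~\ref{lem:mlp-nd}, but the number of such pairs is $\Theta(nT^2)$ (roughly $nt$ possible sums at each position $t$), which requires a $3$-layer MLP of width $\Theta(nT^2)$ and destroys the claimed width $4nT$, norm $4nT+2$, and $2$-layer MLP. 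Your alternative of prepending $\Theta(nT)$ physical padding positions only makes the denominator approximately constant, and again does not yield the stated bounds.

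The paper's proof closes this gap with a different, exact trick: it keeps a \emph{single} padding token $\bot$ at position $0$ with embedding $e_2$, sets $W_Q := e_3$, $W_K := e_2$, and chooses the position encoding $P_{t,:} = \gamma_t e_3$ with $\gamma_t := \log(2T - t)$, so that the only nonzero attention score at row $t$ is $\gamma_t$ on the padding token. The softmax denominator is then $e^{\gamma_t} + t = 2T$ for \emph{every} $t$, the padding absorbs exactly the surplus mass (its value channel is $0$), and each real token receives weight exactly $1/(2T)$. Hence the attention output is exactly $\frac{1}{2T}\sum_{t' \le t}\sigma_{t'}$, a one-dimensional quantity with gap $\Delta = 1/(2T)$ taking at most $nT$ values, and Lemma~\ref{lem:mlp-1d} then gives the $2$-layer MLP of width $4nT$ with $\infty$-norms at most $4nT+2$. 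If you want to salvage your write-up, you should replace your un-normalization step with this (or an equivalent) mechanism that makes the per-token attention weight position-independent \emph{inside the attention layer itself}, rather than asking the position-shared MLP to do a position-dependent rescaling.
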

\begin{proof}
The intuition is simply that the lower triangular matrix causal mask can implement simulation in this cyclic group by performing unweighted prefix sums.
The only subtlety is that selecting $W_Q = W_K = 0$ does not quite give us prefix sums: the attention mixture weights at position $t$ are $\frac{1}{t} \sum_{t' \in [t]} e_{t'}$, while we would like the normalizing factor to be uniform across positions ($1/T$ rather than $1/t$). It is possible to undo this normalization using the MLP; however, a particulaly simple solution is to use an additional padding input $\bot$ and 1-dimensional position embeddings to ``absorb'' a fraction of the attention proportional to $1 - t/T$.

We proceed to formalize this construction, beginning with the input embedding and attention block:
\begin{itemize}
    \item Select $d := 3, k := 1, H := 1$. Intuitively, the 3 dimensions implement $\{$input/output, padding, position$\}$ ``channels''.
    \item Select input symbol embeddings $E(\sigma) := \sigma \cdot e_1 \in \R^d$ for each $\sigma \in \Sigma$.
    \item Include an extra position $\bot$, with embedding $E(\bot) := e_2$ and position encoding $P_{ \bot,:} := 0$. Think of this as padding at position $0$; it is not masked out by the causal attention mask at any position $t \geq 1$.
    \item For $t \in [T]$, select $P_{t,:} := \gamma_t e_3$, where $\gamma_t := \log(2T - t)$ is such that $\frac{1}{e^{\gamma_t} + t} = \frac{1}{2T}$.
    \item Select $W_Q := e_3 , W_K := e_2 , W_V := e_1, W_C^\top := e_1$.
    \item We do not need residual connections.
\end{itemize}

In the output of this attention module, for any input sequence $\sigma_{1:T}$, the 1\textsuperscript{st} channel of the output at position $t$ is then
\[s := \frac{1}{2T} \sum_{t \in [T]} \sigma_t.
\]
where $z_t \in \{0, \ldots, n-1\}$ is such that $\delta(\cdot, \sigma_t) = g^{z_t}$.
The MLP simply needs to memorize the function
\[ s \cdot e_1 \mapsto (Ts \text{ mod } n) \cdot e_1. \]
We invoke Lemma~\ref{lem:mlp-1d}, with $\Delta = \frac{1}{2T}, B_x = \frac{n - 1}{2} < \frac{n}{2}, B_y = n$. The number of possible values of $S$ (the cardinality of $\gX$ in Lemma~\ref{lem:mlp-1d}) is at most $n T$.
\end{proof}

\paragraph{Base case 2: memory lookups.} It turns out that to simulate \emph{semigroups} instead of groups, the only additional ingredient is a \emph{memory unit}, a semiautomaton for which there are ``read'' and ``write'' operations. The minimal example of this is a \emph{flip-flop} (Example~\ref{ex:flipflop}), a semiautomaton which can sequentially remember and retrieve a single bit $\in \{0,1\}$, and whose transformation semigroup is the \emph{flip-flop monoid}. It will be convenient to generalize this object to $Q$ states:

\begin{definition}[Memory semiautomaton]
\label{def:memory}
For a given state set $Q$, define the \emph{memory} semiautomaton $\gA = (Q, \Sigma, \delta)$:
\[ \Sigma = Q \cup \{ \bot \}, \]
\[ \delta(q, \sigma) := \sigma \quad \forall q\in Q, \sigma \in \Sigma, \sigma \ne \bot,\]
\[ \delta(q, \bot) := q \quad \forall q \in Q. \]
\end{definition}

\begin{lemma}[Simulating a memory semiautomaton]
\label{lem:flipflop}
Let $\mathcal{A} = (Q, \Sigma, \delta)$ be the memory semiautomaton. Let $q_0 \in Q$, and $T \geq 1$.
Then, there is a depth-$1$ Transformer which continuously simulates $\gA_{T, q_0}$, with embedding dimension $4$, width $4|Q|$,
and $\infty$-weight norms at most $2T\log(|Q|T)$.
It has $H=1$ head with embedding dimension $k=2$, and a 2-layer ReLU MLP.
\end{lemma}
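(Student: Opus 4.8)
The plan is to exploit the especially simple structure of the memory semiautomaton: by definition, $q_t$ is just the most recent non-$\bot$ input symbol among $\sigma_1,\dots,\sigma_t$, defaulting to $q_0$ when all of them are $\bot$. So a single attention head performing near-hard attention onto that position, followed by an MLP that rounds away the residual softmax error, should suffice --- much in the spirit of the modular-counter construction in Lemma~\ref{lem:cyclic}, but using near-hard rather than flat attention.

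Concretely, I would take $d=4$, with the coordinates interpreted as (symbol value, non-$\bot$ flag, position, constant $1$): set $E(\sigma):=\sigma\cdot e_1+e_2+e_4$ for $\sigma\in Q$ and $E(\bot):=e_4$ (so the $e_2$ coordinate distinguishes genuine symbols from $\bot$), and --- as in Lemma~\ref{lem:cyclic} --- prepend one padding token at ``position $0$'' with embedding $q_0\cdot e_1+e_2+e_4$ and positional encoding $0$; for $t\in[T]$ take $P_{t,:}:=t\cdot e_3$. With $H=1$ and $k=2$, let the query read only the constant coordinate, $W_Q^\top x:=(\gamma_0,M)$ (hence independent of the query position), and let the key read (position, flag), $W_K^\top x:=([x]_3,[x]_2)$, so that the causally-masked alignment score onto key position $t'\le t$ equals
\[ \gamma_0\cdot\mathrm{pos}(t')\;+\;M\cdot\mathbf{1}[\sigma_{t'}\ne\bot]. \]
Choosing $M$ somewhat larger than $\gamma_0 T$ makes the highest-scoring position among $\{0,1,\dots,t\}$ exactly the most recent non-$\bot$ one, call it $t^*$ (and the padding token, carrying $q_0$, is selected precisely when no such position exists), with every competitor --- whether an earlier genuine token or a later $\bot$ token --- scoring at least $\gamma_0$ lower. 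Finally let $W_V$ extract coordinate $1$ and $W_C$ write the attention mixture back into $e_1$; no residual connection is needed, and the read-out $W$ just returns coordinate $1$.

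It then remains to quantify the errors and collect parameters. Taking $\gamma_0=\Theta(\log(|Q|T))$, Lemma~\ref{lem:softmax-selection} places the attention weights within $\ell_1$-distance $\tfrac{1}{4|Q|}$ of $e_{t^*}$; since each value vector has $\infty$-norm at most $|Q|$, H\"older's inequality bounds the deviation of the attention output from $q_t\cdot e_1$ by $\tfrac14$, which the $2$-layer MLP of Lemma~\ref{lem:mlp-1d} (applied with $\mathcal X=Q$, $\Delta=1$, $B_x=B_y=|Q|$, and hence inner width $4|Q|$, tolerating the $\Delta/4$ perturbation) corrects exactly; this fixes the MLP width at $4|Q|$. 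Tracking weight norms, the dominant term is $M=\Theta(T\log(|Q|T))$, which can be arranged to be at most $2T\log(|Q|T)$; everything else ($E$, $P$, $W_K$, $W_V$, $W_C$, and the MLP weights from Lemma~\ref{lem:mlp-1d}) is $O(|Q|T)$ or smaller and thus dominated. The main obstacle I anticipate is the simultaneous constraint on $M$: it must be large enough that the non-$\bot$ bonus beats the positional spread over all $T+1$ positions (so that a later $\bot$ token cannot win), yet leave a margin of at least $\gamma_0$ over the runner-up, all without pushing the $\infty$-weight norm past the claimed bound --- verifying these inequalities, and checking that the padding token sits strictly below every genuine token so that it is the $q_0$-fallback exactly when it should be, is the delicate bookkeeping, after which the rest is a routine assembly of Lemmas~\ref{lem:softmax-selection} and~\ref{lem:mlp-1d}.
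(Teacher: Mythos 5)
Your construction is correct and essentially the paper's own proof: one attention head whose score is an affine combination of position and a non-$\bot$ indicator (so that near-hard attention lands on the most recent non-$\bot$ token, with a $q_0$ fallback), followed by the rounding MLP from Lemma~\ref{lem:mlp-1d}, with the same parameter accounting ($d=4$, $k=2$, $H=1$, width $4|Q|$, norms $\approx 2T\log(|Q|T)$). The only differences are cosmetic: the paper penalizes $\bot$ keys and has $\bot$ tokens themselves carry $q_0$ in the value channel instead of prepending a padding token, and it puts the large scaling constant in $W_K$ with positions scaled by $1/T$ rather than in $W_Q$ with raw positions.
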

\begin{proof}

We start in state $q_0 \in Q$. Our goal is to identify the closest non-$\bot$ token and output the corresponding state.
The attention construction is:
\begin{itemize}
    \item Select $d=4, k=2, H=1$.
    \item Select input symbol encodings 
    \[E(\sigma) := (\mathbbm{1}[\sigma = \bot]q_0 + \mathbbm{1}[\sigma \ne \bot]\sigma_i) e_1+ \mathbbm{1}[\sigma = \bot]e_2 + e_4 \in \R^d,\]
    where the first coordinate denotes the action that sets the state\footnote{Technically $\sigma = \bot$ does not reset the state. We will see that when $q_0$ is selected, it must be that the semiautomaton is always in state $q_0$.}, the second coordinate denotes whether the input is the no-op action $\bot$, and the fourth coordinate is padding. 
    \item We use positional encoding $P_{t,:}:= (t/T) \cdot e_3$.
    \item $W_Q:= \begin{bmatrix}
        -2 e_4 & e_4
    \end{bmatrix} \in \R^{4 \times 2}$, $W_K:= \begin{bmatrix}
        ce_2 & ce_3
    \end{bmatrix} \in \R^{4 \times 2}$ for $c = O(T\log(|Q|T))$ as explained below, $W_V:= \begin{bmatrix}
        e_1 & 0
    \end{bmatrix} \in \R^{4 \times 2}$, and $W_C^\top := \begin{bmatrix}
        e_1 & 0
    \end{bmatrix} \in \R^{4 \times 2}$.
\end{itemize}
The unnormalized attention score computed for position $i$ attending to $j$ is  $c(j/T - \mathbbm{1}[\sigma_j = \bot])$. 
Note that the max attention value is achieved at the closest reset action: the unnormalized scored is non-negative if and only if $\sigma_j \neq \bot$, and $j/T$ increases with $j$ ensuring that the closest position is chosen.

Denote this max position as $\jmax$. In the setting of hard attention, the output for the $i_{th}$ token after the attention module is $E(\sigma_{\jmax})^\top e_1$. In particular, this value is $q_0$ if and only if $\sigma_j = \bot, \forall j \leq i$, i.e. the semiautomaton never leaves the starting state. Otherwise, the value is the value of the nearest non-$\bot$ state (including the current state).

By Lemma \ref{lem:softmax-selection} (with $\gamma = c/T$), we can approximate hard-attention by soft-attention weights $\alpha \in \mathbb{R}^T$, that is, $\|\alpha - e_{\jmax}\|_1 \le 2T \cdot e^{-c/T}$.
This implies, that the output of the attention layer, $\left|\sum_{j \le i}\alpha_j E(\sigma_j)^\top e_1 - E(\sigma_{\jmax})^\top e_1\right| \le 2T|Q| \cdot e^{-c/T}$.
Then, the MLP can simply round the first coordinate, and we can invoke Lemma \ref{lem:mlp-1d} with $\Delta = 8T|Q|\exp(-c/T) = 1/2$ (for $c= T \log (16|Q|T)$), $B_x = |Q|$, $B_y = |Q|$ to get weight norm bound $(4 + \log(|Q|T))T \leq 2\log(|Q|T)T$ and width $4|Q|$.
\end{proof}

\subsubsection{Prime decompositions of groups and semigroups}
\label{subsubsec:kr-proof-decompositions}

The key idea behind the proof of Theorem 2 is that all semigroups (and thus, all transformation semigroups of semiautomata) admit a ``prime factorization'' into elementary components, which turn out to be \emph{simple groups} and copies of the \emph{flip-flop monoid}, which have both been discussed in Appendix~\ref{subsec:appendix-algebra-background}.
This is somewhat counterintuitive: the only constraint on the algebraic structure of a semigroup is associativity (and indeed, there are many more semigroups than groups), but all of these structures can be built using these two types of ``atoms''.
These components, as well as the \emph{cascade product} under which this notion of ``factorization'' is defined, are naturally and efficiently implementable by constant-depth self-attention networks.

\paragraph{The special case of groups.} We begin by discussing the analogous decomposition for \emph{groups}, which generalizes the fact that integers have unique prime factorizations. Let $G$ be a finite group, and let
\[G = H_n \triangleright H_{n-1} \triangleright \cdots \triangleright H_1 \triangleright H_0 = 1 \]
be a \emph{composition series}: each $H_i$ is a maximal proper normal subgroup of $H_{i+1}$; $1$ denotes the trivial group with $1$ element. Then the quotient group $H_{i+1} / H_i$ is called a \emph{composition factor}. The Jordan-H\"older theorem tells us that one can think about the set of composition factors as an invariant of $G$.

\begin{theorem}[Jordan-H\"older]
Any two composition series of $G$ are equivalent: they have the same length $n$, and the sequences of compositions factors $H_{i+1} / H_i$ are equivalent under permutation and isomorphism.
\end{theorem}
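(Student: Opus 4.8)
The plan is to prove this by strong induction on $|G|$. The base case is $G$ simple (including the trivial group): then the only composition series is $G \triangleright 1$, and there is nothing to check. For the inductive step, suppose
\[ G = H_n \triangleright H_{n-1} \triangleright \cdots \triangleright H_0 = 1, \qquad G = K_m \triangleright K_{m-1} \triangleright \cdots \triangleright K_0 = 1 \]
are two composition series. If $H_{n-1} = K_{m-1}$, then $H_{n-1} \triangleright \cdots \triangleright 1$ and $K_{m-1} \triangleright \cdots \triangleright 1$ are two composition series of the strictly smaller group $H_{n-1}$; by the inductive hypothesis they are equivalent, and prepending the common top factor $G/H_{n-1}$ gives the claim.

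The interesting case is $H_{n-1} \neq K_{m-1}$. Set $L := H_{n-1} \cap K_{m-1}$. Since $H_{n-1}, K_{m-1} \triangleleft G$, the product $H_{n-1} K_{m-1}$ is also normal in $G$ and strictly contains $H_{n-1}$ (as $K_{m-1} \not\subseteq H_{n-1}$), so by maximality of $H_{n-1}$ we get $H_{n-1} K_{m-1} = G$; likewise $L \triangleleft G$. The second isomorphism theorem then yields
\[ H_{n-1}/L \cong G/K_{m-1}, \qquad K_{m-1}/L \cong G/H_{n-1}, \]
both of which are simple. Now pick any composition series $L = L_r \triangleright \cdots \triangleright L_0 = 1$ of $L$ (one exists because $L$ is finite). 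Then $H_{n-1} \triangleright L \triangleright L_{r-1} \triangleright \cdots \triangleright 1$ is a composition series of $H_{n-1}$, and comparing it with $H_{n-1} \triangleright \cdots \triangleright H_0$ via the inductive hypothesis shows the multiset of composition factors of $H_{n-1}$ equals $\{G/K_{m-1}\}$ together with the factors of $L$. Symmetrically, the composition factors of $K_{m-1}$ form $\{G/H_{n-1}\}$ together with the factors of $L$. Splicing in the top factors, the first series has composition-factor multiset $\{G/H_{n-1}, G/K_{m-1}\} \cup \mathrm{factors}(L)$, and so does the second; both lengths are $r+2$. This closes the induction.

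An alternative route, which some may find cleaner to organize, is via the Zassenhaus (butterfly) lemma and the Schreier refinement theorem: any two subnormal series of $G$ admit equivalent refinements, and a composition series admits no proper refinement (any refinement merely repeats terms), so two composition series must already be equivalent. I expect the main obstacle in either approach to be the same bookkeeping in the noncommuting case above: verifying that $L = H_{n-1} \cap K_{m-1}$ is normal in $G$, that the quotients $H_{n-1}/L$ and $K_{m-1}/L$ are simple, and that a composition series of $L$ exists and can be grafted onto both sides — after which the symmetry of the construction makes the matching of factors and lengths automatic.
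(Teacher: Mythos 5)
The paper does not prove this statement: Jordan--H\"older is quoted there as classical background for the Krohn--Rhodes machinery, with no argument given, so there is no ``paper proof'' to compare against. Your proposal is the standard and correct proof: strong induction on $|G|$, splitting on whether the two maximal normal subgroups $H_{n-1}$ and $K_{m-1}$ coincide, and in the non-coinciding case using $L = H_{n-1}\cap K_{m-1}$ together with the second isomorphism theorem to identify $H_{n-1}/L \cong G/K_{m-1}$ and $K_{m-1}/L \cong G/H_{n-1}$ (both simple), then grafting a composition series of $L$ onto both sides and invoking the inductive hypothesis for $H_{n-1}$ and $K_{m-1}$. The bookkeeping of factor multisets and lengths ($n=m=r+2$) is right. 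Two small points you should make explicit: the parenthetical claim $K_{m-1}\not\subseteq H_{n-1}$ needs the one-line justification that containment would contradict maximality of $K_{m-1}$ (since $H_{n-1}$ is a proper normal subgroup of $G$ properly containing it), and the trivial group's composition series has length $0$ rather than being of the form $G\triangleright 1$; neither affects the argument. Your alternative route via the Zassenhaus lemma and Schreier refinement is also a valid, well-known organization of the same result.
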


When each $H_{i+1} / H_i$ is abelian, $G$ is called a \emph{solvable} group. It turns out that each $H_{i+1} / H_i$ is a simple group, so the composition factors of solvable groups can only be cyclic groups of prime order (because every finitely generated abelian group is a direct product of cyclic groups, and, of these, only those of prime order are simple). The smallest non-solvable group is $A_5$, realizable as the group of even permutations of $5$ elements.
As a part of Theorem~\ref{thm:shortcut-const-depth}, we will use the composition series to iteratively build neural networks which simulates solvable group operations, requiring intricate constructions to do this with depth independent of the sequence length $T$.

\paragraph{Adding memory to handle semigroups.} Now, we move on to semigroups. When not all of the input symbols to a semiautomaton induce permutations, we no longer have the group axiom of invertibility (also, if there is no explicit identity symbol, we are not guaranteed to have the monoid axiom of an identity element either). Intuitively, this would seem to induce a much larger family of algebraic structures; an analogy, which is formalizable by representation theory, is that we are now considering a collection of general matrices under multiplication, instead of only invertible ones. The non-invertible transitions \emph{collapse the rank} of the transformations, reducing the set of reachable transformations whenever they are included in an input sequence.

A landmark result of \citet{krohn1965algebraic} tames the seemingly vast and unorderly universe of general finite semigroups. It extends the Jordan-H\"older theorem to the case of semigroups, for a more sophisticated notion of decomposition. Since that work, many variations have arisen, in terms of its precise statement, construction of the decomposition, and proof of correctness. Out of these, an important development is the \emph{holonomy decomposition} method \citep{zeiger1967cascade,eilenberg1974automata}, which forms the basis of our results. We extract the definitions and theorems from
\citet{maler1994cascaded}, whose exposition emphasizes explicitly tracking the construction of the semiautomaton. We also refer to \citet{maler2010krohn,egri2015computational,zimmermann2020krohn} as recent expositions, containing historical context.

\begin{definition}[Cascade product; cf. \citep{maler2010krohn}, Definition 11]
\label{def:cascade-product}
Let $n$ be a positive integer. For each $i \in [n]$, let $\gA^{(i)} = (Q^{(i)}, \Sigma^{(i)}, \delta^{(i)})$ be a semiautomaton. For $i \in \{2, \ldots, n\}$, let $\phi^{(i)} : Q^{(1)} \times \cdots \times Q^{(i-1)} \times \Sigma \rightarrow \Sigma^{(i)}$ denote a \emph{dependency function}. This object ($\{\gA^{(i)}\}; \{\phi^{(i)}\}$) is called a \emph{transformation cascade}, and defines a \emph{cascade product semiautomaton} $\gA = (Q^{(1)} \times \cdots \times Q^{(n)}, \Sigma^{(1)}, \delta)$ by ``feedforward simulation'' under the dependency function. We define $\delta$ by the $i$-th component of its output (which we call $\delta^{(\leq i)} : Q^{(1)} \times \cdots \times Q^{(n)} \times \Sigma^{(1)}: Q^{(i)}$):
\[ \delta^{(\leq i)}( (q^{(1)}, \ldots, q^{(n)}), \sigma ) := \delta^{(i)}(q^{(i)}, \sigma^{(i)} ), \]
where
\[\sigma^{(i)} := \begin{cases}
\sigma & \text{ if $i = 1$} \\
\phi^{(i)}( q^{(1)}, \ldots, q^{(i-1)}, \sigma ) & \text{ otherwise}
\end{cases}.\]
The corresponding transformation semigroup $\gT(\gA)$ is known as a \emph{cascade product semigroup}.
\end{definition}

Intuitively, the cascade specifies a way to compose semiautomata hierarchically: the first layer $i=1$ maps input sequences to its state sequence, and each internal layer receives an input which depends on the states of all of the preceding layers. Algebraically,
the cascade product semigroup is a subsemigroup of the larger \emph{wreath product} of semigroups (the straightforward analogue of the wreath product of groups, discussed in Section~\ref{subsec:appendix-algebra-background}). Although this is useful from an algebraic point of view, we will not use this perspective; the cascade product is a smaller substructure of the wreath product which is sufficient for semiautomaton simulation.

Finally, it will be convenient to define \emph{permutation-reset semiautomata}, which are a useful intermediate step in the Krohn-Rhodes decomposition. To obtain our final result, we will further break these semiautomata down into flip-flops and simple groups.

\begin{definition}[Permutation-reset semiautomaton; cf. \citep{maler1994cascaded}, Definition 12]
\label{def:permutation-reset-semiautomaton}
A semiautomaton $\gA = (Q, \Sigma, \delta)$ is a \emph{permutation-reset} semiautomaton if, for each $\sigma \in \Sigma$, the transition function $\delta(\cdot , \sigma) : Q \rightarrow Q$ is either a bijection (i.e. a permutation over the states of $Q$) or constant (i.e. maps every state to some $q(\sigma)$).
Associated with each permutation-reset semiautomaton is its \emph{permutation group}, generated by only the bijections.
\end{definition}

Now we can state the Krohn-Rhodes theorem, which decomposes \emph{every} finite semiautomaton into a transformation cascade.

\begin{theorem}[Krohn-Rhodes]
\label{thm:krohn-rhodes}
Let $\gA = (Q, \Sigma, \delta)$ be a semiautomaton. Then, there exists a transformation cascade $\{\gA^{(1)}, \ldots, \gA^{(n)}; \phi^{(2)}, \ldots, \phi^{(n)}\}$, defining a cascade product semiautomaton $\gA'$, such that:
\begin{enumerate}
    \item[(i)] The input symbol space of $\gA^{(1)}$ (and thus, that of $\gA'$) is $\Sigma$, the same as that of $\gA$.
    \item[(ii)] Letting $Q^{(i)}$ denote the state space of $\gA^{(i)}$, there exists a function $\gW : Q^{(1)} \times \cdots \times Q^{(n)} \rightarrow Q$ such that $\gW \circ \gA'_{T,q_0}$ simulates $\gA_{T,q_0}$ for all $T \geq 1, q_0 \in Q$. For each $i \in [n]$, the transformation semigroup $\gT( \gA^{(i)} )$ is a permutation-reset semiautomaton with at most $|Q|$ states, whose permutation group is a (possibly trivial) subgroup of $\gT(\gA)$ (\cite{maler1994cascaded}, Theorem 4).
    \item[(iii)] The number of semiautomata in the cascade is $n \leq 2^{|Q|}$. Furthermore, the cascade has at most $|Q|$ levels: the indices can be partitioned into at most $L \leq |Q|$ contiguous subsets $N^{(1)} = \{1, \ldots, n_1\}, N^{(2)}, \{n_1+1, \ldots, n_1+n_2\}, \ldots, N^{(L)} = \{n - n_L + 1, \ldots, n\}$ such that $\phi^{(i)}$ only depends on input indices from previous partitions (\cite{maler1994cascaded}, Claim 11 \& Corollary 12).
\end{enumerate}

\end{theorem}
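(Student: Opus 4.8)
The statement is the holonomy form of the classical Krohn--Rhodes theorem, decorated with explicit accounting of the cascade's length, depth, and dependency structure, so the plan is not to reprove it from scratch but to assemble it from the holonomy decomposition of \citet{zeiger1967cascade,eilenberg1974automata} as formalized by \citet{maler1994cascaded}, and then read off the three quantitative claims. The starting point is the action of the transformation semigroup $\gT(\gA)$ on the power set of $Q$: for $\rho \in \gT(\gA)$ and $X \subseteq Q$ one considers the image $\rho(X)$, and organizes the ``relevant'' subsets (those arising as images of $Q$) by their \emph{height}, i.e.\ the length of the longest chain of proper-image-shrinkings issuing from $X$. Subsets of equal height that are tiled by lower subsets in the same combinatorial pattern form a \emph{holonomy class}, and its \emph{holonomy group} is the group of permutations induced on the tiles by the elements of $\gT(\gA)$ that stabilize the class.

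The cascade is then built with one component $\gA^{(i)}$ per holonomy class: its state space is the set of tiles of the class (at most $|X| \le |Q|$ of them, which gives the $|Q|$-state bound in (ii)); its bijective transitions are exactly the holonomy group action; and its constant (reset) transitions record the transitions that collapse the computation to a strictly smaller height. This makes each $\gA^{(i)}$ a permutation-reset semiautomaton whose permutation group is the holonomy group, which is a subgroup of $\gT(\gA)$ --- it is a group because it acts faithfully by permutations on a finite set, and it embeds into $\gT(\gA)$ via the stabilizing subsemigroup. The first component inherits the input alphabet $\Sigma$ (claim (i)), and the readout map $\gW$ recovers $q_t \in Q$ by following the chain of tile-choices made at successive levels down to a singleton (the simulation part of (ii)). Crucially, the dependency function $\phi^{(i)}$ feeding component $i$ only needs to know which subset of the appropriate height the computation currently occupies, and that subset is determined by the tile-choices made at all \emph{strictly lower} heights; hence $\phi^{(i)}$ depends only on earlier partitions, producing the level decomposition $N^{(1)},\ldots,N^{(L)}$ in (iii).

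The two counting bounds in (iii) follow by elementary counting. The number of components $n$ is at most the number of distinct relevant subsets of $Q$, hence $n \le 2^{|Q|}$. The number of levels $L$ equals the number of distinct heights; since a chain of proper-image-shrinkings strictly decreases the cardinality of the subset and cardinalities range over $\{1,\ldots,|Q|\}$, there are at most $|Q|$ possible heights, so $L \le |Q|$.

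I expect the main obstacle to be exactly the bookkeeping that certifies each component is \emph{genuinely} permutation-reset and that the dependency functions depend only on strictly lower levels --- in particular, ruling out ``sideways'' non-reset transitions between distinct holonomy classes at the same height, and verifying the at-most-$|Q|$-level refinement. These are precisely the technical cores of \citet{maler1994cascaded} (Theorem~4, Claim~11, Corollary~12), which I would import verbatim rather than rederive. The genuine work of this paper begins only \emph{after} this theorem: in the following subsections one simulates the resulting cascade by a constant-depth Transformer, realizing the permutation components (via their composition series into simple, hence cyclic-of-prime-order, quotients for solvable $\gA$) with the modular-counter gadget of Lemma~\ref{lem:cyclic} and the reset components with the memory gadget of Lemma~\ref{lem:flipflop}, gluing the levels together along residual connections so that the depth depends on $|Q|$ but not on $T$.
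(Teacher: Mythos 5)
Your proposal matches the paper's treatment: the paper does not prove this theorem at all, but imports it (statement, state bound, and level structure) directly from the holonomy-decomposition formalization of \citet{maler1994cascaded} (Theorem~4, Claim~11, Corollary~12), which is exactly what you propose to do after sketching the holonomy construction. The sketch itself is a reasonable gloss of that construction (modulo loose bookkeeping, e.g.\ the tile-count and ``subgroup of $\gT(\gA)$'' claims are precisely the imported technical content), so this is essentially the same approach.
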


With this decomposition, we are now able to define a \emph{solvable} semiautomaton.

\begin{definition}[Solvable semiautomaton]
\label{def:solvable-semiautomaton}
Let $\gA = (Q, \Sigma, \delta)$ be a semiautomaton. We call $\gA$ \emph{solvable} if all the permutation groups associated with all of the permutation-reset automata from Theorem~\ref{thm:krohn-rhodes} are solvable groups.
\end{definition}

The remainder of this section will build our construction from the bottom up:
\begin{itemize}
    \item Appendix~\ref{subsubsec:kr-proof-groups} will build up from the base case of cyclic groups (Lemma~\ref{lem:cyclic}), using increasingly sophisticated notions of group products, culminating in a recursive construction which simulates all stages of the Jordan-H\"older composition series. The crucial step is a construction for simulating the semidirect product of groups, given networks which simulate the individual components; this allows us to handle the solvable non-abelian groups.
    \item Appendix~\ref{subsubsec:kr-proof-semigroups} will build networks which simulate permutation-reset semiautomata. A new base case arises: the \emph{memory unit} (Lemma~\ref{lem:flipflop}), a semiautomaton whose transformation semigroup is a generalization of the flip-flop monoid. Combining the constructions for solvable groups and memory units, we obtain simulators for solvable permutation-reset semiautomata. Finally, the cascade product guaranteed by Krohn-Rhodes (Theorem~\ref{thm:krohn-rhodes}) glues all of these pieces together, giving us the final result.
\end{itemize}

\subsubsection{Simulating solvable groups}
\label{subsubsec:kr-proof-groups}

We begin by handling groups.
Now, we are ready to specify the recursive constructions which ``glue'' these components together to form solvable groups. We will proceed in a ``bottom-up'' order:
\begin{enumerate}
    \item[(i)] Define a canonical semiautomaton $\gA^G$ corresponding to each group $G$ (Definition~\ref{def:canonical-group-semiautomaton}), such that if a network can simulate $\gA^{G}$, it can simulate any other semiautomaton whose transformation semigroup $\gT(\gA^G)$ is $G$. This lets us talk about simulating \emph{groups}, rather than particular semiautomata. We will show how to turn simulators for groups $N$ and $H$ into simulators for extensions of $N$ by $H$, for increasingly sophisticated extensions, until all cases have been captured.
    \item[(ii)] Show how to build the \emph{trivial extension}: given networks which simulate the groups $N$ and $H$, simulate the direct product $G \cong N \times H$, by simply running the individual simulators in parallel (Lemma~\ref{lem:direct-product-simulation}). Combined with Lemma~\ref{lem:cyclic}, this immediately allows us to simulate arbitrary abelian groups with depth $1$, since every abelian group is isomorphic to a direct product of cyclic groups.
    \item[(iii)] Show how to build a \emph{split extension}: given networks which simulate a normal subgroup $N$ and quotient $H$, construct a network which simulates any semidirect product $G \cong N \rtimes H$ (Lemma~\ref{lem:semidirect-product-simulation}). This is the first place where we will require a sequential cascade of layers. It will allow us to handle certain families of non-abelian groups (including $S_3, D_{2n}, A_4, S_4$)
    \item[(iv)] Show how to build \emph{arbitrary} extensions (any $G$ which contains $N$ as a normal subgroup, and for which the quotient group $G/N$ is isomorphic to $H$), using the wreath product (Lemma~\ref{lem:wreath-product-simulation}), which contains all of the group extensions. The wreath product is itself the semidirect product between a $|H|$-way direct product and $H$, so this can be done in a constant number of layers, by the above. This finally lets us implement any step of a composition series. In particular, using cyclic groups as a simulable base case, this shows that we can simulate all solvable groups.
\end{enumerate}

\paragraph{Step (i).} It will be convenient to associate with each group $G$ a canonical ``complete'' semiautomaton for the class of all semiautomata $\gA$ for which $\gT(\gA) = G$. It is simply the one whose input symbol space $\Sigma$ is every transformation reachable by some sequence of inputs (i.e. every element of $\gT(\gA)$). (For a semigroup, we would also want to adjoin the identity element if it is missing, however, we will only find it useful to define this for groups.)
\begin{definition}[Canonical group semiautomaton; simulating a group]
\label{def:canonical-group-semiautomaton}
Let $G$ be a finite group. Then, we define the \emph{canonical group semiautomaton} for $G$ as the semiautomaton $(Q,\Sigma,\delta)$ defined by:
\begin{itemize}
    \item $Q := G$, the set of elements of $G$. Note that if (for example) $G = S_n$, we are setting the state space to be the set of $n!$ permutations, not the ground set $[n]$.
    \item $\Sigma := G$. That is, we include \emph{all} functions in the input symbol space.
    \item $\delta(g, h) := h \cdot g$, for all $\forall g \in Q, h \in \Sigma$. (In algebraic terms, we are embedding the $G$ into its \emph{left regular representation}, a.k.a. \emph{left multiplication action}.) Thus, if we take $q_0$ to be the identity element, the sequence of states $q_1, q_2, \ldots, q_T$ corresponds to $q_t = \sigma_t \sigma_{t-1} \ldots \sigma_1$.
    \item When we simulate the canonical group semiautomaton, we will always choose $q_0$ to be the identity element $e_G$.
\end{itemize}
A sequence-to-sequence network is said to continuously \emph{simulate} $G$ at length $T$ if it continuously simulates the canonical group semiautomaton of $G$ at length $T$.
\end{definition}

\newcommand{\Sim}{\mathsf{sim}}

\paragraph{Notation for composable implementations.} Let us furthermore formalize an \emph{implementation} of group simulation. For any finite group $G$, $T \geq 1$, $q_0 \in G$, let $\gA^G = (Q, \Sigma, \delta)$ be the canonical semiautomaton for $G$. Then, we summarize a family of concrete implementations of networks which continuously simulate of $\gA_{T,e_G}$. We write $\Sim : (G, T) \mapsto (E : G \rightarrow \R^d, f_\mathrm{tf} : \R^{T\times d} \rightarrow \R^{T\times d}, W : \R^d \rightarrow G)$, where the shape parameters of the output can depend on $G,T$.

To reduce notational clutter, we will access the shape attributes of an implementation via ``object-oriented'' notation, defining
    \[ \Sim(G,T).\{\mathsf{depth}, \mathsf{dim}, \mathsf{heads}, \mathsf{headDim}, \mathsf{mlpWidth}, \mathsf{normBound}\}\]
    to respectively denote the complexity-parameterizing quantities
    \[ \{ L, d, H, k, \max_j \{d_j'\}, B \}, \]
    defined in Appendix~\ref{subsec:appendix-nn-background}. Also, we will let $\Sim(G,T).\{E, \theta, W\}$ respectively denote the encoding layer $E$, network parameters $\theta_\mathrm{nn}$, and decoding layer $W$.

\paragraph{Canonical encodings of group elements.} We also enforce that throughout our constructions of networks which simulate groups, we will maintain that all networks and their submodules manipulate encodings via \emph{integer vectors} in a consecutive range $\{0, \ldots, n-1\}$. Furthermore, the identity element will always map to the zero vector. We will keep track of the dimensionality of these vectors
$\Sim(G,T).\mathsf{repDim} \leq d$, and their maximum entries $\Sim(G,T).\mathsf{repSize}-1$. All encoders $E$ and decoders $W$ will map all group elements to and from this kind of representation, and we will choose $W = E^{-1}$. In all, the networks will keep a $\mathsf{repDim}$-dimensional ``workspace'' of integer vectors, with entries bounded by $\mathsf{repSize}-1$. When combining groups via the various products constructions, we will combine the components' individual workspaces to create a larger workspace for the product group's elements.

We make some additional remarks on implementations:
\begin{itemize}
    \item Note that the canonical semiautomaton ``forgets'' about the semiautomaton abstraction, and never assumes that $G$ is a permutation group on the original state space $Q$ of the semiautomaton we would like to simulate. Indeed, when $N \triangleleft G$ are permutation groups on $Q$, there is no natural permutation group on $Q$ associated with the quotient $H \cong G / N$; it turns out that will consider simulators for $N$ and $H$.\footnote{There is nothing in general preventing quotient groups from being extremely large groups which are not realizable as smaller permutation groups. For concrete examples, see \citep{kovacs1989finite}. When we ultimately specialize to simulating the composition series of solvable groups, the largest groups we will handle will be the cyclic groups of prime order, so we will in the end be guaranteed that the groups we want to simulate are realizable with $\leq |Q|$ states, but not directly or canonically.}
    \item To return to solving the simulation problem for some semiautomaton $\gA = (Q, \Sigma, \delta)$ whose transformation semigroup is isomorphic to $G$ (at length $T$ and initial state $q_0$), let $\mu : G \rightarrow S_Q$ denote this isomorphism.
    We use $\gA^G_{T, e_G}(\sigma_{1:T})$ as the network, with an encoding layer $E \circ \mu^{-1}$, and decoding layer $(\pi \mapsto \pi(q_0)) \circ \mu \circ W$, which can be memorized by an MLP of width $O(|G|)$ via Lemma~\ref{lem:mlp-nd}.
    \item The modular counter semiautomaton, for which we constructed a simulator in Lemma~\ref{lem:cyclic}, is the canonical group semiautomaton for the corresponding cyclic group $C_n$. Calling this construction $\Sim_{C_n}$, we can easily verify that it satisfies the canonical simulator's conditions, and:
    \begin{itemize}
    \item $\Sim_{C_n}.\mathsf{depth} = 1$.
    \item $\Sim_{C_n}.\mathsf{dim} = 3$.
    \item $\Sim_{C_n}.\mathsf{heads} = 1$.
    \item $\Sim_{C_n}.\mathsf{headDim} = 1$.
    \item $\Sim_{C_n}.\mathsf{mlpWidth} = 4|G| \cdot T$.
    \item $\Sim_{C_n}.\mathsf{normBound} \leq 4|G|\cdot T + 2 \leq 6|G| \cdot T$.
    \item $\Sim_{C_n}.\mathsf{repDim} = 1$.
    \item $\Sim_{C_n}.\mathsf{repSize} = |G|$.
\end{itemize}
\end{itemize}

\paragraph{Step (ii).} As a precursor to the more sophisticated products, we formalize the obvious fact that two non-interacting parallel semiautomata can be simulated without increasing the depth. First, we define the direct product semiautomaton:

\begin{definition}[Direct product of semiautomata]
Let $\gA = (Q, \Sigma, \delta), \gA' = (Q', \Sigma', \delta')$ be two semiautomata. Then, $\gA \times \gA' = (Q \times Q', \Sigma \cup \{e\} \times \Sigma' \cup \{e\}, \delta \times \delta')$ denotes the natural direct product semiautomaton. Its states are ordered pairs $(q \in Q, q' \in Q')$. Its input symbols are defined similarly, adjoining identity inputs (so that $\delta(q,e) = q, \delta'(q',e) = q'$). The transitions $\delta \times \delta'$ are defined such that
\[ (\delta \times \delta') ( (q,q'), (\sigma,\sigma') ) := (\delta(q,\sigma), \delta'(q', \sigma')).\]
\end{definition}

Note that under this definition, we have $\gT(\gA \times \gA') = \gT(\gA) \times \gT(\gA')$. In particular, for two groups $G, H$, we have $G \times H = \gT(\gA^G) \times \gT(\gA^H) = \gT(\gA^{G \times H}) = G \times H$.

\begin{figure}
    \centering
    \includegraphics[width=0.7\linewidth]{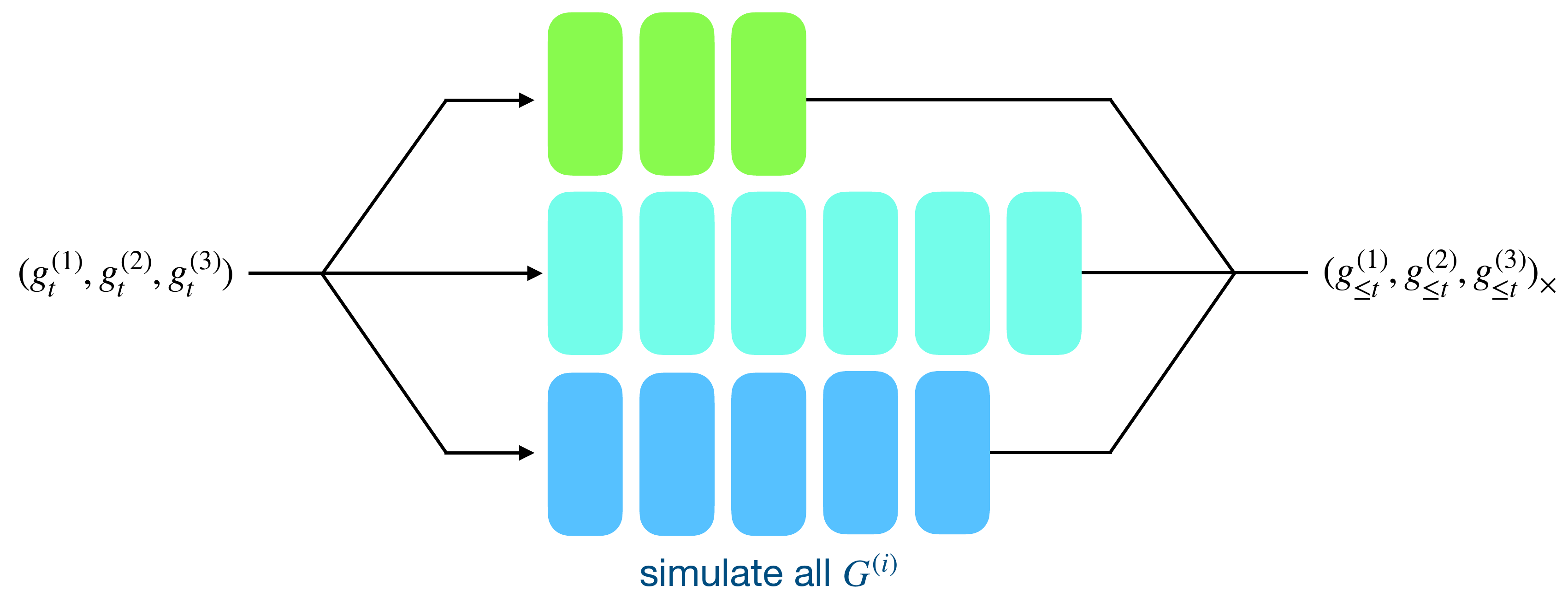}
    \caption{Recursive construction for simulating a direct product of groups $G^{(1)} \times \cdots \times G^{(n)}$. Any number of groups can be simulated in parallel without increasing the depth.}
    \label{fig:direct-product}
\end{figure}

\begin{lemma}[Direct product via parallel simulation]
\label{lem:direct-product-simulation}
Let $G^{(1)}, \ldots, G^{(n)}$ be a collection of finite groups, and let $T \geq 1$. Suppose each group admits a simulation $\Sim_i := \Sim(G^{(i)},T)$. Then, there is a simulation of the direct product group $\Sim_\times := \Sim(G^{(1)} \times \ldots \times G^{(n)}, T)$, whose sizes satisfy:
\begin{itemize}
\item[]
\begin{itemize}
    \item $\Sim_\times.\mathsf{depth} = \max_i \{ \Sim_i.\mathsf{depth} \}$.
    \item $\Sim_\times.\mathsf{dim} = \sum_i \{ \Sim_i.\mathsf{dim} \}$.
    \item $\Sim_\times.\mathsf{heads} = \sum_i \{ \Sim_i.\mathsf{heads} \}$.
    \item $\Sim_\times.\mathsf{headDim} = \max_i \{ \Sim_i.\mathsf{headDim} \}$.
    \item $\Sim_\times.\mathsf{mlpWidth} = \sum_i \{ \Sim_i.\mathsf{mlpWidth} \}$.
    \item $\Sim_\times.\mathsf{normBound} \leq \max_i \{ \Sim_i.\mathsf{normBound} \}$.
    \item $\Sim_\times.\mathsf{repDim} = \sum_i \{ \Sim_i.\mathsf{repDim} \}$.
    \item $\Sim_\times.\mathsf{repSize} = \max_i \{ \Sim_i.\mathsf{repSize} \}$.
\end{itemize}
\end{itemize}
\end{lemma}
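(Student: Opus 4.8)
The plan is to exploit the fact that the canonical group semiautomaton of $G^{(1)} \times \cdots \times G^{(n)}$ has transitions acting coordinate-wise: on input $(h_1,\ldots,h_n)$ the state $(g_1,\ldots,g_n)$ maps to $(h_1 g_1,\ldots,h_n g_n)$, so the state trajectory of the product is literally the concatenation of the trajectories of the individual canonical semiautomata. Hence it suffices to run the $n$ given simulators $\Sim_1,\ldots,\Sim_n$ side by side inside one Transformer, on disjoint blocks of embedding coordinates, and read off each block at the end. No new analytic estimate is needed beyond those already in Lemma~\ref{lem:cyclic} and Appendix~\ref{subsec:appendix-nn-background}; the work is entirely bookkeeping of shapes.

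Concretely, I would first normalize depths: set $L := \max_i \Sim_i.\mathsf{depth}$ and pad each $\Sim_i$ with $L - \Sim_i.\mathsf{depth}$ identity layers (an identity attention head that selects its own position together with an identity MLP, both available via residual connections or the identity gadget $x = (x)_+ - (-x)_+$ from Appendix~\ref{subsec:appendix-nn-background}); this leaves every size parameter except depth unchanged. Next set $d := \sum_i \Sim_i.\mathsf{dim}$ and partition $[d]$ into blocks $B_1,\ldots,B_n$ with $|B_i| = \Sim_i.\mathsf{dim}$, reserving $B_i$ for the workspace and positional encodings of $\Sim_i$. The encoding layer is $E(g_1,\ldots,g_n) := \bigoplus_i \Sim_i.E(g_i)$ (placing $\Sim_i.E(g_i)$ into block $B_i$), the position encoding is the direct sum of the components', and the decoder $W$ applies $\Sim_i.W$ to block $B_i$ and concatenates. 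Since each $\Sim_i.W = \Sim_i.E^{-1}$ sends the identity to the zero vector, so does $W$ at the identity $(e_{G^{(1)}},\ldots,e_{G^{(n)}})$, and the integer-workspace conventions are inherited blockwise with $\mathsf{repDim} = \sum_i \Sim_i.\mathsf{repDim}$ and $\mathsf{repSize} = \max_i \Sim_i.\mathsf{repSize}$.

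For each of the $L$ layers I would assemble the attention block as the direct sum of the $n$ component attention blocks: head $j$ of $\Sim_i$ becomes a head of the product whose $W_Q, W_K, W_V$ are supported only on block $B_i$ and whose $W_C$ writes only into block $B_i$. Because the query–key alignment scores of such a head depend only on the $B_i$-coordinates of the input, its causally-masked softmax weights, and hence its output, coincide exactly with what $\Sim_i$ computes on its own; the blocks never interact. Summing over $i$: heads add, head dimension is the max, attention width is the sum of the components' attention widths, and the $\ell_\infty$ weight norm is the max. The MLP block is likewise block-diagonal — its hidden layers are the concatenation of the components' hidden layers, so $\mathsf{mlpWidth}$ adds and the norm bound is the max — and it reproduces each $\Sim_i$'s MLP on $B_i$. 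An induction on layers shows that, at every position $t$, block $B_i$ holds exactly the activations $\Sim_i$ would produce, so after $L$ layers block $B_i$ encodes $h^{(i)}_t \cdots h^{(i)}_1$, i.e. the $i$-th coordinate of the product's state at time $t$; applying $W$ recovers the full product state, and reading off the shape parameters yields precisely the claimed bounds.

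The only mildly delicate points — and thus the main things to verify carefully rather than wave through — are (a) that padding a simulator with identity layers is legitimate and costs nothing beyond depth, handled by the residual/ReLU identity gadget, and (b) that confining each component to a disjoint coordinate block genuinely decouples the attention heads, which reduces to the observation that an attention head's output is a function only of $W_Q^\top X^\top$, $W_K^\top X^\top$, and $X W_V$, each of which sees only the coordinates in that head's block. Everything else is routine dimension-chasing, so I do not expect a substantive obstacle.
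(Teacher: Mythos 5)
Your proposal is correct and follows essentially the same route as the paper's proof: pad each component simulator with identity layers to equalize depths, then run all simulators in parallel on disjoint blocks of the embedding space (block-diagonal attention and MLP weights, concatenated encoders/decoders), with the shape bounds following by direct bookkeeping. The only cosmetic difference is how the identity padding is realized (the paper zeroes out $W_V$ and the MLP weights and relies on residual connections, while you also invoke the ReLU identity gadget), which does not change the substance.
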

\begin{proof}
First, we pad all of the individual $\Sim_i$ with layers implementing identity (add residual connections, and set attention $W_V$ and all MLP weight matrices to 0), so that all of them have depth $\max_i \{ \Sim_i.\mathsf{depth} \}$.

Then, the intuition is to construct the direct product semiautomaton by concatenating the ``workspaces'' of each $G^{(i)}$.
In other words, we set the canonical encoding $\Sim_\times.E$ of $(g^{(1)}, \ldots, g^{(n)})$ to be the concatenation of each $\Sim_i$'s encodings.

The direct product simply lets each $\Sim_i$ take inputs and outputs in its individual workspace.
To enable this, we need enough parallel dimensions. We set an embedding space of dimension $\sum_i \{ \Sim_i.\mathsf{dim} \}$ (and similarly within the heads and MLPs), partitioning the coordinates such that in the product construction, each $\Sim_i.E$ and $\Sim_i.\theta$ only reads and writes to its own dimensions.

This clearly simulates the direct product group.
Figure~\ref{fig:direct-product} provides a sketch of this construction.
\end{proof}

Note that the direct product construction already allows us to simulate all finite abelian groups in constant depth, since each such group is isomorphic to the direct product of a collection of abelian groups of prime power order.

\begin{figure}
    \centering
    \includegraphics[width=0.75\linewidth]{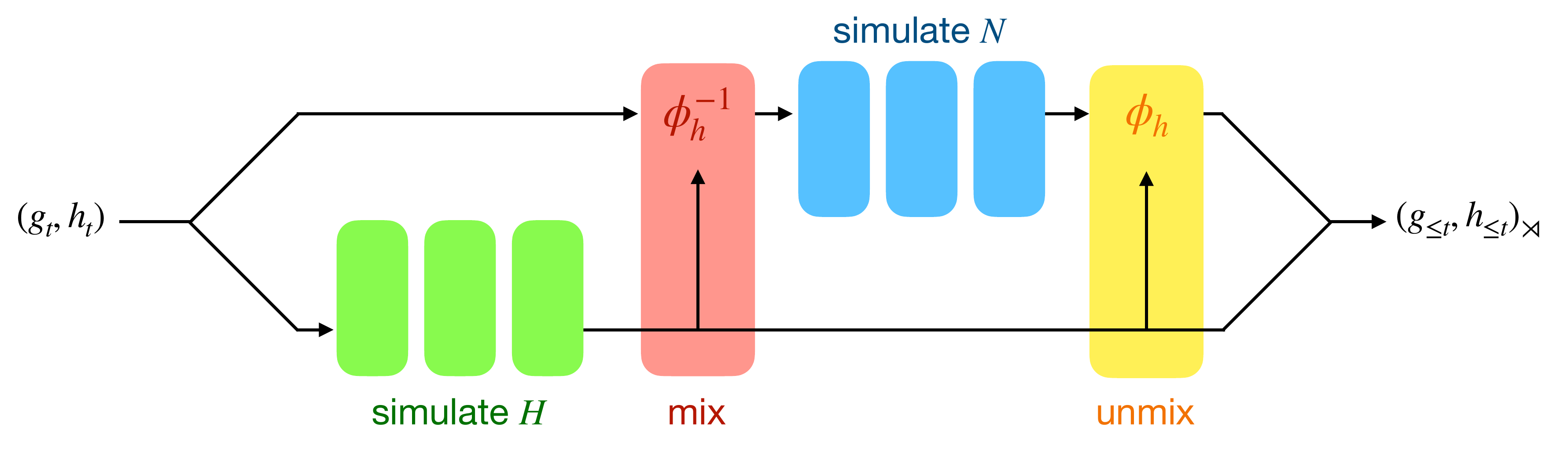}
    \caption{Recursive construction for simulating the semidirect product $N \rtimes H$. The quotient group $H$ is simulated first; these outputs are used to ``re-map'' the inputs into the simulator for $N$.}
    \label{fig:semidirect-product}
\end{figure}

\paragraph{Step (iii).} Now, as a harder (and conceptually crucial) case, we show how to simulate a group which is a semidirect product of two groups we already know how to simulate. This encompasses the direct product as a special case, but can now handle some non-abelian groups which admit such decompositions (like the dihedral group $D_{2n}$). The catch is that we will have to simulate these groups using a \emph{sequential} cascade of the individual simulators. This is the key lemma which lets us simulate non-abelian groups:
\begin{lemma}[Semidirect product via 4-stage cascade]
\label{lem:semidirect-product-simulation}
Let $G$ be a finite group which is isomorphic to a semidirect product: $G \cong N \rtimes H$, where $N$ is a normal subgroup of $G$. Let $T \geq 1$. Suppose $N, H$ admit simulations $\Sim_N := \Sim(N,T), \Sim_H := \Sim(H,T)$. Then, there is a simulation of $G$, $\Sim_\rtimes := \Sim(G, T)$, whose sizes satisfy:
\begin{itemize}
\item[]
\begin{itemize}
    \item $\Sim_\rtimes.\mathsf{depth} = \Sim_N.\mathsf{depth} + \Sim_H.\mathsf{depth}  + 2$.
    \item $\Sim_\rtimes.\mathsf{dim} = \Sim_N.\mathsf{dim} + \Sim_H.\mathsf{dim} $.
    \item $\Sim_\rtimes.\mathsf{heads} = \max \{ \Sim_N.\mathsf{heads}, \Sim_H.\mathsf{heads}\} $.
    \item $\Sim_\rtimes.\mathsf{headDim} = \max \{ \Sim_N.\mathsf{headDim}, \Sim_H.\mathsf{headDim} \} $.
    \item $\Sim_\rtimes.\mathsf{mlpWidth} = \max \{ \Sim_{\{N,H\}}.\mathsf{mlpWidth}, 4 |G| \} $.
    \item $\Sim_\rtimes.\mathsf{normBound} \leq \max \{ \Sim_{\{N,H\}}.\mathsf{normBound}, 6 \, \Sim_{\{N,H\}}.\mathsf{repSize}, \Sim_N.\mathsf{repDim} + \Sim_H.\mathsf{repDim} \} $.
    \item $\Sim_\rtimes.\mathsf{repDim} = \Sim_N.\mathsf{repDim} + \Sim_H.\mathsf{repDim} $.
    \item $\Sim_\rtimes.\mathsf{repSize} = \max \{ \Sim_N.\mathsf{repSize}, \Sim_H.\mathsf{repSize}\} $.
\end{itemize}
\end{itemize}
\end{lemma}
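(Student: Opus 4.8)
The plan is to reduce the left-multiplication word problem in $G \cong N \rtimes H$ to two independent such problems, one in $H$ and one in $N$, glued by two position-wise ``rewiring'' maps, arranged as a four-stage cascade. Write each input symbol $\sigma_i \in \Sigma = G$ as a pair $(n_i, h_i)$ with $n_i \in N$, $h_i \in H$, using the group law $(n,h)(n',h') = (n\,\phi_h(n'),\, hh')$, so the canonical semiautomaton's state at time $t$ is $q_t = \sigma_t \sigma_{t-1} \cdots \sigma_1 = (q_t^N, q_t^H)$. Since the projection $G \to H$, $(n,h) \mapsto h$, is a surjective homomorphism with kernel $N$, we have $q_t^H = h_t h_{t-1} \cdots h_1 =: P_t$, which is exactly the state at time $t$ of the canonical semiautomaton for $H$ run on $h_{1:T}$; this is computed by $\Sim_H$.

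The key algebraic step is that the ``twisted'' $N$-component can be untwisted by conjugating each factor with the running $H$-prefix. Unrolling the group law gives $q_t^N = n_t\,\phi_{h_t}(n_{t-1})\,\phi_{h_t h_{t-1}}(n_{t-2}) \cdots$, and applying the automorphism $\phi_{P_t^{-1}}$, together with the homomorphism property of $\phi : H \to \Aut(N)$ and the identity $P_t^{-1}(h_t \cdots h_{i+1}) = P_i^{-1}$, collapses this to
\[ \phi_{P_t^{-1}}(q_t^N) = m_t\, m_{t-1} \cdots m_1, \qquad m_i := \phi_{P_i^{-1}}(n_i), \]
hence $q_t^N = \phi_{P_t}(m_t m_{t-1} \cdots m_1)$. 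The product $m_t \cdots m_1$ is precisely the state at time $t$ of the canonical semiautomaton for $N$ on inputs $m_{1:T}$, computed by $\Sim_N$. Crucially, $m_i$ is a \emph{position-local} function of $(n_i, P_i) \in N \times H$, a domain of size $|G|$, and the final recombination $q_t = (\phi_{P_t}(m_t\cdots m_1),\, P_t)$ is a function of $(m_t\cdots m_1,\, P_t) \in N \times H$, again of size $|G|$; both are finite lookup tables realizable by Lemma~\ref{lem:mlp-nd}.

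Assembling the network: Stage 1 embeds $\sigma_i$ so $h_i$ enters $\Sim_H$'s input encoding while $n_i$ (its index) is carried along in spare coordinates (always available, e.g. the padding coordinate) via residual connections, and runs $\Sim_H$, padded with identity layers so its output $\Sim_H.E(P_i)$ persists; Stage 2 is one layer with attention set to identity whose MLP implements $(n_i, P_i) \mapsto \Sim_N.E(m_i)$ into the $\Sim_N$ sub-workspace (Lemma~\ref{lem:mlp-nd}), retaining $P_i$; Stage 3 runs $\Sim_N$ on $m_{1:T}$, padded to retain $P_t$; Stage 4 is one identity-attention layer whose MLP maps $(\Sim_N.E(m_t\cdots m_1),\, \Sim_H.E(P_t))$ to the concatenated canonical encoding $(\Sim_N.E(q_t^N),\, \Sim_H.E(P_t)) = \Sim_\rtimes.E(q_t)$. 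Because $\Sim_N$ follows $\Sim_H$ sequentially and the two glue layers have trivial attention, the depth is $\Sim_N.\mathsf{depth}+\Sim_H.\mathsf{depth}+2$; heads and head-dimension are the maxima of the two subsimulators'; the embedding dimension is their sum (the $O(1)$ scratch coordinates absorbed); the representation is the concatenation $(n,h)\mapsto(\Sim_N.E(n),\Sim_H.E(h))$, so $\mathsf{repDim}$ sums and $\mathsf{repSize}$ takes the max; the MLP width is $\max\{\Sim_{\{N,H\}}.\mathsf{mlpWidth},\, 4|G|\}$ from Lemma~\ref{lem:mlp-nd} with domain size $\le|G|$; and the norm bound follows from Lemma~\ref{lem:mlp-nd} with $\Delta=1$, $B_x \le \Sim_{\{N,H\}}.\mathsf{repSize}$, and input dimension $\le \Sim_N.\mathsf{repDim}+\Sim_H.\mathsf{repDim}$, giving $\max\{\Sim_{\{N,H\}}.\mathsf{normBound},\, 6\,\Sim_{\{N,H\}}.\mathsf{repSize},\, \Sim_N.\mathsf{repDim}+\Sim_H.\mathsf{repDim}\}$. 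One also checks the workspace invariant is preserved: every map sends integer vectors to integer vectors and $e_G = (e_N,e_H)$ to the zero vector, matching the base-case conventions. The main obstacle is really just the untwisting identity above — in particular observing that $m_i$ depends only on the current input and the already-computed $H$-prefix (so it can be produced by a single inserted position-wise layer) and that its domain has size only $|G|$ rather than $|G|\cdot|H|$; everything downstream is the routine gadget-composition-with-parameter-tracking pattern already used for the direct product (Lemma~\ref{lem:direct-product-simulation}).
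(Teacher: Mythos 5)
Your proposal is correct and follows essentially the same route as the paper's proof: simulate the quotient $H$ first, use a single position-wise MLP layer to apply the untwisting map $g_t \mapsto \phi_{h_{\leq t}}^{-1}(g_t)$ (your $m_i = \phi_{P_i^{-1}}(n_i)$), run the simulator for $N$ on the untwisted inputs, and undo the basis change in one final MLP layer, with the same depth/width/norm accounting via Lemma~\ref{lem:mlp-nd}. The algebraic identity and the four-stage cascade are identical to the paper's construction, so there is nothing further to reconcile.
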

\begin{proof}
The intuition is as follows, using the dihedral group $D_{2n} \cong C_n \rtimes C_2$ as an example:
\begin{itemize}
    \item For simplicity, let us think of the ``reversible car on a circular world'' semiautomaton, whose transformation semigroup is $D_{2n}$. Its state consists of a direction $\in \{+1, -1\}$, and a position $\in \{0, 1, \ldots, n-1\}$. It has two types of inputs: ``advance by $i$'' (increment the position by $i$ in the current direction, modulo $n$), and ``reverse'' (flip the sign of the direction). Our simulation task is to track the car's state sequence, given a sequence of inputs (in constant depth, of course).
    \item It is intuitively clear that we can (and should) compute the sequence corresponding to ``direction at time $t$'', which is equivalent to simulating the parity semiautomaton.
    \item We will convert the ``advance'' moves via a ``basis transformation'': whenever the current direction is $-1$, an ``advance by $i$'' should be converted into $-i$. Then, we have reduced the problem to the prefix sum.
\end{itemize}

\paragraph{Algorithm.} This intuition essentially shows us how to implement arbitrary semidirect products; we derive the basis change from $\phi$.
Before implementing it with Transformer operations, we formalize this ``basis transformation''.
Recall that by the definition of a semidirect product, the elements of $N \rtimes H$ can be written as pairs $(g \in N,h \in H)$, equipped with a homomorphism $\phi : h  \rightarrow \Aut(N)$ which specifies a multiplication rule:
    \[(g,h) \cdot (g',h') := (g\phi_{h}(g'),hh').\]
Let us write down the properties of $\phi$:
\begin{itemize}
    \item $\phi$ is a homomorphism. That is, $\phi_{h \cdot h'} = \phi_{h}( \phi_{h'} (\cdot) ) = \phi_{h} \circ \phi_{h'}$ as permutations on $N$.
    \item The output of that homomorphism, $\phi_h$, is \emph{also} a homomorphism. That is,
    $\phi_h (gg') = \phi_h(g) \cdot \phi_h(g')$.
\end{itemize}

Let us roll out the definition of the semidirect product, given a sequence of inputs $(g_t, h_t)$:
\[ (g_2, h_2) \cdot (g_1, h_1) = (g_2 \cdot \phi_{h_2}(g_1), h_2 h_1), \]
\[ (g_3, h_3) \cdot (g_2, h_2) \cdot (g_1, h_1) = (g_3 \cdot \phi_{h_3}( g_2 \cdot \phi_{h_2}(g_1) ), h_3 h_2 h_1), \]
\[ (g_4, h_4) \cdots (g_1, h_1) = (g_4 \cdot \phi_{h_4}( g_3 \cdot \phi_{h_3}( g_2 \cdot \phi_{h_2}(g_1) ) ), h_4 h_3 h_2 h_1). \]
In general, by induction, letting $(g_{\leq t}, h_{\leq t})$ denote $(g_t, h_t) \cdots (g_1, h_1)$, we have
\[ g_{\leq t} = g_t \cdot \phi_{h_t}(g_{t-1}) \cdot \phi_{h_t h_{t-1}} (g_{t-2}) \cdots
\phi_{h_t \ldots h_{3}} (g_2) \cdot
\phi_{h_t \ldots h_{2}} (g_1). \]
Applying $\phi_{h_{\leq t}}^{-1}$ on both sides, we notice that
\[ \phi_{h_{\leq t}}^{-1}(g_{\leq t}) = \phi_{h_{\leq t}}^{-1} (g_t) \cdot
\phi_{h_{\leq t-1}}^{-1} (g_{t-1}) \cdots
\phi_{h_{\leq 2}}^{-1} (g_2) \cdot
\phi_{h_{\leq 1}}^{-1} (g_1). \]
Thus, it suffices to compute each $h_{\leq t} = h_t h_{t-1} \ldots h_1$, map each $g_t \mapsto \phi_{h_{\leq t}}^{-1}(g_t) $, compute the prefix products in these ``coordinates'', then invert the mapping to get back $g_{\leq t}$.

\paragraph{Implementation.} Like before, we partition the embedding dimension in our construction $\Sim_\rtimes$ into blocks, one for each component simulator. Let us index the dimensions by the $d_N := \Sim_N.\mathsf{dim}$ indices in the ``$N$ channel'' and analogously for the $d_H$-dimensional ``$H$ channel''. We choose the canonical encoding $E$ to map elements to their individual channels:
\[E(g,h) = \Sim_N.E(g) \text{\emph{\;\;(in the $N$ channel)}} + \Sim_H.E(h) \text{\emph{\;\;(in the $H$ channel)}} .\]

We proceed to specify the construction layer-by-layer. Let $L_{\{N,H\}}$ denote $\Sim_{\{N,H\}}.\mathsf{depth}$.

\paragraph{Layers $1$ through $L_H$: quotient group simulation.} As suggested by the intuitive sketch, we begin with $L_H$ Transformer layers, which are just a copy of $\Sim_H.\theta$, reading and writing in the $H$ channel, with a parallel residual layer in the $N$ channel. So far, after these $L_H$ layers, the output at each position $t$ is an integer vector, whose $H$ channel contains $h_{\leq t}$, and whose $N$ channel contains $\Sim_N.E(g_t)$.

\paragraph{Layer $L_H+1$: basis change.} Now, let us add one more ``mixing'' Transformer layer, whose attention block is identity\footnote{Even when an attention block simply implements identity, we choose to include it, rather than combining the preceding and subsequent MLPs into a single MLP. This is to ensure that if we compose a number of Transformer layers that depends on $|Q|$, the depth of each MLP is bounded by an absolute constant.}; we only need a 3-layer MLP block, which represents the function
\[ (g \in N, h \in H) \mapsto \phi_h^{-1}(g). \]
To do this, we invoke Lemma~\ref{lem:mlp-nd} (choosing the output to be in the same representation as that used by $\Sim_N.E$, in the $N$ channel), with
\[\Delta = 1, d_\mathrm{in} = \Sim_N.\mathsf{repDim} + \Sim_H.\mathsf{repDim},\]
\[B_x = \max\{ \Sim_N.\mathsf{repSize}, \Sim_H.\mathsf{repSize} \}, B_y =  \Sim_N.\mathsf{repSize},\]
giving us a construction with
\[d_1 \leq 4(|N| + |H|), \quad d_2 \leq |N| \cdot |H|, \]
\[\norm{W_1}_\infty \leq 4, \quad \norm{b_1}_\infty \leq 6 \max\{ \Sim_N.\mathsf{repSize}, \Sim_H.\mathsf{repSize} \},\]
\[\norm{W_2}_\infty \leq 1, \quad \norm{b_2}_\infty \leq \Sim_N.\mathsf{repDim} + \Sim_H.\mathsf{repDim},
\quad \norm{W_3}_\infty \leq \Sim_N.\mathsf{repSize}. \]
We also add residual connections in the $H$ channel. In summary, after this layer, the output at each position $t$ is an integer vector, whose $H$ channel contains $h_{\leq t}$, and whose $N$ channel contains $\phi_{h \leq t}^{-1} (g_t)$.

\paragraph{Layers $L_H+1$ through $L_H+L_N+1$: normal group simulation.} The next $L_N$ layers are a copy of $\Sim_H.\theta$, with residual connections in the $H$ channel. After these layers, the output at each position $t$ is an integer vector, whose $H$ channel contains $h_{\leq t}$, and whose $N$ channel contains $\phi_{h \leq t}^{-1} (g_{\leq t})$.

\paragraph{Layer $L_H+L_N+2$: undoing the basis change.} Now, we add one more Transformer layer, whose attention block is identity; we will again use a 3-layer MLP block to represent the inverse of our mapping function
\[ (g \in N, h \in H) \mapsto \phi_h(g). \]
This uses Lemma~\ref{lem:mlp-nd}, with exactly the same bounds.

At the end of this final ``unmixing'' layer, the output at each position $t$ is an integer vector, whose $H$ channel contains $h_{\leq t}$, and whose $N$ channel contains $g_{\leq t}$; thus, this is a valid simulation of the semidirect product.

This construction is sketched in Figure~\ref{fig:semidirect-product}.
\end{proof}

\paragraph{Step (iv).} Note that $H \cong G / N$ does \emph{not} imply that $G$ is a semidirect product of $N$ and $H$. Thus, although simulating semidirect products allows us to handle some families of non-abelian groups, this does not yet allow us to handle general solvable groups (i.e. general steps of a composition series, even with a cyclic quotient group). The smallest example is the non-abelian \emph{quaternion group} $Q_8$, the group of unit quaternions under multiplication, which cannot be realized as a semidirect product of subgroups. Instead, we need to appeal to the Krasner–Kaloujnine \emph{universal embedding theorem} \citep{krasner1951produit}: a characterization of all of the groups $G$ which are extensions of $N$ by $H$, as subgroups of the wreath product $N \wr H$.

\begin{figure}
    \centering
    \includegraphics[width=0.95\linewidth]{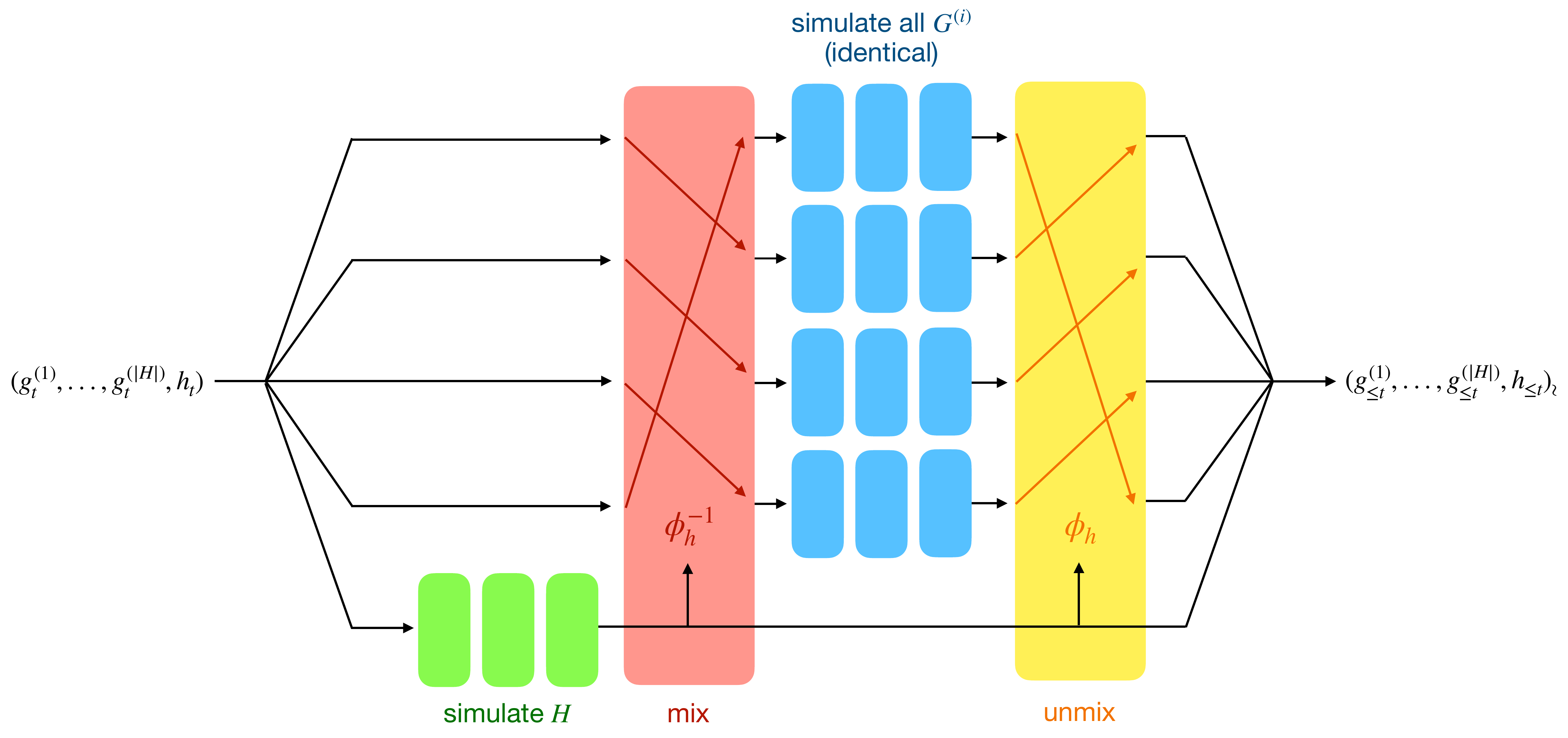}
    \caption{Recursive construction for simulating the wreath product $N \wr H$. One independent copy of $N$ is instantiated for each element of $H$, while the simulator for $H$ permutes them.}
    \label{fig:wreath-product}
\end{figure}

\begin{lemma}[Wreath product via direct and semidirect products]
\label{lem:wreath-product-simulation}
Let $G$ be a finite group which is isomorphic to a wreath product: $G \cong N \wr H$.
Let $T \geq 1$. Suppose $N, H$ admit simulations $\Sim_N := \Sim(N,T), \Sim_H := \Sim(H,T)$. Then, there is a simulation of $G$, $\Sim_\wr := \Sim(G, T)$. In the case where $\Sim_\wr.\mathsf{repDim} = 1$, the sizes satisfy:
\begin{itemize}
\item[]
\begin{itemize}
    \item $\Sim_\wr.\mathsf{depth} = \Sim_N.\mathsf{depth} + \Sim_H.\mathsf{depth} + 2 $.
    \item $\Sim_\wr.\mathsf{dim} = |H| \cdot \Sim_N.\mathsf{dim} + \Sim_H.\mathsf{dim} $.
    \item $\Sim_\wr.\mathsf{heads} = \max \{ |H| \cdot \Sim_N.\mathsf{heads}, \Sim_H.\mathsf{heads}\} $.
    \item $\Sim_\wr.\mathsf{headDim} = \max\{\Sim_N.\mathsf{headDim}, \Sim_H.\mathsf{headDim}\} $.
    \item $\Sim_\wr.\mathsf{mlpWidth} = \max \{ |H| \cdot \Sim_N.\mathsf{mlpWidth}, \Sim_H.\mathsf{mlpWidth}, 5 |H|^2 |N|\} $.
    \item $\Sim_\wr.\mathsf{normBound} \leq \max \{ \Sim_{\{N,H\}}.\mathsf{normBound}, 6 \, |H| \} $.
    \item $\Sim_\wr.\mathsf{repDim} = |H| \cdot \Sim_N.\mathsf{repDim} + 1 $.
    \item $\Sim_\wr.\mathsf{repSize} = \max \{ \Sim_N.\mathsf{repSize}, \Sim_H.\mathsf{repSize}\} $.
\end{itemize}
\end{itemize}
\end{lemma}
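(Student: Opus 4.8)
The plan is to realize $N \wr H$ as the iterated product it is by definition, and then stack the two already-established product constructions. By definition $N \wr H = (N \times \cdots \times N) \rtimes H$ with $|H|$ direct-product factors, where the structure homomorphism $\phi : H \to \Aut(N \times \cdots \times N)$ sends each $h$ to the automorphism permuting the $|H|$ coordinates according to $\pi_h$, the permutation induced by right multiplication by $h$ (this is exactly Eq.~\ref{eq:wreath-product-automorphism}). So I would first build $\Sim_{N^{|H|}} := \Sim(N \times \cdots \times N, T)$ by applying Lemma~\ref{lem:direct-product-simulation} to $|H|$ copies of $\Sim_N$, and then combine $\Sim_{N^{|H|}}$ (playing the role of the normal subgroup) with $\Sim_H$ (the quotient) via the four-stage cascade of Lemma~\ref{lem:semidirect-product-simulation}. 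Correctness is then immediate and requires no new algebra: it is the same roll-out used in the proof of Lemma~\ref{lem:semidirect-product-simulation}, specialized to $\phi$ being a block permutation — compute the prefix products $h_{\leq t}$ in the $H$-channel, apply the basis change $g_t \mapsto \phi_{h_{\leq t}}^{-1}(g_t)$ to the $N^{|H|}$-channel, take prefix products there, and undo the basis change.

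The only place the generic semidirect-product construction is too wasteful is the basis change: Lemma~\ref{lem:semidirect-product-simulation} would memorize $(g,h) \mapsto \phi_h^{-1}(g)$ with an MLP of width $|N^{|H|}| \cdot |H| = |N|^{|H|} |H|$, which is doubly exponential in $|H|$. Here $\phi_h$ is not an arbitrary automorphism; it merely permutes the $|H|$ blocks of the integer encoding. In the regime the statement covers (component representations one-dimensional, which is what the Krohn--Rhodes application needs, as the base cases there are cyclic groups of prime order), the $N^{|H|}$-part is encoded by a vector $(z_1,\ldots,z_{|H|})$ of integers in $\{0,\ldots,|N|-1\}$, and $\phi_h^{-1}$ sends it to $(z_{\pi_h^{-1}(1)},\ldots,z_{\pi_h^{-1}(|H|)})$. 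I would therefore replace that step with a single MLP block that, for each output coordinate $k \in [|H|]$ and each of the $|H|$ possible values of the $H$-encoding, outputs the correct source entry $z_{\pi_h^{-1}(k)}$; using the indicator gadgets of Lemma~\ref{lem:mlp-1d}/\ref{lem:mlp-nd}, this costs $O(|H|^2 |N|)$ hidden units (one per $(\text{output coordinate},\, h,\, \text{value of the relevant }z)$ triple) and $O(|H|)$ weight norm, matching the claimed $\mathsf{mlpWidth} = 5|H|^2|N|$ and $\mathsf{normBound} \leq 6|H|$. The "unmixing" layer uses the same construction with $\pi_h$ in place of $\pi_h^{-1}$; one auxiliary coordinate is carried to stage the $\pi_h$-index cleanly, accounting for the $+1$ in $\mathsf{repDim}$.

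With the basis change settled, I would wire the layers exactly as in Lemma~\ref{lem:semidirect-product-simulation}: $\Sim_H.\mathsf{depth}$ layers for the quotient (residual connections in the $N^{|H|}$-channel), one layer for $\phi^{-1}$, then $\Sim_{N^{|H|}}$'s layers — which, by Lemma~\ref{lem:direct-product-simulation}, is just $\Sim_N$ replicated $|H|$ times in parallel and so has depth $\Sim_N.\mathsf{depth}$ — and one layer for $\phi$. This yields $\mathsf{depth} = \Sim_N.\mathsf{depth} + \Sim_H.\mathsf{depth} + 2$, and the remaining size parameters follow by combining the $\Sim_{N^{|H|}}$-bounds from Lemma~\ref{lem:direct-product-simulation} (which multiply $\Sim_N$'s $\mathsf{dim}$, $\mathsf{heads}$, $\mathsf{mlpWidth}$, and $\mathsf{repDim}$ by $|H|$) with $\Sim_H$'s, taking the max with the $O(|H|^2|N|)$ basis-change budget. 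I expect the main obstacle to be the bookkeeping in the basis-change step: one must (a) keep every intermediate quantity in the canonical consecutive-integer encoding with the identity mapped to zero, (b) verify that $\pi_h$ (right multiplication by $h$) is precisely the permutation induced on the $|H|$ copies by $\phi$, and (c) check that the perturbation-robust interpolation of Lemma~\ref{lem:mlp-1d}/\ref{lem:mlp-nd} still applies to the $1/\poly(T)$-noisy outputs of the upstream attention blocks. Everything else is routine composition of the previously established gadgets.
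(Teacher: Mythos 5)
Your proposal is correct and follows essentially the same route as the paper: realize $N \wr H$ as $(N \times \cdots \times N) \rtimes H$ via Lemma~\ref{lem:direct-product-simulation} followed by Lemma~\ref{lem:semidirect-product-simulation}, and replace the generic memorization of $\phi_h^{-1}$ with a specialized MLP exploiting that $\phi_h$ merely permutes the $|H|$ blocks, yielding the claimed $O(|H|^2|N|)$ width and $O(|H|)$ norm (the paper does this with parallel copies of the function-composition gadget of Lemma~\ref{lem:mlp-func-composition}, which is the same indicator-based trick). The only cosmetic difference is your reading of the $+1$ in $\mathsf{repDim}$, which in the paper is just the one-dimensional $H$-channel encoding of $h_{\leq t}$ --- i.e., exactly the coordinate that determines $\pi_h$, so the constructions coincide.
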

\begin{proof}
Even though the wreath product's algebraic structure can be very complex, the construction just requires us to implement its relatively simple description. Applying Lemma~\ref{lem:direct-product-simulation}, we have a network $\Sim_\times$ which simulates $N \times \ldots \times N$. Then, we simply apply Lemma~\ref{lem:semidirect-product-simulation}, using $\Sim_H$ to ``re-map'' inputs to $\Sim_\times$ for the normal subgroup.
This construction is sketched in Figure~\ref{fig:wreath-product}.

\paragraph{Concise implementation of reindexing.} We can make one interesting improvement over a generic application of Lemmas~\ref{lem:direct-product-simulation} and \ref{lem:semidirect-product-simulation}: the structure of the mixing function $\phi$, which specifies the semidirect product, is extremely regular. Very fortunately, the structure of $\phi$ allows us to avoid any dependence on the size of the wreath product group  ($|N|^{|H|} \cdot |H|$) in the size measures of the implementation. A general automorphism on $N \times \cdots \times N$ is specified by its $|N|^{|H|}$ values. However, in this case, $\phi$ is just a permutation, specified by how each of the $|H|$ channels should switch places. Thus, much like the function composition gadget in Theorem~\ref{thm:shortcut-log-depth}, we can construct a simpler MLP than the generic one from Lemma~\ref{lem:mlp-nd}.

Specifically, we would like to approximate the function $\phi : H \times (N \times \cdots \times N) \rightarrow (N \times \cdots \times N) $, which simply applies $\pi_h$ to the indices:
\[ \phi_h (g^{(1)}, \ldots, g^{(|H|)}) := (g^{(\pi_h(1))}, \ldots, g^{(\pi_h(|H|))}). \]
In the component neural networks' representation space, we need the MLP to implement
\[ \left( \Sim_N.E(g^{(1)}), \ldots, \Sim_N.E(g^{(|H|)}), \Sim_H.E(h) \right) \mapsto
\left( \Sim_N.E(g^{(\pi_h(1))}), \ldots, \Sim_N.E(g^{(\pi_h(|H|))} ) \right), \]
recalling that the elements of $g, h$ are represented by integer vectors with $\infty$-norm at most $\Sim_{\{N,H\}}.\mathsf{repBound}$.
Notice that when the representation of $|H|$ is a single integer, restricting to any particular coordinate in the representation of an element $g$, this is the same composition problem of function transition maps solved by Lemma~\ref{lem:mlp-func-composition} in the proof of Theorem~\ref{thm:shortcut-log-depth}, which uses its left inputs to permute its right inputs (modulo converting the representations from $\{0, \ldots, |H|-1\}$ to $\{1, \ldots, |H|\}$, which we can do by shifting the indicators at the input and final-layer output weights). Thus, $|N| \cdot \Sim_{N}.\mathsf{dim}$ parallel copies of the 3-layer function composition MLP suffice, yielding
\[ d_1 = 4|H|^2 |N| + |H| \cdot |N| < 5 |H|^2 |N|, \quad d_2 = |H|^2 |N|, \]
\[ \norm{W_1}_\infty \leq 4|H|, \quad \norm{b_1}_\infty \leq 6|H|, \quad \norm{W_2'' W_2'}_\infty \leq 4|H|, \quad \norm{b_2''}_\infty = |H|, \quad \norm{W_3}_\infty = 1. \]
When the information about group elements in $H$ is encoded by multiple integers, it is straightforward to extend this construction, by replacing the one-dimensional indicator with the multidimensional indicator from Lemma~\ref{lem:mlp-nd}. We will skip the details of this case, since our final results are only about solvable groups; when we want to simulate a general group extension, it will always come from the composition series, so that $H$ is always a cyclic group of prime order.
\end{proof}

Thus, for general group extensions $G$, we can construct $\Sim_\wr$, the wreath product simulator for $N \wr H$, and combine the individual simulators. Note that we can throw away the excess group elements from the simulator: only include in $\Sim_\wr.E, \Sim_\wr.W$ the group elements which correspond to the subgroup isomorphic to $G$. Then, no part of this construction needs to maintain a width or matrix entry scaling with $|N \wr H|$.

Putting all of this together, we state an intermediate theorem, which is our most general result for groups:
\begin{theorem}[Simulation of solvable groups]
\label{thm:groups-main}
Let $G$ be a solvable group which is isomorphic to a permutation group on $n$ elements. Let $T \geq 1$. Then, there is a Transformer network $\Sim := \Sim(G,T)$ which simulates $G$ at length $T$, for which we have the following size bounds:
\begin{itemize}
\item[]
\begin{itemize}
    \item $\Sim.\mathsf{depth} \leq 3 \log_2 |G|$.
    \item $\Sim.\mathsf{dim} \leq 2 |G|$.
    \item $\Sim.\mathsf{heads} \leq 2 |G| $.
    \item $\Sim.\mathsf{headDim} = 1$.
    \item $\Sim.\mathsf{mlpWidth} \leq 20 nT |G|$.
    \item $\Sim.\mathsf{normBound} \leq 6nT$.
    \item $\Sim.\mathsf{repDim} \leq 2 |G|$.
    \item $\Sim.\mathsf{repSize} \leq n$.
\end{itemize}
\end{itemize}
\end{theorem}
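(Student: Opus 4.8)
## Proof Plan for Theorem~\ref{thm:groups-main}

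The plan is to induct on the length of a composition series for $G$, using the building blocks already assembled: Lemma~\ref{lem:cyclic} as the base case, and Lemmas~\ref{lem:direct-product-simulation}, \ref{lem:semidirect-product-simulation}, \ref{lem:wreath-product-simulation} as the inductive steps. First I would fix a composition series $G = H_n \triangleright H_{n-1} \triangleright \cdots \triangleright H_0 = 1$; since $G$ is solvable, each composition factor $H_{i+1}/H_i$ is a cyclic group of prime order, hence a group we can simulate directly via $\Sim_{C_p}$ from Lemma~\ref{lem:cyclic}. Proceeding from the bottom, I would maintain a simulator $\Sim(H_i, T)$ and build $\Sim(H_{i+1},T)$: since $H_i \triangleleft H_{i+1}$ with quotient $Q_i := H_{i+1}/H_i$ simple cyclic, the Krasner--Kaloujnine universal embedding theorem realizes $H_{i+1}$ as a subgroup of the wreath product $H_i \wr Q_i$, so Lemma~\ref{lem:wreath-product-simulation} (composing a direct product via Lemma~\ref{lem:direct-product-simulation} and a semidirect product via Lemma~\ref{lem:semidirect-product-simulation}) yields $\Sim(H_{i+1},T)$, after discarding the group elements outside the copy of $H_{i+1}$ as noted following Lemma~\ref{lem:wreath-product-simulation}. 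Unwinding the recursion $n$ times gives $\Sim(G,T)$.

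The bulk of the work is then tracking the complexity measures through the recursion and verifying they collapse to the clean bounds in the statement. The key observation that makes this work is that $n \leq \log_2 |G|$ (each composition factor has size $\geq 2$), and that \emph{each} of the $n$ inductive steps adds only a constant number of layers: Lemma~\ref{lem:wreath-product-simulation} gives $\Sim_\wr.\mathsf{depth} = \Sim_N.\mathsf{depth} + \Sim_H.\mathsf{depth} + 2$, and since $\Sim_H$ here is always a depth-$1$ cyclic simulator, each step costs $\leq 3$ layers, yielding $\Sim.\mathsf{depth} \leq 3\log_2|G|$. For the width and norm bounds, the crucial point is the ``concise implementation of reindexing'' in Lemma~\ref{lem:wreath-product-simulation}: because the automorphism $\phi$ is just a coordinate permutation, none of the size measures inherit a factor of $|N \wr H| = |N|^{|H|}\cdot|H|$; instead everything stays polynomial in $|H_{i+1}|$ and, after the recursion, polynomial in $|G|$. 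I would verify by induction that $\Sim(H_i,T).\mathsf{dim}, \mathsf{heads}, \mathsf{repDim} \leq 2|H_i| \leq 2|G|$, that $\mathsf{headDim}$ stays $1$ (true for the cyclic base case, preserved by all three lemmas' $\max$ formulas), that $\mathsf{repSize} \leq n$ (the representations never need more than $n$ symbols once we restrict to a permutation group on $n$ elements — this requires the remark that $G$ itself, not the abstract quotients, is realizable with $\leq n$ states, and re-encoding each intermediate simulator's outputs accordingly), and that $\mathsf{mlpWidth} \leq 20nT|G|$ and $\mathsf{normBound} \leq 6nT$, where the $T$ factor enters only through the cyclic base case (Lemma~\ref{lem:cyclic}'s $\mathsf{mlpWidth} = 4|C_p|T$ and $\mathsf{normBound} \leq 6|C_p|T \leq 6nT$) and is never amplified by the wreath/semidirect/direct steps since those contribute only $\mathrm{poly}(|H|)$-sized MLP blocks with $O(|H|)$-bounded weights.

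A subtle point I would need to handle carefully: the quotient groups $H_{i+1}/H_i$ appearing in the composition series need not be realizable as small permutation groups in any canonical way (as the footnote after Definition~\ref{def:canonical-group-semiautomaton} warns), but for solvable $G$ they are cyclic of \emph{prime} order, hence trivially simulable with $\mathsf{repSize}$ equal to that prime $\leq |G|$; and the intermediate groups $H_i$ are subgroups of $G$, hence permutation groups on the same $n$ elements, so I can always re-encode into an $n$-symbol representation before the next step. I would also need to confirm that composing $n$ layers each with bounded-constant-depth MLP blocks keeps the per-MLP depth $O(1)$ (as flagged in the footnote inside the proof of Lemma~\ref{lem:semidirect-product-simulation}), so that $\Sim.\mathsf{depth}$ as defined — the count of $f_{\mathrm{mlp}}\circ f_{\mathrm{attn}}$ blocks — is genuinely $O(\log|G|)$ rather than hiding extra factors. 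The main obstacle is bookkeeping discipline rather than any conceptual difficulty: making sure that at every one of the $n$ recursive steps the ``re-normalization'' of encodings back to $\leq n$ symbols is legitimate (it requires an $O(|H_i|)$-width MLP relabeling, absorbed into the bounds), and that the additive-in-depth, $\max$-or-sum-in-width structure of the three product lemmas really does telescope to the stated closed forms without any multiplicative blowup.
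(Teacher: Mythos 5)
Your proposal is correct and follows essentially the same route as the paper: fix a composition series (solvability makes every factor cyclic of prime order), simulate each cyclic quotient via Lemma~\ref{lem:cyclic}, glue with the wreath-product construction of Lemma~\ref{lem:wreath-product-simulation} at each of the $\leq \log_2|G|$ steps, and telescope the size bounds exactly as you describe. The only small simplification the paper makes is on $\mathsf{repSize}$: since $|G|$ divides $n!$, every prime composition factor has order at most $n$, so the base-case bound $\mathsf{repSize}\leq n$ propagates through the $\max$ in the product lemmas with no intermediate re-encoding needed.
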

\begin{proof}
Let
\[G = H_\ell \triangleright H_{\ell-1} \triangleright \cdots \triangleright H_1 \triangleright H_0 = 1 \]
denote the composition series.
Then, because $G$ is solvable, all of the quotient groups $K_i := H_{i+1} / H_i$ are abelian, thus cyclic groups of prime order. Since $G$ is assumed to be a subgroup of $S_n$, none of these primes can be greater than $n$. Thus, every quotient group $K_i$ in the chain satisfies $2 \leq |K_i| \leq n$. Also, note that the length of the composition series $\ell$ is at most $\log_2 (|G|)$ (since each inclusion halves the size of the group).

We start with a simulation of $H_1$, which must be a cyclic group, and build the sequence of group extensions recursively until we obtain $G$. In the worst case (in the sense that the implementation size bounds from Lemma~\ref{lem:wreath-product-simulation} are maximized), each step in the composition series must be manifested by a wreath products with $K := C_n$. Recall that we have:
\begin{itemize}
    \item []
\begin{itemize}
    \item $\Sim_{K}.\mathsf{depth} = 1$.
    \item $\Sim_{K}.\mathsf{dim} = 3$.
    \item $\Sim_{K}.\mathsf{heads} = 1$.
    \item $\Sim_{K}.\mathsf{headDim} = 1$.
    \item $\Sim_{K}.\mathsf{mlpWidth} = 4nT$.
    \item $\Sim_{K}.\mathsf{normBound} \leq 6 nT$.
    \item $\Sim_{K}.\mathsf{repDim} = 1$.
    \item $\Sim_{K}.\mathsf{repSize} \leq n$.
\end{itemize}
\end{itemize}
At each step $i = 0, \ldots, \ell-1$,
Lemma~\ref{lem:wreath-product-simulation}, with $H := K_i, N := H_i$, implies:
\begin{itemize}
    \item []
\begin{itemize}
    \item $\Sim_{H_{i+1}}.\mathsf{depth} \leq \Sim_{K_i}.\mathsf{depth} + 3$ \quad (1 more layer to simulate the cyclic group $K_i$, and 2 from the wreath product's mixing operations).
    \item $\Sim_{H_{i+1}}.\mathsf{dim} \leq |K_i| \cdot \Sim_{H_i}.\mathsf{dim} + 1$ \quad (noting that all of the components can reuse the same $\bot$ and positional encoding dimensions).
    \item $\Sim_{H_{i+1}}.\mathsf{heads} \leq |K_i| \cdot \Sim_{H_i}.\mathsf{heads} + 1$.
    \item $\Sim_{H_{i+1}}.\mathsf{headDim} \leq \max\{1, 1, \ldots, 1\} = 1$.
    \item $\Sim_{H_{i+1}}.\mathsf{mlpWidth} \leq \max \{ |K_i| \cdot \Sim_{H_i}.\mathsf{mlpWidth}, 4nT, 5 |K_i|^2 \cdot |H_i|\} $.
    \item $\Sim_{H_{i+1}}.\mathsf{normBound} \leq \max \{ 6nT, 6 \, |K_i| \} $.
    \item $\Sim_{H_{i+1}}.\mathsf{repDim} = |K_i| \cdot \Sim_{H_i}.\mathsf{repDim} + 1$.
    \item $\Sim_{H_{i+1}}.\mathsf{repSize} \leq n$.
\end{itemize}
\end{itemize}

Iterating these recursive inequalities $\ell \leq \lfloor \log_2 T \rfloor$ times gives us the desired bounds. Note that we are using Lagrange's theorem ($\prod_i |K_i| = |G|$), as well as the fact that for positive integers $m_1, \ldots, m_\ell \geq 2$, we have a bound on the series of prefix products: $\sum_i \prod_{j \leq i} m_i \leq 2 \prod_{j \leq \ell} m_i$.

\end{proof}

\subsubsection{Simulating semigroups}
\label{subsubsec:kr-proof-semigroups}

Now, using this construction and the results developed in the previous section for groups, we complete the construction for semigroups:

\begin{itemize}
    \item We combine the memory gate construction (Lemma~\ref{lem:flipflop}) and any network simulating a group to implement the corresponding permutation-reset semiautomaton (Definition~\ref{def:permutation-reset-semiautomaton}), the elements of the cascade in Theorem~\ref{thm:krohn-rhodes}.
    \item To finish, we implement the cascade product (Definition~\ref{def:cascade-product}) of these permutation-reset semiautomata, guaranteed to exist by Theorem~\ref{thm:krohn-rhodes}. This gives the full result.
\end{itemize}

First, we summarize the findings of Lemma~\ref{lem:flipflop}, naming this neural network $\Sim_\mathrm{M}$ in our ``object-oriented'' notation. Note that since we are no longer simulating canonical group semiautomata past this point, $\mathsf{repDim}, \mathsf{repSize}$ are no longer well-defined.
\begin{itemize}
\item[]
\begin{itemize}
    \item $\Sim_M.\mathsf{depth} = 1$.
    \item $\Sim_M.\mathsf{dim} = 4$.
    \item $\Sim_M.\mathsf{heads} = 1$.
    \item $\Sim_M.\mathsf{headDim} = 2$.
    \item $\Sim_M.\mathsf{mlpWidth} = 4|Q|$.
    \item $\Sim_M.\mathsf{normBound} \leq 2T \log(|Q| \, T)$.
\end{itemize}
\end{itemize}

\begin{lemma}[Simulating a permutation-reset semiautomaton]
Let $\gA = (Q, \Sigma, \delta)$ be a permutation-reset semiautomaton (see Definition~\ref{def:permutation-reset-semiautomaton}), and let $G$ denote its permutation group.
Let $T \geq 1, q_0 \in Q$. Let $\Sim_G := \Sim(G,T)$ be a Transformer network which continuously simulates $G$ at length $T$. Then, there is a Transformer network $\Sim_G'$ which continuously simulates $\gA_{T,q_0}$, with size bounds:
\begin{itemize}
\item[]
\begin{itemize}
    \item $\Sim_G'.\mathsf{depth} = \Sim_G.\mathsf{depth} + \Sim_M.\mathsf{depth} + 1 \leq 3 \log_2 |G| + 2$.
    \item $\Sim_G'.\mathsf{dim} = \Sim_G.\mathsf{dim} + \Sim_G.\mathsf{repDim} + \Sim_M.\mathsf{dim} \leq |G|+|Q|+4$.
    \item $\Sim_G'.\mathsf{heads} = \Sim_G.\mathsf{heads} + \Sim_M.\mathsf{heads} \leq 2|G| + 1$.
    \item $\Sim_G'.\mathsf{headDim} = \Sim_G.\mathsf{headDim} + \Sim_M.\mathsf{headDim} + \Sim_G.\mathsf{repDim} \leq |Q| + 3$.
    \item $\Sim_G'.\mathsf{mlpWidth} = \Sim_G.\mathsf{mlpWidth} + \Sim_M.\mathsf{mlpWidth} + |G|^2 |Q| \leq 20nT|G| + 4|Q| + |G|^2|Q|$.
    \item $\Sim_G'.\mathsf{normBound} \leq \max\{\Sim_G.\mathsf{normBound}, \Sim_M.\mathsf{normBound}, 6|Q|\} \leq 6|Q| \, T \log T$.
\end{itemize}
\end{itemize}
\end{lemma}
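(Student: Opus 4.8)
The plan is to split the input alphabet of $\gA$ into its \emph{permutation} symbols and its \emph{reset} symbols (possible by Definition~\ref{def:permutation-reset-semiautomaton}), route the permutations through $\Sim_G$ and the resets through a memory gadget in the style of Lemma~\ref{lem:flipflop}, and glue the two together with one group-action MLP. Concretely, for an input $\sigma_{1:T}$ let $j_t\le t$ be the index of the most recent reset (set $j_t=0$ if there is none), and for each $s$ let $h_s\in G$ be $\delta(\cdot,\sigma_s)$ when $\sigma_s$ is a permutation and $h_s=\mathrm{id}$ when $\sigma_s$ is a reset, with prefix products $h_{\le t}:=h_t\circ\cdots\circ h_1$ and $h_{\le 0}:=\mathrm{id}$. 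Because every symbol strictly after $j_t$ is a permutation, the true state is
\[ q_t \;=\; \big(h_{\le t}\circ h_{\le j_t}^{-1}\big)\big(q(\sigma_{j_t})\big), \qquad q(\sigma_0):=q_0 , \]
where $q(\sigma)$ denotes the common value of a reset $\delta(\cdot,\sigma)$. (The factor $h_{j_t}$ cancels, so it is harmless that resets contribute $\mathrm{id}$.) Hence it suffices to compute all $h_{\le t}$, locate $j_t$ together with the pair $(q(\sigma_{j_t}),h_{\le j_t})$, and apply the group action once.

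First I would have the encoder $\Sim_G'.E$ ``group-ify'' the input: in a reserved $G$-channel it writes $\Sim_G.E(\delta(\cdot,\sigma))$ for a permutation and $\Sim_G.E(\mathrm{id})$ for a reset, and in two further reserved coordinates it records a reset indicator and the value $q(\sigma)$. I would also prepend one padding token carrying $\Sim_G.E(\mathrm{id})$, the reset marker, and value $q_0$; this serves as a virtual reset at ``time $0$'', so that $j_t$ always points somewhere. Running $\Sim_G$ on the $G$-channel (padded with residual-identity layers to depth $\Sim_G.\mathsf{depth}$, carrying the other coordinates through) leaves, at every position $t$, the canonical integer-vector encoding of $h_{\le t}$; I rely here on the discipline from Appendix~\ref{subsubsec:kr-proof-groups} that $\Sim_G$ exposes the simulated element in a fixed representation at \emph{every} position, with $\mathrm{id}\mapsto\mathbf 0$.

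Next I would add one memory layer adapted from Lemma~\ref{lem:flipflop}: an attention head whose query--key score is (a large constant times) $j/T$ on reset-marked positions $j$ and a large negative number elsewhere, so its $\mathrm{softmax}$ concentrates on the nearest reset $j_t$ (or the padding token); its value map extracts the pair $(q(\sigma_{j_t}),h_{\le j_t})$ --- reading $q(\sigma_{j_t})$ from the reserved coordinate and $h_{\le j_t}$ from the $G$-channel --- and deposits it in a fresh channel, after which a small MLP rounds off the $\mathrm{softmax}$ error via Lemma~\ref{lem:softmax-selection}. The final layer has identity attention and a $3$-layer MLP that realizes the finite map $(h_{\le t},h_{\le j_t},q(\sigma_{j_t}))\mapsto (h_{\le t}\circ h_{\le j_t}^{-1})(q(\sigma_{j_t}))$ through Lemma~\ref{lem:mlp-nd}; its domain has at most $|G|^2|Q|$ points of bounded norm, and it tolerates the $1/\poly(T)$ perturbation inherited from the memory layer. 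The decoder $\Sim_G'.W$ reads off the last coordinate. Correctness is exactly the displayed identity, with the padding token supplying the no-reset case ($h_{\le 0}=\mathrm{id}$). The claimed depth ($\Sim_G.\mathsf{depth}+\Sim_M.\mathsf{depth}+1$), embedding dimension (three disjoint channels of sizes $\Sim_G.\mathsf{dim}$, $\Sim_G.\mathsf{repDim}$, $\Sim_M.\mathsf{dim}$), head dimension (the memory head must also pass the $\Sim_G.\mathsf{repDim}$-dimensional value of $h_{\le j_t}$), MLP width (an extra $|G|^2|Q|$ for the action MLP) and weight norm (an extra $O(|Q|)$ from Lemma~\ref{lem:mlp-nd}) then follow by bookkeeping, substituting Theorem~\ref{thm:groups-main} and Lemma~\ref{lem:flipflop}.

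The part I expect to be most delicate is the no-reset fallback. Lemma~\ref{lem:flipflop}'s device of letting the current position be its own fallback breaks here, because the retrieved base point is afterwards acted on by $h_{\le t}$, which is generically nontrivial; the fallback must instead be sent to a genuine position before all inputs, where the prefix product is $\mathrm{id}$ --- this is why the padding token and the convention $\mathrm{id}\mapsto\mathbf 0$ are both needed. The second mild subtlety is making $h_{\le j_t}$ readable by an attention value map at position $j_t$, which is exactly what the uniform canonical-encoding convention guarantees; beyond these two points the argument is channel bookkeeping and propagation of $1/\poly(T)$ error through the function-interpolation lemmas.
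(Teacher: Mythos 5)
Your proposal is correct and follows essentially the same route as the paper's proof: encode permutation symbols into the group channel (resets as the identity), run $\Sim_G$ to get all prefix products $h_{\le t}$, use one memory-lookup attention layer (the Lemma~\ref{lem:flipflop} gadget with $W_V$ extended to also copy the group-channel encoding) to fetch $(q(\sigma_{j_t}),h_{\le j_t})$ from the most recent reset, and finish with a single MLP memorizing $(g,h,q)\mapsto \Phi(gh^{-1})(q)$ via Lemma~\ref{lem:mlp-nd}, which is exactly the source of the $|G|^2|Q|$ width term. Your explicit padding token carrying a virtual reset to $q_0$ with prefix $e_G$ is the same device the paper invokes by "treating $q_0$ like a reset symbol at the beginning of the sequence," so even the delicate no-reset fallback is handled identically.
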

\begin{proof}
Without loss of generality, we will let $Q := [|Q|]$.

We split the embedding space in our construction into two channels: the $\Sim_G.\mathsf{dim}$ dimensions used by $G$, and a channel consisting of 4 additional dimensions, to be used by a copy of the memory semiautomaton, whose symbol set is $Q$. Let us call these the $G$ and $M$ channels. For the reset symbols, let $E_M(\sigma)$ denote the 4-dimensional encoding of $\sigma$ from the memory semiautomaton.

Since we defined $G$ to be isomorphic to the permutation group associated with $\gA$, there is a bijection $\Phi : G \rightarrow S_Q$ between group elements and permutations on $Q$.
We choose the embedding $E$ as follows:
\[E(\sigma) := \begin{cases}
\Sim.E(\Phi^{-1}(\delta(\cdot, \sigma)) \text{ ($G$ channel) } + E_M(\bot) \text{ ($M$ channel) }, & \text{ bijections $\delta(\cdot,\sigma)$} \\
\Sim.E(e_G) \text{ ($G$ channel) } + E_M(q_\sigma) \text{ ($M$ channel) }, & \text{ resets $\delta(\cdot,\sigma) = q_\sigma$}
\end{cases}.\]

Let $L_G$ denote $\Sim_G.\mathsf{depth}$.

\paragraph{Layers $1$ through $L_G$: group simulation.} The first $L_G$ layers are chosen to be a copy of $\Sim_G.\theta$ in the $G$ channel, and only residual connections in the $M$ channel. At the end of this, given any inputs $\sigma_{1:T}$ which map via $\Phi^{-1}$ to $g_t$ (letting the group operation be identity when $\sigma_t$ is a reset symbol), the outputs in the $G$ channel will be $d_G := \Sim_G.\mathsf{repDim}$-dimensional encodings of the prefix group products $g_{\leq t} = g_t g_{t-1} \cdots g_1$. Now, letting $r(t)$ denote the most recent reset ($\tau \leq t$ such that $\sigma_\tau$ is a reset token), we notice that the state we want can be derived from this sequence:
\begin{equation}
\label{eq:permutation-reset-infixes}
q_t = \Phi( g_t g_{t-1} \cdots g_{r(t)} ) q_{\sigma_{r(t)}}
= \Phi( g_{\leq t} \cdot g_{\leq r(t)}^{-1} ) (q_{\sigma_{r(t)}}).
\end{equation}
Here, if there have been no resets up to time $t$, we define $r(t)$ to be $0$. We treat $q_0$ like a reset symbol at the beginning of the sequence. Also, note that our canonical group semiautomaton simulator always uses $g_0 = e_G$ as its initial state.

\paragraph{Layer $L_G+1$: memory lookup and copy.}
To implement the above, at layer $L_G + 1$, we put a copy of the memory semiautomaton in channel $M$, setting its initial state to $q_0$. We will modify this construction slightly, extending $W_V$ with the identity matrix on the $d_G$ group element encoding dimensions of channel $G$. Intuitively, when the memory unit ``fetches'' the last non-$\bot$ token, we would like it to copy the corresponding $g_{\leq t}$. Note that $\Sim_G.\mathsf{repSize}$, the $\infty$-norm bound on the group element encodings, is at most $|Q|$ by Theorem~\ref{thm:groups-main}, so we do not need to modify the $W_Q, W_K$ norms to increase the attention head's precision. The final modification is that we append to $W_C$ an identity matrix copying the $d_G$ embedding dimensions to a new $d_G$ dimensional channel, which we will call the $I$ (for ``invert'') channel (set to 0 in the embedding all preceding layers).
Then, by Lemma~\ref{lem:flipflop}, at the end of this layer, at each position $t$, the $M$ channel will contain $q_{\sigma_{r(t)}}$ in dimension 1, and $g_{\leq r(t)}$ in channel $I$. Finally, in channel $G$, we use only residual connections, preserving $g_{\leq t}$ in channel $G$.

\paragraph{Layer $L_G+2$: applying $\Phi(gh^{-1})$ pointwise.} This finally allows us to execute Equation~\ref{eq:permutation-reset-infixes} at each position $t$. We use one more Transformer layer, with attention block implementing identity. The MLP memorizes the function $(g, h, q) \mapsto \Phi(gh^{-1}) (q) \cdot e_1$ (the coordinate is selected arbitrarily), with the concatenated ($d_\mathrm{inv} := 2 \cdot \Sim_G.\mathsf{repSize} + 1$)-dimensional encodings on the $(G,I,M)$ channels, whose activations have $\infty$-norms bounded by $|Q|$. We invoke Lemma~\ref{lem:mlp-nd}, with parameters
\[\Delta = 1, d_\mathrm{in} = d_\mathrm{inv}, B_x = |Q|, B_y = |Q|,\]
giving us a construction with
\[d_1 \leq 4 d_\mathrm{inv} (|Q|+1), \quad d_2 \leq |G|^2 |Q|, \]
\[\norm{W_1}_\infty \leq 4, \quad \norm{b_1}_\infty \leq 6 |Q|, \quad
\norm{W_2}_\infty \leq 1, \quad \norm{b_2}_\infty \leq d_\mathrm{inv},
\quad \norm{W_3}_\infty \leq |Q|. \]
From this output, $W$ simply decodes the correct $q_t$ from dimension 1.
\end{proof}

\begin{lemma}[Implementing the transformation cascade]
Let $\gA = (Q, \Sigma, \delta)$ be a semiautomaton, and let $T \geq 1$. Let $\{\gA^{(1)}, \ldots, \gA^{(n)}; \phi^{(2)}, \ldots, \phi^{(n)}\}$ be the transformation cascade (Definition~\ref{def:cascade-product}) which simulates $\gA$, as guaranteed by Theorem~\ref{thm:krohn-rhodes}. For each $i$, let $\Sim_i$ be a Transformer network which continuously simulates the permutation-reset semiautomaton $\gA^{(i)}$ at length $T$. Then, there is a Transformer network $\Sim_\gA$ which simulates $\gA$ at length $T$. Its size bounds are:
\begin{itemize}
\item[]
\begin{itemize}
    \item $\Sim_\gA.\mathsf{depth} = |Q| \cdot (\max_i \{\Sim_i.\mathsf{depth}\} + 1) - 1 \leq 3 |Q|^2 \log |Q| + 7 |Q|$.
    \item $\Sim_\gA.\mathsf{dim} = \sum_{i = 1}^n \Sim_i.\mathsf{dim} + 1 \leq 2^{|Q|} (|\gT(\gA)| + |Q| + 4) + 1$.
    \item $\Sim_\gA.\mathsf{heads} = \sum_{i = 1}^n \Sim_i.\mathsf{heads} \leq  2^{|Q| + 1} (|\gT(\gA)| + 1)$.
    \item $\Sim_\gA.\mathsf{headDim} = \max_{i = 1}^n \{ \Sim_i.\mathsf{headDim} \} \leq |Q| + 3$.
    \item $\Sim_\gA.\mathsf{mlpWidth} = \sum_{i = 1}^n \Sim_i.\mathsf{mlpWidth} + 2^{|Q|} \, |Q|^{2^{|Q|}} \, |\Sigma| \leq 2^{|Q|} (20|Q| \,|\gT(\gA)| \, T + 4|Q| + |\gT(\gA)|^2|Q| + |Q|^{2^{|Q|}} \, |\Sigma|) $.
    \item $\Sim_\gA.\mathsf{normBound} \leq \max_{i=1}^n \{\Sim_i.\mathsf{normBound}\} \cup \{2^{|Q|} (|\gT(\gA)| + 5|Q|)\} + 6 \max\{|Q|,|\Sigma|\}$ \\
    $\leq \max\{ 6|Q| \, T \log T, 2^{|Q|} (|\gT(\gA)| + 5|Q|) \} +  6 \max\{|Q|,|\Sigma|\}$.
\end{itemize}
\end{itemize}
\end{lemma}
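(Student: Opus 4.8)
The plan is to unroll the feedforward structure of the transformation cascade of Theorem~\ref{thm:krohn-rhodes} into a fixed, bounded-depth stack of Transformer blocks, processing the cascade one \emph{level} at a time, using the permutation-reset simulators $\Sim_i$ as black boxes and gluing consecutive levels with position-wise MLPs that evaluate the dependency functions $\phi^{(i)}$.

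\textbf{Layout.} First I would partition the embedding space into $n+1$ disjoint channels: one ``workspace'' of width $\Sim_i.\mathsf{dim}$ for each component $\gA^{(i)}$, plus a single extra coordinate that carries the original input symbol $\sigma_t$ (as an integer in $\{0,\ldots,|\Sigma|-1\}$) unchanged through every layer, so it is available to every dependency function; the per-component positional-encoding and padding coordinates can be shared. The encoding layer $E$ writes $\Sim_1.E(\sigma_t)$ into channel $1$, places $\sigma_t$ into the extra coordinate, and zeros the rest; inactive channels are preserved by residual connections (or by identity attention/MLP gadgets, as in the preceding lemmas).

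\textbf{Level-by-level simulation.} By Theorem~\ref{thm:krohn-rhodes}(iii) the indices partition into $L\leq|Q|$ contiguous levels $N^{(1)},\ldots,N^{(L)}$, with $\phi^{(i)}$ depending only on input indices from strictly earlier levels and with no interaction within a level. I would process the levels in order. Entering level $\ell$, the channels of all components in earlier levels already hold the correct state sequences $q^{(j)}_{1:T}$; I insert one ``glue'' Transformer layer whose attention head performs a one-step look-back (retrieving $q^{(j)}_{t-1}$ from position $t-1$ via the circular positional encoding, using Lemma~\ref{lem:softmax-selection} and Proposition~\ref{prop:circle-embeddings}; the $t=1$ boundary is covered by a padding token initialized to $q_0$) and whose $3$-layer MLP evaluates, position-wise and in parallel over $i\in N^{(\ell)}$, the dependency $\sigma^{(i)}_t=\phi^{(i)}(q^{(1)}_{t-1},\ldots,q^{(i-1)}_{t-1},\sigma_t)$, writing the appropriate input encoding into channel $i$ --- a discrete interpolation over a domain of size at most $|Q|^{n}|\Sigma|\leq|Q|^{2^{|Q|}}|\Sigma|$, supplied by Lemma~\ref{lem:mlp-nd}. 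I then run all $\Sim_i$, $i\in N^{(\ell)}$, in parallel on their own channels, padding the shallower ones with identity layers so the whole level takes $\max_i\{\Sim_i.\mathsf{depth}\}$ layers. After the $L$-th such block every channel holds its $q^{(i)}_{1:T}$, and a final position-wise MLP computes the readout $\gW:Q^{(1)}\times\cdots\times Q^{(n)}\to Q$ via Lemma~\ref{lem:mlp-nd}; $W$ then decodes $q_t$.

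\textbf{Accounting and the main obstacle.} Each level costs $\max_i\{\Sim_i.\mathsf{depth}\}+1$ layers; folding the readout MLP into the last level's trailing MLP gives the exact value $\Sim_\gA.\mathsf{depth}=|Q|\cdot(\max_i\{\Sim_i.\mathsf{depth}\}+1)-1$, and the displayed $\leq 3|Q|^2\log|Q|+7|Q|$ follows by substituting $\Sim_i.\mathsf{depth}\leq 3\log_2|G_i|+2\leq 3|Q|\log_2|Q|+2$ from the preceding lemma (for $\gA$ solvable, so that Theorem~\ref{thm:groups-main} bounds the group-simulator for the permutation group $G_i$, which is a subgroup of $\gT(\gA)$ acting on at most $|Q|$ points). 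Since all $n$ channels coexist, $\mathsf{dim}=\sum_i\Sim_i.\mathsf{dim}+1$, $\mathsf{heads}=\sum_i\Sim_i.\mathsf{heads}$, and $\mathsf{headDim}=\max_i\Sim_i.\mathsf{headDim}$ (the look-back head's dimension is absorbed); $\mathsf{mlpWidth}$ adds $\sum_i\Sim_i.\mathsf{mlpWidth}$ to the cost of the $n\leq 2^{|Q|}$ dependency MLPs and the readout MLP, each of width $|Q|^{O(2^{|Q|})}|\Sigma|$ by Lemma~\ref{lem:mlp-nd}; and the weight-norm bound is the $\max$ of the components' norms with the $O(|Q|)$, $O(|\Sigma|)$ terms from Lemmas~\ref{lem:softmax-selection} and~\ref{lem:mlp-nd}. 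Plugging in $n\leq 2^{|Q|}$, $|Q^{(i)}|\leq|Q|$, and $|G_i|\leq|\gT(\gA)|$ (Theorem~\ref{thm:krohn-rhodes}) gives the displayed bounds. The main obstacle is verifying correctness of the unrolling: one must check that, precisely because $\phi^{(i)}$ never reads a state from its own level, a single glue layer per level furnishes, at \emph{all} times $t$ simultaneously, exactly the inputs the cascade recurrence of Definition~\ref{def:cascade-product} prescribes --- including the one-step shift to $q^{(j)}_{t-1}$ --- without perturbing the internal parallel state computations of the $\Sim_i$; this is where the contiguous-level structure of Theorem~\ref{thm:krohn-rhodes}(iii) is essential, as it is what keeps the depth at $O(|Q|)$ rather than $O(2^{|Q|})$.
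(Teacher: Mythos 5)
Your construction is essentially the paper's proof: one embedding channel per component plus a single residual channel carrying $\sigma_t$, the levels $N^{(1)},\ldots,N^{(L)}$ ($L\leq|Q|$) simulated sequentially with the $\Sim_i$ of each level run in parallel (padded to a common depth), the dependency functions $\phi^{(i)}$ implemented by position-wise MLPs via Lemma~\ref{lem:mlp-nd}, the readout $\gW$ at the end, and the same depth accounting $|Q|(\max_i\Sim_i.\mathsf{depth}+1)-1$. The one point of divergence is the glue layer: the paper uses an identity attention block and feeds the $\phi^{(i)}$-MLP the earlier channels' contents at the \emph{same} position $t$, whereas you add a one-step look-back attention head to fetch $q^{(j)}_{t-1}$ from position $t-1$, which is the literal timing of Definition~\ref{def:cascade-product}; this is a defensible (arguably more careful) reading of the cascade semantics, but it is not free — the look-back heads and the value dimensions needed to copy all earlier channels' states mean the exact identities you assert, $\Sim_\gA.\mathsf{heads}=\sum_i\Sim_i.\mathsf{heads}$ and $\Sim_\gA.\mathsf{headDim}=\max_i\Sim_i.\mathsf{headDim}$ (``absorbed''), do not literally hold for your network, so the head/headDim bookkeeping must be redone (the displayed upper bounds can still be met, e.g.\ with one small look-back head per earlier channel, but that should be argued rather than waved away). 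Everything else — the use of Theorem~\ref{thm:krohn-rhodes}(ii)--(iii) for $n\leq 2^{|Q|}$, $|G_i|\leq|\gT(\gA)|$, the contiguous-level structure keeping the depth at $O(|Q|)$, and the width/norm bounds from Lemma~\ref{lem:mlp-nd} — matches the paper's argument.
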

\begin{proof}
At this point, most of the work has been done for us.

We create a separate channel $i$ for each component permutation-reset semiautomaton $\gA^{(i)}$. This requires a total of $\sum_{i = 1}^n \Sim_i.\mathsf{dim}$ embedding dimensions.
In addition to these channels, we keep one dimension (with residual connections throughout the network) to represent the input $\sigma_t$. Let $e_\Sigma$ denotes the unit vector along this coordinate. Choosing an arbitrary enumeration to identify $\Sigma$ with $[\Sigma]$, we select the embeddings to be $E(\sigma) := \sigma \cdot e_\Sigma$.

The $L$ layers of $\Sim_\gA$ are divided into $|Q|$ 
\emph{subnetworks} (which are just Transformer networks), which we will concatenate sequentially at the end. Let these subnetworks be indexed by $\widetilde\ell \in \{1, \ldots, \widetilde L\}$. Each subnetwork starts with a parallel simulation (as in the direct product construction of Lemma~\ref{lem:direct-product-simulation}, padding with layers implementing identity if their depths do not match), combining all of the $\Sim_i.\theta$ in the $\ell$-th level of the Krohn-Rhodes decomposition, as defined by Theorem~\ref{thm:krohn-rhodes}.  Each $\Sim_i.\theta$ is chosen to operate in its own channel $i$. We add residual connections on all of the input/output dimensions in each channel. Then, at the end of each subnetwork except the final one ($1 \leq \widetilde\ell \leq \widetilde L-1$), we append one more Transformer layer with identity attention block, whose MLP implements the ``wiring'' specified by $\phi^{(i)}$ from the next level of the decomposition.

Namely, we invoke Lemma~\ref{lem:mlp-nd} with $\Delta = 1, B_x = \max\{|Q|,|\Sigma|\}, B_y = |\Sigma|$, giving us for each pre-final-layer $i$ an MLP which represents the function
\[ (\Sim_1.W^{-1}(q^{(1)}), \ldots, \Sim_{i-1}.W^{-1}(q^{(i-1)}), E(\sigma)) \mapsto \Sim_i.E( \phi(q^{(1)}, \ldots, q^{(i-1)}, \sigma) ), \]
where the inputs are stored in the respective $i'<i$ and $\Sigma$ channels, and the output is written to the $i$ channel. Here, the number of input dimensions is
\[d_\mathrm{in} = \sum_{i' < i} \Sim_{i'}.\mathsf{dim} + 1 \leq 2^{|Q|} (|\gT(\gA)| + |Q| + 4) + 1 \leq 2^{|Q|} (|\gT(\gA)| + 5|Q|).\]
Since the state encodings for each predecessor semiautomaton $i' < i$ are $|Q|$-bounded integer vectors and $\Sigma$ has been assumed to be a $|\Sigma|$-bounded positive integer, it suffices to use $d_1 = 4 d_\mathrm{in} \max\{ |Q|, |\Sigma| \} $. The second hidden layer's width $d_2$ is the number of possible inputs $|\gX|$, which is bounded by $|Q|^{2^{|Q|}} \cdot |\Sigma| > d_1$.
The weights satisfy
\[\norm{W_1}_\infty \leq 4, \quad \norm{b_1}_\infty \leq 6 \max\{|Q|,|\Sigma|\},\]
\[\norm{W_2}_\infty \leq 1, \quad \norm{b_2}_\infty \leq d_\mathrm{in},
\quad \norm{W_3}_\infty \leq |\Sigma|, \quad b_3 = 0. \]
Between $i$ in the same layer, these routing constructions need to be executed in parallel, so this incurs another multiplicative factor in the width, bounded conservatively by $2^{|Q|}$.

The final construction concatenates these blocks, so that at the output of the last layer, every channel $i$ contains a representation of its corresponding component's semiautomaton $Q_i$.
The $\gW$ guaranteed by Theorem~\ref{thm:krohn-rhodes} suffices for the overall choice of $W$.
\end{proof}

\subsection{Proof of Theorem~\ref{thm:shortcut-gridworld}: Even shorter shortcuts for gridworld}
\label{subsec:appendix-gridworld-proof}
Recall the gridworld semiautomaton in Example \ref{eg:gridworld_1d}, where the state ($Q = \{0, 1, \ldots, S\}$) either move to the adjacent state based upon seeing input token $L$ or $R$ (modulo boundary effects), or stay unmoved upon seeing $\perp$. More formally, the transition function is defined as:
\begin{align*}
    \delta(q, L) &= \max(q - 1, 0)\\
    \delta(q, R) &= \min(q + 1, S)\\
    \delta(q, \perp) &= q.
\end{align*}
In this section, we will show how to implement gridworld simulation using only $2$ Transformer layers. Here we restate the theorem in full generality:
\newtheorem*{T3}{Theorem~\ref{thm:shortcut-gridworld}}
\begin{T3}[Even shallower shortcuts for gridworld]
\label{thm:shortcut-gridworld-restated}
For each positive integer $T$,
Transformers can simulate the $(S+1)$-state gridworld semiautomaton with 2 attention layers, where the MLP has either (i) depth $O(\log S)$, width $O(T + S)$, or (ii) depth $O(1)$, width $O(T) + 2^{O(S)}$. The weight norms are bounded by $\mathrm{poly}(T)$.
\end{T3}
The depth in (i) can be reduced to $O(S)$ if we allow max pooling, and the dependence on $T$ in the width can be removed with sinusoidal activation. We discuss this in detail after the proof along with generalization to the $k$-dimensional gridworld case.

Note that, in order to find the current state, we need to only know the most recent time at which the semiautomata was at a boundary. It is not immediately obvious how to compute the most recent boundary, if one is not allowed to use the trivial sequential simulation algorithm. Our key insight is that this \textit{boundary detector} can be computed without needing to parse the entire sequence, using the most recent $S+1$ distinct values of the prefix sums in the sequence.

This algorithm is especially well-suited to the Transformer architecture since: (i) the prefix sum can be computed using one attention layer as in Lemma \ref{lem:cyclic}, and (ii) the identification of distinct values can be implemented by a sparse \textit{value-dependent} lookup similar to the memory lookup in Lemma \ref{lem:flipflop} with the help of the \emph{self}-attention (context-dependent retrieval, as opposed to a static lookup), and (iii) the positional weight sharing and causal masking enable all of these computations to be performed in parallel. Overall, Theorem~\ref{thm:shortcut-gridworld} consists of a concise implementation which executes all of these most-recent-boundary detectors in parallel.

In what follows, we first describe the algorithm (Algorithm \ref{alg:chachacha_parallel}) for computing the state of the semiautomata using the $S+1$ distinct prefix sum values, and give a proof of its correctness. Subsequently, we formalize the Transformer construction that implements the algorithm. 
A consolidated list of notations used in the algorithm as well as the proofs is provided in Table~\ref{tab:appendix_gridworld_notation} for the reader.

\begin{table}[t]
    \centering
    {\renewcommand{\arraystretch}{1.5}
     \begin{tabular}{c|p{14cm}}
        \toprule
        \textbf{Notation} & \textbf{Definitions} \\ \midrule
        $\sigma$ & An input token; $\sigma \in \{L, R, \perp\}$.
        \\
        $\widetilde\sigma$ & A mapped input token; $\widetilde\sigma = -1$ if $\sigma=L$, $\widetilde\sigma = 1$ if $\sigma=R$, and $\widetilde\sigma=0$ if $\sigma=\perp$.
        \\ 
        \midrule
          \multicolumn{2}{l}{\textbf{\textit{Used in Algorithm \ref{alg:chachacha_parallel}:}}} \\ 
         \midrule
        $q_t$ & The state at position $t \in [T]$; $q_t \in \{0, 1, \cdots, S\}$.
        \\
        $\vz \in \mathbb{Z}^T$ & Prefix sums at all $T$ positions. \\
        $t_{\mathrm{uniq}}$ & The most recent position for which the prefix sums $\vz_{t_{\mathrm{uniq}}:T}$ contain $S+1$ unique values.
        \\
        $t_{\mathrm{max}}, t_{\mathrm{min}}$ & The positions corresponding to the max/min prefix sum among positions $t_{\mathrm{uniq}}, \ldots, T$.
        \\
        $t_{\mathrm{final}}$ & The position of the last boundary state, defined as $t_{\mathrm{final}} := \max\{t_{\mathrm{max}}, t_{\mathrm{min}}\}$.
        \\ 
        \midrule
         \multicolumn{2}{l}{\textbf{\textit{Used in the Transformer construction:}}}
        \\ 
        \midrule 
        $\gamma_t$ & The positional encoding for position $t \in [T]$, defined as $\gamma_t := \log(2T-t)$.
        \\
        $x_{\mathrm{attn}}^{(1)}[t]$ & The output of the first layer attention at position $t \in [T]$, defined as $x_{\mathrm{attn}}^{(1)}[t] := \frac{1}{2T}\sum_{i \in [t]} s_i$.
        \\
        $x_{\mathrm{mlp}}^{(1)}[t]$ & The output of the first layer MLP at position $t \in [T]$, defined as $x_{\mathrm{mlp}}^{(1)}[t] := [x_{\mathrm{attn}}^{(1)}[t], \gamma_t,$ %
        $ 1, \cos(x_{\mathrm{attn}}^{(1)}[t]\pi), $ %
        $\sin(x_{\mathrm{attn}}^{(1)}[t]\pi)]$.
        \\
        $\jmax^{(s)}$ & The position which achieves the max attention score for the $s^{th}$ head at time $t \in [T]$ ($t$ is omitted for notational convenience), for $s \in [0, 1, \cdots, 2S]$.
        \\
        $x_{\mathrm{attn}}^{(2)}[t]$ & The output of the second layer attention at position $t \in [T]$, defined as $x_{\mathrm{attn}}^{(2)} := [\gamma_{\jmax^{(0)}}, \gamma_{\jmax^{(1)}}, \cdots, \gamma_{\jmax^{(2S)}}]$.
        \\
        $x_{\mathrm{mlp}}^{(2)}[t]$ & The output of the second layer MLP at position $t \in [T]$, which gives the state at $t$. \\
        \bottomrule
      \end{tabular}
    }
      \caption{Notations for the proof of Theorem \ref{thm:shortcut-gridworld}.}
      \label{tab:appendix_gridworld_notation}
\end{table}

\subsubsection{The algorithm solving 1D gridworld}

To convey the essence of the full construction, we first provide pseudocode (rather than Transformer weights) for computing the \emph{final} state $q_T$ (rather than the entire state sequence).

We map actions $\sigma \in \{L, R, \bot\}$ to $\widetilde{\sigma} \in \{-1, 1, 0\}$, i.e. $L \mapsto -1$, $R \mapsto 1$, and $\bot \mapsto 0$.
Let $\widetilde{\sigma}^{(:)}$ denote the sequence of mapped actions, and let 0 be the initial state.
The algorithm (Algorithm \ref{alg:chachacha_parallel}) has two steps: first, we identify the last time the agent is at a boundary (wall) and the type of the boundary (i.e. state 0 or state $S$).
The final state is then simply the sum of all actions in the sub-sequence, shifted by the last boundary, which is easily computable with 1 attention layer (Lemma \ref{lem:cyclic}).
Our key insight is that we can identify the boundary using $O(S)$ attention heads in two attention layers,
and therefore do not require a recursive computation from the start state (with depth $T$).

\RestyleAlgo{ruled}

\begin{algorithm}[bt]
\caption{1D gridworld: computing the final state}
\label{alg:chachacha_parallel}

\KwData{$\widetilde{\sigma} \in \{\pm 1\}^T$, $S \in \mathbb{Z}^+$}

\KwResult{Final state $y_T \in \{0, 1, \cdots, S\}$}

\textcolor{bluegray}{// Pad tokens so there are at least $S+1$ distinct values}

$\widetilde{\sigma} \gets  [\underbrace{-1, -1, \cdots, -1}_{S+1}, \widetilde{\sigma}]$

\textcolor{bluegray}{// Calculate the prefix sum for each index}

$\vz \gets \mathsf{prefix\_sum}(\widetilde{\sigma})$ \quad (i.e. $\vz_t \gets \sum_{\tau=1}^t \widetilde\sigma_t$)

\vspace{0.4em}
\textcolor{bluegray}{// Find a substring containing $S+1$ unique values}

$t_\mathrm{uniq} \gets \max\{t: |\mathsf{set}(\vz_{t:})| = S+1\}$

\vspace{0.4em}
\textcolor{bluegray}{// Find positions of the last max and min values}

$t_{\min} \gets \max\{t: z_t = \min_{\tau \ge t_\mathrm{uniq}} z_\tau\}$

\vspace{0.4em}
$t_{\max} \gets \max\{t: z_t = \max_{\tau \ge t_\mathrm{uniq}} z_\tau\}$

\vspace{0.4em}
\textcolor{bluegray}{// Identify the type of boundary}

\eIf{$t_{\min} > t_{\max}$}{
    $\mathsf{boundary} \leftarrow 0$    
}{
    $\mathsf{boundary} \leftarrow S$
}
\vspace{0.4em}
\textcolor{bluegray}{// The final state is the sum of the substring after the last boundary}

$t_{\mathrm{final}} \gets \max\{t_{\min}, t_{\max}\}$

$y_T = z_T - z_{t_{\mathrm{final}}} + \mathsf{boundary}$
\end{algorithm}

To show the correctness of Algorithm \ref{alg:chachacha_parallel}, it suffices to show that the boundary state is detected correctly, since after that there is no more boundaries and the only step remaining is to calculate the sum of the actions.
Let $t_\mathrm{uniq}, t_{\min}, t_{\max}$ be as defined in Algorithm \ref{alg:chachacha_parallel}.
Then:
\begin{lemma}\label{lem:gridworld_boundary}
    If $t_{\min} > t_{\max}$,
    then state at $t_{\min}$ is $0$,
    otherwise state at $t_{\max}$ is $S$.
\end{lemma}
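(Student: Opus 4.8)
\textbf{Proof plan for Lemma~\ref{lem:gridworld_boundary}.}

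The plan is to exploit a simple structural fact about gridworld trajectories: a position $t$ is at boundary state $0$ (resp.\ $S$) if and only if $z_t$ is the minimum (resp.\ maximum) among the prefix sums $z_{t'}$ for $t' \geq \tau$, where $\tau$ is any position such that no boundary is crossed strictly between $\tau$ and $t$. Concretely, I would first establish the relationship between the prefix sum $z_t$ of the (mapped, $\bot$-free) action sequence and the actual state $q_t$. Since the boundary effects only ``clip'' the walk, one shows by induction on $t$ that $q_t = z_t - \min_{\tau \le t} z_\tau$ \emph{if} the walk has hit the left wall but never the right wall in a suitable window, and more generally that $q_t - q_s = z_t - z_s$ whenever no boundary lies strictly between $s$ and $t$. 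The key monotonicity facts are: (a) if the state is at $0$ at time $t$, then $z_t \le z_{t'}$ for all $t'$ in the window since the last boundary crossing, because any further run of the walk from state $0$ can only increase the prefix sum before returning; and symmetrically (b) if the state is at $S$, then $z_t \ge z_{t'}$ on that window.

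Next I would argue that the window $[t_\mathrm{uniq}, T]$ is long enough to ``see'' a boundary: since $z_{t_\mathrm{uniq}:T}$ takes $S+1$ distinct integer values, the range $\max_{\tau \ge t_\mathrm{uniq}} z_\tau - \min_{\tau \ge t_\mathrm{uniq}} z_\tau \ge S$. Because consecutive prefix sums differ by exactly $\pm 1$, the prefix sum must sweep through an interval of $S+1$ consecutive integers, which forces the state sequence on this window to attain both $0$ and $S$ --- wait, more carefully: it forces at least one of the two extreme prefix-sum values to correspond to a boundary state. The cleanest route: let $t_\mathrm{final} = \max\{t_{\min}, t_{\max}\}$ be the later of the two extremal positions; I claim no boundary is crossed strictly after $t_\mathrm{final}$, because after time $t_\mathrm{final}$ the prefix sum stays strictly between $z_{t_{\min}}$ and $z_{t_{\max}}$ (both are attained for the last time at or before $t_\mathrm{final}$), so the state stays strictly between the two walls. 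Then by the range argument at the extremal positions themselves, $q_{t_{\min}} = z_{t_{\min}} - \min$ must equal $0$ and $q_{t_{\max}} = z_{t_{\max}} - \min = (\text{range}) \ge S$, hence $=S$ (since $S$ is the max state); the whichever-is-later of these is genuinely a boundary, and the case split $t_{\min} > t_{\max}$ versus $t_{\min} \le t_{\max}$ records which wall it is.

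I expect the main obstacle to be handling the clipping carefully: the identity $q_t = z_t - \min_{\tau \le t} z_\tau$ is only valid before the first time the walk hits the \emph{right} wall, and after that a subtler invariant is needed. The clean fix is to work only on the window $[t_\mathrm{uniq}, T]$ and to observe that it suffices to identify the \emph{last} boundary; between $t_\mathrm{final}$ and $T$ no clipping occurs, so $q_T = z_T - z_{t_\mathrm{final}} + \mathsf{boundary}$ follows immediately from the unclipped random-walk identity on that sub-window, and at $t_\mathrm{final}$ itself the extremality of $z_{t_\mathrm{final}}$ over $[t_\mathrm{uniq},T]$ combined with the range-$\ge S$ fact pins down the state to a wall. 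One should also double-check the degenerate cases (e.g.\ $t_{\min} = t_{\max}$, which can only happen when $S = 0$) and confirm that the $S+1$ prepended $-1$ tokens guarantee $t_\mathrm{uniq}$ is well-defined. Once these edge cases are dispatched, the lemma reduces to the two monotonicity observations (a) and (b), each provable by a one-line induction on the clipped recurrence.
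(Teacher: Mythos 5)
Your plan correctly handles only the easy half of the lemma. The reduction $q_t - q_s = z_t - z_s$ whenever no clipping occurs in $(s,t]$, plus the observation that the prefix sums on $[t_\mathrm{uniq},T]$ span a range of exactly $S$, settles the case where the walk never touches a wall inside the window (the paper's case (a)): there the offset $q-z$ is constant, the $S+1$ distinct sums give $S+1$ distinct states, and the extrema are walls. The crux of the lemma, to which the paper devotes its case (b), is the case where the walk \emph{is} clipped inside the window, so no single affine relation between $q$ and $z$ holds across it — and there both of your key steps fail. First, the derivation ``$q_{t_{\min}} = z_{t_{\min}} - \min = 0$ and $q_{t_{\max}} = z_{t_{\max}} - \min = S$'' invokes the identity $q_t = z_t - \min_{\tau\le t} z_\tau$ exactly where it is invalid, and its conclusion is simply false: take $S=2$ and inputs $R,R,R,L,L,L,R$ after the $S+1$ padding left-moves; the states are $0,0,0,1,2,2,1,0,0,1$ and the prefix sums are $-1,-2,-3,-2,-1,0,-1,-2,-3,-2$, so $t_\mathrm{uniq}=t_{\max}=7$, $t_{\min}=9$, and $q_{t_{\max}}=1\neq S$ even though $z_{t_{\max}}-\min = S$ (the identity also wrongly predicts $q_7=2$). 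Only the \emph{later} extremum is guaranteed to sit at a wall, and your hedge to that effect is backed by no argument other than this invalid identity. Second, the claim that no boundary is crossed after $t_\mathrm{final}$ ``because the prefix sum stays strictly between $z_{t_{\min}}$ and $z_{t_{\max}}$, so the state stays strictly between the two walls'' is circular: translating prefix-sum levels into wall visits requires knowing the offset $q-z$ on the suffix, i.e.\ knowing that $q_{t_\mathrm{final}}$ is the claimed wall, which is the statement being proved; and even granting ``no clip after $t_\mathrm{final}$'', that by itself says nothing about the value of $q_{t_\mathrm{final}}$.

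What is missing is an argument that pins down \emph{which} wall is reached when clipping does occur inside the window. The paper supplies it by contradiction, anchored to the definitions of $t_{\min}$ and $t_\mathrm{uniq}$: assume (WLOG) $t_{\min}=t_\mathrm{uniq}\le t_{\max}$; since $S+1$ distinct prefix sums cannot arise on the window without some clip, suppose toward contradiction that the wall involved were the bottom one — then at the first bottom clip at some $\tau$ in the window one has $z_\tau = z_{t_{\min}}-1$, a prefix sum strictly below the window minimum, contradicting the definition of $t_{\min}$. Hence the wall reached is $S$, and since every clip in the window is then a top clip (each lowering the offset $q-z$ by one), the window maximum of $z$ is attained only at positions with $q=S$, so $q_{t_{\max}}=S$. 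Your monotonicity facts (a)/(b), even once made precise (they hold only up to the next clip of the opposite type, not on an unspecified ``window since the last boundary crossing''), do not substitute for this step, so the proposal has a genuine gap at the heart of the lemma.
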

\begin{proof}
    The proof follows from two observations:
    \vspace{-0.4em}
    \begin{enumerate}[leftmargin=*]
        \item $\min\{t_{\min}, t_{\max}\} = t_\mathrm{uniq}$,
        \item Suppose $t_{\min} = t_\mathrm{uniq}$ (the argument is symmetric for $t_{\max} = t_\mathrm{uniq}$). Then $q_{t_{\max}} = S$.
    \end{enumerate}
    First note that $\widetilde{\sigma} \in \{\pm 1\}$ which implies that the prefix sums increment or decrement by 1 at each index. Therefore, $z_{t_{\max}} - z_{t_{\min}} = S+1$. This also applies that between (and including) $t_{\max}$ and $t_{\min}$, there must be indices such that they traverse the $S+1$ distinct values. Since we take the shortest suffix satisfying this, $t_\mathrm{uniq} \ge \min\{t_{\max}, t_{\min}\}$. This proves Observation 1.

    Assume $t_{\min} = t_\mathrm{uniq}$. We can break the analysis into the following 2 cases:
    \begin{enumerate}[label=(\alph*),leftmargin=*]
        \item \textit{The $S+1$ distinct values correspond to $S+1$ distinct states (covering both boundaries)}. This implies that the minimum and maximum out of these distinct prefix sums must correspond to the boundaries, that is, $q_{t_{\max}} = S$ and $q_{t_{\min}} = 0$.
        \item \textit{The $S+1$ distinct values correspond to fewer than $S+1$ distinct states}. This implies that only one of the two boundaries is visited in the sequence starting from $t_\mathrm{uniq}$. In order to get $S+1$ distinct values, it must be that this boundary wall is hit, i.e., the sequence tries to make a move that the boundary blocks. If the sequence does not hit a boundary, then at every time the same state is revisited, the prefix sum must be the same, and we will not be able to get $S+1$ distinct values. Since $t_{\min} = t_{\mathrm{uniq}}$, we claim that the visited boundary must be $S$. Suppose this is not true, then the boundary visited is 0. This implies that $q_{t_{\min}} = 0$. Since the sequence does not hit $S$, at any position it is at state 0 before hitting the wall, the value will be $z_{t_{\min}}$. Thus, when the sequence first hits the wall at 0 (say index $\tau$), then $z_{\tau} = z_{t_{\min}} - 1$ which is not possible by definition of $t_{\min}$. Thus, the boundary must be $S$.
    \end{enumerate}
\end{proof}
Given the above, our algorithm identifies the boundary correctly and then can just use the prefix sum to evaluate the current state.

\begin{figure}
    \centering
    \includegraphics[width=\textwidth]{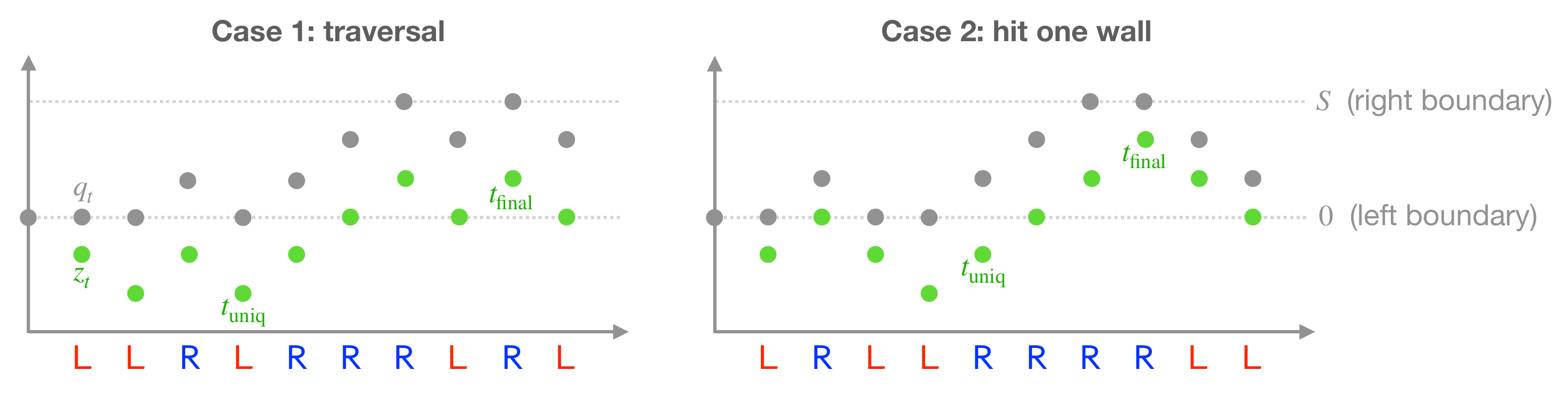}
    \caption{Illustrated examples for 4-state gridworld ($Q = \{0, 1, 2, 3\}$). The algorithms compute the prefix sums $z_t$ (as if there were no boundaries; shown as green dots); intuitively, it might seem like they have no direct relationship with the state sequence $q_t$ (gray dots). However, defining $t_\mathrm{uniq}$ as the start of the shortest suffix containing $S+1$ distinct prefix sums, and $t_\mathrm{final}$ as the most recent minimum or maximum point within this suffix, our case analysis shows that $q_{t_\mathrm{final}}$ is always a boundary, resulting in a parallel simulation algorithm.}
    \label{fig:gridworld-proof}
\end{figure}

\subsubsection{Transformer construction for Algorithm \ref{alg:chachacha_parallel}}
In this section we will show how to simulate Algorithm \ref{alg:chachacha_parallel} using a 2-layer Transformer with $2S$ attention heads. 
\begin{proof}[Proof of Theorem \ref{thm:shortcut-gridworld}]
In our construction, the first attention layer will compute the prefix sums. This can be mapped to a cyclic group from Lemma \ref{lem:cyclic}, however for completeness, we will restate the main construction. The MLP in the first layer will map this prefix sum to a circular embedding (see Proposition \ref{prop:circle-embeddings}). The second layer attention will use the circular embedding structure to find $S+1$ closest distinct values to the current value $z_t$ (suppose we are considering position $t \in [T]$) by identifying the positions for closest values in the set $\{z_t-S, z_t-S + 1, \ldots, z_t - 1, z_t+1, z_t+2, \ldots z_t+S\}$, i.e. $S$ closest distinct values smaller than $z_t$, and $S$ values larger than $z_t$.
This closest distinct value construction can be viewed as a position dependent flip-flop monoid construction, where we need to identify the closest position with a particular action. Note that this set of values would contain the distinct $S+1$ values needed by the Algorithm \ref{alg:chachacha_parallel}, hence the second layer MLP can implement the state computation using these values.

\paragraph{Input representation:} We select input symbol embedding $E(\sigma) = \widetilde\sigma \cdot e_1 \in \mathbb{R}^d$ where $\widetilde\sigma$ is the action corresponding to $\sigma$, that is, $s=\begin{cases}
    -1 & \text{ if }\sigma = L\\
    1 & \text{ if }\sigma = R\\
    0 & \text{ otherwise }
\end{cases}$. We will use positional encoding $P_{t,:} := \gamma_t e_3$, where $\gamma_t := \log(2T - t)$ is such that $\frac{1}{e^{\gamma_t} + t} = \frac{1}{2T}$. We will include an extra position $\bot$, with embedding $E(\bot) := e_2$ and position encoding $P_{ \bot,:} := 0$. Think of this as padding at position $0$; it is not masked by the causal attention mask.

\paragraph{Prefix sum (Layer 1 attention):} The attention construction for the first layer, in full detail:
\begin{itemize}
    \item We select $d := 4, k := 1, H := 1$. 
    \item Select $W_Q := e_3, W_K := e_2, W_V := e_1, W_C^\top := e_4$.
\end{itemize}
With this attention module, the 4\textsuperscript{th} channel of the output at position $t$ is $x_\mathrm{attn}^{(1)}[t] = \frac{1}{2T} \sum_{i \in [t]} s_i$, which is the prefix sum scaled down by $1/2T$.

\paragraph{Circular embedding (MLP 1):} The first MLP maps $x_\mathrm{attn}^{(1)}[i] \mapsto [\cos(x_\mathrm{attn}^{(1)}[i] \pi), \sin(x_\mathrm{attn}^{(1)}[i] \pi)]$, where $\cos, \sin$ are calculated up to $O(\log T)$ precision using the construction in Lemma \ref{lem:mlp-1d} with width $4(2T+1)$.
\footnote{The width is 1 if we allow sinusoidal activation instead of relu; see the discussion after the proof.}
and weight norms at most $8T$.
Together with the input using the skip connection (for $\gamma_i$)
we get $x_\mathrm{mlp}^{(1)}[t] := \left[x_\mathrm{attn}^{(1)}[t], \gamma_t,  1, \cos(x_\mathrm{attn}^{(1)}[t] \pi), \sin(x_\mathrm{attn}^{(1)}[t] \pi)\right]$ as the embedding to be input to the second attention layer.

\paragraph{Finding closest $S+1$ values (Layer 2 attention):}
Our goal is to find the shortest subsequence (looking back from the current position $t$) that contains $S+1$ distinct values for $x_\mathrm{attn}^{(1)}$; that is, we want to find the max $\tau \leq t - S$ such that $\big|\{x_\mathrm{attn}^{(1)}[i]\}_{i=\tau}^t\big| = S+1$.
We will do this by using $2S + 1$ heads such that $\forall s \in \{0, 1, \cdots, 2S\}$, the attention score for the $s_{th}$ head on position $i \in [T]$ satisfies
\begin{align*}
\tilde{\alpha}_{t,i}^{(s)}
&:= \left\langle (W_Q^{(s)})^\top x_\mathrm{mlp}^{(1)}[t],  (W_K^{(s)})^\top x_\mathrm{mlp}^{(1)}[i] \right\rangle
\\
&\begin{cases}
    = 1 - c\log(2T-i), & \text{if }x_\mathrm{attn}^{(1)}[i] = x_\mathrm{attn}^{(1)}[t] + \frac{s - S}{2T}, \\
    \leq 1 - c\log(2T-i) - \frac{\pi^2}{8T^2}, & \text{otherwise,}
\end{cases}
\end{align*}

where $c = \frac{\pi^2}{(16\log 2)T^2}$.
That is, for any $i, j$ s.t. $x_\mathrm{attn}^{(1)}[i] = x_\mathrm{attn}^{(1)}[t] + \frac{s-S}{2T}$ (matched) and $x_\mathrm{attn}^{(1)}[j] \neq x_\mathrm{attn}^{(1)}[t] + \frac{s-S}{2T}$ (unmatched), the difference in the unnormalized attention weights is lower bounded by $\tilde{\alpha}_{t,i}^{(s)} - \tilde{\alpha}_{t,j}^{(s)} \geq \frac{\pi^2}{16T^2}$.
This can be achieved by letting $W_Q^{(s)} := \begin{bmatrix}\begin{bmatrix}0 & 0 & 0 \\ 0 & 0 & -c \\ 0 & 0 & 0 \end{bmatrix} & 0 \\ 0 & \rho_{\theta^{(s)}}\end{bmatrix} \in \mathbb{R}^{5 \times 5}$ where $\rho_{\theta^{(s)}}$ the rotation matrix of angle $\theta^{(s)} := \frac{(s-S)\pi}{2T}$,
such that $(W_Q^{(s)})^\top x_\mathrm{mlp}^{(1)}[t] = \left[0, -c, 0, \cos\left(\left(x_\mathrm{attn}^{(1)}[t] + \frac{s-S}{2T}\right)\pi\right), \sin\left(\left(x_\mathrm{attn}^{(1)}[t] + \frac{s-S}{2T}\right)\pi\right)\right]$.
$W_K^{(s)}, W_C^{(s)}$ are simply the $5 \times 5$ identity matrix,
and $W_V^{(s)} = e_1e_1^\top + e_2e_2^\top$.

Let $\jmax^{(s)}$ denote the position that achieves the max attention score for the $s^{th}$ head, then the output of the $s^{th}$ head \footnote{We assume hard attention here for ease of exposition of the proof; soft attention can be handled with Lemma \ref{lem:softmax-selection} and Lemma \ref{lem:mlp-1d} as in our previous constructions. This requires norm $\mathrm{poly}(T)$ at max.}
is $[x_\mathrm{attn}^{(1)}[\jmax^{(s)}], \gamma_{\jmax^{(s)}}, 0, 0, 0]$.
We can ignore the last three coordinates (which are 0) as well as $x_\mathrm{attn}^{(1)}[{\jmax^{(s)}}]$, since we will only need the difference $x_\mathrm{attn}^{(1)}[t] - x_\mathrm{attn}^{(1)}[{\jmax^{(s)}}]$ which is $\frac{s-S}{2T}$ by definition.
We then concatenate the outputs from all $(2S+1)$ heads in a $(2S+1)$-dimensional vector $x^{(2)}_\mathrm{attn} = [\gamma_{\jmax^{(0)}}, \gamma_{\jmax^{(1)}}, \cdots, \gamma_{\jmax^{(2S)}}]$ as the input to the second layer MLP.

Intuitively, each head in the second attention layer is trying to identify the set of positions for which the prefix sums match a particular value specified by the head. Each head selects the last matching position if such positions exist, and selects $t$ if not.
The following observation will be helpful for our subsequent MLP construction: the values of coordinates of $x^{(2)}_\mathrm{attn}$ increase on both sides of the $S^{th}$ heads; that is, $x^{(2)}_\mathrm{attn}$ satisfies the following:
\begin{lemma}\label{lem:decreasing} There exist $a<b \in \{0,1, \ldots, 2S\}$ such that
    \[x^{(2)}_\mathrm{attn}[a] > x^{(2)}_\mathrm{attn}[a+1] > \ldots > x^{(2)}_\mathrm{attn}[S] < x^{(2)}_\mathrm{attn}[S+1] < \ldots < x^{(2)}_\mathrm{attn}[b]\] 
    and all $s \in \{0, 1, \ldots, 2S\} \setminus \{a, a+1, \ldots, b\}$ we have $x^{(2)}_\mathrm{attn}[s] = \log (2T-t)$.
\end{lemma}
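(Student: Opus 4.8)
\textbf{Proof proposal for Lemma~\ref{lem:decreasing}.}

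The plan is to reason about what each of the $2S+1$ heads returns, as a function of how far back in the sequence the prefix sum takes the target value $x_\mathrm{attn}^{(1)}[t] + \frac{s-S}{2T}$. First I would recall the definition of $\jmax^{(s)}$: it is the most recent position $i$ such that $x_\mathrm{attn}^{(1)}[i]$ equals the target value for head $s$, and, by construction of the attention scores (and the tie-breaking toward larger $i$ built into the $-c\log(2T-i)$ term), if no such position exists among $\{1,\ldots,t\}$ then $\jmax^{(s)} = t$. Since $\gamma_\tau = \log(2T-\tau)$ is strictly decreasing in $\tau$, the coordinate $x^{(2)}_\mathrm{attn}[s] = \gamma_{\jmax^{(s)}}$ is \emph{smaller} when the target value was seen \emph{more recently}, and equals exactly $\log(2T-t)$ precisely when the target value was never attained (so $\jmax^{(s)}=t$; note $\gamma_t = \log(2T-t)$).

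Next I would argue that as $s$ ranges over $\{0,1,\ldots,2S\}$, the set of target values $\{x_\mathrm{attn}^{(1)}[t] + \frac{s-S}{2T}\}$ is a set of $2S+1$ consecutive lattice points (spacing $\frac{1}{2T}$) centered at $x_\mathrm{attn}^{(1)}[t]$ (attained by $s=S$, with $\jmax^{(S)}=t$ trivially, giving $x^{(2)}_\mathrm{attn}[S]=\log(2T-t)$... wait — actually $s=S$ attends to the current value, whose most recent occurrence is $t$ itself, so $x^{(2)}_\mathrm{attn}[S]=\gamma_t$; I need to double check the indexing so that the ``center'' head gives the small value and both neighbors are examined, which matches the claimed $x^{(2)}_\mathrm{attn}[S]<x^{(2)}_\mathrm{attn}[S\pm1]$ whenever those neighbors were last seen strictly before $t$). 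The key structural fact is: since $\widetilde\sigma\in\{\pm 1\}$, the prefix sum $\vz$ changes by exactly $\pm 1$ at each step, so the sequence of prefix sum \emph{values} read backward from $t$ is a walk on $\mathbb{Z}$; the most-recent-visit time of value $x_\mathrm{attn}^{(1)}[t]+m\cdot\frac{1}{2T}$ is non-increasing in $|m|$ for $m$ on each side of $0$ up to the first ``gap'' (a value never visited in the relevant window), after which all further-out values are also never visited. Concretely: let $b$ be the largest $s>S$ such that the value for head $s$ is visited somewhere in $\{1,\ldots,t\}$, and $a$ the smallest $s<S$ with the analogous property; then for $a\le s\le b$ the last-visit times are strictly interleaved (a walk cannot reach value $v+2$ from $v$ without passing through $v+1$ at a \emph{later} time when walking backward, forcing strict monotonicity of $\jmax^{(s)}$, hence of $\gamma_{\jmax^{(s)}}$), and for $s\notin\{a,\ldots,b\}$ the head sees nothing and returns $\gamma_t=\log(2T-t)$.

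The main obstacle I anticipate is making the ``strict interleaving'' / strict monotonicity argument fully rigorous: I must show that on each side of the center, consecutive target values have \emph{strictly} different most-recent-visit times, so that $\gamma_{\jmax^{(\cdot)}}$ is strictly (not just weakly) monotone on $\{a,\ldots,S\}$ and $\{S,\ldots,b\}$. This follows because a $\pm1$ walk visiting value $v+1$ at some time $\le t$ must, reading backward from its last visit to $v+1$, reach $v$ or $v+2$ at an \emph{earlier} time (it takes at least one step to change value); so the most recent visit to $v$ and the most recent visit to $v+2$ are both strictly older than the most recent visit to $v+1$ — but I need the sharper statement that the last visit to $v+2$ is strictly older than the last visit to $v+1$ \emph{when both are further from center}, which requires tracking that the walk, to reach $v+2$ for the last time, last crossed $v+1$ strictly before. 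I would formalize this by induction on distance from the center, using the intermediate value property of the $\pm1$ walk. Once Lemma~\ref{lem:decreasing} is established, the second-layer MLP can (by Lemma~\ref{lem:mlp-1d}, applied to the bounded discrete vector $x^{(2)}_\mathrm{attn}$ of dimension $2S+1$ with separation $\Omega(1/T^2)$ after rescaling) locate $a$ and $b$, read off $t_\mathrm{uniq}$, $t_{\min}$, $t_{\max}$, $t_\mathrm{final}$, and the boundary type exactly as in Algorithm~\ref{alg:chachacha_parallel}, then output $y_t = z_t - z_{t_\mathrm{final}} + \mathsf{boundary}$; the depth/width tradeoffs for this final MLP are the $O(\log S)$-depth/$O(S)$-width versus $O(1)$-depth/$2^{O(S)}$-width options in the theorem statement, and the $O(T)$ width contribution comes from the sinusoid-approximating MLP in layer 1 (removable with periodic activations). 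The $\mathrm{poly}(T)$ weight-norm bound is inherited from the $c=\Theta(1/T^2)$ attention temperature and the $\gamma_t$, $1/(2T)$ normalization factors already present in the construction.
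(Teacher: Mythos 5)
Your argument is correct and takes essentially the same route as the paper's proof: both rest on (i) $x^{(2)}_{\mathrm{attn}}[s]$ being a decreasing function of $\jmax^{(s)}$, and (ii) the intermediate-value property of the $\pm 1$ prefix-sum walk, which simultaneously forces the matched heads to form a contiguous block around $S$ (so unmatched heads sit only at the extremes and, by the stipulated head behavior, output $\gamma_t=\log(2T-t)$) and forces strict monotonicity on each side, since after the last visit to a farther-from-center value the walk must recross the nearer value on its way back to the current value at time $t$. The induction you worry about in your ``obstacle'' paragraph is not needed -- the direct application of this crossing argument between $\jmax^{(s\mp 1)}$ and $t$ (exactly the paper's contradiction argument, and equivalent to your backward-walk phrasing) already yields the strict inequalities.
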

\begin{proof}
    Note that $x^{(2)}_\mathrm{attn}[s] := \log(2T - \gamma_{\jmax^{(s)}})$ which makes the ordering inverse of the position.
    Observe that the unmatched indices correspond to values that have not been reached. This can only happen for values on either the leftmost or the rightmost coordinates, since the prefix sums are continuous on integers.
    Now let's prove that the value will be decreasing moving away from index $S$ in both directions.
    Suppose this was not true, there indeed was $s \le S$ such that $x^{(2)}_\mathrm{attn}[s] \ge x^{(2)}_\mathrm{attn}[s-1]$ ($s \ge S$ case is identical), then it implies that the closest index that achieved relative value $S-s$ is further away from $t$ than $S-s + 1$.
    However, since the moves can update the prefix sum by magnitude at most 1, then to get to relative value $0$ from relative value $S-s + 1$, we would need to have crossed relative value $S-s$. This implies that there is another position closer to $t$ with this value, contradicting our assumption. This proves the result.
\end{proof}

\paragraph{Computing state (MLP 2):}

To compute state from the positional information given by $x_{\mathrm{attn}}^{(2)}$, we need to do the following computations:
\begin{itemize}
    \item \textit{Step 1}: consider $S+1$ windows of size-$(S+1)$,
    each containing the $s^{th}$ to $(s+S)^{th}$ heads for $s \in \{0, 1, \ldots, S\}$,
    and identify the window that contains positions closest to the end (this would correspond to the closest $S+1$ distinct values to $x_{\mathrm{att}}^{(1)}[t]$);
    \item \textit{Step 2}: identify the boundary state in the selected window by comparing the indices of the endpoints of the window;
    \item \textit{Step 3}: output the final state based on the position of the boundary states and its value relative to the current position $t$.
\end{itemize}

We will show two constructions for implementing this, one of which will use $O(1)$ depth and $2^{O(S)}$ width, and the other will use $O(\log S)$ depth and $O(S)$ width.
The trade-off essentially lies in how a min function is implemented and can be resolved if we allow a min-pooling layer, which we will discuss after the proof.

\begin{enumerate}[leftmargin=*]
    \item \textit{$O(1)$-depth construction:} 
    The idea is that we can first use $O(1)$ layers to construct ``features'' that contain all the information needed to determine the state, then a 3-layer MLP with $2^{O(S)}$ width can compute the state as a function of these features by Lemma~\ref{lem:mlp-nd}.
    The features we need are the following (the labels underneath are to be consistent with Figure~\ref{fig:gridworld_mlp2}, \emph{left}):
\begin{align}
    &\underbrace{\{\mathbbm{1}[x^{(2)}_\mathrm{attn}[s] > x^{(2)}_\mathrm{attn}[s+S]]\}_{s=0}^{S}}_{>},
    \label{eq:const_depth_features_l1}
    \\
    &\underbrace{\{\mathbbm{1}[x^{(2)}_\mathrm{attn}[s-1] > x^{(2)}_\mathrm{attn}[s+S]]\}_{s=0}^{S}}_{>_L},
    \quad
    \underbrace{\{\mathbbm{1}[x^{(2)}_\mathrm{attn}[s] > x^{(2)}_\mathrm{attn}[s+S+1]]\}_{s=0}^{S}}_{>_R},
    \label{eq:const_depth_features_l2}
    \\
    &\underbrace{\{\mathbbm{1}[x^{(2)}_\mathrm{attn}[s] = \log(2T-t)]\}_{s=0}^{2S}}_{=}\}.
    \label{eq:const_depth_features_l3}
\end{align}
Here the feature in \ref{eq:const_depth_features_l1} compares the end points of the $S+1$ windows,
the two features in \ref{eq:const_depth_features_l2} compare the window with its adjacent windows on each side, and the last feature in \ref{eq:const_depth_features_l3} will be used to eliminate the irrelevant window.
Features in \ref{eq:const_depth_features_l1} and \ref{eq:const_depth_features_l2} can each be computed as a threshold function (at 0) on the difference between the two elements to be compared, which can be implemented using 2 layers by Lemma \ref{lem:threshold}.

\begin{enumerate}[label=(\alph*),leftmargin=*]
    \item First, we show that to compute \textit{Step 1} we only need to compare between adjacent windows whose $S+1$ heads are all matched,
    which can be computed using the features above.
    Consider any window starting at $s \in [0, 1, \cdots, S]$. On either side, if this window is closer to $t$ than its adjacent window on this side, then it is closer to the end of the boundary than all the windows on this same side, which would imply that we can ignore these non-adjacent windows.
    To prove this, let's consider the left side (the right side is analogous) and we want to show:
For any $s \in [S]$, if $\max\{x^{(2)}_\mathrm{attn}[{s}], x^{(2)}_\mathrm{attn}[{s+S}]\} < \max\{x^{(2)}_\mathrm{attn}[{s-1}], x^{(2)}_\mathrm{attn}[{s+S-1}]\}$,
then $\max\{x^{(2)}_\mathrm{attn}[{s}], x^{(2)}_\mathrm{attn}[{s+S}]\} < \max\{x^{(2)}_\mathrm{attn}[{s-i}], x^{(2)}_\mathrm{attn}[{s+S-i}]\}$ for $i > 1$:
the \emph{if} condition gives us $x^{(2)}_\mathrm{attn}[s-1] >  x^{(2)}_\mathrm{attn}[s+S]$, and we know from Lemma \ref{lem:decreasing} that
\begin{align*}
    &x^{(2)}_\mathrm{attn}[s-i]  > x^{(2)}_\mathrm{attn}[s-1] > x^{(2)}_\mathrm{attn}[s],
    \\
    &x^{(2)}_\mathrm{attn}[s+S]  > x^{(2)}_\mathrm{attn}[s+S-1] > x^{(2)}_\mathrm{attn}[s+S-i].
\end{align*}
Combining these inequalities together concludes the proof.

\item Given the optimal window, we can use feature \ref{eq:const_depth_features_l1} for the relevant window to identify the boundary, since the closer-to-$t$ index gives us the last boundary state (see Algorithm \ref{alg:chachacha_parallel} for why this suffices).

\item Now that we have identified the boundary state (suppose it is at position $i$), the final state can be computed as the last boundary state (0 or $S$) plus the difference $x_{\mathrm{attn}}^{(1)}[t] - x_{\mathrm{attn}}^{(1)}[i]$.
The difference is built in to the ordering of the heads, hence we have all the information to compute the final state and we are done.
\end{enumerate}
Therefore, we can compute this function using $4S+3$ features each taking value in $\{0,1\}$ and the output having $S+1$ values. These features themselves can be constructed using Lemma \ref{lem:threshold} with $\Delta =1/4T$ since the indices are separated by at least this gap. For the indicator index, we can compose two such constructions similar to Lemma \ref{lem:mlp-1d}. This gives us the first layer of MLP with width $O(S)$ and norms $O(T)$. After this, the rest of the function can be constructed using a 3-layer ReLU network with width $2^{O(S)}$ and norms bounded by $O(S)$ using Lemma \ref{lem:mlp-nd}.

\item\textit{$O(\log S)$-depth construction:}
An alternative solution to the above is to pay $O(\log(S))$ depth, but reduce the width to be $O(S)$.
We will borrow features in equation \ref{eq:const_depth_features_l1}-\ref{eq:const_depth_features_l3}, but construct the MLP explicitly rather than calling Lemma \ref{lem:mlp-nd} as a black box: the width and depth trade-off essentially correspond to two ways of implementing the min of $S$ numbers.
We describe the corresponding MLP by components (Fig \ref{fig:gridworld_mlp2}, \emph{right}):
\begin{enumerate}[label=(\alph*),leftmargin=*]
    \item Ignore the unmatched heads: as a preprocessing step for the cleanness of the proof, we use a 1-layer MLP to map $x^{(2)}_\mathrm{attn}[s]$ for heads where $x^{(2)}_\mathrm{attn}[s] = \log (2T-t)$ (i.e. $\jmax = t$) to $\log(2T)$ such that these unmatched heads can be ignored in the following steps.
    This can be done by multiplying $\log(2T)$ to the threshold function given by Lemma \ref{lem:threshold} (with $\Delta = \frac{1}{4T}$),
    where the network has 1 hidden layer 
    with width $2S+1$ and $\infty$-norm $4T$.
    Note that this map also changes $x^{(2)}_\mathrm{attn}[S]$ (i.e. the center head that corresponds to position $t$) but this will not affect the correctness of the proof.
    
    \item Compute the function $f_1$ in Figure~\ref{fig:gridworld_mlp2} (\emph{right}), which computes $f_1(a,b) := (\max\{a,b\}, \mathbbm{1}[a >b])$ for $a = x^{(2)}_\mathrm{attn}[s]$, $b = x^{(2)}_\mathrm{attn}[s+S]\}$, $\forall s \in \{0, 1, \cdots, S\}$.
    The first coordinate $\max\{a,b\}$ can be implemented using 1 hidden layer with width 1, and $\mathbbm{1}[a > b]$ is the same as feature \ref{eq:const_depth_features_l1} and can be implemented with 1 hidden layer by Lemma \ref{lem:threshold}.
    There are $S+1$ choices of $s$, hence the overall width is $O(S)$.
    For notational convenience, let's denote $f_1(s) := f_1(x^{(2)}_\mathrm{attn}[s], x^{(2)}_\mathrm{attn}[s+S])$ (with a slight abuse of notation).

    \item Find the min value of $f_1(s)[1]$, denoted as $f_{1,\min} := \min_s f_1(s)[1]$:
    This can be achieved using 1 min-pooling layer.
    If we allow ReLU only, then this can be implemented with pairwise comparison using a network with $\lceil\log S+1 \rceil$ depth, $3S$ width and and constant weight norm.
    \footnote{The log depth is conjectured to be unimprovable; see discussion after the proof.}

    \item Compute the function $f_2^{(s)}$ in Figure \ref{fig:gridworld_mlp2} (\emph{right}): $f_2^{(s)}$ takes two inputs: 1) the second output of $f_1$, which we denote as $B_s := f_1(s)[2]$, and 2) $M_s := \mathbbm{1}[f_{1,\min} \leq f_1(s)[1]]$, which indicates whether the $s^{th}$ window is the closest-to-$t$ window or not and can be computed using a 1-hidden-layer network with width 2.
    As in the previous construction, the difference $f_{1,\min} - f_1(s)[1]$ is built-in in the ordering of the head and hence does not need to be passed in explicitly.
    Then by Lemma \ref{lem:mlp-nd}, a 2-hidden-layer network of width $O(1)$ can take $B_s, M_s$ as input and compute $M_s\cdot \big[B(S-s) + (1-B)(2S+1-s)\big]$.
    The overall width is $O(S)$ for $S+1$ choices of such $f_2^{(s)}$.

    \item Finally, the state is computed as $\sum_s f_2^{(s)}(B_s, M_s)$, which can be implemented with 1 layer of width 1.

\end{enumerate}
\end{enumerate}

\begin{figure}
    \centering
    \includegraphics[width=\textwidth]{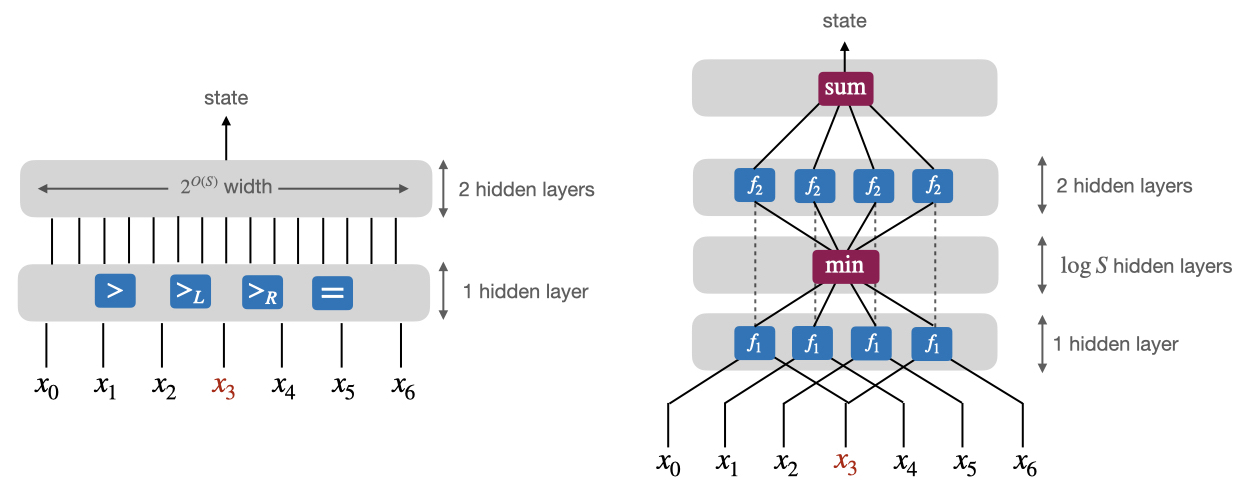}
    \caption{Illustration of the two constructions for second-layer MLP in Theorem \ref{thm:shortcut-gridworld-restated}, with $S = 3$.
    For ease of readability, we replace $x_\mathrm{attn}^{(2)}$ with $x$. \emph{Left:} $O(1)$-depth solution where the first block compares the comparison and equality features (see equation \ref{eq:const_depth_features_l1}- \ref{eq:const_depth_features_l3}), and the second block computes the state from these features.
    \emph{Right:} $O(\log S)$-depth solution where first block implements $f_1(a, b) = (\max\{a,b\}, \mathbbm{1}[a>b])$, second block does a min-pooling operation, the third block implements $f_2$ which takes $f_1$ via a residual connection and output of $\min$-pool to compute the final state.
    }
    \label{fig:gridworld_mlp2}
\end{figure}

\end{proof}

\paragraph{Improving the construction to remove $T$ width and $\log(S)$ depth.}
Using standard architectural tools, such as max-pooling, we can improve our construction to get $O(1)$-depth and $O(S)$-width for the MLP.
\begin{itemize}
    \item \textit{Avoiding width $T$ in the MLP 1 using periodic activations.} As in the modular addition (Lemma \ref{lem:cyclic}) construction, we can use $\sin$ activations in the MLP to directly compute the circular embeddings that are used as input to the second attention layer. This would require only two hidden nodes in the MLP. Note that we do not need precision greater than $O(\log T)$ for these activations since we are embedding values only as close as $1/\mathsf{poly}(T)$.  
    
    \item \textit{Avoiding $\log(S)$ depth in the MLP 2 using max-pooling.}
    The $O(\log S)$-depth in MLP 2 is incurred by calculating the min of $S$ numbers and is conjectured to be necessary for ReLU networks~\citep{goel2017reliably,MukherjeeB17,HertrichBSS21}.
    However, the depth can be reduced to 1 if we allow max-pooling layers, which are commonly used in both theory and practice~\citep{ZhangDKG021,ResNet,HaloNet}. 
\end{itemize}
\noindent\textit{Remark:} \cite{YaoPPN20} use layer-norm to compute $\cos$ and $\sin$ embedding with non-uniform angles. This could potentially alleviate the width $T$ concern; we leave this exploration to future work. 

\paragraph{Extending beyond 1 dimension.} Since a 2-dimensional gridworld is just the direct product of 1-dimensional gridworlds (by the construction in Lemma \ref{lem:direct-product-simulation}), we can implement both dimensions in parallel by concatenating the network for each dimension. This can be done by doubling the dimensions, parallel attention heads, and parallel hidden units in the MLP. The attention head parameters for each dimension can be chosen to only focus on the relevant dimension and similarly the MLP can zero out dependence on the other dimension. We can extend this to higher dimension with a multiplicative increase in the size of the parameters.

\subsection{Proof of Theorem~\ref{thm:barrington-lower-bound}: Depth lower bound for non-solvable semiautomata}
\label{subsec:appendix-barrington-proof}

\newtheorem*{T4}{Theorem~\ref{thm:barrington-lower-bound}}
\begin{T4}[Transformer Barrington]
Let $\mathcal{A}$ be a non-solvable semiautomaton.
Then, for sufficiently large $T$, no fixed-precision Transformer with depth independent of $T$ and width polynomial in $T$ can simulate $\mathcal{A}$ at length $T$, unless $\mathsf{TC}^0 = \mathsf{NC}^1$.
\end{T4}
\begin{proof}

This follows straightforwardly from the fact that simulating $\mathcal{A}$ at length $T$ is $\mathsf{NC}^1$-complete under $\mathsf{NC}^0$ reductions: given any $O(\log T)$-depth bounded-fan-in $\mathsf{AND}/\mathsf{OR}/\mathsf{NOT}$ circuit $\gC$, and a depth-$D$ circuit $\gC'$ which simulates a semiautomaton whose transformation monoid contains a non-solvable subgroup, there is a procedure which generates a depth-$O(D)$ circuit to simulate $\gC$; see \citep{barrington1988finite}. This in turn comes from the construction used in Barrington's theorem \citep{barrington1986bounded}, which characterizes $\mathsf{NC}^1$ as exactly the set of languages recognizable by \emph{bounded-width branching programs}. For a closely related reference which follows almost exactly the same argument, see \citep{mereghetti2000threshold}.

Thus, it suffices to show that a constant-depth Transformer is in $\mathsf{TC}^0$. The details of manipulating floating-point numbers with discrete circuits are peripheral to the main results in this paper, so we provide a brief proof sketch. A similar argument is used by \cite{Merrill21} to establish that \emph{``saturated''} Transformers (a multi-index analogue of hard-attention Transformers), with $O(\log T)$ bit precision, can be represented with a $\mathsf{TC}^0$ circuit. We outline a proof (which applies to the formal setting considered by \cite{Merrill21}) for the notion of Transformers defined in this paper.

With $O(\log T)$ bits of precision, all $n$-way (including unary) arithmetic operations mapping $\R^n \rightarrow \R$ can be represented with a constant-depth, $\poly(T)$-width $\mathsf{AC}^0$ circuit, as long as $n$ does not depend on $T$. Although improvements are certainly possible, it suffices to consider the circuit which memorizes the $i$-th bit of the output, which has width $2^{n \log T} \leq O(\poly(T))$.
Thus, the position-wise non-interacting matrix operations (multiplication by $X \mapsto W_Q X$, etc., the feedforward MLP layers, and the encoding and decoding layers) can be simulated with $\poly(T)$ width.

The only subtlety arises when there is a $T$-way summation over $O(\log T)$-bit numbers, which occur in the softmax and attention mixture layers.
For this operation, we can use the construction from \citep{reif1992threshold}, which can even add $T$ $\poly(T)$-bit numbers in $\mathsf{TC}^0$.
\end{proof}

\end{document}